\title{Entropy and mutual information in\\ models of deep neural networks}
\author[1]{Marylou Gabrié\thanks{Corresponding author: \href{mailto:marylou.gabrie@ens.fr}{marylou.gabrie@ens.fr}}}
\author[2,3]{Andre Manoel}
\author[4]{Clément Luneau}
\author[4]{Jean Barbier}
\author[4]{Nicolas Macris}
\author[1,5,6,7]{Florent Krzakala}
\author[3,5]{Lenka Zdeborová}
\affil[1]{Laboratoire de Physique Statistique, École Normale Supérieure, PSL University}
\affil[2]{Parietal Team, INRIA, CEA, Université Paris-Saclay}
\affil[3]{Institut de Physique Théorique, CEA, CNRS, Université Paris-Saclay}
\affil[4]{Laboratoire de Théorie des Communications, École Polytechnique Fédérale de Lausanne}
\affil[5]{Department of Mathematics, Duke University, Durham NC}
\affil[6]{Sorbonne Universit\'es}
\affil[7]{LightOn, Paris}
\begin{document}
\maketitle

\begin{abstract} 
    We examine a class of stochastic deep learning models with a tractable method to
    compute information-theoretic quantities. Our contributions are
    three-fold: (i) We show how entropies and mutual informations can be
    derived from heuristic statistical physics methods, under the assumption
    that weight matrices are independent and orthogonally-invariant.  (ii)
    We extend particular cases in which this result is known to be
    rigorously exact by providing a proof for two-layers networks with
    Gaussian random weights, using the recently introduced adaptive
    interpolation method.  (iii) We propose an experiment framework with
    generative models of synthetic datasets, on which we train deep neural
    networks with a weight constraint designed so that the assumption in (i)
    is verified during learning.  We study the behavior of entropies and
    mutual informations throughout learning and conclude that, in the
    proposed setting, the relationship between compression and
    generalization remains elusive.
\end{abstract}

\addtocontents{toc}{\protect\setcounter{tocdepth}{0}}

\begin{bibunit}
%
The successes 
of deep
learning methods have spurred efforts towards quantitative modeling 
of the performance of deep neural networks. 
In particular, an information-theoretic approach linking
generalization capabilities
to compression has been receiving increasing interest. The
intuition behind the study of mutual informations in latent variable models
dates back to the information bottleneck (IB) theory of \cite{IB}. Although
recently reformulated in the context of deep learning \cite{Tishby2015},
verifying its relevance in practice requires the computation of mutual
informations for high-dimensional variables, a notoriously hard problem.
Thus, pioneering works in this direction focused either on small network
models with discrete (continuous, eventually binned) activations
\cite{shwartz-ziv_opening_2017}, or on linear networks
\cite{Chechik2005,saxe_information_2018}.

In the present paper we follow a different direction, and build on recent
results from statistical physics
\cite{kabashima_inference_2008,manoel_multi-layer_2017} and information
theory \cite{fletcher_inference_2017,reeves_additivity_2017} to propose, in
Section \ref{sec:main}, a formula to compute information-theoretic
quantities for a class of deep neural network models.
The models we approach, described in Section \ref{sec:dlmodels}, are 
non-linear feed-forward neural networks trained on synthetic datasets
with constrained weights.
Such networks capture some of the key properties of the
deep learning setting that are usually difficult to include in tractable
frameworks: non-linearities, arbitrary large width and depth, and
correlations in the input data. We demonstrate the proposed method in a series of
numerical experiments in Section \ref{sec:results}. First observations
suggest a rather complex picture, where the role of compression in the
generalization ability of deep neural networks is yet to be elucidated.

\section{Multi-layer model and main theoretical results}
\label{sec:main}
{\bf A stochastic multi-layer model---}
We consider a model of multi-layer stochastic feed-forward neural network
where each element $x_i$ of the input layer $\bm{x} \in \mathbb{R}^{n_0}$ is
distributed { independently} as $P_0 (x_i)$, while hidden units $t_{\ell, i}$ at each
successive layer $\bm{t}_\ell \in \mathbb{R}^{n_{\ell}}$ (vectors are column vectors) come from $P_{\ell}
(t_{\ell, i} | \bm{W}_{\ell, i}^\intercal \bm{t}_{\ell-1})$, with $\bm{t}_0 \equiv
\bm{x}$ and $\bm{W}_{\ell, i}$ denoting the $i$-th row of the matrix
of weights
$W_\ell \in \mathbb{R}^{n_\ell \times n_{\ell - 1}}$. In other words
\begin{equation}
    t_{0, i} \equiv x_i \sim P_0 (\cdot), \quad
    t_{1, i} \sim P_1 (\cdot | \bm{W}_{1, i}^\intercal \bm{x}),\quad
    \ldots \quad
    t_{L, i} \sim P_L (\cdot | \bm{W}_{L, i}^\intercal \bm{t}_{L - 1}),
    \label{eq:mlmodel}
\end{equation}
given a set of weight matrices { $\{W_\ell\}_{\ell=1}^L$ and distributions $\{P_\ell\}_{\ell=1}^L$} which
encode possible non-linearities and stochastic noise applied to the { hidden layer variables, and $P_0$ that generates the visible variables}. In particular, for a non-linearity
{ $t_{\ell,i}=\varphi_{\ell}(h,\xi_{\ell,i})$, where $\xi_{\ell,i}\sim P_\xi(\cdot)$} is the stochastic noise { (independent for each $i$)}, we
have $P_{\ell} (t_{\ell, i} | \bm{W}_{\ell, i}^\intercal \bm{t}_{\ell-1}) = \int
{ dP_\xi(\xi_{\ell,i})} \, \delta\big(t_{\ell, i} - \varphi_{\ell}
(\bm{W}_{\ell, i}^\intercal \bm{t}_{\ell-1}, \xi_{\ell,i})\big)$. Model
(\ref{eq:mlmodel}) thus describes a Markov chain which we denote by $\X \to
\T_1 \to \T_2 \to \dots \to \T_L$, with $\T_{\ell}=\varphi_{\ell}(W_{\ell}
\T_{\ell-1}, { \bm{\xi}}_\ell)$, { $\bm{\xi}_\ell=\{{\xi}_{\ell,i}\}_{i=1}^{n_\ell}$}, and the activation function $\varphi_\ell$ applied componentwise.

{\bf Replica formula---} We shall work in the asymptotic
high-dimensional statistics regime where all $\tilde \alpha_\ell
\equiv n_\ell/n_0$ are of order one 
while $n_0\!\to\!\infty$, and make the important assumption that all
matrices $W_\ell$ are orthogonally-invariant random matrices
independent from each other; in other words, each matrix
$W_\ell\!\in\!\mathbb{R}^{n_\ell \times n_{\ell-1}}$ can be decomposed as a
product of three matrices, $W_\ell\!=\!U_\ell S_\ell V_\ell$,
where $U_\ell\!\in\!\mathrm{O}(n_\ell)$ and
$V_\ell\!\in\!\mathrm{O}(n_{\ell-1})$ are independently sampled from the
Haar measure, and $S_\ell$ is a diagonal matrix of
singular values. The main technical tool we use is a formula for
the entropies of the hidden variables,
${H(\T_\ell) = -\eT{\ell} \ln P_{\T_\ell} ({\bm{t}_\ell})}$, and the mutual information between adjacent layers
$I(\T_\ell; \T_{\ell-1}) = H(\T_\ell) + \eTT{\ell}{\ell-1} \ln
P_{\T_\ell | \T_{\ell - 1}} (\bm{t}_\ell | \bm{t}_{\ell-1})$, based on
the heuristic replica method
\cite{mezard_spin_1987,mezard_information_2009,kabashima_inference_2008,manoel_multi-layer_2017}:
\noindent
\begin{claim}[Replica formula] \label{claim1}
Assume model (\ref{eq:mlmodel}) with $L$ layers in the high-dimensional limit with {componentwise activation functions and weight matrices generated from the ensemble described above},
and denote by $\lambda_{W_k}$ the eigenvalues of $W_k^\intercal W_k$. 
Then { for any $\ell\in\{1,\ldots,L\}$ the normalized entropy of $\T_\ell$} is given by the minimum among all
stationary points of the replica potential:
\begin{align}
\label{eq:asymptotic}
    \lim_{n_0 \to \infty} \frac 1{n_0} H({\T_\ell}) = \min
        \extr_{\bm{A}, \bm{V}, \bm{\tilde{A}}, \bm{\tilde{V}}} \phi_\ell
        (\bm{A}, \bm{V}, \bm{\tilde{A}}, \bm{\tilde{V}}),
\end{align}
which depends on $\ell$-dimensional vectors $\bm{A}, \bm{V}, \bm{\tilde{A}},
\bm{\tilde{V}}$, and is written in terms of mutual information $I$ and
conditional entropies $H$ of scalar variables as
\begin{align}
    &\phi_\ell (\bm{A}, \bm{V}, \bm{\tilde{A}}, \bm{\tilde{V}}) = I\Big(t_0; t_0 +
        \frac{\xi_0}{\sqrt{\tilde{A}_1}}\Big) - \frac12 \sum_{k = 1}^{\ell}
        \tilde{\alpha}_{k - 1} \big[\tilde{A}_k V_k + \alpha_k
        A_k \tilde{V}_k - F_{W_k} (A_k V_k)\big] 
        \nonumber \\
    &\kern3em + \, \sum_{k = 1}^{\ell - 1} \tilde{\alpha}_k \Big[
        H(t_k | \xi_k; \tilde{A}_{k + 1}, \tilde{V}_k,
        \tilde{\rho}_k) - \frac12 \log (2 \pi e \tilde{A}^{-1}_{k+ 1})
        \Big] + \tilde{\alpha}_\ell H(t_\ell | \xi_\ell; \tilde{V}_\ell,
        \tilde{\rho}_\ell), 
    \label{eq:phi_multi} 
\end{align}
where $\alpha_k = n_k / n_{k - 1}$, $\tilde{\alpha}_k = n_{k}
/ n_0$, $\rho_k = \int dP_{k - 1}(t) \, t^2$,
$\tilde{\rho}_k = (\mathbb{E}_{\lambda_{W_k}} \lambda_{W_k})\rho_k/{\alpha_{k}}
$, and ${\xi_k\sim{\cal N}(0,1)}$ for $k=0,\ldots,\ell$. In the computation of the conditional entropies in
(\ref{eq:phi_multi}), the scalar $t_k$-variables are generated from
$P(t_0)=P_0(t_0)$ and
\begin{align}
    P(t_{k} | \xi_k; A, V, \rho) &= \mathbb{E}_{\tilde \xi, \tilde{z}}
        \, P_k (t_k + \tilde{\xi} / \sqrt{A} |
        \sqrt{\rho - V} \xi_k + \sqrt{V} \tilde{z}), \quad k=1,\dots,\ell-1,\\
    P(t_{\ell} | \xi_\ell; V, \rho) &=  \mathbb{E}_{\tilde{z}} \, P_\ell(t_\ell |
        \sqrt{\rho - V} \xi_\ell + \sqrt{V} \tilde{z})  ,
\end{align}
where $\tilde{\xi}$ and $\tilde{z}$ are independent ${\cal N}(0,1)$ random variables. 
Finally, the function $F_{W_{k}}(x)$ depends on the
distribution of the eigenvalues $\lambda_{W_\ell}$ following
\begin{equation}
    F_{W_{k}} (x) = \min_{\theta{ \in {\mathbb{R}}}} \; \big\{ 2 \alpha_k \theta + (\alpha_k -
        1) \ln (1 - \theta) + \mathbb{E}_{\lambda_{W_k}} \ln [ x
        \lambda_{W_k} + (1 - \theta) (1 - \alpha_k \theta) ] \big\}.
\end{equation}
\end{claim}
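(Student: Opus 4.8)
Since Claim~\ref{claim1} is obtained by the (non-rigorous) replica method, the derivation I would present is the corresponding replica computation, whose main steps are as follows. The starting point is the replica identity
\begin{equation}
\frac1{n_0}H(\bm{t}_\ell)=-\frac1{n_0}\,\mathbb{E}\,\ln P_{\bm{t}_\ell}(\bm{t}_\ell)=-\frac1{n_0}\lim_{r\to0}\frac{\partial}{\partial r}\,\mathbb{E}\big[P_{\bm{t}_\ell}(\bm{t}_\ell)^r\big],
\end{equation}
the expectation being over the law of $\bm{t}_\ell$ and over the weight matrices $\{W_k\}_{k=1}^\ell$. Expanding $P_{\bm{t}_\ell}(\bm{t}_\ell)^r$ as an integral over $r$ independent copies of the chain $\bm{x}\to\bm{t}_1\to\dots\to\bm{t}_{\ell-1}$, all pinned to the common output $\bm{t}_\ell$ and all started from $P_0$, and representing the true output through a reference copy $a=0$, one is left with an integral over $(r+1)\ell$ vectors coupled only through the linear pre-activations $W_k\bm{t}_{k-1}^{(a)}$.

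The next step is the average over the Haar-distributed orthogonal factors of $W_k=U_kS_kV_k$. For each layer one introduces the replica overlaps of the inputs, $Q_k^{ab}=\tfrac1{n_{k-1}}\bm{t}_{k-1}^{(a)\intercal}\bm{t}_{k-1}^{(b)}$, and of the fields; because $W_k$ is orthogonally invariant, the conditional law of the fields $\{W_k\bm{t}_{k-1}^{(a)}\}_a$ given these overlaps is, to leading exponential order in $n_0$, governed by a spherical (Harish--Chandra--Itzykson--Zuber type) integral whose large-deviation rate is exactly the function $F_{W_k}$ of the claim, i.e.\ the Legendre-type transform of the log-potential of the spectrum of $W_k^\intercal W_k$. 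This is the crucial computation and the precise point where the orthogonally-invariant ensemble departs from the i.i.d.\ Gaussian one: for Gaussian $W_k$ the orthogonal average collapses to an elementary Gaussian integral and $F_{W_k}$ is explicit, whereas in general one needs the full asymptotics of the multi-replica spherical integral (as in the analyses underlying vector approximate message passing).

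Having traded the matrices for the scalar order parameters $\{Q_k,\hat Q_k\}$ — the conjugates $\hat Q_k$ coming from Fourier representations of the overlap constraints — the integral factorizes over sites. Under the replica-symmetric ansatz the within-replica and replica--reference overlaps at layer $k$ reduce to the scalar pairs $(V_k,A_k)$ on the field side and $(\tilde V_k,\tilde A_k)$ on the conjugate side, while $\tilde\rho_k$ is fixed by the second moment of the signal propagated through layer $k$. Sending $r\to0$ and collecting the single-site contributions reproduces (\ref{eq:phi_multi}): the $k=0$ site problem is the visible variable observed through an effective Gaussian channel of precision $\tilde A_1$, hence $I(t_0;t_0+\xi_0/\sqrt{\tilde A_1})$; each intermediate layer $k$ contributes the denoising of $t_k$ seen jointly through the channel from below (parameters $\tilde V_k,\tilde\rho_k$) and a Gaussian channel from above (precision $\tilde A_{k+1}$), i.e.\ $H(t_k\,|\,\xi_k;\tilde A_{k+1},\tilde V_k,\tilde\rho_k)$, corrected by $-\tfrac12\log(2\pi e\,\tilde A_{k+1}^{-1})$ so the Gaussian channel is not double counted; the top layer keeps only the channel from below, giving $H(t_\ell\,|\,\xi_\ell;\tilde V_\ell,\tilde\rho_\ell)$; and the remaining quadratic terms $-\tfrac12\sum_k\tilde\alpha_{k-1}[\tilde A_kV_k+\alpha_kA_k\tilde V_k-F_{W_k}(A_kV_k)]$ collect the residual conjugate-coupling and orthogonal-rate pieces. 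The $n_0\to\infty$ saddle point turns the order-parameter integral into the extremization in (\ref{eq:asymptotic}), and selecting the \emph{minimum} among stationary points is the standard replica prescription for entropies, consistent with the exactly solvable and rigorously established special cases.

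\noindent The main obstacle I anticipate is precisely the orthogonal-group average and the identification of $F_{W_k}$: controlling the asymptotics of the multi-replica spherical integral and checking that, under replica symmetry, it is captured by a single scalar $F_{W_k}(A_kV_k)$ per layer, while consistently tracking how order parameters propagate up and down the chain in the $r\to0$ limit; a secondary (posited, not proven) subtlety is the validity of replica symmetry and of the min-over-stationary-points rule. A cleaner route that avoids an ab-initio replica calculation is to express $H(\bm{t}_\ell)$ as a telescoping combination of mutual informations of successive generalized-linear channels and to invoke the single- and multi-layer GLM replica formulae of \cite{kabashima_inference_2008,manoel_multi-layer_2017,reeves_additivity_2017,fletcher_inference_2017}, the present claim being essentially their synthesis for the entropy of the chain.
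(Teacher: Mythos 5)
Your main route — an ab-initio replica computation of $H(\T_\ell)$ via $r$ coupled copies of the whole chain, averaging over the Haar factors of each $W_k$ through multi-replica spherical (HCIZ-type) integrals, and extracting $F_{W_k}$ as the resulting rate function under a replica-symmetric ansatz — is not what the paper's Supplementary Material does, though it would in principle land on the same $\phi_\ell$. What the paper does is, in fact, the ``cleaner route'' you mention only in your closing sentence: it treats the chain recursively. Concretely, it first invokes the known single-layer orthogonally-invariant replica formula $\phi(A,V,\tilde A,\tilde V)$ (Kabashima-type, Eq.~(\ref{eq:phi_sl-kabashima}) of the SM) for the problem ``estimate $\bm t_{\ell-1}$ from $\bm t_\ell$''; that formula contains the term $I(t_{\ell-1};t_{\ell-1}+\tilde\xi/\sqrt{\tilde A_\ell})$, which is itself the entropy of a noisy version of $\bm t_{\ell-1}$ (up to a constant) and is therefore again a single-layer problem. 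Unrolling this telescoping identity down to $\bm t_0$, with the Gaussian-noise entropy $-\tfrac12\log(2\pi e\tilde A_{k+1}^{-1})$ subtracted at each step to avoid double counting, yields exactly Eq.~(\ref{eq:phi_multi}). The orthogonal average then only ever needs to be done for a single-layer GLM, which is the already-established $F_W$ of Kabashima.

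The tradeoff is the one you anticipate. Your route is conceptually self-contained and would, if the multi-replica spherical integral and the RS collapse were controlled, give the full replica saddle in one shot; but it requires proving that the per-layer spherical integrals factorize over layers and that RS collapses each to a single scalar $F_{W_k}(A_kV_k)$ — exactly the obstacle you flag. The paper's recursive route imports that difficulty wholesale from the single-layer result and only has to argue that a noisy output of layer $k$ is a legitimate input signal for layer $k{+}1$, which is what makes the layer-wise additivity of the formula manifest rather than emergent. Since the target is a Claim rather than a theorem, neither route is fully rigorous; but if you want to match the paper, lead with the telescoping/nesting argument and relegate the ab-initio replica calculation to a consistency check.

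One small point of hygiene in your setup: in the planted replica trick you write $H = -\lim_{r\to0}\partial_r\,\mathbb{E}[P_{\T_\ell}(\bm t_\ell)^r]$ with the expectation also over $\bm t_\ell$; to make the $a=0$ reference copy consistent with the $r$ replicas in the $r\to0$ limit you should make explicit that the annealed average is over $\mathbb{E}_{W}\!\int d\bm t_\ell\,P_{\T_\ell}(\bm t_\ell)^{r+1}$, i.e.\ $r+1$ pinned copies, one of which plays the role of the ground truth; otherwise the overlaps $Q_k^{0a}$ between reference and replicas, which are what produce the $(A_k,\tilde A_k)$ sector, do not appear.
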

The computation of the entropy in the large dimensional limit, a
computationally difficult task, has thus been reduced to an extremization of
a function of $4 \ell$ variables, that requires evaluating single or
bidimensional integrals. This extremization can be done efficiently by means of
a fixed-point iteration starting from different initial conditions,
as detailed in the Supplementary Material\citesm.
Moreover, a user-friendly Python
package is provided \cite{dnner}, which performs the computation for
different choices of prior $P_0$, activations $\varphi_\ell$ and spectra
$\lambda_{W_\ell}$. Finally, the mutual information between successive
layers $I(\T_\ell; \T_{\ell-1})$ can be obtained from the entropy following the evaluation of an
additional bidimensional integral, see Section \ref{sec:further1} of the
Supplementary Material\citesm.

Our approach in the derivation of (\ref{eq:phi_multi}) builds on
recent progresses in statistical estimation and information theory for
generalized linear models following the application of methods from
statistical physics of disordered systems
\cite{mezard_spin_1987,mezard_information_2009} in communication
\cite{tulino_support_2013}, statistics \cite{Donoho2016} and machine
learning problems
\cite{seung_statistical_1992,engel_statistical_2001}. In particular,
we use advanced mean field theory \cite{opper2001advanced} and the
heuristic replica method
\cite{mezard_spin_1987,kabashima_inference_2008}, along with its
recent extension to multi-layer estimation
\cite{manoel_multi-layer_2017,fletcher_inference_2017}, in order to
derive the above formula (\ref{eq:phi_multi}).  This derivation is
lengthy and thus given in the Supplementary Material\citesm.
In a related contribution, Reeves
\cite{reeves_additivity_2017} proposed a formula for the mutual
information in the multi-layer setting, using heuristic
information-theoretic arguments. As ours, it exhibits layer-wise
additivity, and the two formulas are conjectured to be equivalent.

%

{\bf Rigorous statement---}
We recall the assumptions under which the replica formula of Claim
\ref{claim1} is conjectured to be exact: {\it (i) weight matrices are
  drawn from an ensemble of random orthogonally-invariant matrices,
  (ii) matrices at different layers are statistically independent and
  (iii) layers have a large dimension and respective sizes of adjacent
  layers are such that weight matrices have aspect ratios
  $\{\alpha_k,\tilde \alpha_k\}_{k=1}^\ell$ of order one.}  While we
could not prove the replica prediction in full generality, we stress
that it comes with multiple credentials: (i) for Gaussian prior $P_0$
and Gaussian distributions $P_\ell$, it corresponds to the exact
analytical solution when weight matrices are independent of each other
(see Section \ref{sec:further2} of the Supplementary Material\citesm).
(ii) In the single-layer case with a Gaussian
weight matrix, it reduces to formula (\ref{eq:phi_sl}) in the
Supplementary Material\citesm, which has been recently
rigorously proven for (almost) all activation functions $\varphi$
\cite{barbier_phase_2017}. (iii) In the case of Gaussian distributions
$P_\ell$, it has also been proven for a large ensemble of random
matrices \cite{toappear} and (iv) it is consistent with all the
results of the AMP
\cite{donoho2009message,zdeborova_statistical_2016,GAMP} and VAMP
\cite{rangan_vector_2017} algorithms, and their multi-layer versions
\cite{manoel_multi-layer_2017, fletcher_inference_2017}, known to
perform well for these estimation problems.

In order to go beyond results for the single-layer problem
and heuristic arguments, we prove Claim~\ref{claim1} for the more involved multi-layer case, assuming
Gaussian i.i.d. matrices and two non-linear layers:
\begin{theorem}[Two-layer Gaussian replica formula] \label{th:RS_2layer_main}
	Suppose ${(H1)}$ the input units distribution $P_0$ is separable and has bounded support; ${(H2)}$ the activations
        $\varphi_1$ and $\varphi_2$ corresponding to $P_1 (t_{1, i} |
        \bm{W}_{1, i}^\intercal \bm{x})$ and $P_2 (t_{2, i} | \bm{W}_{2, i}^\intercal
        \bm{t}_1)$ are bounded $\cC^2$ with bounded first and second
        derivatives w.r.t their first argument; and ${(H3)}$ the weight matrices $W_1$, $W_2$
        have Gaussian i.i.d. entries. Then for model \eqref{eq:mlmodel} with two layers $L=2$ the
        high-dimensional limit of the entropy verifies Claim \ref{claim1}. 
\end{theorem}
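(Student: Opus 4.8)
The plan is to prove the $\ell=2$ instance of Claim~\ref{claim1}; the $\ell=1$ instance is a single-layer generalized linear model and, for Gaussian weights, is already covered by \cite{barbier_phase_2017}. The route is the one underlying the adaptive interpolation method: first identify the normalized entropy with the free energy of an associated Bayes-optimal inference problem, then interpolate between that problem and a fully decoupled one, and finally derive a sum rule for the $t$-derivative of the free energy and close it by an adaptive choice of the interpolation functions. For the reduction, write $\mathcal Z_{n_0}(\T_2):=\int dP_0(\bm x)\,d\bm t_1\,P_1\big(\bm t_1\,\big|\,\tfrac{1}{\sqrt{n_0}}W_1\bm x\big)\,P_2\big(\T_2\,\big|\,\tfrac{1}{\sqrt{n_1}}W_2\bm t_1\big)$ for the partition function of the posterior over $(\bm x,\bm t_1)$ given the observation $\T_2$. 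Then $P_{\T_2}(\T_2)=\mathcal Z_{n_0}(\T_2)$, so $\tfrac1{n_0}H(\T_2)=-\tfrac1{n_0}\mathbb E\ln\mathcal Z_{n_0}(\T_2)$, and because the generative and posterior channels coincide the model is Bayes-optimal and all Nishimori identities hold. Under (H1)--(H3) both $\bm x$ and $\bm t_1=\varphi_1(\cdot,\bm\xi_1)$ have bounded support, which is what will make the concentration steps below go through.

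\emph{Interpolation.} Introduce $t\in[0,1]$ and an interpolating model in which each structured channel, $\bm x\mapsto\tfrac{1}{\sqrt{n_0}}W_1\bm x$ and $\bm t_1\mapsto\tfrac{1}{\sqrt{n_1}}W_2\bm t_1$, is progressively damped by a factor $\sqrt{1-t}$ with its preactivation supplemented by an independent Gaussian vector having the matching second moment ($\rho_1$ for the first layer, $\rho_2$ for the second), while two decoupled Gaussian side-channels of signal-to-noise ratios $R_1(t,\epsilon)$ on $\bm x$ and $R_2(t,\epsilon)$ on $\bm t_1$ are switched on. As in \cite{barbier_phase_2017}, a small perturbation of strength $\epsilon$ is kept at $t=0$ to force concentration of the overlaps $Q_1=\tfrac1{n_0}\bm x\!\cdot\!\bm x'$ and $Q_2=\tfrac1{n_1}\bm t_1\!\cdot\!\bm t_1'$. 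At $t=0$ one recovers the original model (up to the $\epsilon$-perturbation); at $t=1$ the partition function factorizes over sites into the scalar channels of Claim~\ref{claim1}, producing precisely the $I(t_0;\cdot)$, $H(t_1|\xi_1;\cdot)$ and $H(t_2|\xi_2;\cdot)$ terms of $\phi_2$ together with the Gaussian corrections $-\tfrac12\log(2\pi e\,\tilde A_2^{-1})$; the remaining quadratic terms and the $F_{W_k}$ contribution arise in the sum rule from the linear (matrix) part of each damped channel, whose log-partition function self-averages, for Gaussian $W_k$, to $\mathbb E_{\lambda_{W_k}}\ln[\cdot]$.

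\emph{Sum rule and closure.} Differentiating $f_{n_0}(t)=-\tfrac1{n_0}\mathbb E\ln\mathcal Z_{n_0}(t)$ and using Gaussian integration by parts together with the Nishimori identities expresses $f_{n_0}'(t)$ in terms of the expected overlaps at time $t$ and of $\dot R_1,\dot R_2$. Choosing $R_1(t,\epsilon),R_2(t,\epsilon)$ to solve the coupled first-order ODE that matches these overlaps makes the integrand a perfect square up to a remainder that vanishes by overlap concentration; integrating from $t=0$ to $t=1$ then yields one inequality between $\tfrac1{n_0}H(\T_2)$ and $\extr\,\phi_2$. The reverse inequality follows by instead freezing $R_1,R_2$ to arbitrary constant trial values and using convexity of the interpolating free energy in the side-channel parameters. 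Averaging over $\epsilon$ in a vanishing interval, using continuity of the limiting free energy in $\epsilon$, and letting $n_0\to\infty$ then $\epsilon\to0$, the two families of bounds close onto the minimizing stationary point of $\phi_2$, and in particular the limit exists.

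\emph{Main obstacle.} The genuinely new difficulties relative to the single-layer proof are (i) concentration of the overlap $Q_2$ of the \emph{intermediate} layer $\bm t_1$, whose effective prior is the data-dependent, non-product law of the first hidden layer rather than a fixed i.i.d.\ prior, which requires propagating the $\epsilon$-perturbation through both layers; and (ii) controlling the coupled ODE system for $(R_1,R_2)$ and verifying that its solution, reinserted into the sum rule, reconstructs exactly the four-dimensional extremization over $(\bm A,\bm V,\bm{\tilde A},\bm{\tilde V})$ of Claim~\ref{claim1}, i.e.\ that the two layers' order parameters decouple consistently. Hypotheses (H1)--(H3) enter precisely here: bounded support of $P_0$ and bounded $\cC^2$ activations with bounded first and second derivatives supply the Lipschitz and boundedness estimates needed for the Gaussian-by-parts manipulations and for the overlap concentration, while the Gaussianity of $W_1,W_2$ is what allows each preactivation to be treated, conditionally on its input, as a Gaussian vector with the correct covariance.
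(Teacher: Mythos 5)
Your scheme is not the one the paper uses, and as described it is also incomplete. You propose interpolating \emph{both} layers simultaneously toward a fully decoupled model: both preactivations damped by $\sqrt{1-t}$, Gaussian side-channels of SNR $R_1$ on $\bm x$ and $R_2$ on $\bm t_1$, and concentration of two overlaps $Q_1=\tfrac1{n_0}\bm x\!\cdot\!\bm x'$ and $Q_2=\tfrac1{n_1}\bm t_1\!\cdot\!\bm t_1'$. The paper instead uses a \emph{peeling} interpolation: the first-layer GLM $\bm x\mapsto W_1\bm x/\sqrt{n_0}\mapsto\varphi_1$ is left fully intact throughout; only the second-layer channel is damped, with its Gaussian replacement split into a ``known'' part of variance $R_2(t,\epsilon)$ and an ``unknown'' remainder; and the \emph{only} side-channel is a Gaussian observation of the hidden layer $\bX^1=\varphi_1(W_1\bX^0/\sqrt{n_0},\bA_1)$ with SNR $R_1(t,\epsilon)$. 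There is no side-channel on $\bm x$ and no overlap $Q_1$ to concentrate. At $t=1$ the second layer collapses to the scalar $\Psi_{P_{\rm out,2}}$ term and what remains is a one-layer GLM in $\bm x$, whose free entropy in the high-dimensional limit is simply \emph{quoted} from the already-proven single-layer theorem of \cite{BarbierOneLayerGLM}. So the paper concentrates exactly one overlap, $\widehat Q=\tfrac1{n_1}\bm x^1\!\cdot\!\bX^1$, and its adaptive ODE for $(R_1,R_2)$ is genuinely two-dimensional.

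This matters for two reasons. First, your full-decoupling construction is under-specified: to drive \emph{both} layers to decoupled scalar channels and still close the sum rule you need a ``known-fraction'' parameter for each layer's Gaussian replacement in addition to each side-channel SNR, i.e.\ four adaptive functions and a four-dimensional flow, not the two you list. With only $R_1$ on $\bm x$ and $R_2$ on $\bm t_1$ and hard-coded damping, the $t$-derivative of the free entropy does not reduce to the required ``remainder $\times$ overlap fluctuation'' form. Second, your obstacle (i) --- concentration of $Q_2$ over the non-product effective prior on $\bm t_1$ --- is real and it is exactly the difficulty the paper does have to face for $\widehat Q$; the paper resolves it by propagating the $R_1$ perturbation on $\bX^1$, relating $\widehat Q$ to a derivative of the free entropy in $R_1$, and proving free-entropy concentration (Poincar\'e plus bounded-difference estimates) under (H1)--(H3). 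In your scheme you would have to carry out that argument \emph{and} concentrate $Q_1$, while the paper avoids $Q_1$ entirely by not perturbing the first layer. So the peeling construction is not merely a stylistic choice: it is what keeps the order-parameter count and the concentration work down to a tractable size.
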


The theorem, that closes the conjecture presented in
\cite{manoel_multi-layer_2017}, is proven using the adaptive
interpolation method of
\cite{barbier2017stochastic,barbier_phase_2017} in a multi-layer
setting, as first developed in \cite{2017arXiv170910368B}. The lengthy
proof, presented in details in the Supplementary Material\citesm, is of
independent interest and adds further credentials to the replica
formula, as well as offers a clear direction to further
developments. Note that, following the same approximation arguments as
in \cite{barbier_phase_2017} where the proof is given for the
single-layer case, the hypothesis $(H1)$ can be relaxed to the
existence of the second moment of the prior, $(H2)$ can be dropped and
$(H3)$ extended to matrices with i.i.d. entries of zero mean,
$O(1/n_0)$ variance and finite third moment.


\section{Tractable models for deep learning} 
\label{sec:dlmodels}

The multi-layer model presented above can be leveraged to simulate two prototypical
settings of deep supervised learning on synthetic datasets amenable to the replica
tractable computation of entropies and mutual informations.   

\begin{wrapfigure}{R}{0.5\textwidth}
    \vspace{-3ex}
   \includegraphics[width=0.5\textwidth]{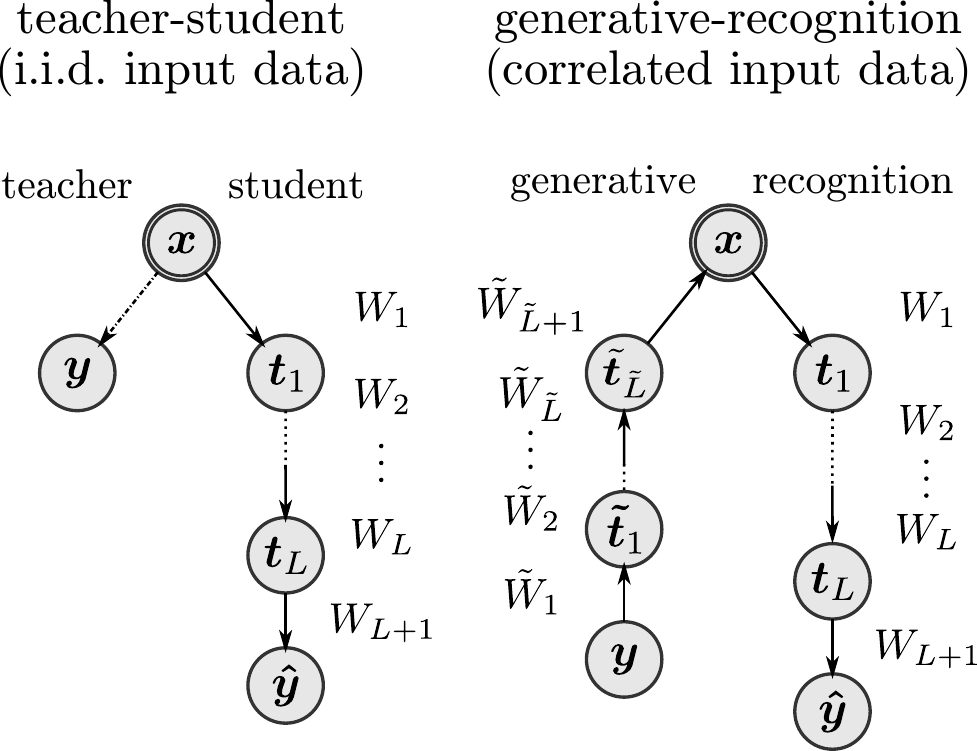}
   \caption{Two models of synthetic data}
   \label{fig:scheme}
    \vspace{-1ex}
\end{wrapfigure}

The first scenario is the so-called {\it teacher-student} (see Figure
\ref{fig:scheme}, left). Here, we assume that the input $\bm{x}$ is
distributed according to a \emph{separable} prior distribution $ P_X(\bm{x}) = \prod_i P_0(x_i)$,
factorized in the components of $\bm{x}$, and the
corresponding label $\bm y$ is given by applying a mapping
${\bm x} \to \bm y$, called {\it the teacher}. After generating a
train and test set in this manner, we 
perform the training of a deep
neural network, {\it the student}, on the synthetic dataset. In
this case, the data themselves have a simple structure given by $P_0$. 

In constrast, the second scenario allows {\it generative models} (see
Figure \ref{fig:scheme}, right) that create more structure, and that
are reminiscent of the {\it generative-recognition} pair of models
of a Variational Autoencoder (VAE). A code vector $\bm{y}$ is sampled
from a separable prior distribution $P_Y(\bm{y}) = \prod_i P_0(y_i)$ and a corresponding data
point $\bm{x}$ is generated by a possibly stochastic neural network,
{\it the generative model}. This setting allows to create input data  $\bm{x}$
featuring correlations, differently from the teacher-student scenario.
The studied supervised learning task then consists in training a deep
neural net, {\it the recognition model}, to recover the code
$\bm{y}$ from $\bm{x}$.

In both cases, the chain going from $\X$ to any later layer is a
Markov chain in the form of \eqref{eq:mlmodel}. In the first scenario,
model \eqref{eq:mlmodel} directly maps to the student network. In the
second scenario however, model \eqref{eq:mlmodel} actually maps to the
feed-forward combination of the generative model followed by the
recognition model. This shift is necessary to verify the assumption
that the starting point (now given by $\Y$) has a separable distribution.
In particular, it generates correlated input data $\X$ while still allowing
for the computation of the entropy of any $\T_\ell$.

At the start of a neural network training, weight matrices initialized as i.i.d.
Gaussian random matrices satisfy the necessary assumptions of the formula
of Claim 1. In their singular value decomposition
\begin{equation}
\label{eq:usv}
W_\ell = U_\ell S_\ell V_\ell
\end{equation}
the matrices $U_\ell \in \mathrm{O}(n_\ell)$ and
$V_\ell \in \mathrm{O}(n_{\ell-1})$, are typical independent samples from the
Haar measure across all layers.
To make sure weight matrices remain close enough to independent
during learning, we define a custom weight constraint which consists in 
keeping $U_\ell$ and
$V_\ell$ fixed while only the matrix $S_\ell$,
constrained to be diagonal, is updated. The number of parameters is thus reduced from $n_\ell \times n_{\ell-1}$ to $\min(n_\ell , n_{\ell-1})$. 
We refer to layers following this weight constraint as USV-layers.
For the replica formula of Claim 1 to be correct, the matrices $S_\ell$ from different layers should furthermore remain uncorrelated during the learning.
In Section \ref{sec:results}, we consider the training of linear
networks for which information-theoretic quantities can be computed
analytically, and confirm numerically that with USV-layers the replica predicted entropy is correct at all times. 
In the following, we assume that is also the case for non-linear networks.

In Section \ref{sec:mnist} of the Supplementary Material\citesm we train a neural network with USV-layers on
a simple real-world dataset (MNIST), showing that these layers can learn to represent complex functions despite their restriction. 
We further note that such a product decomposition is reminiscent of a series of works on adaptative
structured efficient linear layers (SELLs and ACDC)
\cite{Moczulski2015,Yang2015} motivated this time by speed gains, where only diagonal matrices are learned (in these works the matrices $U_\ell$ and
$V_\ell$ are chosen instead as permutations of Fourier or Hadamard matrices,
so that the matrix multiplication can be replaced by fast transforms). 
In Section \ref{sec:results}, we discuss learning experiments with USV-layers on synthetic datasets.  

While we have defined model \eqref{eq:mlmodel} as a stochastic model,
traditional feed forward neural networks are deterministic. In the numerical
experiments of Section \ref{sec:results}, we train and test networks without injecting
noise, and only assume a noise model in the computation of information-theoretic
quantities. Indeed, for continuous variables the presence of
noise is necessary for mutual informations to remain finite (see
discussion of Appendix C in \cite{saxe_information_2018}).  We assume at layer $\ell$ an additive white Gaussian noise
of small amplitude just before passing through its activation function
to obtain $H(\T_\ell)$ and $I(\T_{\ell}; \T_{\ell-1})$, while keeping
the mapping $\X \rightarrow \T_{\ell-1}$ deterministic. This choice
attempts to stay as close as possible to the deterministic neural
network, but remains inevitably somewhat arbitrary (see again
discussion of Appendix C in \cite{saxe_information_2018}).




{\bf Other related works---}
The strategy of studying neural networks models, with random weight matrices
and/or random data, using methods originated in statistical physics
heuristics, such as the replica and the cavity methods
\cite{mezard_spin_1987} has a long history. Before the deep learning era,
this approach led to pioneering results in learning for the Hopfield model
\cite{amit1985storing} and for the random perceptron
\cite{gardner1989three,mezard_space_1989,seung_statistical_1992,engel_statistical_2001}. 

Recently, the successes of deep learning along with the disqualifying
complexity of studying real world problems have sparked a revived interest
in the direction of random weight matrices. Recent results  --without
exhaustivity-- were obtained on the spectrum of the Gram
matrix at each layer using random matrix theory
\cite{louart2017harnessing,pennington2017nonlinear}, on expressivity of deep
neural networks \cite{raghu2016expressive}, on the dynamics of propagation
and learning \cite{saxe2013exact,schoenholz2016deep,
Advani2017,Baldassi11079}, on the high-dimensional non-convex landscape
where the learning takes place \cite{NIPS2014_5486}, or on the universal
random Gaussian neural nets of \cite{giryes2016deep}.

The information bottleneck theory \cite{IB} applied to neural networks
consists in computing the mutual information between the data and the
learned hidden representations on the one hand, and between labels and again
hidden learned representations on the other hand \cite{Tishby2015,
shwartz-ziv_opening_2017}. A successful training should maximize the
information with respect to the labels and simultaneously minimize the
information with respect to the input data, preventing overfitting and
leading to a good generalization.  While this intuition suggests new
learning algorithms and regularizers \cite{Chalk2016, Achille2016,
Alemi2017, Achille2017, Kolchinsky2017, Belghazi2017, Zhao2017}, we can also
hypothesize that this mechanism is already at play in a priori unrelated
commonly used optimization methods, such as the simple stochastic gradient
descent (SGD).  It was first tested in practice by
\cite{shwartz-ziv_opening_2017} on very small neural networks, to allow
the entropy to be estimated by binning of the hidden neurons activities.
Afterwards, the authors of \cite{saxe_information_2018} reproduced the
results of \cite{shwartz-ziv_opening_2017} on small networks using the
continuous entropy estimator of \cite{Kolchinsky2017}, but found that the
overall behavior of mutual information during learning is greatly affected
when changing the nature of non-linearities. Additionally, they investigate
the training of larger linear networks on i.i.d. normally distributed inputs
where entropies at each hidden layer can be computed analytically for an
additive Gaussian noise. The strategy proposed in the present paper allows
us to evaluate entropies and mutual informations in non-linear networks
larger than in \cite{saxe_information_2018, shwartz-ziv_opening_2017}.

\section{Numerical experiments}
\label{sec:results}
We present a series of experiments both aiming at further
validating the replica estimator and leveraging its power in noteworthy
applications. A first application presented in the paragraph 3.1 consists 
in using the replica formula in settings
where it is proven to be rigorously exact as a basis of comparison for other
entropy estimators.  The same experiment also contributes to the discussion of
the information bottleneck theory for neural networks by showing how, without
any learning, information-theoretic quantities have different behaviors for
different non-linearities. In the following paragraph 3.2, we validate the 
accuracy of the replica formula in a
learning experiment with USV-layers ---where it is not proven to be exact ---
by considering the case of linear networks for which information-theoretic
quantities can be otherwise computed in closed-form. We finally consider in
the paragraph 3.3, a
second application testing the information bottleneck theory for large non-linear networks. To this aim, we use the replica estimator to study
compression effects during learning.

{\bf 3.1 Estimators and activation comparisons---} 
Two non-parametric estimators have already been considered by
\cite{saxe_information_2018} to compute entropies and/or mutual
informations during learning. The kernel-density approach of
Kolchinsky et. al. \cite{Kolchinsky2017} consists in fitting a mixture
of Gaussians (MoG) to samples of the variable of interest and
subsequently compute an upper bound on the entropy of the MoG
\cite{Kolchinsky2017a}. The method of Kraskov et
al. \cite{Kraskov2004} uses nearest neighbor distances between samples
to directly build an estimate of the entropy. Both methods require the
computation of the matrix of distances between samples. 
Recently, \cite{Belghazi2017} proposed a new non-parametric estimator for mutual informations which involves the optimization of a neural network to tighten a bound.
It is
unfortunately computationally hard to test how these estimators behave in high
dimension as even for a known distribution the computation of the entropy
is intractable (\#P-complete) in most cases. However the replica method
proposed here is a valuable point of comparison for cases where it is rigorously exact. 

In the first numerical experiment we place ourselves in the setting of Theorem
1: a 2-layer network with i.i.d weight matrices, where the formula of Claim 1
is thus rigorously exact in the limit of large networks, and we compare the
replica results with the non-parametric estimators of \cite{Kolchinsky2017}
and \cite{Kraskov2004}.
Note that the requirement for smooth activations $(H2)$ of Theorem 1 can be relaxed (see discussion below the Theorem).
Additionally, non-smooth functions can be approximated arbitrarily closely by
smooth functions with equal information-theoretic quantities, up to
numerical precision.


We consider a neural network with layers of equal size
$n = 1000$ that we denote: $\X \rightarrow \T_1 \rightarrow \T_2$.
The input variable components are i.i.d. Gaussian with mean 0 and
variance 1. The weight matrices entries are also i.i.d. Gaussian with
mean 0. Their standard-deviation is rescaled by a factor $1/\sqrt{n}$
and then multiplied by a coefficient $\sigma$ varying between $0.1$
and $10$, i.e.  around the recommended value for training
initialization. To compute entropies, we consider noisy versions of
the latent variables where an additive white Gaussian noise of very
small variance ($\sigma^2_{\rm noise}=10^{-5}$) is added right before
the activation function, $\T_1 = f(W_1\X + \bm{\epsilon}_1)$ and
$\T_2 = f(W_2 f(W_1 \X) + {\bm\epsilon}_2)$ with
$\bm{\epsilon}_{1,2} \sim \mathcal{N}(0, \sigma^2_{\rm noise} I_n)$, which is also
done in the remaining experiments to guarantee the mutual
informations to remain finite. The non-parametric estimators \cite{Kolchinsky2017, Kraskov2004} were evaluated
using 1000 samples, as the cost of computing pairwise distances is
significant in such high dimension and we checked that the entropy
estimate is stable over independent draws of a sample of such a size (error bars
smaller than marker size). On Figure \ref{fig:comparisons}, we compare
the different estimates of $H(\T_1)$ and $H(\T_2)$ for different
activation functions: linear, hardtanh or ReLU. The hardtanh
activation is a piecewise linear approximation of the tanh,
$\mathrm{hardtanh}(x) \!=\! - 1$ for $x\!<\!-1$, $x$ for $-1\!<\! x\!< \!1$, and
$1$ for $x\!>\!1$, for which the integrals in the replica formula can be evaluated faster than for the tanh.

In the linear and hardtanh case, the non-parametric methods are following the tendency of the replica estimate when $\sigma$ is varied, but appear to systematically
over-estimate the entropy. For linear networks with Gaussian inputs and additive
Gaussian noise, every layer is also a multivariate Gaussian and therefore
entropies can be directly computed in closed form (\emph{exact} in the plot legend). When using the
Kolchinsky estimate in the linear case we also check the consistency of two
strategies, either fitting the MoG to the noisy sample or fitting the MoG to
the deterministic part of the $\T_\ell$ and augment the resulting variance with
$\sigma^2_{\rm noise}$, as done in \cite{Kolchinsky2017} (\emph{Kolchinsky et al. parametric} in the plot legend). In the network
with hardtanh non-linearities, we check that for small weight values, the
entropies are the same as in a linear network with same weights (\emph{linear
approx} in the plot legend, computed using the exact analytical result for linear networks and therefore plotted in a similar color to \emph{exact}).   Lastly, in the case of the ReLU-ReLU
network, we note that non-parametric methods are predicting an entropy
increasing as the one of a linear network with identical weights,
whereas the replica computation reflects its knowledge of the cut-off and
accurately features a slope equal to half of the linear network entropy (\emph{1/2 linear approx} in the plot legend). While non-parametric estimators are
invaluable tools able to approximate entropies from the mere knowledge of
samples,they inevitably introduce estimation errors.
The replica method is taking the opposite view. While being restricted to a
class of models, it can leverage its knowledge of the neural network structure to provide a reliable estimate. To our knowledge, there is no other entropy estimator able to incorporate such information about the underlying multi-layer model.

Beyond informing about estimators accuracy, this experiment also
unveils a simple but possibly important distinction between
activation functions. For the hardtanh activation, as the random
weights magnitude increases, the entropies decrease after reaching a
maximum, whereas they only increase for the unbounded activation
functions we consider -- even for the single-side saturating
ReLU. This loss of information for bounded activations was also
observed by \cite{saxe_information_2018}, where entropies were
computed by discretizing the output as a single neuron with bins of
equal size. In this setting, as the tanh activation starts to saturate
for large inputs, the extreme bins (at $-1$ and $1$) concentrate more
and more probability mass, which explains the information loss. Here
we confirm that the phenomenon is also observed when computing the
entropy of the hardtanh (without binning and with small noise injected
before the non-linearity). We check via the replica formula that the
same phenomenology arises for the mutual informations
$I(\X; \T_\ell)$ (see Section\ref{sec:micomp}).

\begin{figure}[t!]
      \includegraphics[width=1.01\linewidth]{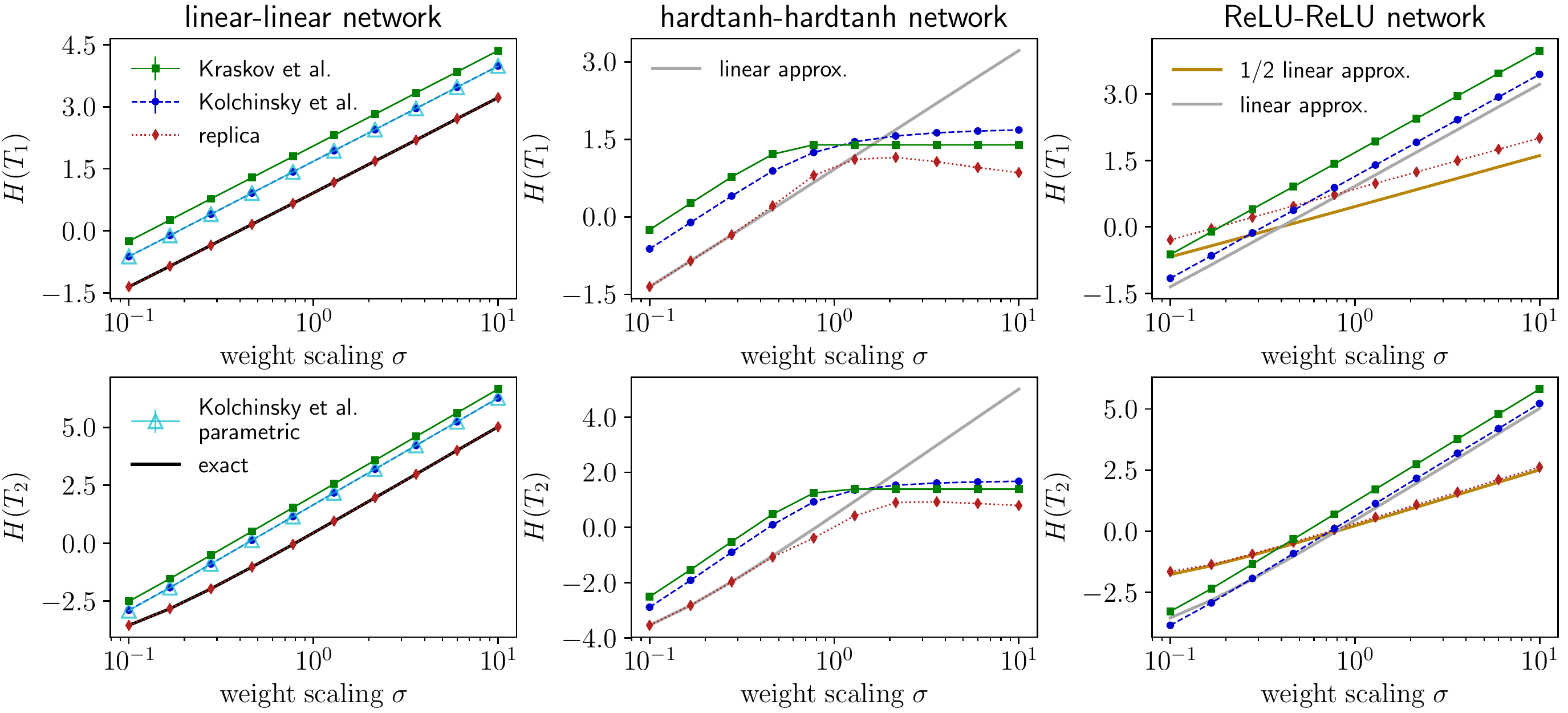}
     \caption{
        Entropy of latent variables in stochastic networks $\X \rightarrow
        \T_1 \rightarrow \T_2 $, with equally sized layers $n=1000$, inputs
        drawn from $\mathcal{N}(0, I_{n})$, weights from  $\mathcal{N}(0,
        \sigma^2 I_{n^2} / n)$, as a function of the weight scaling
        parameter $\sigma$. An additive white Gaussian noise $\mathcal{N}(0,
        10^{-5} I_{n})$ is added inside the non-linearity. Left column: linear
        network. Center column: hardtanh-hardtanh network. Right column:
        ReLU-ReLU network.
       \label{fig:comparisons}}     
\end{figure}

{\bf 3.2 Learning experiments with linear networks---} In the following, and in
Section \ref{sec:moreresults} of the Supplementary Material\citesm, we discuss
training experiments of different instances of the deep learning models
defined in Section \ref{sec:dlmodels}. We seek to study the simplest possible training strategies
achieving good generalization. Hence for all experiments we use plain
stochastic gradient descent (SGD) with constant learning rates,  without
momentum and  without any explicit form of regularization. The sizes of the
training and testing sets are taken equal and scale typically as a few
hundreds times the size of the input layer. Unless otherwise stated, plots
correspond to single runs, yet we checked over a few repetitions that outcomes
of independent runs lead to identical qualitative behaviors. The values of
mutual informations $I(\X;\T_\ell)$ are computed by considering noisy
versions of the latent variables where an additive white Gaussian noise of
very small variance ($\sigma^2_{\rm noise}=10^{-5}$) is added right before the
activation function, as in the previous experiment. This noise is neither
present at training time, where it could act as a regularizer, nor at testing
time. Given the noise is only assumed at the last layer, the second to last
layer is a deterministic mapping of the input variable; hence the replica
formula yielding mutual informations between adjacent layers gives us directly
$I(\T_\ell; \T_{\ell-1}) = H(\T_\ell) - H(\T_\ell| \T_{\ell-1}) = H(\T_\ell) -
H(\T_\ell| \X) = I(\T_\ell; \X)$. We provide a second Python package
\cite{lsd} to implement in Keras learning experiments on synthetic datasets, using USV-
layers and interfacing the first Python package \cite{dnner} for replica
computations.

To start with we consider the training of a linear
network in the teacher-student scenario. The teacher has also to be linear to be learnable:
we consider a simple single-layer network with additive white Gaussian noise, $\Y =
\tilde{W}_{\rm teach} \X + \bm{\epsilon}$, with input $\bm{x} \sim \mathcal{N}(0, I_{n})$ of size $n$, teacher matrix $\tilde{W}_{\rm teach}$ i.i.d. normally distributed 
as $\mathcal{N}(0, 1/n)$ , noise $\bm{\epsilon}
\sim \mathcal{N}(0, 0.01 I_{n})$, and output of size $n_{Y} = 4$. We train a student
network of three USV-layers, plus one fully connected unconstrained layer $\X \rightarrow \T_1 \rightarrow \T_2 \rightarrow \T_3
\rightarrow \hat{\Y}$ on the regression task, using plain SGD for the MSE loss
$(\hat{\Y}-\Y)^2$. We recall that in the USV-layers \eqref{eq:usv} only the diagonal matrix is updated during learning. On the left
panel of Figure \ref{fig:linearnet}, we report the learning curve and the
mutual informations between the hidden layers and the input in the case where
all layers but outputs have size $n=1500$. Again this linear setting is analytically
tractable and does not require the replica formula, a similar situation was studied in
\cite{saxe_information_2018}. In agreement with their observations, we find that
the mutual informations $I(\X;\T_\ell)$ keep on increasing throughout the learning,
without compromising the generalization ability of the student.  Now, we also
use this linear setting to demonstrate (i) that the replica formula remains correct
throughout the learning of the USV-layers and (ii) that the replica method gets
closer and closer to the exact result in the limit of large networks, as
theoretically predicted \eqref{eq:asymptotic}. To this aim, we repeat the experiment for $n$ varying
between $100$ and $1500$, and report the maximum and the mean value of the
squared error on the estimation of the $I(\X;\T_\ell)$ over all epochs of 5
independent training runs. We find that even if errors tend to increase with the
number of layers, they remain objectively very small and decrease drastically as
the size of the layers increases.

\begin{figure}[t!]
    \begin{center}
      \includegraphics[width=1.01\linewidth]{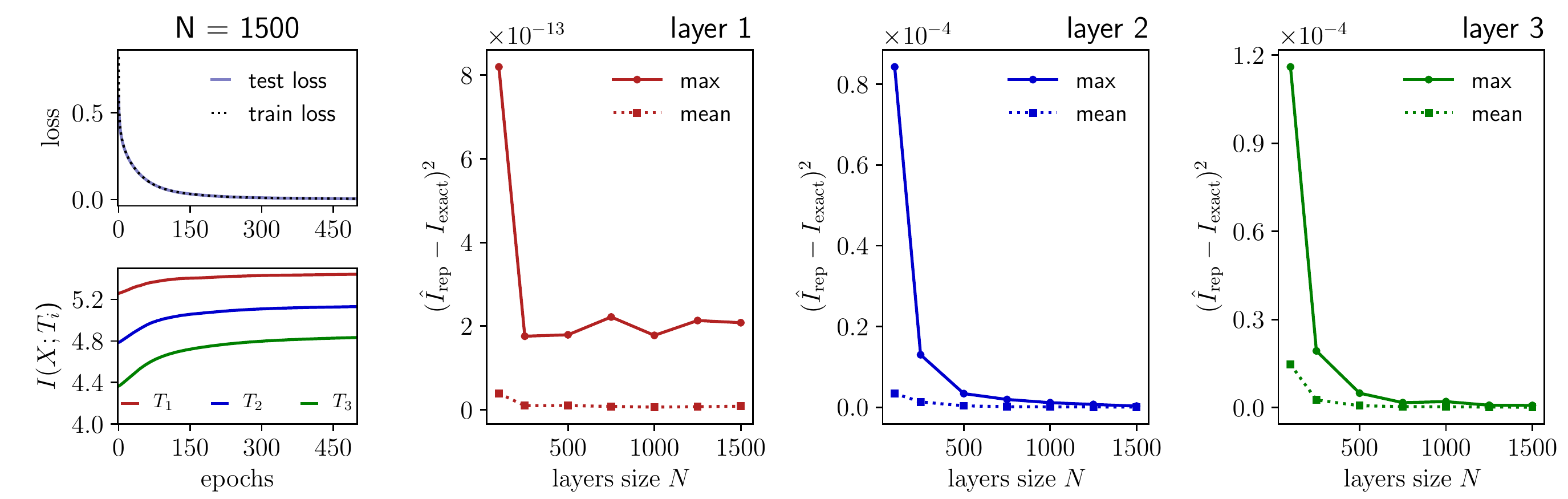}
     \caption{%
        Training of a 4-layer linear student of varying size on a regression
        task generated by a linear teacher of output size $n_{\Y}=4$.
        Upper-left: MSE loss on the training and testing sets during training
        by plain SGD for layers of size $n=1500$. Best training loss is 0.004735, 
        best testing loss is 0.004789. Lower-left: Corresponding
        mutual information evolution between hidden layers and input.
        Center-left, center-right, right: maximum and squared error of the
        replica estimation of the mutual information as a function of layers
        size $n$, over the course of 5 independent trainings for each value
        of $n$ for the first, second and third hidden layer.
       \label{fig:linearnet}}     
    \end{center}
\end{figure}

{\bf 3.3 Learning experiments with deep non-linear networks---}
Finally, we apply the replica formula to estimate mutual informations during the
training of 
non-linear networks on correlated input data.

We consider a simple single layer generative model $\X = \tilde{W}_{\rm gen} \Y +
\bm{\epsilon}$ with normally distributed code $\Y\sim \mathcal{N}(0, I_{n_Y})$ of
size $n_{Y} = 100$, data of size $n_{X} = 500$ generated with matrix $\tilde{W}_{\rm gen}$ i.i.d. normally distributed as $\mathcal{N}(0, 1/n_Y)$  and noise $\bm{\epsilon}
\sim \mathcal{N}(0, 0.01 I_{n_{X}})$. We then train a recognition model to
solve the binary classification problem of recovering the label $y =
\mathrm{sign}(Y_1)$, the sign of the first neuron in $\Y$, using plain
SGD but this time to minimize the cross-entropy loss. Note that the rest of
the initial code $(Y_2, .. Y_{n_{\Y}})$ acts as noise/nuisance with respect to
the learning task. We compare two 5-layers recognition models with 4 USV-
layers plus one unconstrained, of sizes 500-1000-500-250-100-2, and
activations either linear-ReLU-linear-ReLU-softmax (top row of Figure
\ref{fig:classif}) or linear-hardtanh-linear-hardtanh-softmax (bottom row).
Because USV-layers only feature $O(n)$ parameters instead of $O(n^2)$ we
observe that they require more iterations to train in general. In the case of
the ReLU network, adding interleaved linear layers was key to successful
training with 2 non-linearities, which explains the somewhat unusual architecture
proposed. For the recognition model using hardtanh, this was actually not an
issue (see Supplementary Material\citesm~for an experiment using only hardtanh
activations), however, we consider a similar architecture for fair comparison.
We discuss further the ability of learning of USV-layers in the Supplementary
Material\citesm.

This experiment is reminiscent of the setting of \cite{shwartz-ziv_opening_2017}, 
yet now tractable for networks of larger sizes. For both
types of non-linearities we observe that the mutual information between the
input and all hidden layers decrease during the learning, except for the very
beginning of training where we can sometimes observe a short phase of increase
(see zoom in insets). For the hardtanh layers this phase is longer and the
initial increase of noticeable amplitude.

In this particular experiment, the claim of
\cite{shwartz-ziv_opening_2017} that compression can occur during
training even with non double-saturated activation seems corroborated
(a phenomenon that was not observed by
\cite{saxe_information_2018}). Yet we do not observe that the
compression is more pronounced in deeper layers and its link to
generalization remains elusive. For instance, we do not see a delay in
the generalization w.r.t.  training accuracy/loss in the recognition
model with hardtanh despite of an initial phase without compression in
two layers. Further learning experiments, including a second run of this last experiment, are presented in the Supplementary
Material\citesm.

\begin{figure}[t!]
    \begin{center}
      \includegraphics[width=1.01\linewidth]{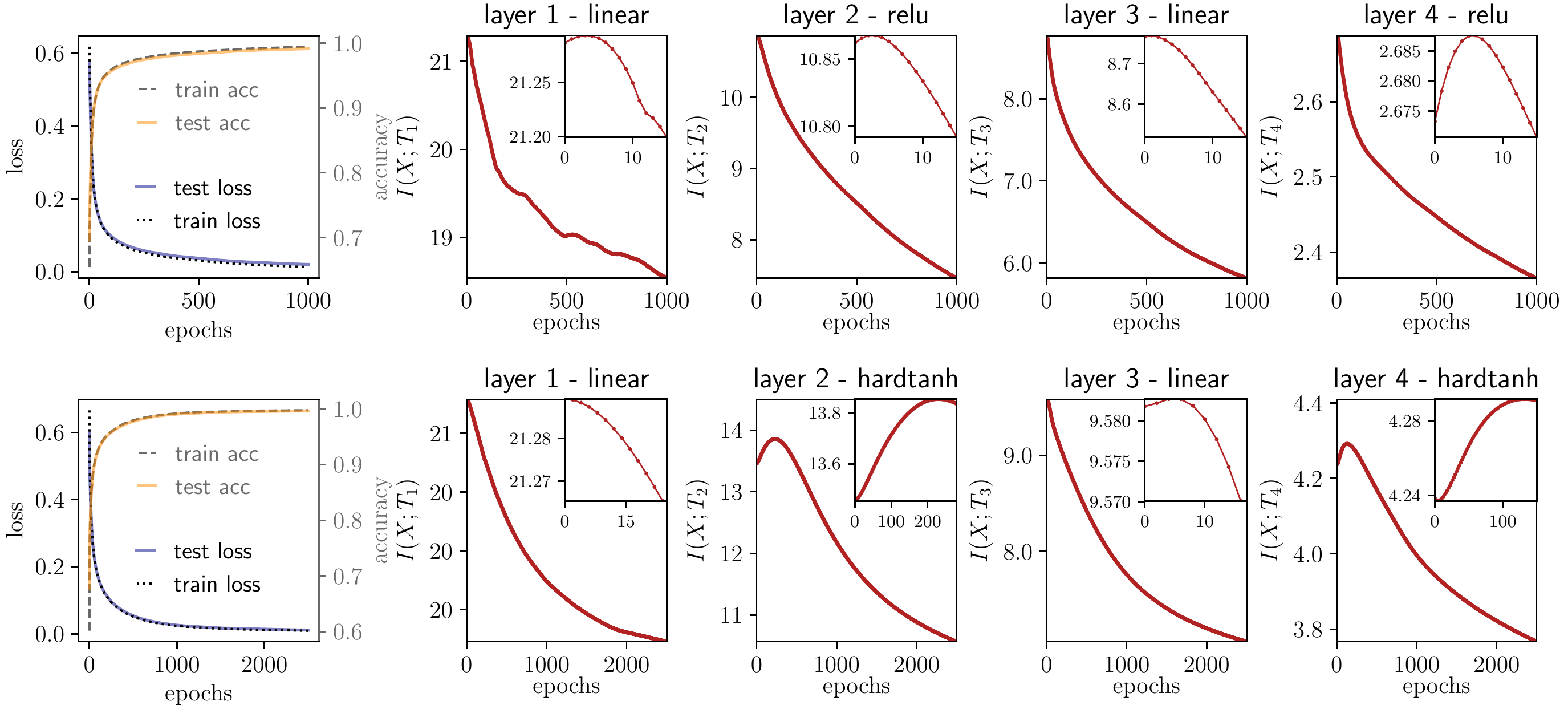}
     \caption{%
     	Training of two recognition models on a binary classification task with correlated input data and either ReLU (top) or hardtanh (bottom) non-linearities. Left: training and generalization cross-entropy loss (left axis) and accuracies (right axis) during learning. Best training-testing accuracies are 0.995 - 0.991 for ReLU version (top row) and 0.998 - 0.996 for hardtanh version (bottom row). Remaining colums: mutual information between the input and successive hidden layers. Insets zoom on the first epochs.
     	\label{fig:classif}}     
    \end{center}
\end{figure}


\section{Conclusion and perspectives}
\label{sec:conclusion}
We have presented a class of deep learning models together with a
tractable method to compute entropy and mutual information between
layers. This, we believe, offers a promising framework for further
investigations, and to this aim we provide Python packages that
facilitate both the computation of mutual informations and the training, 
for an arbitrary implementation of the model.  
In the future, allowing for biases by extending the proposed formula would 
improve the fitting power of the considered neural network models.

We observe in our high-dimensional experiments that compression can
happen during learning, even when using ReLU activations. While we
did not observe a clear link between generalization and compression in our
setting, there are many directions to be further explored within
the models presented in Section \ref{sec:dlmodels}. Studying the
entropic effect of regularizers is a natural step to formulate
an entropic interpretation to generalization. Furthermore, while our
experiments focused on the supervised learning, the replica formula
derived for multi-layer models is general and can be applied in
unsupervised contexts, for instance in the theory of VAEs. On
the rigorous side, the greater perspective remains proving the replica
formula in the general case of multi-layer models, and further confirm
that the replica formula stays true after the learning of the
USV-layers. Another question worth of future investigation is whether
the replica method can be used to describe not only entropies and
mutual informations for learned USV-layers, but also the optimal
learning of the weights itself.

\section*{Acknowledgments}
The authors would like to thank Léon Bottou, Antoine Maillard, Marc Mézard, Léo Miolane,
and Galen Reeves for insightful discussions. This work has been supported by
the ERC under the European Union’s FP7 Grant Agreement 307087-SPARCS and the
European Union's Horizon 2020 Research and Innovation Program 714608-SMiLe, as well as by the French Agence Nationale de la Recherche under grant
ANR-17-CE23-0023-01 PAIL. Additional funding is acknowledged by MG from
“Chaire de recherche sur les modèles et sciences des données”, Fondation CFM
pour la Recherche-ENS; by AM from Labex DigiCosme; and by CL from the Swiss
National Science Foundation under grant 200021E-175541. We gratefully
acknowledge the support of NVIDIA Corporation with the donation of the Titan
Xp GPU used for this research.

\putbib[main]
\end{bibunit}

\clearpage

\begin{center}
    \LARGE Supplementary Material
\end{center}

\setcounter{section}{0}
\addtocontents{toc}{\protect\setcounter{tocdepth}{2}}
\tableofcontents

\begin{bibunit}
\section{Replica formula for the entropy}

\subsection{Background}

The replica method \cite{sherrington_solvable_1975, mezard1990spin} was
first developed in the context of disordered physical systems where the
strength of interactions $J$ are randomly distributed, $J \sim P_J (J)$.
Given the distribution of microstates $\bm{x}$ at a fixed temperature
$\beta^{-1}$, $P (\bm{x} | \beta, J) = \frac{1}{\mathcal{Z} (\beta, J)} \,
e^{-\beta \mathcal{H}_J (\bm{x})}$, one is typically interested in the
average free energy
\begin{equation}
    \mathcal{F} (\beta) = -\lim_{n \to \infty} \frac{1}{\beta n} \mathbb{E}_J
    \log \mathcal{Z} (\beta, J),
    \label{eq:avg_free_energy}
\end{equation}
from which typical macroscopic behavior is obtained. Computing
(\ref{eq:avg_free_energy}) is hard in general, but can be done with the
use of specific techniques.
The replica method in particular employs the following mathematical identity
\begin{equation}
    \mathbb{E}_J \log \mathcal{Z} = \lim_{a \to 0} \, \frac{\mathbb{E}_J
    \mathcal{Z}^a - 1}{a}.
\end{equation}
Evaluating the average on the r.h.s. leads, under the
\emph{replica-symmetry} assumption, to an expression of the form
$\mathbb{E}_J \mathcal{Z}^a = e^{-\beta n\,a \cdot \extr_{\!\bm{q}} \phi (\beta,
\bm{q})}$, where $\phi (\beta, \bm{q})$ is known as the
\emph{replica-symmetric free energy}, and $\bm{q}$ are \emph{order
parameters} related to macroscopic quantities of the system. We then write
$\mathcal{F} (\beta) = \extr_{\!\bm{q}} \phi (\beta, \bm{q})$, so that
computing $\mathcal{F}$ depends on solving the saddle-point equations
$\nabla_{\bm{q}} \phi \big|_{\bm{q}^\ast} = 0$.

Computing (\ref{eq:avg_free_energy}) is of interest in many
problems outside of physics
\cite{nishimori_statistical_2001,mezard2009information}. Early applications
of the replica method in machine learning include the evaluation of the
optimal capacity and generalization error of the perceptron
\cite{gardner_space_1988,
gardner_optimal_1988,mezard_space_1989,seung_statistical_1992,engel_statistical_2001}.
More recently it has also been used in the study of problems in
telecommunications and signal processing, such as channel divison multiple
access \cite{tanaka_statistical-mechanics_2002} and compressed sensing
\cite{rangan_asymptotic_2009,
kabashima_typical_2009,ganguli_statistical_2010,krzakala_statistical-physics-based_2012}.
For a review of these developments see
\cite{zdeborova_statistical_2016}.

These particular examples all share the following common probabilistic
structure
\begin{equation}
    \left\{
        \begin{aligned}
            &\bm{y} \sim P_{Y | Z} (\bm{y} | W \bm{x}), \\
            &\bm{x} \sim P_X (\bm{x}),
        \end{aligned}
    \right.
    \label{eq:glm}
\end{equation}
for fixed $W$ and different choices of $P_{Y | Z}$ and $P_X$; in other
words, they are all specific instances of \emph{generalized linear models}
(GLMs).  Using Bayes theorem, one writes the posterior distribution of
$\bm{x}$ as $P(\bm{x} | W, \bm{y}) = \frac{1}{P(W, \bm{y})} \, P_{Y | Z}
(\bm{y} | W \bm{x}) \, P_X (\bm{x})$; the replica method is then employed to
evaluate the average log-marginal likelihood $\mathbb{E}_{W, \bm{y}} \log
P(W, \bm{y})$, which gives us typical properties of the model. Note this
quantity is nothing but the entropy of $\bm{y}$ given $W$, $H(\bm{y} | W)$.

The distribution $P_J$ (or $P_W$ in the notation above) is usually assumed
to be i.i.d. on the elements of the matrix $J$. However, one can also use
the same techniques to approach $J$ belonging to arbitrary
orthogonally-invariant ensembles. This approach was pioneered by
\cite{marinari_replica_1994,
parisi_mean-field_1995,opper_tractable_2001,cherrier_role_2003}, and in the
context of generalized linear models by
\cite{takeda_analysis_2006,muller_vector_2008,kabashima_inference_2008,shinzato_perceptron_2008,shinzato_learning_2009,
tulino_support_2013,kabashima_signal_2014}.

Generalizing the analysis of (\ref{eq:glm}) to multi-layer models has
first been considered by \cite{manoel_multi-layer_2017} in the context of
Gaussian i.i.d. matrices, and by \cite{fletcher_inference_2017,
reeves_additivity_2017} for orthogonally-invariant ensembles. In particular,
\cite{reeves_additivity_2017} has an expression for the replica free energy
which should be in principle equivalent to the one we present, although its
focus is in the derivation of this expression rather than applications or
explicit computations.

Finally, it is worth mentioning that even though the replica method is
usually considered to be non-rigorous, its results have been proven to be
exact for different classes of models, including GLMs
\cite{talagrand_spin_2003,panchenko2013sherrington,
barbier_mutual_2016,lelarge_fundamental_2016,reeves_replica-symmetric_2016,BarbierOneLayerGLM},
and are widely conjectured to be exact in general. In fact, in section
\ref{sec:partI} we show how to proove the formula in the particular
case of two-layer with Gaussian matrices.

\subsection{Entropy in single/multi-layer generalized linear models}
\label{sec:s2m}

\subsubsection{Single-layer}

For a single-layer generalized linear model
\begin{equation}
    \left\{ \begin{aligned}
        &\bm{x} \sim P_X (\bm{x}), \\
        &\bm{y} \sim P_{Y | Z} (\bm{y} | W \bm{x}).
    \end{aligned} \right.
\end{equation}
with $P_X$ and $P_{Y | Z}$ separable in the components
of $\bm{x} \in \mathbb{R}^n$ and $\bm{y} \in \mathbb{R}^m$, and
$W \in \mathbb{R}^{m \times n}$ Gaussian i.i.d., $W_{\mu i} \sim
\mathcal{N} (0, 1/n)$, define $\alpha = m/n$ and $\rho = \mathbb{E}_x
x^2$. Then the entropy of $\bm{y}$ in the limit $n \to \infty$ is given by \cite{zdeborova_statistical_2016,BarbierOneLayerGLM}
\begin{equation}
    \lim_{n \to \infty} \, n^{-1} H(\bm{y} | W) = \min \extr_{A, V} \phi (A, V),
\end{equation}
where
\begin{equation}
    \phi(A, V) = -\frac{1}{2} A V + I(x; x + \frac{\xi_0}{\sqrt{A}})
        + \alpha H(y | \xi_1; V, \rho),
    \label{eq:phi_sl}
\end{equation}
with $\xi_0$, $\xi_1$ both normally distributed with zero mean and unit
variance, and $P(y | \xi; V, \rho) = \int \mathcal{D}\tilde{z} \, P_{Y | Z} (y |
\sqrt{\rho - V} \xi + \sqrt{V} \tilde{z})$ (here $\mathcal{D}\tilde{z}$ denotes
integration over a standard Gaussian measure).

This can be adapted to orthogonally-invariant ensembles by using the
techniques described in \cite{kabashima_inference_2008}. Let $W = USV^T$,
where $U$ is orthogonal, $S$ diagonal and arbitrary and $V$ is Haar
distributed. We denote by $\pi_W(\lambda_W)$ the distribution of eigenvalues of
$W^T W$, and the second moment of $\bm{z} = W \bm{x}$ by $\tilde{\rho} =
\frac{\mathbb{E} \lambda_W}{\alpha} \rho$. The entropy is then written
as $n^{-1} H (\bm{y} | W) = \min \extr_{A, V, \tilde{A}, \tilde{V}} \phi (A, V,
\tilde{A}, \tilde{V})$, where
\begin{equation}
    \phi(A, V, \tilde{A}, \tilde{V}) = -\frac{1}{2} \big( \tilde{A} V
        + \alpha A \tilde{V} - F_W(AV) \big) + I(x; x +
        \frac{\xi_0}{\sqrt{\tilde{A}}}) + \alpha H(y | \xi_1; \tilde{V},
        \tilde{\rho}),
    \label{eq:phi_sl-kabashima}
\end{equation}
and
\begin{equation}
    F_W (x) = \min_{\theta} \; \big\{ 2 \alpha \theta + (\alpha -
        1) \log (1 - \theta) + \mathbb{E}_{\lambda_W} \log [ x \lambda_W +
        (1 - \theta) (1 - \alpha \theta) ] \big\}.
    \label{eq:F}
\end{equation}

If the matrix is Gaussian i.i.d., $\pi_W (\lambda_W)$ is Marchenko-Pastur
and $F_W(AV) = \alpha A V$. Extremizing over $A$ gives $\tilde{V} =
V$, so that (\ref{eq:phi_sl}) is recovered. In this precise case, it
has been proven rigorously in \cite{BarbierOneLayerGLM}.

\subsubsection{Multi-layer}

Consider the following multi-layer generalized linear model
\begin{equation}
    \left\{
        \begin{aligned}
            &t_{0,i} \equiv x_i \sim P_0 (x_i), \\
            &t_{1,i} \sim P_1 (t_{1,i} | W_1 \bm{x}), \\
            &t_{2,i} \sim P_2 (t_{2,i} | W_2 \bm{t}_{1}), \\
            &\vdots \\
            &t_{L,i} \equiv y_i \sim P_L (y | W_L \bm{t}_{L - 1}),
        \end{aligned}
    \right.
\end{equation}
where the $W_\ell \in \mathbb{R}^{n_{\ell} \times n_{\ell - 1}}$ are fixed,
and the $i$ index runs from $1$ to $n_\ell$. Using Bayes' theorem we can
write
\begin{equation}
    P (\bm{t}_0 | \bm{t}_L, \ul{W}) \!=\! \frac{1}{P(\bm{t}_L, \ul{W})}
        \int \prod_{\ell = 1}^{L - 1} d\bm{t}_\ell \prod_{\ell = 1}^L
        P(\bm{t}_\ell | W_\ell \bm{t}_{\ell - 1}) P(\bm{t}_0).
\end{equation}
with $\ul{W} = \{W_\ell\}_{\ell = 1, \dots, L}$.
Performing posterior inference requires one to evaluate the marginal
likelihood
\begin{equation}
    P (\bm{t}_L, \ul{W}) = \int \prod_{\ell = 0}^{L - 1} d\bm{t}_\ell \,
    \prod_{\ell = 1}^L P(\bm{t}_\ell | W_\ell \bm{t}_{\ell - 1}) \, P(\bm{t}_0),
\end{equation}
which is in general hard to do.
Our analysis employs the framework introduced in
\cite{manoel_multi-layer_2017} to compute the entropy of $\bm{t}_L$ in the
limit $n_0 \to \infty$ with $\tilde{\alpha}_\ell = n_\ell / n_0$ finite for
$\ell = 1, \dots, L$
\begin{equation}
    \lim_{n_0 \to \infty} n_0^{-1} H(\bm{t}_L | \ul{W}) = \min
    \extr_{\bm{A}, \bm{V}, \bm{\tilde{A}}, \bm{\tilde{V}}} \phi (\bm{A},
    \bm{V}, \bm{\tilde{A}}, \bm{\tilde{V}}),
\end{equation}
with the replica potential $\phi$ given by
\begin{align}
    &\phi (\bm{A}, \bm{V}, \bm{\tilde{A}}, \bm{\tilde{V}}) = -\frac12
        \sum_{\ell = 1}^L \tilde{\alpha}_{\ell - 1} \big[\tilde{A}_\ell V_\ell +
        \alpha_\ell A_\ell \tilde{V}_\ell - F_{W_\ell} (A_\ell V_\ell)\big] +
        I(t_0; t_0 + \frac{\xi_0}{\sqrt{\tilde{A}_1}}) \, + \label{eq:phi_ml} \\
    &\kern3em +\, \sum_{\ell = 1}^{L - 1} \tilde{\alpha}_\ell \bigg[
        H(t_\ell | \xi_\ell; \tilde{A}_{\ell + 1}, \tilde{V}_{\ell},
        \tilde{\rho}_\ell) - \frac12 \log (2 \pi e \tilde{A}^{-1}_{\ell + 1})
        \bigg] +\tilde{\alpha}_L H(t_L | \xi_L; \tilde{V}_L,
        \tilde{\rho}_L).  \notag
\end{align}
and the $\bm{\xi}$ normally distributed with zero mean and unit
variance. The $t_\ell$ in the expression above are distributed as
\begin{align}
    P(t_\ell | \xi_\ell; A, V, \rho) &= \int \mathcal{D}\tilde{\xi} \mathcal{D}\tilde{z} \, P_\ell
        (t_\ell + \sqrt{1 / A} \tilde{\xi} | \sqrt{\rho - V} \xi_\ell + \sqrt{V}
        \tilde{z}), \\
    \label{eq:ptxi}
    P(t_L | \xi_L; V, \rho) &= \int \mathcal{D}\tilde{z} \, P_L(t_L |
        \sqrt{\rho - V} \xi_L + \sqrt{V} \tilde{z}).
\end{align}
where $\int \mathcal{D}z \, (\cdot) = \int dz \, \mathcal{N} (z; 0, 1) \, (\cdot)$
denotes the integration over the standard Gaussian measure.

\subsection{A simple heuristic derivation of the multi-layer formula}

Formula (\ref{eq:phi_ml}) can be derived using a simple argument. Consider
the case $L = 2$, where the model reads
\begin{equation}
    \left\{ \begin{aligned}
        &\bm{t}_0 \sim P_0 (\bm{t}_0), \\
        &\bm{t}_1 \sim P_1 (\bm{t}_1 | W_1 \bm{t}_0), \\
        &\bm{t}_2 \sim P_2 (\bm{t}_2 | W_2 \bm{t}_1),
    \end{aligned} \right.
\end{equation}
with $\bm{t}_\ell \in \mathbb{R}^{n_\ell}$ and $W \in \mathbb{R}^{n_\ell
\times n_{\ell - 1}}$.  For the problem of estimating $\bm{t}_1$ given the
knowledge of $\bm{t}_2$, we compute $\lim_{n_1 \to \infty} n_1^{-1}
H(\bm{t}_2 | W_1)$ using the replica free energy
(\ref{eq:phi_sl-kabashima})
\begin{align}
    &\phi(A_2, V_2, \tilde{A}_2, \tilde{V}_2) \!=\!
        -\frac{1}{2} \big( \tilde{A}_2 V_2 + \alpha_2 A_2 \tilde{V_2} - F_{W_2}
        (A_2 V_2) \big) \, + \\
    &\kern5em + I(t_1; t_1 + \frac{\tilde{\xi}_1}{\sqrt{\tilde{A_2}}}) +
        \alpha_2 H(t_2 | \xi_2; \tilde{V}_2, \tilde{\rho}_2).
    \label{eq:deriv0}
\end{align}
Note that
\begin{equation}
    \begin{aligned}
        I(t_1; t_1 + \frac{\tilde{\xi}_1}{\sqrt{\tilde{A}_2}}) &= H\big( t_1 +
            \frac{\tilde{\xi}_1}{\sqrt{\tilde{A_2}}}\big) -
            H\big(\frac{\tilde{\xi}_1}{\sqrt{\tilde{A}_2}}\big) \\
        &= H\big( t_1 + \frac{\tilde{\xi}_1}{\sqrt{\tilde{A_2}}}\big) 
            - \frac{1}{2} \log(2 \pi e \tilde{A}^{-1}_2).
    \end{aligned}
    \label{eq:deriv1}
\end{equation}
Moreover, $H(t_1 + \tilde{\xi}_1 / \sqrt{\tilde{A}}_2)$ can be obtained
from the replica free energy of another problem: that of estimating
$\bm{t}_0$ given the knowledge of (noisy) $\bm{t}_1$, which can again be
written using (\ref{eq:phi_sl-kabashima})
\begin{equation}
    \lim_{n_0 \to \infty} n_0^{-1} \, H(t_1 +
        \frac{\tilde{\xi}_1}{\sqrt{\tilde{A}_2}}) = \min \extr_{A_1, V_1,
        \tilde{A}_1, \tilde{V}_1} \phi_1(A_1, V_1, \tilde{A}_1, \tilde{V}_1),
    \label{eq:deriv2}
\end{equation}
with
\begin{align}
    &\phi_1(A_1, V_1, \tilde{A}_1, \tilde{V}_1) \!=\!
        -\frac{1}{2} \big( \tilde{A}_1 V_1 + \alpha_1 A_1 \tilde{V_1} - F_{W_1}
        (A_1 V_1) \big) \, + \\
    &\kern5em + I(t_0; t_0 + \frac{\xi_0}{\sqrt{\tilde{A_1}}}) + \alpha_1
        H(t_1 | \xi_1; \tilde{A}_1, \tilde{V}_1, \tilde{\rho}_1),
    \label{eq:deriv3}
\end{align}
and the noise $\tilde{\xi}_1$ being integrated in the computation of $H(t_1 |
\xi_1)$, see (\ref{eq:ptxi}). Replacing (\ref{eq:deriv1})-(\ref{eq:deriv3})
in (\ref{eq:deriv0}) gives our formula (\ref{eq:phi_ml}) for $L = 2$;
further repeating this procedure allows one to write the equations for
arbitrary $L$.

\subsection{Formulation in terms of tractable integrals}
\label{sec:entropy}

While expression (\ref{eq:phi_ml}) is more easily written in terms of
conditional entropies and mutual informations, evaluating it requires us
to explicitely state it in terms of integrals, which we do below.
We first consider the Gaussian i.i.d.  In this case, the multi-layer
formula was derived with the cavity and replica method  by
\cite{manoel_multi-layer_2017}, and we shall use their results
here. Assuming that
$W_\ell \in \mathbb{R}^{n_{\ell}
\times n_{\ell - 1}}$ such that $W_{\ell,\mu i} \sim \mathcal{N} (0, 1 /
n_{\ell - 1})$ and using the replica formalism, Claim 1 from the main text
becomes, in this case
\begin{equation}
\lim_{n_0 \to \infty} n_0^{-1} H(\bm{t}_L | \ul{W})
= \min \extr_{\bm{A}, \bm{V}} \phi (\bm{A}, \bm{V}),
\end{equation}    
with the replica potential $\phi$ evaluated from
\begin{equation}
    \phi (\bm{A}, \bm{V}) = \frac{1}{2} \sum_{\ell = 1}^L
    \tilde{\alpha}_{\ell} A_\ell (\rho_\ell - V_\ell) - \mathcal{K}
    (\bm{A}, \bm{V}, \bm{\rho}),
    \label{eq:phi_mlamp}
\end{equation}
and
\begin{equation}
    \mathcal{K} (\bm{A}, \bm{V}, \bm{\rho}) = K_0 (A_1) +
        \sum_{\ell = 1}^{L - 1} \tilde{\alpha}_\ell K_\ell (A_{\ell + 1},
        V_\ell, \rho_\ell) + \tilde{\alpha_L} K_L (V_L, \rho_L).
    \label{eq:K}
\end{equation}
The constants $\alpha_\ell$, $\tilde{\alpha_\ell}$ and $\rho_\ell$ are
defined as following\footnote{Note that due to the central limit theorem,
$\rho_\ell$ can be evaluated from $\rho_{\ell - 1}$ using $\rho_\ell = \int
dt dz \, P_\ell(t | z) \mathcal{N} (z; 0, \rho_{\ell - 1}) \, t^2$.}:
$\alpha_\ell = n_\ell / n_{\ell - 1}$, $\tilde{\alpha}_\ell = n_{\ell} / n_0$,
$\rho_\ell = \int dt \, P_{\ell - 1}(t) \, t^2$.  Moreover
\begin{equation}
    K_\ell (A, V, \rho) = \mathbb{E}_{b, t, z, w | A, V, \rho} \, \log Z_\ell (A, b,
        V, w),
    \label{eq:Is1}
\end{equation}
for $1 \le \ell \le L-1$, and
\begin{equation}
    \begin{aligned}
        &K_0 (A) = \mathbb{E}_{b, x | A} \, \log Z_0 (A, b), \\
        &K_L (V, \rho) = \mathbb{E}_{y, z, w | V, \rho} \log Z_L (y, V, w).
    \end{aligned}
    \label{eq:Is2}
\end{equation}

where 
\begin{flalign}
        &Z_0 (A, B) \!=\! {\textstyle\int} dx \, P_0 (x) e^{-\frac12 A x^2 + B x}, \notag \\
        &Z_\ell (A, B, V, \omega) \!=\! {\textstyle\int} dt dz \, P_\ell(t | z)
            \mathcal{N} (z; \omega, V) e^{-\frac12 A t^2 + Bt}, \notag \\
        &Z_L (y, V, \omega) \!=\! {\textstyle\int} dz \, P_L(y | z) \mathcal{N} (z; \omega, V).
\end{flalign}
and the measures over which expectations are computed are
\begin{flalign}
        &p_0 (b, x ; A) \!=\! P_0 (x) \mathcal{N} (b; Ax, A), \notag \\
        &p_\ell (b, t, z, w ; A, V, \rho) \!=\! P_\ell (t | z) \mathcal{N} (b; At, A)
            \mathcal{N} (z; w, V) \mathcal{N} (w, 0, m), \label{eq:measures} \\
        &p_L (y, z, w ; V, \rho) \!=\! P_L (y | z) \mathcal{N} (z; w, V)
            \mathcal{N} (w; 0, \rho\!-\!V). \notag
\end{flalign}

We typically pick the likelihoods $P_\ell$ so that $Z_\ell$ can be
computed in closed-form, which allows for a number of activation functions
-- linear, probit, ReLU etc. However, our analysis is quite general and can
be done for arbitrary likelihoods, as long as evaluating (\ref{eq:Is1}) and
(\ref{eq:Is2}) is computationally feasible.

Finally, the replica potential above can be generalized to the
orthogonally-invariant case using the framework of
\cite{kabashima_inference_2008}, which we have described in
subsection \ref{sec:s2m}
\begin{equation}
    \phi (\bm{A}, \bm{V}, \tilde{\bm{A}}, \tilde{\bm{V}}) =
        \frac12 \sum_{\ell = 1}^L \tilde{\alpha}_{\ell - 1}
        \big[\tilde{A}_\ell (\rho_\ell - V_\ell) -
    \alpha_\ell A_\ell \tilde{V}_\ell + F_{W_\ell} (A_\ell
        V_\ell)\big] - \mathcal{K} (\tilde{\bm{A}}, \tilde{\bm{V}}; \bm{\tilde{\rho}}).
    \label{eq:phi_oi}
\end{equation}
If the matrix $W_\ell$ is Gaussian i.i.d., the distribution of eigenvalues
of $W^T_\ell W_\ell$ is Marchenko-Pastur and one gets $F_{W_\ell} (A_\ell
V_\ell) = \alpha_\ell A_\ell V_\ell$, $\tilde{A}_\ell = \alpha_\ell A_\ell$,
$\tilde{V}_\ell = V_\ell$, so that (\ref{eq:phi_mlamp}) is recovered.
Moreover, for $L = 1$, one obtains the replica free energy proposed by
\cite{kabashima_inference_2008,shinzato_perceptron_2008,shinzato_learning_2009}.

\subsubsection{Recovering the formulation in terms of conditional entropies}

One can rewrite the formulas above in a simpler way. By manipulating the
measures (\ref{eq:measures}) one obtains
\begin{equation}
    K_0 (A, \rho) = -I(x; b) + \frac{1}{2} A \rho,
\end{equation}
for $x \sim P_0(x)$ and $b \sim \mathcal{N} (b; Ax, A)$. Introducing a
standard normal variable $\xi_0$ and using the invariance of mutual
informations, this can be written as
\begin{equation}
    K_0 (A, \rho) = -I(x; x + \sqrt{1/A} \xi_0) + \frac{1}{2} A \rho.
\end{equation}

Similarly
\begin{equation}
    K_L (V, \rho) = -H(y | w; V),
\end{equation}
for $P(y | w; V) = \int dz P_L (y | z) \mathcal{N} (z; w, V)$ and $P(w;
V, \rho) = \mathcal{N} (w; 0, \rho - V)$. Introducing standard normal
$\xi_L$ \begin{equation}
    K_L(V, \rho) = -H(y | \xi_L; V, \rho).
\end{equation}
where 
\begin{equation}
    P(y | \xi_L; V, \rho) = \int \mathcal{D}\tilde{z} \, P_L(y | \sqrt{\rho - V}
        \xi_L + \sqrt{V} \tilde{z}),
\end{equation}
and $\int \mathcal{D}\tilde{z} \, (\cdot) = \int d\tilde{z} \, \mathcal{N} (z; 0, 1) \,
(\cdot)$ denotes integration over the standard Gaussian measure.

Finally, for the $K_\ell$
\begin{equation}
    K_\ell (A, V, \rho) = -H(b | w; A, V, \rho) + \frac12 A \rho +
    \frac{1}{2} \log(2 \pi e A^{-1}),
\end{equation}
for $P(b | w; A, V) = \int dt dz \, \mathcal{N} (b; At, A) P_\ell(t | z)
\mathcal{N} (z; w, V)$ and $P(w; V, \rho) = \mathcal{N} (w; 0, \rho - V)$.
Introducing standard normal $\xi_\ell$
\begin{equation}
    K_\ell (A, V, \rho) = -H(t_\ell | \xi_\ell; A, V) + \frac{1}{2} A
        \rho + \frac{1}{2} \log (2 \pi e A^{-1})\,,
\end{equation}
where
\begin{equation}
    P(t_\ell | \xi_\ell; A, V, \rho) = \int \mathcal{D}\tilde{\xi} \mathcal{D}\tilde{z} \,
        P_\ell (t_\ell + \sqrt{1 / A} \tilde{\xi} | \sqrt{\rho - V} \xi_\ell +
        \sqrt{V} \tilde{z}).
\end{equation}

We can then rewrite (\ref{eq:K}) as
\begin{equation}
    \begin{aligned}
        &\mathcal{K} (\bm{A}, \bm{V}, \bm{\rho}) = \frac12 \sum_{\ell = 1}^L
            \tilde{\alpha}_{\ell - 1} A_\ell \rho_\ell - I(t_0; t_0 +
            \frac{\xi_0}{\sqrt{A_1}}) - \, \\
        &\kern2em -\, \sum_{\ell = 1}^{L - 1} \tilde{\alpha}_\ell \bigg[
            H(t_\ell | \xi_\ell; A_{\ell + 1}, V_\ell, \rho_\ell) - \frac12 \log
            (2 \pi e A^{-1}_{\ell + 1}) \bigg] -\tilde{\alpha}_L H(t_L | \xi_L; V_L, \rho_L).
    \end{aligned}
\end{equation}
Replacing in (\ref{eq:phi_mlamp}) yields
\begin{equation}
    \begin{aligned}
        &\phi (\bm{A}, \bm{V}) = -\frac12 \sum_{\ell = 1}^L
            \tilde{\alpha}_{\ell - 1} A_\ell V_\ell + I(t_0; t_0 +
            \frac{\xi_0}{\sqrt{A_1}}) + \, \\
        &\kern2em +\, \sum_{\ell = 1}^{L - 1} \tilde{\alpha}_\ell \bigg[
            H(t_\ell | \xi_\ell; A_{\ell + 1}, V_\ell, \rho_\ell) - \frac12
            \log (2 \pi e A^{-1}_{\ell + 1}) \bigg] +\tilde{\alpha}_L H(t_L | \xi_L;
            V_L, \rho_L).
    \end{aligned}
\end{equation}

\subsection{Solving saddle-point equations}

In order to deal with the extremization problem in 
\begin{equation}
    \lim_{n_0 \to \infty} n_0^{-1} H (\bm{t}_L | \ul{W}) = \min \extr_{\bm{A},
        \bm{V}, \tilde{\bm{A}}, \tilde{\bm{V}}} \phi (\bm{A}, \bm{V},
        \tilde{\bm{A}}, \tilde{\bm{V}}),
\end{equation}
one needs to solve the saddle-point equations $\nabla_{\{\bm{A}, \bm{V},
\tilde{\bm{A}}, \tilde{\bm{V}}\}} \phi = 0$. In what follows we propose two
different methods to do that: a fixed-point iteration, and the state evolution
of the ML-VAMP algorithm \cite{fletcher_inference_2017}.

\subsubsection{Method 1: fixed-point iteration}

We first introduce the following function, which is related to the
derivatives of $F_{W_\ell}$
\begin{equation}
    \psi_\ell (\theta, \gamma) = 1 - \gamma \big[ \mathcal{S}_\ell \big( -\gamma^{-1}
        (1 - \theta) (1 - \alpha_\ell \theta) \big) \big]^{-1},
\end{equation}
where $\mathcal{S}_\ell (z) = \mathbb{E}_{\lambda_\ell}
\frac{1}{\lambda_\ell - z}$ is the Stieltjes transform of $W_\ell^T W_\ell$,
see e.g. \cite{tulino_random_2004}. In our experiments we have evaluated
$\mathcal{S}$ approximately by using the empirical distribution of
eigenvalues.

The fixed point iteration consist in looping through layers $L$ to $1$,
first computing the $\vartheta_\ell$ which minimizes (\ref{eq:F}), and
$\tilde{V}_\ell$
\begin{equation}
    \begin{aligned}
    &\vartheta_\ell^{(t)} = \argmin_{\theta} \left[
        \theta - \psi_\ell (\theta, A_\ell^{(t)} V_\ell^{(t)}) \right]^2, \\
    &\tilde{V}_\ell^{(t)} = \vartheta_\ell^{(t)} / A_\ell^{(t)},
    \end{aligned}
    \label{eq:fp1}
\end{equation}
then $A_\ell^{(t + 1)}$, which for layers $1 \leq \ell \leq L-1$ comes from
\begin{equation}
    A_\ell^{(t + 1)} = -\mathbb{E}_{b, t, z, w | \tilde{\rho}_\ell,
    \tilde{A}_{\ell + 1}, \tilde{V}_{\ell}} \partial_w^2 \log Z_\ell
    (\tilde{A}_{\ell + 1}^{(t)}, b, \tilde{V}_\ell^{(t)}, w),
    \label{eq:fp2_1}
\end{equation}
and for the $L$-th layer, from
\begin{equation}
    A_L^{(t + 1)} = -\mathbb{E}_{y, z, w | \tilde{\rho}_L, \tilde{V}_{L}}
    \partial_w^2 \log Z_L (y, \tilde{V}_{L}, w).
    \label{eq:fp2_2}
\end{equation}
Finally, we recompute $\vartheta_\ell$ using $A_\ell^{(t + 1)}$, and
$\tilde{A}_\ell$
\begin{equation}
    \begin{aligned}
    &\vartheta_\ell^{(t + \frac12)} = \argmin_{\theta} \left[
        \theta - \psi_\ell (\theta, A_\ell^{(t + 1)} V_\ell^{(t)}) \right]^2, \\
    &\tilde{A}_\ell^{(t)} = \alpha_\ell \vartheta_\ell^{(t + \frac12)} /
        V_\ell^{(t)}.
    \end{aligned}
    \label{eq:fp3}
\end{equation}
and move on to the next layer. After these quantities are computed for all
layers, we compute all the $V_\ell$; for $2 \leq \ell \leq L$
\begin{equation}
    V_\ell^{(t + 1)} = \mathbb{E}_{b, t, z, w | \tilde{\rho}_{\ell - 1},
    \tilde{A}_{\ell}, \tilde{V}_{\ell - 1}} \partial_b^2 \log Z_\ell
    (\tilde{A}_{\ell}^{(t)}, b, \tilde{V}_{\ell - 1}^{(t)}, w),
    \label{eq:fp4_1}
\end{equation}
and for the 1st layer
\begin{equation}
    V_1^{(t + 1)} = \mathbb{E}_{b, x | \tilde{A}_1} \partial_b \log Z_1
        (\tilde{A}_{1}^{(t)}, b).
    \label{eq:fp4_2}
\end{equation}

This particular order has been chosen so that if $W_\ell$ is Gaussian
i.i.d., $\theta_\ell^{(t)} = A_\ell^{(t)} V_\ell^{(t)}$ and one recovers the
state evolution equations in \cite{manoel_multi-layer_2017}.

\begin{algorithm}
\caption{Compute entropy $H(\bm{y}_L | \ul{W})$}
\label{alg1}
\begin{algorithmic}
    \REQUIRE $\{\bm{A}^{(1)}_i, \bm{V}^{(1)}_i\}_{i = 1}^{n_{\rm init}}$,
        $\epsilon$, $t_{\rm max}$
    \FOR[loop through initial conditions]{$i = 1 \to n_{\rm init}$}
        \STATE $t \leftarrow 0$
        \WHILE[at each time step \dots]{$D < \epsilon$ or $t < t_{\rm max}$}
            \FOR[\dots loop through layers]{$\ell = L \to 1$}
                \STATE compute $\vartheta_\ell^{(t)}$, $\tilde{V}_\ell^{(t)}$ using (\ref{eq:fp1})
                \STATE compute $A_\ell^{(t + 1)}$ using (\ref{eq:fp2_1}) or (\ref{eq:fp2_2})
                \STATE compute $\vartheta_\ell^{(t + \frac12)}$, $\tilde{A}_\ell^{(t)}$ using (\ref{eq:fp3})
            \ENDFOR
            \STATE compute $V_\ell^{(t + 1)} \, \forall \ell$ using (\ref{eq:fp4_1}) or (\ref{eq:fp4_2})
            \STATE $D \leftarrow \sum_\ell |V_\ell^{(t + 1)} - V_\ell^{(t)}|$
            \STATE $t \leftarrow t + 1$
        \ENDWHILE
        \STATE $H_i \leftarrow \phi(\bm{A}^{(t)}, \bm{V}^{(t)},
            \bm{\tilde{A}}^{(t)}, \bm{\tilde{V}}^{(t)})$
    \ENDFOR
    \STATE \textbf{return} $\min_i H_i$
\end{algorithmic}
\end{algorithm}

The set of initial conditions is picked so as to cover the basin of
attraction of typical fixed points. In our experiments we have chosen
$(A^{(0)}_{i, \ell}, V^{(0)}_{i, \ell}) \in \{(\rho_\ell^{-1}, \rho_\ell),
(\delta^{-1}, \delta)\}$, with $\delta = 10^{-10}$.

\subsubsection{Method 2: ML-VAMP state evolution}

While the fixed-point iteration above works well in most cases, it is not
provably convergent. In particular, it relies on a solution for $\theta =
\psi_\ell (\theta, A_\ell^{(t)} V_\ell^{(t)})$ being found, which might not
happen throughout the iteration.

An alternative is to employ the state evolution (SE) of the ML-VAMP
algorithm \cite{fletcher_inference_2017}, which leads to the same fixed
points as the scheme above under certain conditions. Let us first look at
the single-layer case; the ML-VAMP SE equations read
\begin{align}
    &A^+_x = \frac{1}{V^+_x (A^-_x)} - A^-_x, &&\qquad
    A^+_z = \frac{1}{V^+_z (A^+_x, 1 / A_z^-)} - A^-_z, \\
    &A^-_x = \frac{1}{V^-_x (A^+_x, 1 / A_z^-)} - A^+_x, &&\qquad
    A^-_z = \frac{1}{V^-_z (A^+_z)} - A^+_z, \label{eq:gvamp_se4}
\end{align}
where
\begin{align}
    &V^+_x (A) = \mathbb{E}_{x, z} \partial_B^2 \log Z_0 (A, Ax + \sqrt{A}z), \label{eq:gvamp_v1} \\
    &V^+_z (A, \sigma^2) = \sigma^2 \lim_{m \to \infty} \, \frac{1}{m} \operatorname{Tr}
    \big[ \Phi (\Phi^T \Phi + A \sigma^2)^{-1} \Phi^T \big] = \alpha^{-1} \sigma^2 \big(1 - A \sigma^2 \, \mathcal{S}
    (-A \sigma^2)\big), \\
    &V^-_x (A, \sigma^2) = \sigma^2 \lim_{n \to \infty} \, \frac{1}{n} \operatorname{Tr}
    \big[ (\Phi^T \Phi + A \sigma^2)^{-1} \big] = \sigma^2 \, \mathcal{S}
    (-A \sigma^2), \\
    &V^-_z (A) = \frac{1}{A} + \frac{1}{A^2} \, \underbrace{\mathbb{E}_{y,
        w, z} \partial_w^2 \log Z_1 (y, w, 1 / A)}_{-\bar{g} (A)}.
        \label{eq:gvamp_v4}
\end{align}
Combining (\ref{eq:gvamp_se4}) and (\ref{eq:gvamp_v4}) yields
\begin{equation}
    1 / A_z^- = \frac{1}{\bar{g} (A_z^+)} - \frac{1}{A_z^+}.
\end{equation}

At the fixed points
\begin{equation}
    V_x \equiv V_x^+ = V_x^- = \frac{1}{A_x^+ + A_x^-}\,, \qquad
    V_z \equiv V_z^+ = V_z^- = \frac{1}{A_z^+ + A_z^-}\,,
\end{equation}
as well as
\begin{equation}
    V_z = \frac{1 - A_x^+ V_x}{\alpha A_z^-} = \frac{A_x^- V_x}{\alpha A_z^-}
    \, \Rightarrow \,
    \alpha \frac{A_z^-}{A_z^- + A_z^+} = \frac{A_x^-}{A_x^- + A_x^+}\,.
\end{equation}

One can show these conditions also hold for the above scheme, under the following
mapping of variables:
\begin{align}
    V = V_x, \quad \tilde{A} = A_x^-, \quad
    \tilde{V} = 1 / A_z^+, \quad A = A_z^- A_z^+ V_z = \bar{g} (A_z^+), \quad
    \theta = A_z^- V_z = \frac{1}{1 + \frac{A_z^+}{A_z^-}}.
\end{align}

These equations are easily generalizable to the multi-layer case; the equations for $A_z^+$
and $A_x^-$ remain the same, while the equations for $A_x^+$ and $A_z^-$ become
\begin{align}
    &A_{x_\ell}^+ = \frac{1}{V_{x_\ell}^+ (A_{x_\ell}^-, 1 / A_{z_{\ell - 1}}^+)} - A_{x_\ell}^-, \\
    &A_{z_{\ell - 1}}^- = \frac{1}{V_{z_{\ell - 1}}^- (A_{x_\ell}^-, 1 /
    A_{z_{\ell - 1}}^+)} - A_{z_{\ell - 1}}^+,
\end{align}
where
\begin{align}
    &V_{x_\ell}^+ (A, V) = \mathbb{E}_{b, t, z, w} \, \partial^2_B \log
        Z_\ell (A, b, V, w), \label{eq:gvamp_v5} \\
    &V_{z_{\ell - 1}}^- (A, V) = \frac{1}{A} + \frac{1}{A^2} \mathbb{E}_{b,
        t, z, w} \, \partial^2_w \log Z_\ell (A, b, V, w). \label{eq:gvamp_v6}
\end{align}
Note that the quantities in (\ref{eq:gvamp_v1}), (\ref{eq:gvamp_v4}),
(\ref{eq:gvamp_v5}) and (\ref{eq:gvamp_v6}) were already being evaluated in
the scheme described in the previous subsection.

\subsection{Further considerations}

\subsubsection{Mutual information from entropy}
\label{sec:further1}

While in our computations we focus on the entropy $H(\T_\ell)$, the mutual
information $I(\T_\ell; \T_{\ell - 1})$ can be easily obtained from the chain
rule relation
\begin{align}
    I(\T_\ell; \T_{\ell-1}) &= H(\T_\ell) + \eTT{\ell}{\ell-1} \log
    P_{\T_\ell | \T_{\ell - 1}} (\bm{t}_\ell | \bm{t}_{\ell-1}) \notag \\
    &= H(\T_\ell) + \int dz \, \mathcal{N} (z; 0, \tilde{\rho}_\ell) \int dh
    \, P_\ell (h | z) \log P_\ell (h | z),
\end{align}
where in order to go from the first to the second line we have used the
central limit theorem. In particular if the mapping $\X \to \T_{\ell - 1}$
is deterministic, as typically enforced in the models we use in the
experiments, then $I(\T_\ell; \T_{\ell - 1}) = I(\T_\ell; \X)$.

\subsubsection{Equivalence in linear case}
\label{sec:further2}

In the linear case, $\Y = W_L W_{L - 1} \cdots W_1 \X + \mathcal{N} (0,
\Delta)$, our formula reduces to
\cite{kabashima_inference_2008,tulino_support_2013,toappear}
\begin{equation}
    \lim_{n_ \to \infty} \, n^{-1} I(\Y; \X) = \min \extr_{A, V} \, \left\{
        -\frac12 A V - \frac12 G(-V / \Delta) + I (x; x + \sqrt{1 / A} \xi) \right\},
\end{equation}
where
\begin{equation}
    G(x) = \extr_{\Lambda} \left\{ -\mathbb{E}_\lambda \log |\lambda -
    \Lambda| + \Lambda x \right\} - (\log |x| + 1),
\end{equation}
is also known as the integrated R-transform, with $\lambda$ the
eigenvalues of $W^T W$, $W \equiv W_L W_{L - 1} \cdots W_1$. If $P_0$ is Gaussian,
then $I(x; x + \sqrt{1 / A} \xi) = \frac{1}{2} \log(1 + A)$; extremizing over $A$
and $V$ then gives
\begin{equation}
    A = 1/V - 1, \qquad V = \Delta \mathcal{S} (-\Delta),
\end{equation}
where $\mathcal{S} (z)$ is the Stieltjes transform of $W^T W$. The mutual information
can then be rewritten as
\begin{equation}
    \lim_{n \to \infty} \, n^{-1} I(\Y; \X) = \frac12 \mathbb{E}_\lambda \log(\lambda + \Delta)
    - \frac12 \log \Delta.
\end{equation}
This same result can be achieved analytically with much less effort, since
in this case $P_{\Y} (\bm{y}) = \mathcal{N} (\bm{y}; 0, \Delta I_m +
W W^T)$.

\section{Proof of the replica formula by the adaptive interpolation method}\label{sec:partI}
In this section we prove Theorem \ref{th:RS_2layer_main} (that we re-write below more explicitely) using the adaptive
interpolation method of \cite{barbier_stoInt,BarbierOneLayerGLM} in a multi-layer
setting, as first developed in \cite{2017arXiv170910368B_sm}.
\subsection{Two-layer generalized linear estimation: Problem statement}
One gives here a generic description of the observation model, that is a two-layer generalized linear model (GLM).
Let $n_0, n_1, n_2 \in \N^*$ and define the triplet $\mathbf{n} = (n_0,n_1,n_2)$.
Let $P_0$ be a probability distribution over $\R$ and let $(X^0_i)_{i=1}^{n_0} \iid P_0$ be the components of a signal vector $\bX^0$.
One fixes two functions $\varphi_1: \R \times \R^{k_1} \to \R$ and $\varphi_2: \R \times \R^{k_2}\to \R$, $k_1$, $k_2 \in \mathbb{N}$.
They act component-wise, i.e.\ if $\bx \in \R^m$ and $\bA \in \R^{m \times k_i}$ then $\varphi_i(\bx,\bA) \in \R^m$ is a vector with entries $[\varphi_i(\bx,\bA)]_{\mu} \defeq \varphi_i(x_{\mu},\bA_{\mu})$, $\bA_{\mu}$ being the $\mu$-th row of $\bA$.
For $i \in \{1,2\}$, consider $(\bA_{i,\mu})_{\mu = 1}^{n_i} \iid P_{\! A_i}$ where $P_{A_i}$ is a probability distribution over $\R^{k_{i}}$. One acquires $n_2$ measurements through
\begin{align}\label{measurements}
Y_\mu = \varphi_2\bigg(\frac{1}{\sqrt{n_1}} \bigg[\bW{2}
	\varphi_1\bigg(\frac{\bW{1} \bX^0}{\sqrt{n_0}}, \bA_1\bigg)\bigg]_{\mu}, \bA_{2,\mu}\bigg) + \sqrt{\Delta} Z_\mu\,,
	\qquad 1\leq \mu \leq n_2\,.
\end{align}
Here $(Z_\mu)_{\mu=1}^{n_2} \iid \mathcal{N}(0,1)$ is an additive Gaussian noise, $\Delta > 0$, and $\bW{1} \in \R^{n_1 \times n_0}$, $\bW{2} \in \R^{n_2 \times n_1}$ are measurement matrices whose entries are i.i.d.\ with distribution $\mathcal{N}(0,1)$.
Equivalently,
\begin{equation}\label{eq:channel}
Y_\mu \sim P_{\rm out,2}\bigg( \cdot \, \bigg| 
	\frac{1}{\sqrt{n_1}} \bigg[\bW{2} \varphi_1\bigg(\frac{\bW{1} \bX^0}{\sqrt{n_0}}, \bA_1\bigg)\bigg]_{\mu}\bigg)\,
\end{equation}
where the transition density, w.r.t. Lebesgue's measure, is
\begin{equation}\label{defPout2}
P_{\rm out,2}\big( y \big| x \big) 
= \int dP_{A_2}(\ba)\frac{1}{\sqrt{2\pi \Delta}} e^{-\frac{1}{2 \Delta}( y - \varphi_2(x,\ba))^2}\,.
\end{equation}

Our analysis uses both representations \eqref{measurements} and \eqref{eq:channel}. The estimation problem is to recover $\bX^0$ from the knowledge of $\bY=(Y_\mu)_{\mu=1}^{n_2}$, $\varphi_1$, $\varphi_2$, $\bW{1}$, $\bW{2}$ and $\Delta$, $P_0$.

In the language of statistical mechanics, the random variables $\bY$, $\bW{1}$, $\bW{2}$, $\bX^0$, $\bA_1$, $\bA_2$, $\bZ$ are called {\it quenched} variables because once the measurements are acquired they have a ``fixed realization''.
An expectation taken w.r.t.\ {\it all} quenched random variables appearing in an expression will simply be denoted by $\E$ {\it without} subscript. Subscripts are only used when the expectation carries over a subset of random variables appearing in an expression or when some confusion could arise. 

Defining the {\it Hamiltonian}
\begin{align} \label{Hamil}
\cH(\bx, \ba_1;\bY,\bW{1}, \bW{2})
	&\defeq - \sum_{\mu = 1}^{n_2} \ln P_{\rm out,2}\bigg( Y_{\mu} \bigg|
		\frac{1}{\sqrt{n_1}} \bigg[\bW{2} \varphi_1\bigg(\frac{\bW{1} \bx}{\sqrt{n_0}},\ba_1\bigg)\bigg]_{\mu} \bigg)	\,,
\end{align}
the joint posterior distribution of $(\bx$, $\ba_1)$ given the quenched variables $\bY$, $\bW{1}$, $\bW{2}$ reads (Bayes formula)
\begin{align}\label{bayes}
dP(\bx, \ba_1| \bY,\bW{1}, \bW{2}) &= \frac{1}{\cZ(\bY,\bW{1}, \bW{2})} dP_0(\bx)dP_{A_1}(\ba_1)e^{-\cH(\bx, \ba_1;\bY,\bW{1}, \bW{2})} \, ,
\end{align}
$dP_0(\bx) = \prod_{i=1}^{n_0} dP_0(x_i)$ being the prior over the signal and $dP_{A_1}(\ba_1) \defeq \prod_{i=1}^{n_1} dP_{A_1}(\ba_{1,i})$. The  {\it partition function} is defined as 
\begin{multline}
\cZ(\bY,\bW{1}, \bW{2}) \\
\defeq \!\int \!\! dP_0(\bx)dP_{A_1}(\ba_1)dP_{A_2}(\ba_2)
\prod_{\mu=1}^{n_2} \frac{1}{\sqrt{2\pi\Delta}}e^{-\frac{1}{2 \Delta}\Big( Y_\mu - \varphi_2\Big(\frac{1}{\sqrt{n_1}} \Big[\bW{2} \varphi_1\Big(\frac{\bW{1} \bx}{\sqrt{n_0}},\ba_1\Big)\Big]_{\mu},\ba_{2,\mu}\Big)\Big)^2}.	\label{Z_1}
\end{multline}
One introduces a standard statistical mechanics notation for the expectation w.r.t.\ the posterior \eqref{bayes}, the so called {\it Gibbs bracket} $\langle - \rangle$ defined for any continuous bounded function $g$ as
\begin{equation}
\big\langle g(\bx, \ba_1)\big\rangle \defeq \int dP(\bx,\ba_1|\bY,\bW{1}, \bW{2}) g(\bx,\ba_1)
\end{equation}
One important quantity is the associated {\it averaged  free entropy} (or minus the {\it averaged free energy})
\begin{align}
f_{\mathbf{n}} \defeq \frac{1}{n_0} \E\ln \cZ(\bY,\bW{1}, \bW{2}) \,. \label{f}
\end{align}
It is perhaps useful to stress that $\mathcal{Z}(\bY,\bW{1}, \bW{2})$ is nothing else than the density of $\bY$ conditioned on $\bW{1}, \bW{2}$; so we have the explicit representation (used later on)
\begin{align}\label{fff}
f_{\mathbf{n}} &= \frac{1}{n_0} \E_{\bW{1}, \bW{2}}\int d\bY \mathcal{Z}(\bY,\bW{1}, \bW{2}) \ln \cZ(\bY,\bW{1}, \bW{2}) \nn
&= \frac{1}{n_0} \E_{\bW{1}, \bW{2}}\Bigg[\int d\bY dP_0(\bX^0)dP_{A_1}(\bA_1) e^{-\cH(\bX^0, \bA_1;\bY,\bW{1}, \bW{2})}\nn
&\qquad\qquad\qquad\qquad \cdot \ln\int dP_0(\bx)dP_{A_1}(\ba_1)  e^{-\cH(\bx,\ba_1;\bY,\bW{1}, \bW{2})}\Bigg],
\end{align}
where $d\bY = \prod\limits_{\mu=1}^{n_2} dY_\mu$.
Thus $f_{\mathbf{n}}$ is minus the conditional entropy $-H(\bY|\bW{1},\bW{2})/n_0$ of the measurements.

This appendix presents the derivation, thanks to the adaptive interpolation method, of the thermodynamic limit $\lim_{\mathbf{n} \to \infty}f_{\mathbf{n}}$ in the ``high-dimensional regime'', namely when $n_0,n_1,n_2 \to +\infty$ such that $\nicefrac{n_2}{n_1} \to \alpha_2 > 0$, $\nicefrac{n_1}{n_0} \to \alpha_1 > 0$. In this high-dimensional regime, the ``measurement rate'' satisfies $\nicefrac{n_2}{n_0} \to \alpha \defeq \alpha_1 \cdot \alpha_2$.

\subsection{Important scalar inference channels} \label{sec:scalar_inf}
The thermodynamic limit of the free entropy will be expressed in terms of the free entropy of simple \textit{scalar} inference channels. This ``decoupling property'' results from the mean-field approach in statistical physics, used through in the replica method to perform a formal calculation of the free entropy of the model \cite{mezard1990spin,mezard2009information}. This section presents these three scalar denoising models.

{\bf a)} The first channel is an additive Gaussian one. Let $r \geq 0$ play the role of a signal-to-noise ratio.
Consider the inference problem consisting of retrieving $X_0 \sim P_0$ from the observation ${Y_0 = \sqrt{r}\, X_0 + Z_0}$, where $Z_0 \sim \cN(0,1)$ independently of $X_0$.
The associated posterior distribution is
\begin{align}
dP(x|Y_0) = \frac{dP_0(x) e^{\sqrt r\, Y_0 x - r x^2/2}}{\int dP_0(x)e^{\sqrt r\, Y_0 x - r x^2/2}}	\,.
\end{align}
The free entropy associated to this channel is just the expectation of the logarithm of the normalization factor
\begin{align} \label{psi0}
\psi_{P_0}(r) \defeq \E \ln \int dP_0(x)e^{\sqrt r \,Y_0 x - r x^2/2} \,.
\end{align}

{\bf b)} The second scalar channel appearing naturally in the problem is linked to $P_{\rm out,2}$ through the following inference model. Suppose that $V,U \iid \cN(0,1)$ where $V$ is {\it known}, while the inference problem is to recover the {\it unknown} $U$ from the observation
\begin{equation}\label{eq:Pout_scalar_channel}
\widetilde{Y}_0 \sim P_{\rm out, 2} \big( \cdot \, | \sqrt{q}\, V + \sqrt{\rho - q} \,U \big)\,,
\end{equation}
where $\rho >0$, $q \in [0, \rho]$.
The free entropy for this model, again related to the normalization factor of the posterior 
\begin{align}
dP(u|\widetilde Y_0,V) = \frac{{\cal D}u P_{\rm out,2}\big(\widetilde{Y}_0 | \sqrt{q}\, V + \sqrt{\rho - q}\, u\big)}{\int {\cal D}u P_{\rm out,2}\big(\widetilde{Y}_0 | \sqrt{q}\, V + \sqrt{\rho - q}\, u\big)}\, ,
\end{align}
namely,
\begin{align}\label{PsiPout}
\Psi_{P_{\rm out,2}}(q;\rho) \defeq
\E \ln \int {\cal D}u P_{\rm out,2}\big(\widetilde{Y}_0 | \sqrt{q}\, V + \sqrt{\rho - q}\, u\big)\, ,
\end{align}
where ${\cal D}u =du(2\pi)^{-\nicefrac{1}{2}} e^{-\nicefrac{u^2}{2}}$ is the standard Gaussian measure.

{\bf c)} The third scalar channel to play a role is linked to the hidden layer $\bX^1 \defeq \varphi_1\Big(\nicefrac{\bW{1} \bX^0}{\sqrt{n_0}}, \bA_1\Big)$ of the two-layer GLM. Suppose that $V,U \iid \cN(0,1)$, where $V$ is \textit{known}. Consider the problem of recovering the {\it unknown} $U$ from the observation $Y_0^{\prime} = \sqrt{r}\varphi_1(\sqrt{q}\, V + \sqrt{\rho - q} \,U,\bA_1) + Z'$ where $r \geq 0$, $\rho >0$, $q \in [0, \rho]$, $Z' \sim \cN(0,1)$ and $\bA_1 \sim P_{A_1}$. Equivalently,
$Y_0^{\prime} \sim P_{\rm out,1}^{(r)}( \cdot \vert \sqrt{q}\, V + \sqrt{\rho - q} \, U)$ with
\begin{equation}
P_{\rm out,1}^{(r)}(y \vert x) \defeq  \int dP_{A_1}(\ba)\frac{1}{\sqrt{2\pi}} e^{-\frac{1}{2}(y - \sqrt{r}\varphi_1(x,\ba))^2} \; .
\end{equation}
From this last description, it is easy to see that the free entropy for this model is given by a formula similar to \eqref{PsiPout}.
Introducing $\delta(\, \cdot - \varphi_1(x,\ba))$, the Dirac measure centred on $\varphi_1(x,\ba)$, it reads
\begin{align*}
\Psi_{P_{\rm out,1}^{(r)}}\!\!(q;\rho)
&= \E \ln \int {\cal D}u P_{\rm out,1}^{(r)}\big(Y_0^{\prime} | \sqrt{q}\, V + \sqrt{\rho - q}\, u\big) \\
&= \E \ln \int \! {\cal D}u dP_{A_1}(\ba) dh \frac{1}{\sqrt{2\pi}} e^{-\frac{1}{2}( Y_0^{\prime} - \sqrt{r}h)^2} \delta\big(h -\varphi_1(\sqrt{q}\, V + \sqrt{\rho - q}\, u, \ba)\big)\\
&= -\frac{\ln(2\pi) + \E[(Y_0^{\prime})^2]}{2} \\
&\qquad+ \E \ln \!\! \int \!\! {\cal D}u dP_{A_1}(\ba) dh e^{\sqrt{r}h Y_0^{\prime} -\frac{r h^2}{2}} \delta\big(h -\varphi_1(\sqrt{q}\, V + \sqrt{\rho - q}\, u, \ba)\big) \; .
\end{align*}
The second moment of $Y_0^{\prime}$ is simply $\E[(Y_0^{\prime})^2] = r \E[\varphi_1^2(T,\bA_1)] + 1$ with $T \sim \cN(0,\rho)$, $\bA_1 \sim P_{A_1}$. Hence 
\begin{equation}\label{rewritingPsiOutThirdChannel}
\Psi_{P_{\rm out,1}^{(r)}}(q;\rho) = -\frac{1 + \ln(2\pi) + r \E[\varphi_1^2(T,\bA_1)]}{2} + \Psi_{\varphi_1}(q, r;\rho)
\end{equation}
where
\begin{equation}
\Psi_{\varphi_1}(q, r;\rho) \defeq \E \ln \int \! {\cal D}u dP_{A_1}(\ba) dh e^{\sqrt{r}h Y_0^{\prime} -\frac{r h^2}{2}} \delta\big(h -\varphi_1(\sqrt{q}\, V + \sqrt{\rho - q}\, u, \ba)\big) \; .
\end{equation}

\subsection{Replica-symmetric formula and mutual information}\label{RSformula-andhyp}
Our goal is to prove Theorem~\ref{th:RS_2layer_main} that gives a single-letter {\it replica-symmetric formula} for the asymptotic free entropy 
of model \eqref{measurements}, \eqref{eq:channel}. The result holds under the following hypotheses:
\begin{enumerate}[label=(H\arabic*),noitemsep]
	\item \label{hyp:bounded} The prior distribution $P_0$ has a bounded support.
	\item \label{hyp:c2} $\varphi_1$, $\varphi_2$ are bounded $\mathcal{C}^2$ functions with bounded first and second derivatives w.r.t.\ their first argument.
	\item \label{hyp:phi_gauss2} $\bW{1}$, $\bW{2}$ have entries i.i.d.\ with respect to $\cN(0,1)$.
\end{enumerate}

Let $\rho_0 \defeq \E[(X^0)^2]$ where $X^0\sim P_0$ and $\rho_1 \defeq \E[\varphi_1^2(T,\bA_1)]$ where $T \sim \cN(0,\rho_0)$, $\bA_1 \sim P_{A_1}$.
The {\it replica-symmetric potential} (or just potential) is
\begin{align} \label{frs}
f_{\rm{RS}}(q_0,r_0,q_1,r_1;\rho_0,\rho_1) \defeq  \psi_{P_0}(r_0) + \alpha_1 \Psi_{\varphi_1}(q_0, r_1;\rho_0)
+ \alpha \Psi_{P_{\rm out,2}}(q_1;\rho_1) - \frac{r_0 q_0}{2} - \alpha_1 \frac{r_1 q_1}{2} \,.  
\end{align}
\begin{thm}[Replica-symmetric formula] \label{th:RS_2layer}
	Suppose that hypotheses~\ref{hyp:bounded},~\ref{hyp:c2},~\ref{hyp:phi_gauss2} hold. Then, the thermodynamic limit of the free entropy \eqref{f} for the two-layer generalized linear estimation model \eqref{measurements}, \eqref{eq:channel} satisfies
	\begin{equation}\label{eq:rs_formula}
	f_{\infty} := \lim_{\mathbf{n} \to\infty }f_{\mathbf{n}} 
	=  \adjustlimits{\sup}_{q_1 \in [0,\rho_1]} {\inf}_{r_1 \geq 0}  \adjustlimits{\sup}_{q_0 \in [0,\rho_0]} {\inf}_{r_0 \geq 0} f_{\rm RS}(q_0,r_0,q_1,r_1;\rho_0,\rho_1)\,.
	\end{equation}	
\end{thm}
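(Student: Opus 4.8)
The plan is to prove Theorem~\ref{th:RS_2layer} by the adaptive interpolation method of \cite{barbier_stoInt,BarbierOneLayerGLM}, in the multi-layer form developed in \cite{2017arXiv170910368B_sm}. First I would build an interpolating inference problem indexed by $t\in[0,1]$, carrying two interpolation functions $q_0,q_1:[0,1]\to\mathbb{R}_{\geq0}$ together with the signal-to-noise paths $R_0(t),R_1(t)$ obtained by integrating chosen nonnegative functions $r_0,r_1$ along the trajectory. At the first layer the argument $\bW{1}\bX^0/\sqrt{n_0}$ of $\varphi_1$ is replaced by a $t$-dependent mixture of itself and fresh Gaussian fields $\sqrt{q_0(t)}\,\mathbf{V}_0+\sqrt{\rho_0-q_0(t)}\,\mathbf{U}_0$, supplemented by a decoupled side channel $\sqrt{R_0(t)}\,X^0_i+Z_{0,i}$ on the signal; the analogous construction is applied at the second layer to $\bW{2}\bX^1/\sqrt{n_1}$, with $\bX^1=\varphi_1(\bW{1}\bX^0/\sqrt{n_0},\bA_1)$, parameters $q_1,R_1$, and a side channel $\sqrt{R_1(t)}\,X^1_i+Z'_{1,i}$ on the hidden layer. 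The construction is arranged so that at $t=0$ one recovers exactly model~\eqref{measurements}, hence $f_{\mathbf{n}}(0)=f_{\mathbf{n}}$, and so that at $t=1$ both linear mixings decouple completely, giving $f_{\mathbf{n}}(1)=\psi_{P_0}(R_0(1))+\alpha_1\Psi_{\varphi_1}(q_0(1),R_1(1);\rho_0)+\alpha\Psi_{P_{\rm out,2}}(q_1(1);\rho_1)+o(1)$, i.e.\ the sum of the scalar free entropies of Section~\ref{sec:scalar_inf}.

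Next I would differentiate $f_{\mathbf{n}}(t)$ in $t$. Using Gaussian integration by parts in $\bW{1},\bW{2}$ and in the interpolation fields, the Nishimori identity of the Bayes-optimal setting, and hypotheses~\ref{hyp:c2}--\ref{hyp:phi_gauss2} to bound the relevant derivatives, $f_{\mathbf{n}}'(t)$ can be put in the form $-\tfrac12 r_0(t)\big(\mathbb{E}\langle Q_0\rangle_t-q_0(t)\big)-\tfrac{\alpha_1}{2}r_1(t)\big(\mathbb{E}\langle Q_1\rangle_t-q_1(t)\big)+\mathcal{R}_{\mathbf{n}}(t)$, where $Q_0=\tfrac{1}{n_0}\sum_i x_iX^0_i$ is the signal overlap, $Q_1$ the analogous overlap on the hidden layer $\bX^1$, and $\mathcal{R}_{\mathbf{n}}(t)$ is a remainder that depends on $Q_0,Q_1$ only through their fluctuations around their means. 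I would then fix $q_0,q_1$ \emph{adaptively}, as the solution of the first-order ODE system matching $q_a'(t)$ to the corresponding $\mathbb{E}\langle Q_a\rangle_t$-dependent right-hand side (Cauchy--Lipschitz applies once the denoisers are checked to be Lipschitz in their parameters); for this choice the explicit part of $f_{\mathbf{n}}'(t)$ is precisely the $t$-derivative of the Legendre terms $-\tfrac{r_0q_0}{2}-\alpha_1\tfrac{r_1q_1}{2}$ of $f_{\rm RS}$, so that only $\mathcal{R}_{\mathbf{n}}(t)$ remains.

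The third step is overlap concentration: $\mathbb{E}\langle(Q_a-\mathbb{E}\langle Q_a\rangle_t)^2\rangle_t=o(1)$ for $a\in\{0,1\}$, uniformly enough in $t$. I would prove this by combining concentration of $n_0^{-1}\ln\mathcal{Z}_t$ --- bounded-difference/Efron--Stein estimates for the entries of $\bA_1,\bA_2$ and a Gaussian--Poincar\'e inequality for $\bW{1},\bW{2}$ and the interpolation Gaussians, all of which have bounded increments thanks to hypotheses~\ref{hyp:bounded}--\ref{hyp:c2} --- with the standard $\epsilon$-perturbation trick: each overlap is, up to a negligible Gaussian-field contribution, a derivative of $f_{\mathbf{n}}(t)$ with respect to the side-channel strength $R_a$, so perturbing $R_a\to R_a+\epsilon_a$ over a vanishing interval and averaging transfers concentration of the free entropy to concentration of the overlap. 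Since $\bX^1$ is itself random and correlated with $\bW{1}$, the argument has to be bootstrapped layer by layer, controlling the first-layer quantities before feeding them into the second-layer estimates. Integrating $f_{\mathbf{n}}=f_{\mathbf{n}}(1)-\int_0^1f_{\mathbf{n}}'(t)\,dt$ then gives $f_{\mathbf{n}}=f_{\rm RS}(q_0^{\rm ad},r_0,q_1^{\rm ad},r_1;\rho_0,\rho_1)+o(1)$ for the adaptively selected $q_0^{\rm ad},q_1^{\rm ad}$ and any admissible $r_0,r_1\geq0$; optimizing yields one inequality in \eqref{eq:rs_formula}, with $\inf_r$ coming from validity for every $r$ and $\sup_q$ from the adaptive trajectory reaching the maximizer.

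For the converse inequality I would either run a second, Guerra-type interpolation with the interpolation functions frozen at a trial profile, or exploit concavity of $q\mapsto f_{\rm RS}$ at its stationary point together with convexity in $r$ to upgrade the one-sided estimate to the full nested $\sup_{q_1}\inf_{r_1}\sup_{q_0}\inf_{r_0}$. Combining the two directions gives \eqref{eq:rs_formula}, and since $f_{\mathbf{n}}=-H(\bY\,|\,\bW{1},\bW{2})/n_0$ by \eqref{fff}, this establishes Theorem~\ref{th:RS_2layer_main}. The main obstacle I anticipate is the third step together with the uniform control of $\mathcal{R}_{\mathbf{n}}(t)$: proving concentration of the \emph{hidden-layer} overlap $Q_1$ --- which is not a clean signal overlap because $\bX^1$ depends on $\bW{1}$ --- and showing that $\mathcal{R}_{\mathbf{n}}(t)$ integrates to $o(1)$ uniformly in $t$ is what forces the layer-by-layer bootstrap and is the technically heaviest part; pinning down the exact ordering of the four optimizations in the converse is the second delicate point.
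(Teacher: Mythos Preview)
Your overall strategy is sound, but it differs from the paper's proof in one structural respect that is worth spelling out. You propose to interpolate \emph{both} layers simultaneously, carrying four interpolation paths $(q_0,r_0,q_1,r_1)$, a side channel on $\bX^0$, and a side channel on $\bX^1$, so that at $t=1$ everything decouples into scalar channels. The paper instead peels off only the outermost layer: the argument $\bW{2}\bX^1/\sqrt{n_1}$ of $P_{\rm out,2}$ is interpolated towards a Gaussian mixture (parameters $q_\epsilon,r_\epsilon$), and a single side channel $\sqrt{R_1(t,\epsilon)}\,X^1_i+Z'_i$ is placed on the \emph{hidden} layer $\bX^1$; the first layer $\bW{1}\bX^0/\sqrt{n_0}$ is never touched. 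At $t=1$ this yields the scalar second-layer channel $\Psi_{P_{\rm out,2}}$ plus a genuine \emph{one-layer} GLM on $\bX^0$ with output kernel $P_{\rm out,1}^{(R_1(1,\epsilon))}$, whose free entropy --- and hence the inner $\sup_{q_0}\inf_{r_0}$ --- is read off directly from the already-proved single-layer theorem of \cite{BarbierOneLayerGLM}; see \eqref{limitOneLayerFreeEntropy}.

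The consequences are: (i) only two interpolation functions and one overlap $\widehat Q$ (your $Q_1$) enter, so only one concentration statement (Proposition~\ref{prop:concentrationOverlap}) is needed; (ii) Gaussian integration by parts in $\bW{1}$ never has to be performed through the composition $\varphi_2(\bW{2}\varphi_1(\bW{1}\cdot))$ along the path, only in the concentration estimates; (iii) the matching bounds (Propositions~\ref{prop:lower_bound}--\ref{prop:upper_bound}) come from two different Cauchy--Lipschitz choices for $(q_\epsilon,r_\epsilon)$, the upper one fixing $r_\epsilon(t)=2\alpha_2\Psi'_{P_{\rm out,2}}(q_\epsilon(t);\rho_1(n_0))$ so that Jensen plus convexity of $\Psi_{P_{\rm out,2}}$ produces the $\inf_{q_1}$, and a separate convexity/Lipschitz argument then swaps $\sup_{r_1}\inf_{q_1}$ to $\sup_{q_1}\inf_{r_1}$. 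Your simultaneous interpolation is viable in principle and would be self-contained (no black-box use of the one-layer result), but at the cost of the heavier bookkeeping you anticipate: two coupled adaptive ODEs, concentration of both overlaps, and messier IBP when both mixings are partially active. The paper's layer-peeling route is lighter and naturally inductive in the number of layers.
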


\subsection{Interpolating estimation problem}\label{interp-est-problem}
The proof of Theorem~\ref{th:RS_2layer} follows the same steps than the proof of the replica formula for a one-layer GLM in \cite{BarbierOneLayerGLM}. Let $t\in[0,1]$ be an interpolation parameter. We introduce an \textit{interpolating estimation problem} that interpolates between the original problem \eqref{eq:channel} at $t=0$ and two analytically tractable problems at $t=1$.

Prior to establishing this interpolation scheme, we set up some important notations. The output of the hidden layer is denoted $\bX^1 \in \mathbb{R}^{n_1}$, i.e.\
\begin{equation}\label{defX1}
\forall i \in \{1,\dots,n_1\}: X_i^1 \defeq \varphi_1\Big(  \Big[\frac{\bW{1} \bX^0}{\sqrt{n_0}}\Big]_{i}, \bA_{1,i}\Big)\,.
\end{equation}
The expected power of this signal is
\begin{equation}\label{expectedPowerX1}
\rho_1(n_0) := \E\Bigg[ \frac{\big\Vert \bX^1 \big\Vert^2}{n_1}\Bigg] = \frac{1}{n_1} \sum_{i=1}^{n_1} \E\big[(X_i^1)^2 \big] = \E\Big[ \varphi_1^2\big([\nicefrac{\bW{1} \bX^0}{\sqrt{n_0}}]_{1}, \bA_{1,1} \big) \Big] \,.
\end{equation}
In Appendix \ref{app:convergenceRho} we prove
\begin{proposition}[Convergence of $\rho_1(n_0)$ to $\rho_1$]\label{prop:convergenceRho}
	Under the hypotheses~\ref{hyp:bounded},~\ref{hyp:c2},~\ref{hyp:phi_gauss2}
	\begin{equation}
	\lim_{n_0 \to +\infty} \rho_1(n_0) = \rho_1 \:.
	\end{equation}
\end{proposition}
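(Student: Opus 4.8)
The plan is to exploit the fact that, conditionally on the signal $\bX^0$, the first component of the pre-activation $\bW{1}\bX^0/\sqrt{n_0}$ is \emph{exactly} Gaussian, so that no central limit theorem is actually required --- only a law of large numbers for its (random) variance together with a dominated-convergence argument and the boundedness of $\varphi_1$.

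First I would set notation: write $Z_{n_0} \defeq \big[\bW{1}\bX^0/\sqrt{n_0}\big]_1 = n_0^{-1/2}\sum_{j=1}^{n_0} W^{(1)}_{1j}\, X^0_j$. By \ref{hyp:phi_gauss2} the entries $W^{(1)}_{1j}$ are i.i.d.\ $\cN(0,1)$ and independent of $\bX^0$, so conditionally on $\bX^0$ we have $Z_{n_0} \sim \cN(0, v_{n_0})$ with $v_{n_0} \defeq \Vert\bX^0\Vert^2/n_0 = n_0^{-1}\sum_{j=1}^{n_0}(X^0_j)^2$. Under \ref{hyp:bounded} the variables $(X^0_j)^2$ are i.i.d.\ and bounded, hence integrable, so the strong law of large numbers gives $v_{n_0} \to \E[(X^0)^2] = \rho_0$ almost surely.

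Next introduce $g(v,\ba) \defeq \E_{U\sim\cN(0,1)}\big[\varphi_1^2(\sqrt{v}\,U, \ba)\big]$ for $v\ge 0$ and $\ba\in\R^{k_1}$. Using the tower property together with the conditional law of $Z_{n_0}$ and the independence of $\bA_{1,1}$,
\[
\rho_1(n_0) = \E\big[\varphi_1^2(Z_{n_0}, \bA_{1,1})\big] = \E\big[\,g(v_{n_0}, \bA_{1,1})\,\big],
\]
the outer expectation being over $\bX^0$ (through $v_{n_0}$) and over $\bA_{1,1}\sim P_{A_1}$. By \ref{hyp:c2}, $\varphi_1$ is bounded and continuous, so for each fixed $\ba$ the map $v\mapsto g(v,\ba)$ is continuous on $[0,\infty)$ (dominated convergence, the integrand being bounded by $\Vert\varphi_1\Vert_\infty^2$) and $0\le g\le \Vert\varphi_1\Vert_\infty^2$ uniformly. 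Combining this with $v_{n_0}\to\rho_0$ a.s.\ gives $g(v_{n_0}, \bA_{1,1})\to g(\rho_0, \bA_{1,1})$ almost surely, and the bounded convergence theorem yields
\[
\lim_{n_0\to\infty}\rho_1(n_0) = \E\big[g(\rho_0, \bA_{1,1})\big] = \E\big[\varphi_1^2(T, \bA_1)\big] = \rho_1,
\]
with $T\sim\cN(0,\rho_0)$, which is the claim.

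There is no real obstacle here; the only points deserving a little care are (i) recognising that the conditional law of $Z_{n_0}$ given $\bX^0$ is \emph{exactly} $\cN(0, v_{n_0})$, which reduces the statement to the convergence of the scalar empirical variance $v_{n_0}$, and (ii) checking that $g$ is continuous and uniformly bounded so that both limit-exchange steps are legitimate. I expect this to be the most delicate bookkeeping, but it is routine. If desired, \ref{hyp:bounded} could be relaxed to $\E[(X^0)^2]<\infty$ and \ref{hyp:c2} to $\varphi_1$ merely bounded and continuous (or bounded and continuous almost everywhere w.r.t.\ Gaussian measure) without altering the argument.
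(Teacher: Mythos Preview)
Your proposal is correct and follows essentially the same route as the paper: conditionally on $\bX^0$ the pre-activation is exactly $\cN(0,\Vert\bX^0\Vert^2/n_0)$, the SLLN gives $\Vert\bX^0\Vert^2/n_0\to\rho_0$, continuity of the map $v\mapsto \E[\varphi_1^2(\sqrt{v}\,U,\ba)]$ plus boundedness of $\varphi_1$ then yield the result by dominated convergence. Your parametrisation $g(v,\ba)=\E[\varphi_1^2(\sqrt v\,U,\ba)]$ is in fact slightly cleaner than the paper's, since it is well-defined and continuous at $v=0$ and therefore does not require treating the case $\rho_0=0$ separately.
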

Denote $\psi_{P_{\rm out, 2}}^{\prime}(\,\cdot\,;\rho)$ the derivative of the function $q \in [0,\rho] \mapsto \psi_{P_{\rm out, 2}}\big(q;\rho\big)$. We define the bounded sequence $r^*(n_0) \defeq 2\alpha_2 \psi_{P_{\rm out, 2}}^{\prime}\big(\rho_1(n_0);\rho_1(n_0)\big)$ and its limit $r^* \defeq 2\alpha_2 \psi_{P_{\rm out, 2}}^{\prime}(\rho_1;\rho_1)$. We also introduce $\rmax$ as an upper bound on the whole sequence, i.e.\ $\forall n_0 \geq 1: r^*(n_0) \leq \rmax$ and $r^*\leq \rmax$.

Let $\{s_{n_0}\}_{n_0 \geq 1} \! \in (0,\nicefrac{1}{2}]^{\mathbb{N}^*}$ be some bounded sequence that converges to $0$  positively. This sequence will be explicitly specified later and defines a sequence of intervals $\mathcal{B}_{n_0} = [s_{n_0}, 2 s_{n_0}]^2$. For a fixed $n_0$, we pick a pair $\epsilon = (\epsilon_1,\epsilon_2) \in \mathcal{B}_{n_0}$.
Let $q_{\epsilon}: [0,1] \to [0,\rho_1(n_0)]$ and $r_{\epsilon}: [0,1] \to [0,\rmax]$ be two continuous ``interpolation functions''. Their dependence on $\epsilon$ will also be specified later. 
It is useful to define
\begin{align}\label{R1R2}
R_1(t,\epsilon) \defeq \epsilon_1 + \int_0^t r_{\epsilon}(v)dv\,,\qquad 
R_2(t,\epsilon) \defeq \epsilon_2 + \int_0^t q_{\epsilon}(v)dv\,.
\end{align}
We will be interested in functions $r_{\epsilon}$, $q_{\epsilon}$ that satisfy some regularity properties:
\begin{definition}[Regularity]\label{def:reg}
The families of functions $(q_{\epsilon})_{\epsilon \in \mathcal{B}_{n_0}}$ and $(r_{\epsilon})_{\epsilon \in \mathcal{B}_{n_0}}$, taking values in $[0,\rho_1(n_0)]$ and $[0,\rmax]$ respectively, are said \textit{regular} if for all $t \in [0,1]$ the mapping
	\begin{equation*}
	R(t,\cdot):
	\left\vert
	\begin{array}{ccc}
	(s_{n_0},2 s_{n_0})^2 & \to & R\big(t,(s_{n_0},2 s_{n_0})^2\big) \\
	\epsilon & \mapsto & \big(R_1(t,\epsilon), R_2(t,\epsilon )\big)
	\end{array}
	\right.
	\end{equation*}
	is a $\mathcal{C}^1$-diffeomorphism, whose Jacobian is greater than or equal to $1$.
\end{definition}
Let $\bS_{t,\epsilon} \in \mathbb{R}^{n_2}$ be the vector with entries
\begin{equation}\label{defS_t_mu}
S_{t,\epsilon,\mu} \defeq \sqrt{\frac{1-t}{n_1}}\, \bigg[\bW{2} \varphi_1\bigg(\frac{\bW{1} \bX^0}{\sqrt{n_0}}, \bA_1\bigg)\bigg]_\mu
+ \sqrt{R_2(t,\epsilon)} \,V_{\mu} + \sqrt{\rho_1(n_0) t - R_2(t,\epsilon) + 2s_{n_0}} \,U_{\mu}	
\end{equation}
where $V_{\mu}, U_{\mu} \iid \cN(0,1)$.
Assume  $\bV=(V_{\mu})_{\mu=1}^{n_2}$ is {\it known}. The inference problem is to estimate both unknowns $\bX^0$ and $\bU$ from the knowledge of $\bV$, $\bW{1}$, $\bW{2}$ and the two kinds of observations
\begin{eqnarray}
	\begin{cases}
			Y_{t,\epsilon,\mu}  &\sim P_{\rm out, 2}(\ \cdot \ | \, S_{t,\epsilon,\mu})\,, \qquad\qquad\qquad\qquad\;\;\,  1 \leq \mu \leq n_2, \\
			Y'_{t,\epsilon,i} &= \sqrt{R_1(t,\epsilon)}\, \varphi_1\Big(\Big[\frac{\bW{1} \bX^0}{\sqrt{n_0}}\Big]_i, \bA_{1,i}\Big) + Z'_i\,, \;\: 1 \leq i \leq n_1,
	\end{cases}
	\label{2channels}
\end{eqnarray}
where $(Z_i')_{i=1}^{n_1} \iid {\cal N}(0,1)$.
$\bY_{t,\epsilon}=(Y_{t,\epsilon,\mu})_{\mu=1}^{n_2}$ and $\bY'_{t,\epsilon}=(Y_{t,\epsilon,i}')_{i=1}^{n_1}$ are the ``time-and perturbation-dependent'' observations.

Define, with a slight abuse of notations, $s_{t, \epsilon, \mu}(\bx,\ba_1, u_\mu) \equiv s_{t, \epsilon, \mu}$ as
\begin{align}
	s_{t,\epsilon,\mu}
		= \sqrt{\frac{1-t}{n_1}}\, \bigg[\bW{2} \varphi_1\bigg(\frac{\bW{1} \bx}{\sqrt{n_0}},\ba_1\bigg)\bigg]_\mu
		+ \sqrt{R_2(t,\epsilon)} \,V_{\mu} + \sqrt{\rho_1(n_0) t - R_2(t,\epsilon) + 2s_n} \,u_{\mu}\,.
	\label{stmu}
\end{align}
We now introduce the {\it interpolating Hamiltonian}
\begin{multline}\label{interpolating-ham-general}
\cH_{t,\epsilon}(\bx,\ba_1, \bu;\bY,\bY',\bW{1},\bW{2},\bV)
\defeq - \sum_{\mu=1}^{n_2} \ln P_{\rm out,2} ( Y_{\mu} |s_{t, \epsilon, \mu})\\
+ \frac{1}{2} \sum_{i=1}^{n_1}\bigg(Y'_{i}
- \sqrt{R_1(t,\epsilon)}\, \varphi_1\bigg(\bigg[\frac{\bW{1} \bx}{\sqrt{n_0}}\bigg]_i,\ba_{1,i}\bigg)\bigg)^2 \,.
\end{multline}
It depends on $\bW{2}$ and $\bV$ through the terms $(s_{t,\epsilon,\mu})_{\mu=1}^{n_2}$, and on $\bW{1}$ through both $(s_{t,\epsilon,\mu})_{\mu=1}^{n_2}$ and the sum over $i \in \{1,\dots,n_1\}$. When the $(t,\epsilon)$-dependent observations \eqref{2channels} are considered, it reads
\begin{multline}\label{interpolating-ham}
\cH_{t,\epsilon}(\bx,\ba_1, \bu;\bY_{t,\epsilon},\bY_{t,\epsilon}^{'},\bW{1},\bW{2},\bV)
	\defeq - \sum_{\mu=1}^{n_2} \ln P_{\rm out,2} ( Y_{t,\epsilon,\mu} |s_{t, \epsilon,\mu})\\
	+ \frac{1}{2} \sum_{i=1}^{n_1}\Bigg[
	\sqrt{R_1(t,\epsilon)}
	\bigg( \varphi_1\bigg(\bigg[\frac{\bW{1} \bX^0}{\sqrt{n_0}}\bigg]_i,\bA_{1,i}\bigg)
		- \varphi_1\bigg(\bigg[\frac{\bW{1} \bx}{\sqrt{n_0}}\bigg]_i,\ba_{1,i}\bigg)\bigg)
		+ Z'_i\Bigg]^2 \,.
\end{multline}
The corresponding Gibbs bracket $\langle - \rangle_{\mathbf{n},t,\epsilon}$, which is the expectation operator w.r.t.\ the $(t,\epsilon)$-dependent joint posterior distribution of $(\bx,\ba_1, \bu)$ given $(\bY_{t,\epsilon},\bY_{t,\epsilon}',\bW{1},\bW{2},\bV)$ is defined for every continuous bounded function $g$ on $\R^{n_0} \times \R^{n_1 \times k_1} \times \R^{n_2}$ as: 
\begin{align}
	\label{gibbs}
	\big\langle g(\bx,\ba_1,\bu) \big\rangle_{\mathbf{n},t,\epsilon}
	\defeq \frac{1}{\cZ_{\mathbf{n},t,\epsilon}} \int \! dP_0(\bx)dP_{A_1}(\ba_1){\cal D}\bu \, g(\bx,\ba_1,\bu)\,e^{-\cH_{t,\epsilon}(\bx,\ba_1,\bu;\bY_{t,\epsilon},\bY_{t,\epsilon}',\bW{1},\bW{2},\bV)} \, .
\end{align}
In \eqref{gibbs}, ${\cal D}\bu = (2\pi)^{-\nicefrac{n_2}{2}}\prod_{\mu=1}^{n_2} du_\mu e^{-\nicefrac{u_\mu^2}{2}}$ is the $n_2$-dimensional standard Gaussian distribution and $\cZ_{\mathbf{n},t,\epsilon} \equiv \cZ_{\mathbf{n},t,\epsilon}(\bY_{t,\epsilon},\bY_{t,\epsilon}',\bW{1},\bW{2},\bV)$ is the appropriate normalization, i.e.\
\begin{equation} \label{Zt}
	\cZ_{\mathbf{n},t,\epsilon}(\bY_{t,\epsilon},\bY_{t,\epsilon}',\bW{1},\bW{2},\bV) \defeq \int \! dP_0(\bx)dP_{A_1}(\ba_1){\cal D}\bu \, e^{-\cH_{t,\epsilon}(\bx,\ba_1,\bu;\bY_{t,\epsilon},\bY_{t,\epsilon}',\bW{1},\bW{2},\bV)}\,.
\end{equation}
Finally, the {\it interpolating free entropy} is 
\begin{equation}
	f_{\mathbf{n},\epsilon}(t) \defeq \frac{1}{n_0} \E \ln \cZ_{\mathbf{n},t,\epsilon}(\bY_{t,\epsilon},\bY'_{t,\epsilon},\bW{1},\bW{2},\bV) 
	\,.	\label{ft}
\end{equation}
Note that the {\it perturbation} $\epsilon=(\epsilon_1,\epsilon_2)$ induces only a small change in the free entropy, namely of the order of $s_{n_0}$:
\begin{lemma}[Small free entropy variation under perturbation]\label{lem:perturbation_f}
There exists a constant $C$ such that
\begin{equation}
	\forall \epsilon \in \mathcal{B}_{n_0}: \: \vert f_{\mathbf{n},\epsilon}(0) - f_{\mathbf{n},\epsilon=(0,0)}(0) \vert \leq C s_{n_0} \,,
\end{equation}	
uniformly w.r.t.\ $\mathbf{n}$ and the choice of $(q_{\epsilon})_{\epsilon \in {\cal B}_{n_0}}$, $(r_{\epsilon})_{\epsilon \in {\cal B}_{n_0}}$.
\end{lemma}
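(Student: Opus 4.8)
The plan is to reduce the statement to a uniform Lipschitz estimate for the map $\epsilon \mapsto f_{\mathbf{n},\epsilon}(0)$ on $[0,2s_{n_0}]^2$. First I would record that at $t=0$ one has $R_1(0,\epsilon)=\epsilon_1$ and $R_2(0,\epsilon)=\epsilon_2$ by \eqref{R1R2}, so the interpolating Hamiltonian \eqref{interpolating-ham}, the partition function \eqref{Zt}, and hence $f_{\mathbf{n},\epsilon}(0)$ from \eqref{ft}, depend on the problem only through $\epsilon=(\epsilon_1,\epsilon_2)$ and $\mathbf{n}$; in particular they do \emph{not} depend on the families $(q_\epsilon)$, $(r_\epsilon)$ at all (these enter only through the integrals $\int_0^t$, which vanish at $t=0$), so uniformity in that choice is automatic. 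It then suffices to find a constant $C$, independent of $\mathbf{n}$ and of $\epsilon$, with $|\partial_{\epsilon_1} f_{\mathbf{n},\epsilon}(0)| \le C$ and $|\partial_{\epsilon_2} f_{\mathbf{n},\epsilon}(0)| \le C$: writing $f_{\mathbf{n},\epsilon}(0)-f_{\mathbf{n},(0,0)}(0) = \big(f_{\mathbf{n},(\epsilon_1,\epsilon_2)}(0)-f_{\mathbf{n},(0,\epsilon_2)}(0)\big)+\big(f_{\mathbf{n},(0,\epsilon_2)}(0)-f_{\mathbf{n},(0,0)}(0)\big)$ and applying the mean value theorem to each difference gives $|f_{\mathbf{n},\epsilon}(0)-f_{\mathbf{n},(0,0)}(0)| \le C(\epsilon_1+\epsilon_2) \le 4C\,s_{n_0}$, which is the claim after relabelling $C$.

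For the $\epsilon_1$-derivative I would use that at $t=0$ the scalar $\epsilon_1$ is precisely the signal-to-noise ratio of the side Gaussian channel $Y'_{0,\epsilon,i}=\sqrt{\epsilon_1}\,X_i^1+Z'_i$ of \eqref{2channels}, with $X_i^1$ as in \eqref{defX1}, and that $\epsilon_1$ appears \emph{only} in this channel (at $t=0$). Differentiating the true-signal representation of $f_{\mathbf{n},\epsilon}(0)$ (the analogue of \eqref{fff}) in $\epsilon_1$ produces a spurious $1/\sqrt{\epsilon_1}$ factor, but a Gaussian integration by parts in the noise $Z'_i$ followed by the Nishimori identity --- the standard I--MMSE manipulation --- cancels it and leaves $\partial_{\epsilon_1} f_{\mathbf{n},\epsilon}(0) = -\tfrac{1}{2n_0}\,\E\big\| \bX^1 - \langle \varphi_1(\bW{1}\bx/\sqrt{n_0},\ba_1)\rangle_{\mathbf{n},0,\epsilon}\big\|^2$, a nonnegative quantity bounded above by $\tfrac{1}{2n_0}\E\|\bX^1\|^2$. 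Since $\varphi_1$ is bounded by \ref{hyp:c2}, this gives $|\partial_{\epsilon_1} f_{\mathbf{n},\epsilon}(0)| \le \tfrac{n_1}{2n_0}\|\varphi_1\|_\infty^2$, uniform in $\mathbf{n}$ because $n_1/n_0$ converges.

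For the $\epsilon_2$-derivative --- the more delicate one --- I would recall that at $t=0$ the argument of $P_{\rm out,2}$ is $S_{0,\epsilon,\mu}=\tfrac{1}{\sqrt{n_1}}[\bW{2}\bX^1]_\mu+\sqrt{\epsilon_2}\,V_\mu+\sqrt{2s_{n_0}-\epsilon_2}\,U_\mu$ (and likewise $s_{0,\epsilon,\mu}$ in \eqref{stmu}), so $\epsilon_2$ only controls how much of the \emph{fixed} noise budget $2s_{n_0}$ is carried by the observed component $V$ rather than the hidden one $U$. Differentiating in $\epsilon_2$ brings down $\partial_{\epsilon_2}S_{0,\epsilon,\mu}=\tfrac{V_\mu}{2\sqrt{\epsilon_2}}-\tfrac{U_\mu}{2\sqrt{2s_{n_0}-\epsilon_2}}$, which is singular as $\epsilon_2\to 0$ and as $\epsilon_2\to 2s_{n_0}$; performing Gaussian integration by parts in $V_\mu$ and in $U_\mu$ (this multiplies the two terms by $\sqrt{\epsilon_2}$ and $\sqrt{2s_{n_0}-\epsilon_2}$ respectively) removes both singularities, and after one further use of the Nishimori identity $\partial_{\epsilon_2}f_{\mathbf{n},\epsilon}(0)$ becomes $\tfrac{1}{n_0}$ times a bounded linear combination of Gibbs averages of $\partial_x\ln P_{\rm out,2}$ and $\partial_x^2\ln P_{\rm out,2}$ evaluated along the channel. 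Because $\varphi_2$ is bounded and $\mathcal{C}^2$ with bounded first and second derivatives (\ref{hyp:c2}), these log-derivatives have moments bounded uniformly in $x$ under $P_{\rm out,2}(\cdot\,|x)$, with a bound depending only on $\Delta$ and the sup-norms of $\varphi_2,\varphi_2',\varphi_2''$; hence $|\partial_{\epsilon_2}f_{\mathbf{n},\epsilon}(0)| \le c'\, n_2/n_0$, again uniform in $\mathbf{n}$. Combining the two estimates yields the lemma, with $C$ depending only on $\alpha_1$, $\alpha_2$, $\Delta$, $\|\varphi_1\|_\infty$ and the bounds on $\varphi_2,\varphi_2',\varphi_2''$. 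The main obstacle --- and really the only non-bookkeeping point --- is precisely this cancellation of the $1/\sqrt{\epsilon_1}$, $1/\sqrt{\epsilon_2}$ and $1/\sqrt{2s_{n_0}-\epsilon_2}$ prefactors by Gaussian integration by parts; once it is carried out the bounds reduce to boundedness of $\varphi_1$, $\varphi_2$ and of the aspect ratios $n_1/n_0$, $n_2/n_0$, with no contribution from the interpolation functions.
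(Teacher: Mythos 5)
Your proposal is correct and follows essentially the same route as the paper: reduce to a uniform bound on the two partial derivatives $\partial_{\epsilon_1} f_{\mathbf{n},\epsilon}(0)$ and $\partial_{\epsilon_2} f_{\mathbf{n},\epsilon}(0)$, obtained by Gaussian integration by parts (which cancels the $1/\sqrt{\epsilon_1}$, $1/\sqrt{\epsilon_2}$, $1/\sqrt{2s_{n_0}-\epsilon_2}$ singularities) combined with the Nishimori identity and boundedness of $\varphi_1,\varphi_2$ and their derivatives under \ref{hyp:c2}, and then conclude by the mean value theorem. Your I--MMSE reading of the $\epsilon_1$-derivative, $\partial_{\epsilon_1} f_{\mathbf{n},\epsilon}(0) = -\tfrac{n_1}{2n_0}\big(\rho_1(n_0)-\E\langle\widehat Q\rangle_{\mathbf{n},0,\epsilon}\big)$, is in fact the form appearing in \eqref{derivative_fn} and is slightly cleaner than the expression written in the paper's proof of the lemma (which drops the $\rho_1(n_0)$ term); the resulting uniform bound $\tfrac{n_1}{2n_0}\sup|\varphi_1|^2$ is the same either way.
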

\begin{proof}
	A simple computation shows that $\frac{\partial f_{\mathbf{n},\epsilon}(0)}{\partial \epsilon_1} = - \frac{n_1}{n_0} \E \langle \mathcal{L} \rangle_{\mathbf{n},0,\epsilon}$, where 
	\begin{multline*}
	\mathcal{L} \defeq \frac{1}{n_1} \sum_{i=1}^{n_1}
	\frac{1}{2}\varphi_1^2\bigg(\bigg[\frac{\bW{1} \bx}{\sqrt{n_0}}\bigg]_i, \ba_{1,i}\bigg)
	- \varphi_1\bigg(\bigg[\frac{\bW{1} \bx}{\sqrt{n_0}}\bigg]_i,\ba_{1,i}\bigg) \varphi_1\bigg(\bigg[\frac{\bW{1} \bX^0}{\sqrt{n_0}}\bigg]_i, \bA_{1,i}\bigg)\\
	- \frac{1}{2\sqrt{\epsilon_1}} \varphi_1\bigg(\bigg[\frac{\bW{1} \bx}{\sqrt{n_0}}\bigg]_i, \bA_{1,i}\bigg) Z'_i\; .
	\end{multline*}
	Lemma~\ref{lem:expectationL} (see Appendix~\ref{appendix-overlap}) applied for $t=0$ reads $\E \langle \mathcal{L} \rangle_{\mathbf{n},0,\epsilon} = -\frac{1}{2} \E \langle \widehat{Q} \rangle_{\mathbf{n},0,\epsilon}$ where $\widehat{Q}$ being the overlap
	\begin{align*}
	\widehat{Q} \defeq \frac{1}{n_1}\sum_{i=1}^{n_1}
		\varphi_1\bigg(\bigg[\frac{\bW{1} \bx}{\sqrt{n_0}}\bigg]_i,\ba_{1,i}\bigg) \varphi_1\bigg(\bigg[\frac{\bW{1} \bX^0}{\sqrt{n_0}}\bigg]_i, \bA_{1,i}\bigg)\,.
	\end{align*}
	$\vert \widehat{Q} \vert$ is trivially bounded by $\sup \vert\varphi_1\vert^2$, hence
	\begin{align}
	\Big|\frac{\partial f_{\mathbf{n},\epsilon}(0)}{\partial {\epsilon_1}} \Big|
	\leq \underbrace{\frac{n_1}{n_0}}_{\to \alpha_{1}} \frac{\sup \varphi_1^2}{2} \,.	
	\end{align}
	Let $u_{y}(x) \defeq \ln P_{\rm out,2}(y|x)$ and $u'_{y}(x)$ its $x$-derivative. In a similar fashion to what is done in Appendix~\ref{app:computationDerivative}, we compute
	\begin{align}
	\Big\vert \frac{\partial f_{\mathbf{n},\epsilon}(0)}{\partial {\epsilon_2}}\Big\vert
	= \frac{1}{2 n_0}\sum_{\mu=1}^{n_2}
	\big\vert \E \big[u'_{Y_{0,\epsilon,\mu}}(S_{0,\epsilon,\mu})\langle u'_{Y_{0,\epsilon,\mu}}(s_{0,\epsilon,\mu}) \rangle_{\mathbf{n},0,\epsilon}\big]\big\vert\,.
	\end{align}
	This quantity is bounded uniformly under the hypothesis \ref{hyp:c2} (see the first part of the proof of Proposition~\ref{prop:cancel_remainder}). Then, by the mean value theorem,
	$\vert f_{\mathbf{n},\epsilon}(0) - f_{\mathbf{n},(0,0)}(0) \vert \leq K \Vert\epsilon\Vert \leq C s_{n_0}$ for appropriate numerical constants $K$ and $C$.
\end{proof}

\subsection{Interpolating free entropy at t=0 and t=1}
We will denote by $\smallO_{n_0}(1)$ any quantity that vanishes uniformly in $t \in [0,1]$ and $\epsilon$ when $n_0 \to +\infty$.
It is easily shown (the first equality uses Lemma \ref{lem:perturbation_f}) that
\begin{equation}\label{eq:f0_f1}
	\left\{
		\begin{array}{lll}
			f_{\mathbf{n},\epsilon}(0)
			&=& f_{\mathbf{n},\epsilon=(0,0)}(0) + \smallO_{n_0}(1) 
			= f_{\mathbf{n}} - \frac{1}{2}\frac{n_1}{n_0} + \smallO_{n_0}(1)\,,\\
			f_{\mathbf{n},\epsilon}(1)
			&=& \tilde{f}_{(n_0,n_1),\epsilon} + \frac{n_2}{n_0}\Psi_{P_{\rm out, 2}}\Big(\epsilon_2 + \int_0^1 q_{\epsilon}(t) dt;\rho_1(n_0) + 2 s_{n_0}\Big) + \frac{n_1}{n_0}\frac{\ln 2\pi}{2} \, .
		\end{array}
	\right.
\end{equation}
In the last expression $\tilde{f}_{(n_0,n_1),\epsilon}$ denotes the free entropy of the \textit{one-layer} GLM
\begin{equation}
Y'_{i} = \sqrt{R_1(1,\epsilon)}\, \varphi_1\bigg(\bigg[\frac{\bW{1} \bX^0}{\sqrt{n_0}}\bigg]_i,\bA_{1,i}\bigg) + Z'_i\,, \qquad 1 \leq i \leq n_1,
\end{equation}
with $(X_i^0)_{i=1}^{n_0} \iid P_0$, $(\bA_{1,i})_{i=1}^{n_1} \iid P_{A_1}$ and $(Z_i')_{i=1}^{n_1} \iid {\cal N}(0,1)$.
Applying Theorem 1 of \cite{BarbierOneLayerGLM}, then \eqref{rewritingPsiOutThirdChannel}, the free entropy $\tilde{f}_{(n_0,n_1),\epsilon}$ in the thermodynamic limit $n_0, n_1 \to +\infty$ such that $\nicefrac{n_1}{n_0} \to \alpha_1$ satisfies
\begin{align}
\tilde{f}_{n_0,n_1}
&= \smallO_{n_0}(1) + 
\adjustlimits{\sup}_{q_0 \in [0,\rho_0]} {\inf}_{r_0 \geq 0} \Big\{ \psi_{P_0}(r_0) + \alpha_1 \Psi_{P_{\rm out,1}^{(R_1(1,\epsilon))}}(q_0;\rho_0)- \frac{r_0 q_0}{2}\Big\} \nn
&= \smallO_{n_0}(1) -\alpha_1 \frac{1+\ln 2\pi + R_1(1,\epsilon) \rho_1}{2}\nn
&\qquad\qquad+ \!\! \adjustlimits{\sup}_{q_0 \in [0,\rho_0]} {\inf}_{r_0 \geq 0} \Big\{ \psi_{P_0}(r_0) + \alpha_1 \Psi_{\varphi_1}(q_0, R_1(1,\epsilon);\rho_0)- \frac{r_0 q_0}{2}\Big\}\nn
&= \smallO_{n_0}(1) -\frac{\alpha_1}{2} \bigg(1+\ln 2\pi + \rho_1 \int_0^1 \!\! r_{\epsilon}(t) dt \bigg)\nn
&\qquad\qquad+ \!\! \adjustlimits{\sup}_{q_0 \in [0,\rho_0]} {\inf}_{r_0 \geq 0} \Big\{ \psi_{P_0}(r_0) + \alpha_1 \Psi_{\varphi_1}\bigg(q_0, \int_0^1 \!\! r_{\epsilon}(t) dt;\rho_0\bigg)- \frac{r_0 q_0}{2}\Big\} \,. \label{limitOneLayerFreeEntropy}
\end{align}
The last line follows from $R_1(1,\epsilon) = \int_0^1 \! r_{\epsilon}(t) + \smallO_{n_0}(1)$ and Proposition \ref{app:propertiesThirdChannel} that shows the Lipschitzianity of the mapping
\begin{equation*}
r \mapsto \adjustlimits{\sup}_{q_0 \in [0,\rho_0]} {\inf}_{r_0 \geq 0}  \Big\{\psi_{P_0}(r_0) + \alpha_1 \Psi_{\varphi_1}(q_0, r;\rho_0) - \frac{r_0 q_0}{2}\Big\} \,. 
\end{equation*}
The second summand appearing in $f_{\mathbf{n},\epsilon}(1)$ can also be simplified in the large $n_0$ limit:
\begin{align}
\frac{n_2}{n_0}\Psi_{P_{\rm out, 2}}\bigg(\epsilon_2 + \int_0^1 \!\! q_{\epsilon}(t) dt;\rho_1(n_0) + 2 s_{n_0}\bigg)
&= \alpha \Psi_{P_{\rm out, 2}}\bigg(\epsilon_2 + \int_0^1 \!\! q_{\epsilon}(t) dt;\rho_1(n_0) + 2 s_{n_0}\bigg) + \smallO_{n_0}(1)\nn
&= \alpha \Psi_{P_{\rm out, 2}}\bigg(\int_0^1 \!\! q_{\epsilon}(t) dt;\rho_1(n_0)\bigg) + \smallO_{n_0}(1)\:.\label{secondSummand_f_t=1}
\end{align}
The first line follows from Lemma \ref{lem:psiPoutUniformlyBounded} stated at the end of this section. The second line follows from the Lipschitzianity in both its arguments of the continuous mapping $(q,\rho) \mapsto \Psi_{P_{\rm out},2}\big(q;\rho\big)$ on the compact $\big\{(q,\rho):  0 \leq \rho \leq 1 + \rho_{\mathrm{max}}, 0 \leq q \leq \rho \big\}$, with $\rho_{\mathrm{max}}$ a upper bound on the sequence $\{\rho_1(n_0)\}_{n_0 \geq 1}$.
Reporting \eqref{limitOneLayerFreeEntropy} and \eqref{secondSummand_f_t=1} in \eqref{eq:f0_f1}, we obtain:
\begin{multline}\label{eq:f1_thermoLimit}
f_{\mathbf{n},\epsilon}(1) = -\frac{\alpha_1}{2}\bigg(1 + \rho_1 \int_0^1 \!\! r_{\epsilon}(t) dt\bigg)
+ \alpha \Psi_{P_{\rm out, 2}}\bigg(\int_0^1 q_{\epsilon}(t) dt;\rho_1(n_0)\bigg)\\
+ \adjustlimits{\sup}_{q_0 \in [0,\rho_0]} {\inf}_{r_0 \geq 0} \; \bigg\{\psi_{P_0}(r_0) + \alpha_1 \Psi_{\varphi_1}\bigg(q_0, \int_0^1 \!\! r_{\epsilon}(t);\rho_0\bigg)
- \frac{r_0 q_0}{2} \bigg\} + \smallO_{n_0}(1) \; .
\end{multline}
\begin{lemma}[Uniform upper bound on $\Psi_{P_{\rm out, 2}}$]\label{lem:psiPoutUniformlyBounded}
Assuming $\varphi_2$ is bounded, one has for all $\rho \geq 0$ and $q \in [0,\rho]$
\begin{equation*}
\vert \Psi_{P_{\rm out, 2}}(q;\rho)\vert \leq \frac{1 + \ln(2\pi \Delta)}{2} + \frac{2 \sup \vert \varphi_2 \vert^2}{\Delta}\; .
\end{equation*}
\begin{proof}
The upper bound $P_{\rm out,2}\big( y \big| x \big) \leq \nicefrac{1}{\sqrt{2\pi \Delta}}$ directly implies
\begin{equation*}
\Psi_{P_{\rm out, 2}}(q;\rho) \leq -\frac{1}{2} \ln(2\pi \Delta) \; .
\end{equation*}
By Jensen's inequality, one also has the lowerbound
\begin{align*}
\Psi_{P_{\rm out, 2}}(q;\rho) &\geq \E \!\int \! {\cal D}u dP_{A_2}(\ba)\ln \frac{1}{\sqrt{2\pi \Delta}} e^{-\frac{1}{2\Delta}(\widetilde{Y}_0 - \varphi_2(\sqrt{q}\, V + \sqrt{\rho - q}\, u,\ba))^2}\\
&\geq -\frac{1}{2 \Delta} \E \!\int \!\!{\cal D}u dP_{A_2}(\ba) \big(\varphi_2(\sqrt{q}\, V + \sqrt{\rho - q}\, U, \ba) - \varphi_2(\sqrt{q}\, V + \sqrt{\rho - q}\, u, \ba)\big)^2\\
&\quad\,-\frac{1 + \ln 2\pi \Delta}{2}\,.
\end{align*}
Put together, these lower and upper bounds give the lemma.
\end{proof}
\end{lemma}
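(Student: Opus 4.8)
The plan is to bound $\Psi_{P_{\rm out, 2}}(q;\rho)$ above and below by two quantities that are manifestly independent of $q$ and $\rho$: the upper bound will come from the uniform pointwise bound on the transition density $P_{\rm out,2}$, and the lower bound from Jensen's inequality combined with an elementary second‑moment estimate. For the upper bound, I would note that in \eqref{defPout2} the Gaussian density is at most $\frac{1}{\sqrt{2\pi\Delta}}$ and $P_{A_2}$ is a probability measure, so $P_{\rm out,2}(y|x)\le \frac{1}{\sqrt{2\pi\Delta}}$ for all $y,x$; hence the integral over $u$ in \eqref{PsiPout} is also at most $\frac{1}{\sqrt{2\pi\Delta}}$, and taking the logarithm and the expectation gives $\Psi_{P_{\rm out,2}}(q;\rho)\le -\frac12\ln(2\pi\Delta)$.

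For the lower bound, I would insert the definition \eqref{defPout2} of $P_{\rm out,2}$ inside \eqref{PsiPout}, so that the argument of the logarithm becomes the integral of a positive function against the product probability measure ${\cal D}u\,dP_{A_2}(\ba)$, and then apply Jensen's inequality to move $\ln$ inside this expectation:
\[
\Psi_{P_{\rm out,2}}(q;\rho)\ \ge\ -\tfrac12\ln(2\pi\Delta)\ -\ \frac{1}{2\Delta}\,\E\!\int {\cal D}u\,dP_{A_2}(\ba)\,\big(\widetilde{Y}_0-\varphi_2(\sqrt{q}\,V+\sqrt{\rho-q}\,u,\ba)\big)^2 .
\]
To control the remaining second moment I would use the generative representation implied by \eqref{eq:Pout_scalar_channel}--\eqref{defPout2}, namely $\widetilde{Y}_0=\varphi_2(\sqrt{q}\,V+\sqrt{\rho-q}\,U,\bA_2)+\sqrt{\Delta}\,Z_0$ with $U\sim\cN(0,1)$, $\bA_2\sim P_{A_2}$ and $Z_0\sim\cN(0,1)$ independent of everything else. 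Expanding the square, the cross term linear in $Z_0$ has zero expectation, $\E[\Delta Z_0^2]=\Delta$, and the difference of two values of $\varphi_2$ is at most $2\sup|\varphi_2|$ in absolute value; hence the expectation is bounded by $4\sup|\varphi_2|^2+\Delta$, which yields $\Psi_{P_{\rm out,2}}(q;\rho)\ge -\frac12\ln(2\pi\Delta)-\frac12-\frac{2\sup|\varphi_2|^2}{\Delta}$. Since neither bound depends on $q$ or $\rho$, combining them gives $|\Psi_{P_{\rm out,2}}(q;\rho)|\le \frac{1+\ln(2\pi\Delta)}{2}+\frac{2\sup|\varphi_2|^2}{\Delta}$.

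I do not anticipate a serious obstacle: the estimate is entirely elementary. The two points that warrant a little care are (i) checking that Jensen's inequality is applied against a genuine probability measure — it is, since ${\cal D}u\,dP_{A_2}(\ba)$ integrates to one — and (ii) using the representation $\widetilde{Y}_0=\varphi_2(\,\cdot\,,\bA_2)+\sqrt{\Delta}\,Z_0$ rather than its density form when computing the second moment, since this is exactly what makes the $Z_0$-cross term vanish and leaves only the bounded difference of $\varphi_2$-values and the noise variance $\Delta$. Uniformity in $q$ and $\rho$ is automatic because these parameters enter the estimates only through the arguments of the bounded function $\varphi_2$.
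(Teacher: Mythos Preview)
Your proposal is correct and follows essentially the same approach as the paper: the upper bound via $P_{\rm out,2}(y|x)\le \frac{1}{\sqrt{2\pi\Delta}}$ and the lower bound via Jensen's inequality against the probability measure ${\cal D}u\,dP_{A_2}(\ba)$, followed by the crude estimate $|\varphi_2-\varphi_2|\le 2\sup|\varphi_2|$. Your handling of the second moment—writing $\widetilde Y_0=\varphi_2(\cdot,\bA_2)+\sqrt{\Delta}Z_0$, killing the cross term, and isolating the noise variance $\Delta$—is in fact slightly more explicit than the paper's presentation.
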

\noindent To conclude on that section, the interpolating model is such that:
\begin{itemize}
\item \textbf{at t=0}, it reduces to the two-layer GLM;
\item \textbf{at t=1}, it reduces to one scalar inference channel associated to the term $\Psi_{P_{\rm out,2}}$ plus one-layer GLM whose formula for the free entropy $\tilde{f}_{n_0,n_1}$ in the thermodynamic limit is already known from \cite{BarbierOneLayerGLM}.
\end{itemize}

\subsection{Free entropy variation along the interpolation path}
From the Fundamental Theorem of Calculus and $\eqref{eq:f0_f1}$, $\eqref{eq:f1_thermoLimit}$
\begin{align}
f_{\mathbf{n}} &= f_{\mathbf{n},\epsilon}(0) + \frac{1}{2}\frac{n_1}{n_0} + \smallO_{n_0}(1)\nonumber\\
&= f_{\mathbf{n},\epsilon}(1) - \int_0^1\frac{df_{\mathbf{n},\epsilon}(t)}{dt} dt
+ \frac{1}{2}\alpha_1 + \smallO_{n_0}(1)\nonumber\\
&= -\frac{\alpha_1}{2} \rho_1 \int_0^1 \!\! r_{\epsilon}(t) dt
+ \alpha \Psi_{P_{\rm out, 2}}\bigg(\int_0^1 q_{\epsilon}(t) dt;\rho_1(n_0)\bigg)
- \int_0^1\frac{df_{\mathbf{n},\epsilon}(t)}{dt} dt\nonumber\\
&\qquad\qquad + \adjustlimits{\sup}_{q_0 \in [0,\rho_0]} {\inf}_{r_0 \geq 0} \; \bigg\{\psi_{P_0}(r_0) + \alpha_1 \Psi_{\varphi_1}\bigg(q_0, \int_0^1 \!\! r_{\epsilon}(t) dt;\rho_0\bigg)- \frac{r_0 q_0}{2} \bigg\}
+ \smallO_{n_0}(1)\,. \label{f0_f1_int}
\end{align}
Most of the terms that form the potential $\eqref{frs}$ can already be identified in the expression $\eqref{f0_f1_int}$. For the missing terms to appear, the t-derivative of the free entropy has to be computed first.\\

Recall the definitions:
\begin{itemize}
	\item $u_y(x) \defeq \ln P_{\rm out,2}(y|x)$ and $u'_y(x)$ is its derivative (w.r.t.\ $x$).
	\item The overlap $\widehat{Q}$:
	\begin{align*}
	\widehat{Q} \defeq \frac{1}{n_1}\sum_{i=1}^{n_1}
	\varphi_1\bigg(\bigg[\frac{\bW{1} \bx}{\sqrt{n_0}}\bigg]_i,\ba_{1,i}\bigg) \varphi_1\bigg(\bigg[\frac{\bW{1} \bX^0}{\sqrt{n_0}}\bigg]_i, \bA_{1,i}\bigg)\,.
	\end{align*}
\end{itemize}
In Appendix \ref{appendix_interpolation} we show
\begin{proposition}[Free entropy variation] \label{prop:der_f_t}
	The derivative of the free entropy \eqref{ft} verifies, for all $t \in (0,1)$,
	\begin{multline}\label{eq:der_f_t}
		\frac{df_{\mathbf{n},\epsilon}(t)}{dt} = 
-\frac{1}{2} \frac{n_1}{n_0}\E\Bigg[\Bigg\langle
\Bigg(\frac{1}{n_1}\sum_{\mu=1}^{n_2} u_{Y_{t,\epsilon,\mu}}'( S_{t,\epsilon,\mu} )u_{Y_{t,\epsilon,\mu}}'(s_{t,\epsilon,\mu}) - r_{\epsilon}(t)\Bigg)
\Big( \widehat{Q}- q_{\epsilon}(t)\Big)
\Bigg\rangle_{\! \mathbf{n},t,\epsilon}\,\Bigg]\\
+ \frac{n_1}{n_0} \frac{r_{\epsilon}(t)}{2} \big(q_{\epsilon}(t)-\rho_1(n_0)\big) + \smallO_{n_0}(1)\,,
	\end{multline}
	where $\smallO_{n_0}(1)$ is a quantity that goes to $0$ in the limit $n_0, n_1, n_2 \to +\infty$, \textbf{uniformly} in $t \in [0,1]$ and $\epsilon \in \mathcal{B}_{n_0}$.
\end{proposition}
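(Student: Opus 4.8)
The plan is to differentiate the interpolating free entropy \eqref{ft} in $t$, using the explicit representation in which $f_{\mathbf{n},\epsilon}(t)=\tfrac1{n_0}\E\ln\cZ_{\mathbf{n},t,\epsilon}$ with the outer expectation taken over the planted variables $\bX^0,\bA_1,\bA_2,\bU,\bV,\bZ,\bZ',\bW{1},\bW{2}$ and the observations $\bY_{t,\epsilon},\bY'_{t,\epsilon}$ generated through \eqref{2channels}. The parameter $t$ enters in three places: explicitly in the Hamiltonian \eqref{interpolating-ham} through $s_{t,\epsilon,\mu}$ and through $R_1(t,\epsilon)$; in the planted signal $S_{t,\epsilon,\mu}$ of \eqref{defS_t_mu} feeding $P_{\rm out,2}$; and in $R_1(t,\epsilon)$ scaling the $\varphi_1$-channel observation $Y'_{t,\epsilon,i}$. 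Differentiating under $\E$ and using $\partial_t R_1(t,\epsilon)=r_\epsilon(t)$, $\partial_t R_2(t,\epsilon)=q_\epsilon(t)$ from \eqref{R1R2}, I would split $\tfrac{d}{dt}f_{\mathbf{n},\epsilon}(t)$ into a contribution of the $P_{\rm out,2}$ channel and a contribution of the $\varphi_1$-channel, each reduced to $\E$ of a Gibbs average of an explicit derivative.

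For the $P_{\rm out,2}$ part I would compute $\partial_t s_{t,\epsilon,\mu}$ from \eqref{stmu}; the three square-root prefactors in \eqref{defS_t_mu}–\eqref{stmu} are chosen precisely so that the total variance of $S_{t,\epsilon,\mu}$ is $t$-independent, equal to $\rho_1(n_0)+2s_{n_0}$ (the variance-preserving interpolation), the residual fluctuation of $\Vert\bX^1\Vert^2/n_1$ around $\rho_1(n_0)$ being absorbed into $\smallO_{n_0}(1)$ via Proposition~\ref{prop:convergenceRho} and concentration. Gaussian integration by parts with respect to $V_\mu$, $U_\mu$ and the rows of $\bW{2}$ then collapses most terms; what survives regroups, after one use of the Nishimori identity to trade a posterior copy for the planted configuration and after recognizing that $\tfrac1{n_1}\E_{\bW{2}}[\bW{2}\bX^1]_\mu[\bW{2}\varphi_1(\bW{1}\bx/\sqrt{n_0},\ba_1)]_\mu=\widehat{Q}$, into $-\tfrac12\tfrac{n_1}{n_0}\E\big\langle\big(\tfrac1{n_1}\sum_\mu u'_{Y_{t,\epsilon,\mu}}(S_{t,\epsilon,\mu})u'_{Y_{t,\epsilon,\mu}}(s_{t,\epsilon,\mu})\big)\widehat{Q}\big\rangle_{\mathbf{n},t,\epsilon}$ plus the ``planted'' piece carrying $q_\epsilon(t)$, all up to $\smallO_{n_0}(1)$. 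Boundedness of $u'_y$ and of its $x$-derivatives, which follows from hypothesis~\ref{hyp:c2} on $\varphi_2$ together with $\Delta>0$ (exactly as used in the proof of Lemma~\ref{lem:perturbation_f}), is what makes every remainder uniform in $t$ and $\epsilon$.

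For the $\varphi_1$-channel part, the relevant term of $\cH_{t,\epsilon}$ is a pure additive Gaussian channel of signal-to-noise ratio $R_1(t,\epsilon)$ with $\tfrac{d}{dt}R_1(t,\epsilon)=r_\epsilon(t)$. Differentiating, taking the Gibbs average and the expectation, integrating by parts in the Gaussian noise $Z'_i$ and invoking the Nishimori identity of Appendix~\ref{appendix-overlap} (the $I$-MMSE relation in the Bayes-optimal setting, as for $\partial_{\epsilon_1}f_{\mathbf{n},\epsilon}(0)$ in the proof of Lemma~\ref{lem:perturbation_f}) yields $\tfrac12\tfrac{n_1}{n_0}r_\epsilon(t)\big(\E\langle\widehat{Q}\rangle_{\mathbf{n},t,\epsilon}-\rho_1(n_0)\big)+\smallO_{n_0}(1)$, the error again coming from replacing $\Vert\bX^1\Vert^2/n_1$ by $\rho_1(n_0)$. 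I would then add the two contributions and reorganize, completing the square around $\widehat{Q}-q_\epsilon(t)$ and $\tfrac1{n_1}\sum_\mu u'u'-r_\epsilon(t)$; the leftover cross terms cancel after a last application of the Nishimori identity, producing exactly \eqref{eq:der_f_t} with the clean term $\tfrac{n_1}{n_0}\tfrac{r_\epsilon(t)}{2}(q_\epsilon(t)-\rho_1(n_0))$.

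The main obstacle is the bookkeeping of the Gaussian integration by parts for the $P_{\rm out,2}$ term and the verification that every discarded piece is genuinely $\smallO_{n_0}(1)$ \emph{uniformly} in $(t,\epsilon)\in[0,1]\times\mathcal{B}_{n_0}$: this needs the $\mathcal{C}^2$/bounded-derivative control of $\varphi_1,\varphi_2$, the bounded support of $P_0$, the boundedness of $r_\epsilon$ by $\rmax$, and concentration of $\Vert\bX^1\Vert^2/n_1$ around $\rho_1(n_0)$ — it is precisely where hypotheses~\ref{hyp:bounded}–\ref{hyp:phi_gauss2} are used in an essential way, and it is the bulk of the work deferred to Appendix~\ref{appendix_interpolation}.
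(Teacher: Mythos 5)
Your outline captures the formal calculus correctly: differentiating \eqref{ft} in $t$, applying Gaussian integration by parts with respect to $\bW{2}$, $\bV$, $\bU$, $\bZ'$, and using the Nishimori identity to eliminate the Gibbs-average-only contribution (the paper calls this $T_2$) indeed produces the displayed overlap terms. Your observation that the interpolation is variance-preserving in expectation, and your identification of the role of \eqref{R1R2}, are both on target. But there is a genuine gap in how you dispose of the remainder.

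After the integration by parts the paper obtains an extra term
\begin{equation*}
A_{\mathbf{n},\epsilon}(t)
=\E\Bigg[\sum_{\mu=1}^{n_2}
\frac{P_{\rm out,2}''(Y_{t,\epsilon,\mu}\mid S_{t,\epsilon,\mu})}{P_{\rm out,2}(Y_{t,\epsilon,\mu}\mid S_{t,\epsilon,\mu})}
\Bigg(\frac{\Vert\bX^1\Vert^2}{n_1}-\rho_1(n_0)\Bigg)\frac{\ln\cZ_{\mathbf{n},t,\epsilon}}{n_0}\Bigg],
\end{equation*}
and you dismiss it as ``the residual fluctuation of $\Vert\bX^1\Vert^2/n_1$ around $\rho_1(n_0)$ being absorbed into $\smallO_{n_0}(1)$ via Proposition~\ref{prop:convergenceRho} and concentration''. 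That estimate does not close on its own: the sum over $\mu$ has $n_2$ terms, each of order one, while $\Vert\bX^1\Vert^2/n_1-\rho_1(n_0)$ is only $O(n_1^{-1/2})$, so the naive bound is $O(\sqrt{n_2})$, which diverges. The mechanism that actually kills $A_{\mathbf{n},\epsilon}(t)$ is two-fold, and neither step is mere bookkeeping. First, one observes that $\int P_{\rm out,2}''(y\mid s)\,dy=0$, so by the tower property $\E[\sum_\mu P_{\rm out,2}''/P_{\rm out,2}\mid\bX^1,\bS_{t,\epsilon}]=0$; this allows one to replace $\ln\cZ_{\mathbf{n},t,\epsilon}/n_0$ by its centered version $\ln\cZ_{\mathbf{n},t,\epsilon}/n_0-f_{\mathbf{n},\epsilon}(t)$ inside the expectation at no cost, and at the same time lets the conditional variance of $\sum_\mu P''/P$ scale like $n_2$ rather than $n_2^2$. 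Second, one needs the free entropy concentration Theorem~\ref{concentrationtheorem}, which provides $\E[(\ln\cZ_{\mathbf{n},t,\epsilon}/n_0-f_{\mathbf{n},\epsilon}(t))^2]=O(1/n_0)$ uniformly in $(t,\epsilon)$. Combining Cauchy--Schwarz with both ingredients gives $|A_{\mathbf{n},\epsilon}(t)|=O(n_0^{-1/2})$ uniformly. Without spotting the tower-property centering and invoking the free entropy concentration theorem, the remainder analysis does not go through, and these are exactly the ideas you would need to make ``via concentration'' precise.

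As a minor structural remark: the paper decomposes the derivative as $-T_1-T_2$, where $T_1$ is the planted Hamiltonian derivative times $\ln\cZ$ and $T_2$ is the Gibbs-averaged Hamiltonian derivative, then shows $T_2=0$ by Nishimori before doing any channel-specific work. Your channel-by-channel split would reach the same place after reassembly, but the planted/posterior split is what makes the Nishimori cancellation surgical rather than scattered.
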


\subsection{Overlap concentration}
An important quantity appearing naturally in the $t$-derivative of the average free entropy is $\widehat{Q}$, the overlap between the hidden output $\bX^1$ and the sample
$\bx^1  \defeq \varphi_{1}\big(\nicefrac{\bW{1} \bx}{\sqrt{n_0}}, \ba_{1}\big)$ where the triplet $(\bx,\bu,\ba_1)$ is sampled from the posterior distribution associated to the Gibbs bracket $\langle - \rangle_{\mathbf{n},t, \epsilon}$.
The next proposition mirrors Proposition 4 in \cite{BarbierOneLayerGLM} and states that the overlap concentrates around its mean.
\begin{proposition}[Overlap concentration] \label{prop:concentrationOverlap}
Let $s_{n_0} = \frac{1}{2} n_0^{-\nicefrac{1}{16}}$.
Assume that ~\ref{hyp:bounded},~\ref{hyp:c2},~\ref{hyp:phi_gauss2} hold and that $(q_{\epsilon})_{\epsilon \in {\cal B}_{n_0}}$, $(r_{\epsilon})_{\epsilon \in {\cal B}_{n_0}}$ are regular.
There exists a constant $C(\varphi_1,\varphi_2,\alpha_1,\alpha_2, S)$ independent of $t$ such that
\begin{equation}
\frac{1}{s_{n_0}^2}\int_{{\cal B}_{n_0}} \!\!\!\! d\epsilon \int_0^1 \!\! dt\,
\E\big[\big\langle \big(\widehat{Q} - \E\big[\langle \widehat{Q} \rangle_{\mathbf{n},t,\epsilon}\big]\big)^2\big\rangle_{\mathbf{n},t,\epsilon}\big]
\leq \frac{C(\varphi_1,\varphi_2,\alpha_1,\alpha_2, S)}{n_0^{\nicefrac{1}{8}}}\;.
\end{equation}
\begin{proof}
It is a simple consequence of Proposition \ref{L-concentration} and the bound \eqref{boundConcentrationOverlapL}, both in Appendix~\ref{appendix-overlap}, combined with Fubini's theorem.
\end{proof}
\end{proposition}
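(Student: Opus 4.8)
The plan is to establish this concentration result via the standard thermal--disorder decomposition underlying the adaptive interpolation method \cite{BarbierOneLayerGLM}. Writing $\E\big[\langle(\widehat Q-\E\langle\widehat Q\rangle_{\mathbf{n},t,\epsilon})^2\rangle_{\mathbf{n},t,\epsilon}\big] \le 2\,\E\big[\langle(\widehat Q-\langle\widehat Q\rangle_{\mathbf{n},t,\epsilon})^2\rangle_{\mathbf{n},t,\epsilon}\big] + 2\,\E\big[(\langle\widehat Q\rangle_{\mathbf{n},t,\epsilon}-\E\langle\widehat Q\rangle_{\mathbf{n},t,\epsilon})^2\big]$, it suffices to bound the thermal fluctuations (first term) and the disorder fluctuations (second term) separately, after averaging over the perturbation box $\mathcal{B}_{n_0}=[s_{n_0},2s_{n_0}]^2$; the averaging over $\epsilon_1$ is what generates the $s_{n_0}^{-2}\int_{\mathcal{B}_{n_0}}d\epsilon$ prefactor in the statement. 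The bridge to the auxiliary quantity is the observation, already exploited in Lemma~\ref{lem:perturbation_f}, that the $\bY'$ side-channel in \eqref{2channels} has effective signal-to-noise ratio $R_1(t,\epsilon)=\epsilon_1+\int_0^t r_\epsilon$, so that $\partial_{\epsilon_1}f_{\mathbf{n},\epsilon}(t)=-\tfrac{n_1}{n_0}\E\langle\mathcal{L}\rangle_{\mathbf{n},t,\epsilon}$ with the $\mathcal{L}$ of Lemma~\ref{lem:perturbation_f}, and the Nishimori identity $\E\langle\mathcal{L}\rangle=-\tfrac12\E\langle\widehat Q\rangle$ of Lemma~\ref{lem:expectationL}. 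Proposition~\ref{L-concentration} packages the concentration of $\mathcal{L}$ (thermal and disorder), and the bound \eqref{boundConcentrationOverlapL} transfers it to $\widehat Q$.

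For the thermal part, I would differentiate once more in $\epsilon_1$: using the Nishimori manipulations one sees that $\partial_{\epsilon_1}\E\langle\mathcal{L}\rangle_{\mathbf{n},t,\epsilon}$ equals, up to lower-order terms controlled by \ref{hyp:bounded}--\ref{hyp:c2}, a nonnegative multiple of the thermal variance $\E\langle(\widehat Q-\langle\widehat Q\rangle)^2\rangle$. Since $\epsilon_1\mapsto f_{\mathbf{n},\epsilon}(t)$ is convex, $\partial_{\epsilon_1}f$ is monotone, and integrating over $\epsilon_1\in[s_{n_0},2s_{n_0}]$ telescopes the thermal variance into a boundary term of order $1/(n_0 s_{n_0})$, uniformly in $t\in[0,1]$ thanks to the uniform boundedness of $\varphi_1,\varphi_2$ and their derivatives. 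This is exactly the content one needs from Proposition~\ref{L-concentration} for the thermal fluctuations of $\mathcal{L}$, combined with \eqref{boundConcentrationOverlapL}.

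For the disorder part, I would invoke concentration of $n_0^{-1}\ln\cZ_{\mathbf{n},t,\epsilon}$ around its mean. Under \ref{hyp:phi_gauss2} the blocks $\bW{1},\bW{2},\bV,\bZ'$ and the channel noise $\bZ$ are Gaussian, under \ref{hyp:bounded} the prior $\bX^0$ has bounded support, and under \ref{hyp:c2} the maps $\varphi_1,\varphi_2$ together with their first two derivatives are bounded; a Gaussian--Poincar\'e inequality for the Gaussian coordinates and a bounded-difference (Efron--Stein) estimate for $\bX^0$ then give $\mathrm{Var}(n_0^{-1}\ln\cZ_{\mathbf{n},t,\epsilon})=O(1/n_0)$ uniformly in $t$ and $\epsilon\in\mathcal{B}_{n_0}$. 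Differentiating in $\epsilon_1$ and using the regularity of $(q_\epsilon)$, $(r_\epsilon)$ (Definition~\ref{def:reg}: the change of variables $\epsilon\mapsto R(t,\epsilon)$ is a $\mathcal{C}^1$-diffeomorphism with Jacobian $\ge1$) converts free-entropy concentration into concentration of $\langle\widehat Q\rangle_{\mathbf{n},t,\epsilon}$ after the box average, at the price of one inverse power of $s_{n_0}$ per differentiation, hence a contribution of order $1/(n_0 s_{n_0}^2)$.

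Combining the two estimates, integrating over $t\in[0,1]$ and $\epsilon\in\mathcal{B}_{n_0}$ --- which is legitimate by Fubini since all integrands are nonnegative and jointly measurable --- the dominant term is the disorder one of order $1/(n_0 s_{n_0}^2)$; the choice $s_{n_0}=\tfrac12 n_0^{-1/16}$ keeps $s_{n_0}\to0$ (needed in Lemma~\ref{lem:perturbation_f} and to guarantee $\rho_1(n_0)t-R_2(t,\epsilon)+2s_{n_0}\ge0$ in \eqref{defS_t_mu}) while producing the advertised $n_0^{-1/8}$ rate. I expect the main obstacle to be the disorder step rather than the thermal one: one must control \emph{all} the $\epsilon$-derivatives of $f_{\mathbf{n},\epsilon}(t)$ that enter when passing from concentration of $\ln\cZ_{\mathbf{n},t,\epsilon}$ to concentration of the overlap, uniformly along the interpolation path $t\in[0,1]$, and verify that the constants do not degenerate --- this is precisely what dictates the $\mathcal{C}^2$-boundedness hypothesis \ref{hyp:c2} and the careful construction of the regular interpolation functions $q_\epsilon,r_\epsilon$.
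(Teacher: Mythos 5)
Your proposal identifies exactly the same three ingredients as the paper's proof: Proposition~\ref{L-concentration} (concentration of $\mathcal{L}$), the transfer bound \eqref{boundConcentrationOverlapL} relating the total variance of $\widehat Q$ to that of $\mathcal{L}$, and Fubini's theorem. At this level the proposal is correct, and the bulk of what you write after the first paragraph is unnecessary for \emph{this} proposition---it re-derives material already packaged in Proposition~\ref{L-concentration} and Theorem~\ref{concentrationtheorem}.

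Two remarks, one cosmetic and one substantive. Cosmetically, the opening decomposition $\E\langle(\widehat Q-\E\langle\widehat Q\rangle)^2\rangle\le 2\,\E\langle(\widehat Q-\langle\widehat Q\rangle)^2\rangle+2\,\E(\langle\widehat Q\rangle-\E\langle\widehat Q\rangle)^2$ is superfluous here: \eqref{boundConcentrationOverlapL} already controls the \emph{total} variance of $\widehat Q$ by the total variance of $\mathcal{L}$, so there is no need to split $\widehat Q$ into thermal and disorder pieces---that split is done on $\mathcal{L}$ inside the proof of Proposition~\ref{L-concentration}, not on $\widehat Q$.

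Substantively, the exponent bookkeeping in your last paragraph is wrong. You assert that the disorder contribution is of order $1/(n_0 s_{n_0}^2)$, heuristically arguing one factor of $1/s_{n_0}$ per $\epsilon_1$-derivative of the free-entropy concentration bound $\mathrm{Var}(n_0^{-1}\ln\cZ)=O(1/n_0)$. That is not how Lemma~\ref{disorder-fluctuations} works: the passage from free-entropy concentration to concentration of $\langle\mathcal{L}\rangle$ uses a finite-difference step $\delta$ that is a \emph{free} parameter, balanced against the $C_\delta^\pm$ error terms coming from the convexity of $\tilde f$ and chosen as $\delta=s_{n_0}n_0^{-1/4}\ll s_{n_0}$, which yields $\int_{\mathcal{B}_{n_0}}d\epsilon\,\E[(\langle\mathcal{L}\rangle-\E\langle\mathcal{L}\rangle)^2]=O(n_0^{-1/4})$---\emph{worse}, not better, than your claimed $1/(n_0 s_{n_0}^2)$. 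Dividing by $s_{n_0}^2=\tfrac14 n_0^{-1/8}$ then gives the stated $O(n_0^{-1/8})$; by contrast, plugging your $1/(n_0 s_{n_0}^2)$ into the same final step would yield $1/(n_0 s_{n_0}^4)=O(n_0^{-3/4})$, which does not match the proposition. So your heuristic is internally inconsistent with the rate it is supposed to justify; the correct $n_0^{-1/8}$ follows only from the actual $n_0^{-1/4}$ bound of Proposition~\ref{L-concentration}, which is what the paper cites.
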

Note from \eqref{frs} and \eqref{f0_f1_int} that the second summand in \eqref{eq:der_f_t} is precisely the term needed to obtain the expression of the potential on the r.h.s.\ of \eqref{f0_f1_int}. We would like to ``cancel'' the Gibbs bracket in \eqref{eq:der_f_t} to prove Theorem~\ref{th:RS_2layer}.
One way to cancel this so-called remainder is to choose $q_\epsilon(t) = \E\big[\langle \widehat{Q} \rangle_{\mathbf{n},t,\epsilon}\big]$ on which $\widehat{Q}$ concentrates by Proposition~\ref{prop:concentrationOverlap}.
However, $\E\big[\langle \widehat{Q} \rangle_{\mathbf{n},t,\epsilon}\big]$ depends on $\int_0^t q_\epsilon(v)dv$, as well as $t$, $\epsilon$ and $\int_0^t r_\epsilon(v)dv$. The equation $q_\epsilon(t) = \E\big[\langle \widehat{Q} \rangle_{\mathbf{n},t,\epsilon}\big]$ is therefore a first order differential equation over $t \mapsto \int_0^t q_\epsilon(v)dv$.
This will be addressed in details in the next section.
For now we assume that we can take $q_\epsilon(t) = \E\big[\langle \widehat{Q} \rangle_{\mathbf{n},t,\epsilon}\big]$ and prove
\begin{proposition}\label{prop:cancel_remainder}
Assume that ~\ref{hyp:bounded},~\ref{hyp:c2},~\ref{hyp:phi_gauss2} hold, that the interpolation functions $(q_{\epsilon})_{\epsilon \in {\cal B}_{n_0}}$, $(r_{\epsilon})_{\epsilon \in {\cal B}_{n_0}}$ are regular, and that $\forall (t,\epsilon) \in [0,1] \times \mathcal{B}_{n_0}: q_{\epsilon}(t) = \E \langle \widehat{Q} \rangle_{n,t,\epsilon}$. Then
\begin{multline}
f_{\mathbf{n}}
= \smallO_{n_0}(1) + \frac{1}{s_{n_0}^2}\int_{{\cal B}_{n_0}} \!\!\!\! d\epsilon
	\Bigg[\alpha \Psi_{P_{\rm out, 2}}\bigg(\int_0^1 q_{\epsilon}(t) dt;\rho_1(n_0)\bigg)
	- \frac{\alpha_1}{2} \int_0^1 \!\! dt \,
	r_{\epsilon}(t)q_{\epsilon}(t)\\
+ \adjustlimits{\sup}_{q_0 \in [0,\rho_0]} {\inf}_{r_0 \geq 0} \; \bigg\{\psi_{P_0}(r_0) + \alpha_1 \Psi_{\varphi_1}\bigg(q_0, \int_0^1 \!\! r_{\epsilon}(t) dt;\rho_0\bigg)- \frac{r_0 q_0}{2} \bigg\} \Bigg]
\end{multline}
where $\smallO_{n_0}(1)$ is a quantity that vanishes as $n_0 \to \infty$, uniformly w.r.t.\ the choice of the interpolation functions.
\end{proposition}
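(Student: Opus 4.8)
The plan is to combine the identity \eqref{f0_f1_int} with the $t$-derivative formula of Proposition~\ref{prop:der_f_t}, use the hypothesis $q_\epsilon(t)=\E\langle\widehat{Q}\rangle_{\mathbf{n},t,\epsilon}$ to turn the surviving Gibbs-bracket term into a pure overlap fluctuation, and then \emph{average over the perturbation} $\epsilon\in\mathcal{B}_{n_0}$, at which point Proposition~\ref{prop:concentrationOverlap} kills that fluctuation. The structural point is that $\widehat{Q}$ concentrates only after the $\epsilon$-average, so one must first produce an identity valid \emph{uniformly} in $\epsilon$ and integrate it against $s_{n_0}^{-2}d\epsilon$ only afterwards.

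First I would integrate \eqref{eq:der_f_t} over $t\in[0,1]$ and substitute into \eqref{f0_f1_int}. The resulting $-\int_0^1\frac{df_{\mathbf{n},\epsilon}(t)}{dt}\,dt$ produces the remainder
\[
\mathcal{R}_\epsilon\defeq\frac12\frac{n_1}{n_0}\int_0^1\E\Big\langle\Big(\tfrac1{n_1}\sum_{\mu=1}^{n_2}u'_{Y_{t,\epsilon,\mu}}(S_{t,\epsilon,\mu})\,u'_{Y_{t,\epsilon,\mu}}(s_{t,\epsilon,\mu})-r_\epsilon(t)\Big)\big(\widehat{Q}-q_\epsilon(t)\big)\Big\rangle_{\mathbf{n},t,\epsilon}dt ,
\]
together with $-\frac{n_1}{n_0}\frac12\int_0^1 r_\epsilon(t)\big(q_\epsilon(t)-\rho_1(n_0)\big)\,dt$ and a $\smallO_{n_0}(1)$ uniform in $(t,\epsilon)$. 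Using $n_1/n_0\to\alpha_1$, $\rho_1(n_0)\to\rho_1$ (Proposition~\ref{prop:convergenceRho}) and the uniform bounds $r_\epsilon\le\rmax$, $0\le q_\epsilon\le\rho_1(n_0)$, the middle term becomes $-\frac{\alpha_1}{2}\int_0^1 r_\epsilon q_\epsilon\,dt+\frac{\alpha_1}{2}\rho_1\int_0^1 r_\epsilon\,dt+\smallO_{n_0}(1)$, and its second piece exactly cancels the $-\frac{\alpha_1}{2}\rho_1\int_0^1 r_\epsilon$ already present in \eqref{f0_f1_int}. This leaves, uniformly in $\epsilon\in\mathcal{B}_{n_0}$,
\[
f_{\mathbf{n}}=\mathcal{R}_\epsilon+\alpha\,\Psi_{P_{\rm out,2}}\big({\textstyle\int_0^1 q_\epsilon(t)\,dt};\rho_1(n_0)\big)-\frac{\alpha_1}{2}\int_0^1 r_\epsilon(t)q_\epsilon(t)\,dt+\sup_{q_0\in[0,\rho_0]}\inf_{r_0\ge0}\big\{\psi_{P_0}(r_0)+\alpha_1\Psi_{\varphi_1}\big(q_0,{\textstyle\int_0^1 r_\epsilon(t)\,dt};\rho_0\big)-\tfrac{r_0q_0}{2}\big\}+\smallO_{n_0}(1).
\]

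Since the left side is $\epsilon$-independent, I would then average this display against the probability measure $s_{n_0}^{-2}d\epsilon$ on $\mathcal{B}_{n_0}$ (which has area $s_{n_0}^2$); this produces exactly the claimed formula plus the extra term $s_{n_0}^{-2}\int_{\mathcal{B}_{n_0}}\mathcal{R}_\epsilon\,d\epsilon$, so everything reduces to showing this is $\smallO_{n_0}(1)$. By Cauchy–Schwarz in the Gibbs bracket, $|\mathcal{R}_\epsilon|\le\frac12\frac{n_1}{n_0}\int_0^1\sqrt{\E\langle(\tfrac1{n_1}\sum_\mu u'_{Y}(S)u'_{Y}(s)-r_\epsilon(t))^2\rangle}\,\sqrt{\E\langle(\widehat{Q}-\E\langle\widehat{Q}\rangle)^2\rangle}\,dt$. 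The first square root is bounded uniformly in $(t,\epsilon,\mathbf{n})$: from $P_{\rm out,2}(y|x)=\int dP_{A_2}(a)\tfrac1{\sqrt{2\pi\Delta}}e^{-(y-\varphi_2(x,a))^2/2\Delta}$ and hypothesis~\ref{hyp:c2} one gets the $x$-uniform pointwise bound $|\partial_x\ln P_{\rm out,2}(y|x)|\le\tfrac1\Delta\sup|\partial_x\varphi_2|\,(|y|+\sup|\varphi_2|)$, and $Y_{t,\epsilon,\mu}$, being produced by the channel, has finite moments of all orders, so $\E\langle(\tfrac1{n_1}\sum_{\mu=1}^{n_2}u'u')^2\rangle=O((n_2/n_1)^2)=O(1)$ while $r_\epsilon\le\rmax$. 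A second Cauchy–Schwarz over $(\epsilon,t)\in\mathcal{B}_{n_0}\times[0,1]$ equipped with $s_{n_0}^{-2}d\epsilon\,dt$ then yields, for a constant $C'$,
\[
\frac1{s_{n_0}^2}\int_{\mathcal{B}_{n_0}}|\mathcal{R}_\epsilon|\,d\epsilon\le C'\Big(\frac1{s_{n_0}^2}\int_{\mathcal{B}_{n_0}}d\epsilon\int_0^1 dt\;\E\big\langle(\widehat{Q}-\E\langle\widehat{Q}\rangle)^2\big\rangle\Big)^{1/2}=O\big(n_0^{-1/16}\big)
\]
by Proposition~\ref{prop:concentrationOverlap}; hence $s_{n_0}^{-2}\int_{\mathcal{B}_{n_0}}\mathcal{R}_\epsilon\,d\epsilon=\smallO_{n_0}(1)$ and the proof is complete.

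The real content sits in the two ingredients already granted, Propositions~\ref{prop:der_f_t} and~\ref{prop:concentrationOverlap} (and in the next section, which justifies the choice $q_\epsilon(t)=\E\langle\widehat{Q}\rangle_{\mathbf{n},t,\epsilon}$ by solving a first-order ODE). Within the present argument the only genuine obstacle is conceptual rather than technical: one cannot close the Gibbs-bracket term at a fixed $\epsilon$ because $\widehat{Q}$ need not concentrate there, so the cancellation of the first step must be carried out \emph{uniformly} in $\epsilon\in\mathcal{B}_{n_0}$ before averaging. The accompanying estimate — $\mathbf{n}$-uniform fourth moments for the score $\partial_x\ln P_{\rm out,2}(Y|x)$ evaluated at the posterior sample $s_{t,\epsilon,\mu}$ rather than the planted value $S_{t,\epsilon,\mu}$ — is routine, being immediate from the $x$-uniform pointwise bound above.
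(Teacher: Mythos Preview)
Your proof is correct and follows essentially the same route as the paper: combine \eqref{f0_f1_int} with Proposition~\ref{prop:der_f_t}, average over $\epsilon\in\mathcal{B}_{n_0}$, and kill the remainder via Cauchy--Schwarz together with Proposition~\ref{prop:concentrationOverlap}. The only cosmetic difference is that the paper applies a single Cauchy--Schwarz over the full measure $\E\langle\cdot\rangle\otimes s_{n_0}^{-2}d\epsilon\,dt$ (and uses the Nishimori identity to bound the score factor), whereas you apply Cauchy--Schwarz first pointwise in $(t,\epsilon)$ and then over $(t,\epsilon)$, relying instead on the $x$-uniform pointwise bound for $u_y'$; both routes give the same $O(n_0^{-1/16})$ control.
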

\begin{proof}
By the Cauchy-Schwarz inequality
\begin{multline}\label{CS_remainder}
	\Bigg(\frac{1}{s_{n_0}^2}
	\int_{{\cal B}_{n_0}} \!\!\!\! d\epsilon\int_0^1 \!\! dt\,
	\E\Bigg[\Bigg\langle
	\Bigg(\frac{1}{n_1}\sum_{\mu=1}^{n_2} u_{Y_{t,\epsilon,\mu}}'( S_{t,\epsilon,\mu} )u_{Y_{t,\epsilon,\mu}}'(s_{t,\epsilon,\mu}) - r_{\epsilon}(t)\Bigg)
	\Big( \widehat{Q}- q_{\epsilon}(t)\Big)
	\Bigg\rangle_{\! \mathbf{n},t,\epsilon}\,\Bigg]\Bigg)^{\! 2}\\
	\leq 
	\frac{1}{s_{n_0}^2} \int_{{\cal B}_{n_0}} \!\!\!\! d\epsilon\int_0^1 \!\!dt\, 
	\E\Bigg[\Bigg\langle
	\Bigg(\frac{1}{n_1}\sum_{\mu=1}^{n_2} u_{Y_{t,\epsilon,\mu}}'( S_{t,\epsilon,\mu} )u_{Y_{t,\epsilon,\mu}}'(s_{t,\epsilon,\mu}) - r_{\epsilon}(t)\Bigg)^{\! 2}
	\Bigg\rangle_{\! \mathbf{n},t,\epsilon}\Bigg]\\
	\cdot \frac{1}{s_{n_0}^2}\int_{{\cal B}_{n_0}} \!\!\!\! d\epsilon \int_0^1 \!\! dt\,
	\E\Big[\big\langle \big(\widehat{Q} - q_{\epsilon}(t)\big)^2\big\rangle_{\mathbf{n},t,\epsilon}\Big]\,.
\end{multline}
First we will look at the first term of this product. We have
\begin{multline*}
\E\Bigg[\Bigg\langle
\Bigg(\frac{1}{n_1}\sum_{\mu=1}^{n_2} u_{Y_{t,\epsilon,\mu}}'( S_{t,\epsilon,\mu} )u_{Y_{t,\epsilon,\mu}}'(s_{t,\epsilon,\mu}) - r_{\epsilon}(t)\Bigg)^{\! 2}
\Bigg\rangle_{\! \mathbf{n},t,\epsilon}\Bigg]\\
\leq 2 \frac{n_2}{n_1}\rmax + 2\E\Bigg[\Bigg\langle
\Bigg(\frac{1}{n_1}\sum_{\mu=1}^{n_2} u_{Y_{t,\epsilon,\mu}}'( S_{t,\epsilon,\mu} )u_{Y_{t,\epsilon,\mu}}'(s_{t,\epsilon,\mu})\Bigg)^{\! 2}
\Bigg\rangle_{\! \mathbf{n},t,\epsilon}\Bigg]\,.
\end{multline*}
The second summand on the left-hand side is further upper bounded as:
\begin{align*}
&\E\Bigg[\Bigg\langle
\Bigg(\frac{1}{n_1}\sum_{\mu=1}^{n_2} u_{Y_{t,\epsilon,\mu}}'( S_{t,\epsilon,\mu} )u_{Y_{t,\epsilon,\mu}}'(s_{t,\epsilon,\mu})\Bigg)^{\! 2}
\Bigg\rangle_{\! \mathbf{n},t,\epsilon}\Bigg]\\
&\qquad\qquad\qquad\qquad\qquad\qquad\leq \frac{1}{n_1^2}\E\Big[
\Big\Vert \big\{u_{Y_{t,\epsilon,\mu}}'( S_{t,\epsilon,\mu})\big\}_{\mu=1}^{n_2} \Big\Vert\cdot
\Big\langle\Big\Vert \big\{u_{Y_{t,\epsilon,\mu}}'(s_{t,\epsilon,\mu})\big\}_{\mu=1}^{n_2}\Big\Vert
\Big\rangle_{\! \mathbf{n},t,\epsilon}\Big]\\
&\qquad\qquad\qquad\qquad\qquad\qquad= \frac{1}{n_1^2}\E\Big[
\Big\langle\Big\Vert \big\{u_{Y_{t,\epsilon,\mu}}'(s_{t,\epsilon,\mu})\big\}_{\mu=1}^{n_2}\Big\Vert
\Big\rangle_{\! \mathbf{n},t,\epsilon}^2 \Big]\\
&\qquad\qquad\qquad\qquad\qquad\qquad\leq \frac{1}{n_1^2}\E\Big[
\Big\langle\Big\Vert \big\{u_{Y_{t,\epsilon,\mu}}'(s_{t,\epsilon,\mu})\big\}_{\mu=1}^{n_2}\Big\Vert^2
\Big\rangle_{\! \mathbf{n},t,\epsilon}\Big]\\
&\qquad\qquad\qquad\qquad\qquad\qquad\leq \frac{1}{n_1^2}\E\Big[
\Big\Vert \big\{u_{Y_{t,\epsilon,\mu}}'(S_{t,\epsilon,\mu})\big\}_{\mu=1}^{n_2}\Big\Vert^2
\Big]
= \frac{n_2}{n_1^2}\E\Big[ u_{Y_{t,\epsilon,1}}'(S_{t,\epsilon,1})^2\Big]\,.
\end{align*}
The first inequality follows from the Cauchy-Schwartz inequality, the subsequent equality from the Nishimori identity (Proposition~\ref{prop:nishimori} in Appendix~\ref{app:nishimori}), the second inequality from Jensen's inequality and the last inequality from the Nishimori identity again.

Remember $u_y(x) \defeq \ln P_{\rm out,2}(y|x) = \ln \int dP_{A_2}(\ba)\frac{1}{\sqrt{2\pi \Delta}} e^{-\frac{1}{2 \Delta}( y - \varphi_2(x,\ba))^2}$, so
\begin{align*}
\big\vert u_{Y_{t,\epsilon,1}}^\prime(x) \big\vert
&= \frac{\big\vert \int dP_{A_2}(\ba) \varphi_2^\prime(x,\ba) (Y_{t,\epsilon,1} - \varphi_2(x, \ba)) e^{-\frac{1}{2\Delta} (Y_{t,\epsilon,1} - \varphi_2(x, \ba))^2}\big\vert}
{\Delta \int dP_A(\ba_\mu) e^{-\frac{1}{2} (Y_{t,\mu} - \varphi(s, \ba_\mu))^2}}\\
&\leq \sup \vert\varphi_2^\prime\vert \frac{\vert Z_1 \vert + 2\sup \vert\varphi_2\vert}{\Delta} 
\end{align*}
and $\E\big[ u_{Y_{t,\epsilon,1}}'(S_{t,\epsilon,1})^2\big] \leq \sup \vert\varphi_2^\prime\vert^2 \frac{2 + 8\sup \vert\varphi_2\vert^2}{\Delta^2}$. Putting everything together, we obtain the following bound uniform w.r.t.\ the choice of the interpolating functions:
\begin{multline*}
\frac{1}{s_{n_0}^2} \int_{{\cal B}_{n_0}} \!\!\!\! d\epsilon\int_0^1 \!\!dt\, 
\E\Bigg[\Bigg\langle
\Bigg(\frac{1}{n_1}\sum_{\mu=1}^{n_2} u_{Y_{t,\epsilon,\mu}}'( S_{t,\epsilon,\mu} )u_{Y_{t,\epsilon,\mu}}'(s_{t,\epsilon,\mu}) - r_{\epsilon}(t)\Bigg)^{\! 2}
\Bigg\rangle_{\! \mathbf{n},t,\epsilon}\Bigg]\\
\leq 2 \underbrace{\frac{n_2}{n_1}}_{\to \alpha_1}\Big(\rmax + \frac{1 + 4\sup \vert\varphi_2\vert^2}{\Delta^2 n_1} \sup \vert\varphi_2^\prime\vert^2\Big) \,.
\end{multline*}

With our assumption on $q_{\epsilon}$ and Proposition \ref{prop:concentrationOverlap}, the second term in \ref{CS_remainder} is easily bounded by $C(\varphi_1,\varphi_2,\alpha_1,\alpha_2, S) n^{-\nicefrac{1}{8}}$. Thus
\begin{multline*}
\Bigg\vert\frac{1}{s_{n_0}^2}
\int_{{\cal B}_{n_0}} \!\!\!\! d\epsilon\int_0^1 \!\! dt\,
\E\Bigg[\Bigg\langle
\Bigg(\frac{1}{n_1}\sum_{\mu=1}^{n_2} u_{Y_{t,\epsilon,\mu}}'( S_{t,\epsilon,\mu} )u_{Y_{t,\epsilon,\mu}}'(s_{t,\epsilon,\mu}) - r_{\epsilon}(t)\Bigg)
\Big( \widehat{Q}- q_{\epsilon}(t)\Big)
\Bigg\rangle_{\! \mathbf{n},t,\epsilon}\,\Bigg]\Bigg\vert\\
\leq \frac{C(\varphi_1,\varphi_2,\alpha_1,\alpha_2, S)}{n^{\nicefrac{1}{16}}}\,.
\end{multline*}
Therefore the integral of \eqref{eq:der_f_t} over $(t,\epsilon)$ reads
\begin{align}
	\frac{1}{s_{n_0}^2}\int_{{\cal B}_{n_0}} \!\!\!\! d\epsilon \int_0^1 \!\! dt\, \frac{df_{\mathbf{n},\epsilon}(t)}{dt}
	&= \frac{1}{s_{n_0}^2}\frac{n_1}{2 n_0} \int_{{\cal B}_{n_0}} \!\!\!\! d\epsilon \int_0^1 \!\! dt \,
	\big(r_{\epsilon}(t)q_{\epsilon}(t)-r_{\epsilon}(t)\rho_1(n_0)\big) + \smallO_{n_0}(1)\nn
	&= \frac{1}{s_{n_0}^2}\frac{\alpha_1}{2} \int_{{\cal B}_{n_0}} \!\!\!\! d\epsilon \int_0^1 \!\! dt \,
	\big(r_{\epsilon}(t)q_{\epsilon}(t)-r_{\epsilon}(t)\rho_1\big) + \smallO_{n_0}(1) \,.
	\label{eq:id_fluctuation}
\end{align}
Here the summand $\smallO_{n_0}(1)$ vanishes uniformly w.r.t.\ to the choice of $q_{\epsilon}$ and $r_{\epsilon}$. Replacing \eqref{eq:id_fluctuation} in \eqref{f0_f1_int} leads to the claimed identity.
\end{proof}

\subsection{Lower and upper matching bounds}\label{subsec:lower-upper}
To end the proof of Theorem~\ref{th:RS_2layer} one has to go through the following two steps:  
\begin{enumerate}[label=(\roman*)]
	\item\label{item:step1} Prove that under the assumptions~\ref{hyp:bounded},~\ref{hyp:c2} and \ref{hyp:phi_gauss2}
	\begin{equation*}
	\lim_{\mathbf{n} \to\infty }f_{\mathbf{n}} 
	=  \adjustlimits{\sup}_{r_1 \geq 0} {\inf}_{q_1 \in [0,\rho_1]} \adjustlimits{\sup}_{q_0 \in [0,\rho_0]} {\inf}_{r_0 \geq 0} f_{\rm RS}(q_0,r_0,q_1,r_1;\rho_0,\rho_1)\,.
	\end{equation*}
	\item\label{item:step2}Invert the order of the optimizations on $r_1$ and $q_1$.
\end{enumerate}
To tackle~\ref{item:step1}, we prove that $\liminf_{n_0 \to \infty} f_{\mathbf{n}}$ and $\limsup_{n_0 \to \infty} f_{\mathbf{n}}$ are -- respectively -- lower and upper bounded by the same quantity $\sup\limits_{r_1 \geq 0} \inf\limits_{q_1 \in [0,\rho_1]}  \sup\limits_{q_0 \in [0,\rho_0]} \inf\limits_{r_0 \geq 0} f_{\rm RS}(q_0,r_0,q_1,r_1;\rho_0,\rho_1)$.

In these proofs we will need the following useful proposition. For $t \in [0,1]$ and $\epsilon \in \mathcal{B}_{n_0}$, we write $R(t,\epsilon) = (R_1(t,\epsilon),R_2(t,\epsilon))$.
$\E\big[\langle \widehat{Q} \rangle_{\mathbf{n},t,\epsilon}\big]$ can be written as a function $\E\big[\langle \widehat{Q} \rangle_{\mathbf{n},t,\epsilon}\big] = F_{\mathbf{n}}\big(t,R(t,\epsilon)\big)$ of $\mathbf{n},t,R(t,\epsilon)$ with $F_{\mathbf{n}}$ a function defined on
\begin{equation}
D_{\mathbf{n}} \defeq \Big\{ (t,r_1,r_2) \in [0,1] \times [0,+\infty)^2 \, \Big\vert \, r_2 \leq \rho_1(n_0) t + 2s_{n_0}\Big\}\,.
\end{equation}
\begin{proposition}\label{prop:F_equadiff}
Let $D_{\mathbf{n}}^{\circ}$ be the interior of the set $D_{\mathbf{n}}$.
$F_\mathbf{n}$ is a continuous function from $D_{\mathbf{n}}$ to $[0,\rho_1(n_0)]$, and admits partial derivatives with respect to its second and third arguments on $D_{\mathbf{n}}^{\circ}$. These partial derivatives are both continuous and non-negative on $D_{\mathbf{n}}^{\circ}$.
\begin{proof}
Let define explicitly the function $F_{\mathbf{n}}: D_{\mathbf{n}}^{\circ} \to \mathbb{R}$.
Fix $(t,r_1,r_2) \in D_{\mathbf{n}}^{\circ}$.
Consider the two sets of observations
\begin{align*}
\begin{cases}
Y_{t,r_2,\mu}  &\sim P_{\rm out, 2}(\ \cdot \ | \, S_{t,r_2,\mu})\,, \qquad\qquad\;\;\;  1 \leq \mu \leq n_2, \\
Y'_{t,r_1,i} &= \sqrt{r_1}\, \varphi_1\Big(\Big[\frac{\bW{1} \bX^0}{\sqrt{n_0}}\Big]_i, \bA_{1,i}\Big) + Z'_i\,, \;\: 1 \leq i \leq n_1,
\end{cases}
\end{align*}
with $\bS_{t,r_2} \defeq \sqrt{\frac{1-t}{n_1}}\, \bW{2} \varphi_1\Big(\frac{\bW{1} \bX^0}{\sqrt{n_0}}, \bA_1\Big)
+ \sqrt{r_2} \,\bV + \sqrt{\rho_1(n_0) t + 2s_{n_0} - r_2} \, \bU \in \mathbb{R}^{n_2}$.
Now, in complete analogy with \eqref{interpolating-ham-general}, we define the Hamiltonian
\begin{multline*}
\cH_{t,r_1,r_2}(\bx,\ba_1, \bu;\bY,\bY,\bW{1},\bW{2},\bV)\\
\defeq -\sum_{\mu=1}^{n_2} \ln P_{\rm out,2} ( Y_{\mu} |s_{t, r_2, \mu})
+ \frac{1}{2} \sum_{i=1}^{n_1}\bigg(Y'_{i}
- \sqrt{r_1}\, \varphi_1\bigg(\bigg[\frac{\bW{1} \bx}{\sqrt{n_0}}\bigg]_i,\ba_{1,i}\bigg)\bigg)^{\! 2} \,.
\end{multline*}
where $\mathbf{s}_{t,r_2} \defeq \sqrt{\frac{1-t}{n_1}}\, \bW{2} \varphi_1\Big(\frac{\bW{1} \bx}{\sqrt{n_0}}, \bA_1\Big)
+ \sqrt{r_2} \,\bV + \sqrt{\rho_1(n_0) t + 2s_{n_0} - r_2}$. The Gibbs bracket corresponding to this Hamiltonian is denoted $\langle - \rangle_{\mathbf{n},t,r_1,r_2}$ and is used to define $F_{\mathbf{n}}$:
\begin{equation*}
F_{\mathbf{n}}(t,r_1,r_2)
\defeq \E\big[\langle \widehat{Q} \rangle_{\mathbf{n},t,r_1,r_2}\big]
= \frac{1}{n_1} \sum_{i=1}^{n_1} \E\big[X_i^1 \langle x_i^1 \rangle_{\mathbf{n},t,r_1,r_2}\big].
\end{equation*}

Now that $F_{\mathbf{n}}$ is defined, its continuity and differentiability properties follow from the standard theorems of continuity and differentiation under the integral sign. The domination hypotheses are easily verified under assumptions~\ref{hyp:bounded},~\ref{hyp:c2}.

The overlap $\E\big[\langle \widehat{Q} \rangle_{\mathbf{n},t,r_1,r_2}\big]$ is related to the minimal-mean-squared error
(using a Nishimori identity to get the second equality)
\begin{align*}
\mathrm{MMSE}\big( \bX^1 \, \big\vert \, \bY_{t,r_2}, \, \bY_{t,r_1}^{\prime},\bW{1},\bW{2},\bV \big)
&= \E \Big[\big\Vert \bX^1 - \E[\bX^1 \, \vert \, \bY_{t,r_2}, \, \bY_{t,r_1}^{\prime},\bW{1},\bW{2},\bV]
\big\Vert^2\Big]\\
&= \E \Big[\big\Vert \bX^1 - \langle \bx_1 \rangle_{\mathbf{n},t,r_1,r_2} \big\Vert^2\Big]\\
&= n_1\big(\rho_1(n_0) - \E\big[\langle \widehat{Q} \rangle_{\mathbf{n},t,r_1,r_2}\big] \big)\,.
\end{align*}
Since $\mathrm{MMSE}\big( \bX^1 \, \big\vert \, \bY_{t,r_2}, \, \bY_{t,r_1}^{\prime},\bW{1},\bW{2},\bV \big) \geq 0$, we obtain that $\E\big[\langle \widehat{Q} \rangle_{\mathbf{n},t,\epsilon}\big] \in [0,\rho_1(n_0)]$.
It remains to show that $\mathrm{MMSE}\big( \bX^1 \, \big\vert \, \bY_{t,r_2}, \, \bY_{t,r_1}^{\prime},\bW{1},\bW{2},\bV \big)$ is separately non-increasing in $r_1$ and $r_2$.

$r_1$ only appears in the definition of $\bY'_{t,r_1} \defeq \sqrt{r_1} \bX^1 + \bZ'$, where $\bZ'$ is a standard Gaussian vector.
Then $\mathrm{MMSE}\big( \bX^1 \, \big\vert \, \bY_{t,r_2}, \, \bY_{t,r_1}^{\prime},\bW{1},\bW{2},\bV \big)$ is clearly a non-increasing function of $r_1$, so $F_{\mathbf{n}}(t,r_1,r_2)$ is a non-decreasing function of $r_1$.

$r_2$'s only role is in the generation process of $\bY_{t,r_2}$:
\begin{align}
\bY_{t,r_2} \sim
P_{\rm out}\Big( \cdot \, \Big\vert \,  \sqrt{\frac{1-t}{n_1}} \bW{2}\bX^1 + \sqrt{r_2} \bV + \sqrt{\rho_1(n_0) t + 2 s_{n_0} - r_2} \bU \Big) .
\end{align}
Let $0 \leq r_2 \leq r_2^{\prime} <\rho_1(n_0)t + 2s_{n_0}$ and $\bV' \sim \cN(0,\mathbf{I}_m)$ independent of everything else.
Define
\begin{equation*}
\widetilde{\bY}_{t,r_2,r_2'}
\sim
P_{\rm out}\Big( \cdot \, \Big\vert \,  \sqrt{\frac{1-t}{n_1}} \bW{2}\bX^1 + \sqrt{r_2} \bV + \sqrt{r_2'-r_2} \bV'
+ \sqrt{\rho_1(n_0) t + 2 s_{n_0} - r'_2} \bU
\Big),
\end{equation*}
independently of everything else. Now notice that
\begin{align*}
\mathrm{MMSE}\big( \bX^1 \, \big\vert \, \bY_{t,r_2}, \, \bY_{t,r_1}^{\prime},\bW{1},\bW{2},\bV \big)
&\geq \mathrm{MMSE}\big( \bX^1 \, \big\vert \, \widetilde{\bY}_{t,r_2,r_2'}, \bY_{t,r_1}',\bW{1},\bW{2},\bV\big)\,;\\
\mathrm{MMSE}\big( \bX^1 \, \big\vert \, \bY_{t,r_2'}, \, \bY_{t,r_1}^{\prime},\bW{1},\bW{2},\bV \big)
&= \mathrm{MMSE}\big( \bX^1 \, \big\vert \, \widetilde{\bY}_{t,r_2,r_2'}, \, \bY_{t,r_1}',\bW{1},\bW{2},\bV,\bV'\big)\,.
\end{align*}
It directly implies $\mathrm{MMSE}\big( \bX^1 \, \big\vert \, \bY_{t,r_2}, \, \bY_{t,r_1}^{\prime},\bW{1},\bW{2},\bV \big)
\geq \mathrm{MMSE}\big( \bX^1 \, \big\vert \, \bY_{t,r_2'}, \, \bY_{t,r_1}^{\prime},\bW{1},\bW{2},\bV \big)$, i.e.\ $F_{\mathbf{n}}(t,r_1,r_2) \leq F_{\mathbf{n}}(t,r_1,r_2')$. It proves $F_{\mathbf{n}}(t,r_1,r_2)$ is a non-decreasing function of $r_2$.	
\end{proof}
\end{proposition}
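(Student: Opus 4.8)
The plan is to obtain all three assertions from an explicit integral representation of $F_{\mathbf{n}}$ together with a Nishimori-based identification of $\rho_1(n_0)-F_{\mathbf{n}}$ as a minimum mean-square error, and then to read off the range and the signs of the partial derivatives from elementary channel-comparison arguments. First I would make $F_{\mathbf{n}}(t,r_1,r_2)$ explicit: in analogy with \eqref{interpolating-ham-general} introduce the Hamiltonian $\cH_{t,r_1,r_2}$ built from the two channels $Y_{t,r_2,\mu}\sim P_{\rm out,2}(\cdot\,|\,s_{t,r_2,\mu})$ and $Y'_{t,r_1,i}=\sqrt{r_1}\,\varphi_1([\bW{1}\bX^0/\sqrt{n_0}]_i,\bA_{1,i})+Z'_i$, with associated Gibbs bracket $\langle-\rangle_{\mathbf{n},t,r_1,r_2}$, and set $F_{\mathbf{n}}(t,r_1,r_2)\defeq\frac{1}{n_1}\sum_{i=1}^{n_1}\E[X_i^1\langle x_i^1\rangle_{\mathbf{n},t,r_1,r_2}]$. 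After the Gaussian substitution for $\bY'$ this is an integral over the quenched data $(\bX^0,\bA_1,\bA_2,\bZ,\bZ',\bV,\bU,\bW{1},\bW{2})$ and the replica variables $(\bx,\ba_1,\bu)$. On the interior $D_{\mathbf{n}}^{\circ}$ the maps $r_1\mapsto\sqrt{r_1}$ and $r_2\mapsto(\sqrt{r_2},\sqrt{\rho_1(n_0)t+2s_{n_0}-r_2})$ are $\mathcal{C}^{\infty}$ because both radicands stay bounded away from $0$; under \ref{hyp:bounded} and \ref{hyp:c2} the integrand and its $r_1$- and $r_2$-derivatives are bounded by a fixed integrable function on compact subsets of $D_{\mathbf{n}}^{\circ}$ (using that $\varphi_1,\varphi_2$ and their first two derivatives are bounded, and that $x\mapsto\ln P_{\rm out,2}(y|x)$ is $\mathcal{C}^2$ with $x$-derivatives controlled by $\sup|\varphi_2'|$, $\sup|\varphi_2|$ and $|y|$, the latter carrying Gaussian tails). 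Continuity of $F_{\mathbf{n}}$ on $D_{\mathbf{n}}$ and continuity of $\partial_{r_1}F_{\mathbf{n}}$, $\partial_{r_2}F_{\mathbf{n}}$ on $D_{\mathbf{n}}^{\circ}$ then follow from the standard dominated-convergence theorems for continuity and differentiation under the integral sign.

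Next, to pin down the range I would apply the Nishimori identity (Proposition~\ref{prop:nishimori}) to rewrite $n_1\big(\rho_1(n_0)-F_{\mathbf{n}}(t,r_1,r_2)\big)=\E\big[\Vert\bX^1-\langle\bx^1\rangle_{\mathbf{n},t,r_1,r_2}\Vert^2\big]=\mathrm{MMSE}\big(\bX^1\mid\bY_{t,r_2},\bY'_{t,r_1},\bW{1},\bW{2},\bV\big)$. This MMSE is nonnegative and at most $\E\Vert\bX^1\Vert^2=n_1\rho_1(n_0)$, hence $F_{\mathbf{n}}(t,r_1,r_2)\in[0,\rho_1(n_0)]$. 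The parameter $r_1$ enters the problem only through the scalar Gaussian channel $\bY'_{t,r_1}=\sqrt{r_1}\,\bX^1+\bZ'$; since the MMSE of a Gaussian channel is non-increasing in its signal-to-noise ratio (degrade the stronger channel by adding independent Gaussian noise, or use the I-MMSE relation), $\mathrm{MMSE}$ is non-increasing in $r_1$, so $F_{\mathbf{n}}$ is non-decreasing in $r_1$ and $\partial_{r_1}F_{\mathbf{n}}\ge0$.

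For the dependence on $r_2$, which appears only in the generation of $\bY_{t,r_2}$ through $\sqrt{r_2}\,\bV+\sqrt{\rho_1(n_0)t+2s_{n_0}-r_2}\,\bU$ with $\bV$ observed and $\bU$ hidden, I would fix $r_2\le r_2'$ and a fresh $\bV'\sim\cN(0,\mathbf{I}_{n_2})$ independent of everything, and define $\widetilde\bY_{t,r_2,r_2'}$ by replacing $\sqrt{r_2'}\,\bV$ with $\sqrt{r_2}\,\bV+\sqrt{r_2'-r_2}\,\bV'$ and $\sqrt{\rho_1(n_0)t+2s_{n_0}-r_2}\,\bU$ with $\sqrt{\rho_1(n_0)t+2s_{n_0}-r_2'}\,\bU$. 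Marginalizing over the unobserved $\bV'$, $\widetilde\bY_{t,r_2,r_2'}$ has the same conditional law given $\bX^1$ as $\bY_{t,r_2}$, so $\mathrm{MMSE}(r_2)=\mathrm{MMSE}(\bX^1\mid\widetilde\bY_{t,r_2,r_2'},\dots)$; adding $\bV'$ to the conditioning can only decrease the MMSE, so this is $\ge\mathrm{MMSE}(\bX^1\mid\widetilde\bY_{t,r_2,r_2'},\bV',\dots)$; and knowing $\bV'$ (together with $\bV$) lets one reconstruct exactly the $r_2'$-channel, so the last quantity equals $\mathrm{MMSE}(r_2')$. Hence $\mathrm{MMSE}$ is non-increasing in $r_2$, $F_{\mathbf{n}}$ is non-decreasing in $r_2$, and $\partial_{r_2}F_{\mathbf{n}}\ge0$.

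The main obstacle I anticipate is making the $r_2$-monotonicity step airtight: one has to set up the auxiliary observation $\widetilde\bY_{t,r_2,r_2'}$ so that it is simultaneously ``below'' the $r_2$-channel (by hiding the extra Gaussian component $\bV'$) and ``above'' the $r_2'$-channel (by revealing $\bV'$), and then invoke the principle that additional conditioning never increases the MMSE. The $r_1$ case and the domination/differentiation-under-the-integral estimates are comparatively routine; the only mild subtlety there is the non-smoothness of the square roots on $\partial D_{\mathbf{n}}$, which is precisely why differentiability is asserted only on the interior $D_{\mathbf{n}}^{\circ}$.
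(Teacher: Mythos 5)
Your proposal is correct and follows essentially the same route as the paper: define $F_{\mathbf{n}}$ explicitly via the $(t,r_1,r_2)$-Hamiltonian, obtain continuity and differentiability by domination under \ref{hyp:bounded}--\ref{hyp:c2}, identify $n_1(\rho_1(n_0)-F_{\mathbf{n}})$ with an MMSE via Nishimori, and deduce the range and the signs of $\partial_{r_1}F_{\mathbf{n}}$, $\partial_{r_2}F_{\mathbf{n}}$ from channel-comparison (for $r_1$, the Gaussian side channel $\bY'_{t,r_1}=\sqrt{r_1}\,\bX^1+\bZ'$; for $r_2$, the same auxiliary observation $\widetilde{\bY}_{t,r_2,r_2'}$ with the extra Gaussian component $\bV'$ first hidden, then revealed). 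The only cosmetic difference is that you obtain $F_{\mathbf{n}}\ge 0$ by bounding the MMSE above by the trivial-estimator error $n_1\rho_1(n_0)$, whereas the paper's nonnegativity of $\E[\langle\widehat{Q}\rangle]$ also follows directly from Nishimori ($\E[\langle\widehat{Q}\rangle]=\frac{1}{n_1}\sum_i\E[\langle x_i^1\rangle^2]\ge 0$); both are one-line arguments and equivalent in spirit.
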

\begin{proposition}[Lower bound] \label{prop:lower_bound} The free entropy \eqref{f} verifies 
	\begin{align}
		\liminf_{n_0 \to \infty} f_{\mathbf{n}} \geq \adjustlimits{\sup}_{r_1 \geq 0} {\inf}_{q_1 \in [0,\rho_1]} \adjustlimits{\sup}_{q_0 \in [0,\rho_0]} {\inf}_{r_0 \geq 0} f_{\rm RS}(q_0,r_0,q_1,r_1;\rho_0,\rho_1)\,.
	\end{align}
\end{proposition}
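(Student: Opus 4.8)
The plan is to run the adaptive interpolation lower bound: fix a trial parameter $r_1\geq 0$, freeze the interpolation function $r_\epsilon$ at the constant value $r_1$, and choose $q_\epsilon$ \emph{adaptively} so that the remainder Gibbs bracket in Proposition~\ref{prop:der_f_t} is exactly the one cancelled by Proposition~\ref{prop:cancel_remainder}.

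\textbf{Choice of interpolation functions.} Set $r_\epsilon(t)\equiv r_1$, so that $R_1(t,\epsilon)=\epsilon_1+t\,r_1$, and define $q_\epsilon$ implicitly through the scalar Cauchy problem $\frac{d}{dt}R_2(t,\epsilon)=F_{\mathbf{n}}\big(t,\epsilon_1+t r_1,R_2(t,\epsilon)\big)$ with $R_2(0,\epsilon)=\epsilon_2$, and then $q_\epsilon(t):=\frac{d}{dt}R_2(t,\epsilon)=\E\langle\widehat{Q}\rangle_{\mathbf{n},t,\epsilon}$ as required by Proposition~\ref{prop:cancel_remainder}. By Proposition~\ref{prop:F_equadiff} the map $F_{\mathbf{n}}$ is continuous with continuous partial derivatives, hence locally Lipschitz, so Cauchy--Lipschitz yields a unique maximal solution; since $0\leq q_\epsilon(t)=F_{\mathbf{n}}(\cdots)\leq\rho_1(n_0)$, one gets $R_2(t,\epsilon)\leq\epsilon_2+t\,\rho_1(n_0)\leq 2s_{n_0}+t\,\rho_1(n_0)$, so the solution never leaves $D_{\mathbf{n}}$ and is defined on all of $[0,1]$. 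To invoke Proposition~\ref{prop:cancel_remainder} one must also check that $(q_\epsilon,r_\epsilon)_{\epsilon\in\mathcal{B}_{n_0}}$ is regular in the sense of Definition~\ref{def:reg}: differentiating the flow with respect to $\epsilon$ and using $\partial_{r_1}F_{\mathbf{n}},\partial_{r_2}F_{\mathbf{n}}\geq 0$ (again Proposition~\ref{prop:F_equadiff}) together with the Liouville formula shows that the Jacobian of $\epsilon\mapsto R(t,\epsilon)$ equals $1$ at $t=0$ and is non-decreasing in $t$, hence $\geq 1$ throughout, while continuity and injectivity of the flow give the $\mathcal{C}^1$-diffeomorphism property.

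\textbf{Cancellation and optimisation.} With these choices Proposition~\ref{prop:cancel_remainder} applies, and since $r_\epsilon\equiv r_1$ we obtain, writing $q_{1,\epsilon}:=\int_0^1 q_\epsilon(t)\,dt\in[0,\rho_1(n_0)]$,
\[
f_{\mathbf{n}}=\smallO_{n_0}(1)+\frac{1}{s_{n_0}^2}\int_{\mathcal{B}_{n_0}}\!\!d\epsilon\,\Big[\alpha\,\Psi_{P_{\rm out,2}}\big(q_{1,\epsilon};\rho_1(n_0)\big)-\tfrac{\alpha_1 r_1}{2}\,q_{1,\epsilon}+\sup_{q_0\in[0,\rho_0]}\inf_{r_0\geq 0}\big\{\psi_{P_0}(r_0)+\alpha_1\Psi_{\varphi_1}(q_0,r_1;\rho_0)-\tfrac{r_0q_0}{2}\big\}\Big].
\]
Now I bound the bracket from below by its infimum over $q_1\in[0,\rho_1(n_0)]$; since the $q_1$-block and the $(q_0,r_0)$-block are additively decoupled once $r_1$ is fixed, this infimum equals $\inf_{q_1\in[0,\rho_1(n_0)]}\sup_{q_0\in[0,\rho_0]}\inf_{r_0\geq 0}f_{\rm RS}(q_0,r_0,q_1,r_1;\rho_0,\rho_1(n_0))$, which no longer depends on $\epsilon$, so the $\epsilon$-average is trivial (recall $|\mathcal{B}_{n_0}|=s_{n_0}^2$). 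Letting $n_0\to\infty$ I use $\rho_1(n_0)\to\rho_1$ (Proposition~\ref{prop:convergenceRho}) together with the joint (Lipschitz) continuity of $(q,\rho)\mapsto\Psi_{P_{\rm out,2}}(q;\rho)$ on the relevant compact (using the bound of Lemma~\ref{lem:psiPoutUniformlyBounded}) and of $r\mapsto\sup_{q_0}\inf_{r_0}\{\psi_{P_0}(r_0)+\alpha_1\Psi_{\varphi_1}(q_0,r;\rho_0)-r_0q_0/2\}$ (Proposition~\ref{app:propertiesThirdChannel}) to pass the $\liminf$ through, obtaining $\liminf_{n_0\to\infty}f_{\mathbf{n}}\geq\inf_{q_1\in[0,\rho_1]}\sup_{q_0\in[0,\rho_0]}\inf_{r_0\geq 0}f_{\rm RS}(q_0,r_0,q_1,r_1;\rho_0,\rho_1)$. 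Since $r_1\geq 0$ was arbitrary, taking the supremum over $r_1$ gives the claim; it suffices to range over $r_1\in[0,\rmax]$, which is harmless because the convexity of $q\mapsto\Psi_{P_{\rm out,2}}(q;\rho_1)$ forces the maximising $r_1$ in $f_{\rm RS}$ to satisfy $r_1\leq 2\alpha_2\Psi_{P_{\rm out,2}}'(\rho_1;\rho_1)\leq\rmax$, so the constant interpolation function stays admissible.

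\textbf{Main obstacle.} The delicate point is the first step: showing that the adaptive ODE defining $q_\epsilon$ produces, for (almost) every $\epsilon\in\mathcal{B}_{n_0}$, a solution on all of $[0,1]$ giving a \emph{regular} family, so that the change of variables $\epsilon\mapsto R(t,\epsilon)$ implicit in the $s_{n_0}^{-2}\!\int_{\mathcal{B}_{n_0}}$ averaging and in the overlap-concentration estimate (Proposition~\ref{prop:concentrationOverlap}) is valid with Jacobian bounded below by $1$. The monotonicity of $F_{\mathbf{n}}$ from Proposition~\ref{prop:F_equadiff} is exactly what makes this go through; everything else reduces to soft analysis once Proposition~\ref{prop:cancel_remainder} is in force.
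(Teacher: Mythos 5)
Your construction follows the paper's proof line by line: freeze $r_\epsilon \equiv r_1$ and let $q_\epsilon$ be the fixed point $q_\epsilon(t)=\E\langle\widehat Q\rangle_{\mathbf n,t,\epsilon}$ via a Cauchy problem; get existence and globality from Proposition~\ref{prop:F_equadiff}; obtain regularity by Liouville (for the lower bound the Jacobian is $\exp\big(\int_0^t\partial_{r_2}F_{\mathbf n}\big)\ge 1$ --- the paper only needs $\partial_{r_2}F_{\mathbf n}\ge 0$, not $\partial_{r_1}F_{\mathbf n}\ge 0$, which is reserved for the upper bound); apply Proposition~\ref{prop:cancel_remainder}; bound below by the $q_1$-infimum; then pass $n_0\to\infty$ and take the supremum over $r_1$. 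Up to the last step this is exactly the argument of Proposition~\ref{prop:lower_bound}.

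The one place where your reasoning is incomplete is the final extension from $\sup_{r_1\in[0,\rmax]}$ to $\sup_{r_1\ge 0}$. You assert that the convexity of $q_1\mapsto\Psi_{P_{\rm out,2}}(q_1;\rho_1)$ alone ``forces the maximising $r_1$ to satisfy $r_1\le 2\alpha_2\Psi'_{P_{\rm out,2}}(\rho_1;\rho_1)\le\rmax$.'' Convexity of $\Psi_{P_{\rm out,2}}$ only tells you that for $r_1\ge\rmax$ the inner infimum over $q_1$ is attained at $q_1=\rho_1$, so that $\inf_{q_1}\psi(r_1,q_1)=f(r_1)+g(\rho_1)-\tfrac{\alpha_1\rho_1}{2}r_1$ with $f(r_1)=\sup_{q_0}\inf_{r_0}\{\psi_{P_0}(r_0)+\alpha_1\Psi_{\varphi_1}(q_0,r_1;\rho_0)-\tfrac{r_0q_0}{2}\}$. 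Without a growth control on $f$ this could still increase in $r_1$. What closes the argument is the additional fact (Proposition~\ref{app:propertiesThirdChannel}) that $f$ is $\tfrac{\alpha_1\rho_1}{2}$-Lipschitz, which makes $r_1\mapsto f(r_1)-\tfrac{\alpha_1\rho_1}{2}r_1$ non-increasing on $[\rmax,\infty)$, hence $\sup_{r_1\ge 0}\inf_{q_1}\psi=\sup_{r_1\in[0,\rmax]}\inf_{q_1}\psi$. You cite that Lipschitz property earlier (to pass $\liminf$ through $\rho_1(n_0)\to\rho_1$), but you need to invoke it again here; as written, the convexity-only claim is not a proof.

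Also note that your opening sentence ``fix a trial parameter $r_1\ge 0$'' is not admissible as stated: the constant interpolation path $r_\epsilon\equiv r_1$ must take values in $[0,\rmax]$ to satisfy Definition~\ref{def:reg}, so the argument runs only for $r_1\in[0,\rmax]$; the extension to all $r_1\ge 0$ is precisely the post-processing step above, not a strengthening of the interpolation.
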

\begin{proof}
For $(\epsilon,r) \in \mathcal{B}_{n_0} \times [0,\rmax]$, consider the order-1 differential equation
\begin{equation}\label{eq:equadiff1}
\forall t \in [0,1]: \; y'(t) = \big(r, F_{\mathbf{n}}(t,y(t))\big) \quad \text{with} \quad y(0) = \epsilon \,.
\end{equation}
It admits a unique solution $R(\cdot,\epsilon)=\big(R_1(\cdot,\epsilon), R_2(\cdot, \epsilon)\big)$ by the Cauchy-Lipschitz theorem (see Theorem~3.1, Chapter~V \cite{ode_hartman}), whose hypotheses are satisfied thanks to Proposition \ref{prop:F_equadiff}. Define for $t \in [0,1]$
\begin{equation*}
r_{\epsilon}(t) \defeq R_1'(t,\epsilon) = r
\qquad \text{and} \qquad
q_{\epsilon}(t) \defeq R_2'(t,\epsilon) = F_{\mathbf{n}}(t,R(t,\epsilon)) \in [0,\rho_1(n_0)].
\end{equation*}
Clearly $R_1(t,\epsilon) = \epsilon_1 + \int_0^t r_\epsilon(s) ds$ and $R_2(t,\epsilon) = \epsilon_2 + \int_0^t q_\epsilon(s) ds$. We obtain that for all $t\in [0,1]$
\begin{equation*}
q_{\epsilon}(t)
= F_\mathbf{n}\big(t,(R_1(t,\epsilon),R_2(t,\epsilon))\big)
= \E\big[\langle \widehat{Q} \rangle_{\mathbf{n},t,\epsilon}\big]\,.
\end{equation*}
This is one of the assumption needed to apply Proposition~\ref{prop:cancel_remainder}.

Now we show that the functions $(q_{\epsilon})_{\epsilon \in {\cal B}_{n_0}}$ and $(r_{\epsilon})_{\epsilon \in {\cal B}_{n_0}}$ are regular (see Definition \ref{def:reg}). Fix $t \in [0,1]$. $R \equiv R(t,\cdot):\epsilon \mapsto (R_1(t,\epsilon),R_2(t,\epsilon))$ is the flow of \eqref{eq:equadiff1}, thus it is injective by unicity of the solution, and $\mathcal{C}^1$ by $F_{\mathbf{n}}$'s regularity properties (see Proposition~\ref{prop:F_equadiff}). The determinant of the Jacobian of the flow is given by the Liouville formula (see Corollary~3.1, Chapter~V \cite{ode_hartman}):
\begin{equation*}
\det\bigg(\frac{\partial R}{\partial \epsilon}(\epsilon)\bigg)
= \exp\bigg(
\int_{0}^{t} \!\! ds \frac{\partial F_\mathbf{n}}{\partial r_2}(s,R(s,\epsilon)) \bigg)
\geq 1.
\end{equation*}
and is greater or equal to $1$ because $\nicefrac{\partial F_\mathbf{n}}{\partial r_2}$ is non-negative (see Proposition \ref{prop:F_equadiff}). Since the Jacobian determinant does not vanish the local inversion theorem and the injectivity of the flow imply that $R$ is a $\mathcal{C}^1$-diffeomorphism. Moreover, since also the Jacobian determinant is greater than or equal to $1$ the functions $(q_{\epsilon})_{\epsilon \in {\cal B}_{n_0}}$ and $(r_{\epsilon})_{\epsilon \in {\cal B}_{n_0}}$ are regular.

Proposition~\ref{prop:cancel_remainder} can now be applied to this special choice of regular functions:
\begin{align}
f_{\mathbf{n}}
&= \smallO_{n_0}(1) + \frac{1}{s_{n_0}^2}\int_{{\cal B}_{n_0}} \!\!\!\! d\epsilon
\Bigg[\alpha \Psi_{P_{\rm out, 2}}\bigg(\int_0^1 q_{\epsilon}(t) dt;\rho_1(n_0)\bigg)
- \frac{\alpha_1 r}{2} \int_0^1 \!\! dt \, q_{\epsilon}(t)\nn
&\qquad\qquad\qquad\qquad\qquad\qquad+ \adjustlimits{\sup}_{q_0 \in [0,\rho_0]} {\inf}_{r_0 \geq 0} \; \bigg\{\psi_{P_0}(r_0) + \alpha_1 \Psi_{\varphi_1}\bigg(q_0,r;\rho_0\bigg)- \frac{r_0 q_0}{2} \bigg\} \Bigg]\nn
&= \smallO_{n_0}(1) + \adjustlimits{\sup}_{q_0 \in [0,\rho_0]} {\inf}_{r_0 \geq 0} \; f_{\rm RS}\bigg(q_0,r_0, \int_0^1 q_{\epsilon}(t) dt,r;\rho_0,\rho_1(n_0)\bigg)\nn
&= \smallO_{n_0}(1) + \mkern-64mu\adjustlimits{\phantom{\sup}}_{\phantom{q_0 \in [0,\rho_0]}} {\inf}_{q_1 \in [0,\rho_1(n_0)]} \adjustlimits{\sup}_{q_0 \in [0,\rho_0]} {\inf}_{r_0 \geq 0} \; f_{\rm RS}\big(q_0,r_0,q_1,r;\rho_0,\rho_1(n_0)\big) \label{lowerboundFreeEntropy}
\end{align}
	By a continuity argument
\begin{multline*}
\lim_{n_0 \to \infty} \mkern-54mu\adjustlimits{\phantom{\sup}}_{\phantom{q_0 \in [0,\rho_0]}} {\inf}_{q_1 \in [0,\rho_1(n_0)]} \adjustlimits{\sup}_{q_0 \in [0,\rho_0]} {\inf}_{r_0 \geq 0} \; f_{\rm RS}\Big(q_0,r_0, q_1,r;\rho_0,\rho_1(n_0)\Big) \\
= \mkern-60mu\adjustlimits{\phantom{\sup}}_{\phantom{q_0 \in [0,\rho_0]}} {\inf}_{q_1 \in [0,\rho_1]} \adjustlimits{\sup}_{q_0 \in [0,\rho_0]} {\inf}_{r_0 \geq 0} \; f_{\rm RS}\Big(q_0,r_0, q_1,r;\rho_0,\rho_1\Big) \,.
\end{multline*}
This limit, combined with \eqref{lowerboundFreeEntropy}, gives
\begin{equation}
\liminf_{n_0 \to \infty} f_{\mathbf{n}} \geq \mkern-64mu\adjustlimits{\phantom{\sup}}_{\phantom{q_0 \in [0,\rho_0]}} {\inf}_{q_1 \in [0,\rho_1]} \adjustlimits{\sup}_{q_0 \in [0,\rho_0]} {\inf}_{r_0 \geq 0} \; f_{\rm RS}\Big(q_0,r_0,q_1,r;\rho_0,\rho_1\Big)	\,.
\end{equation}
This is true for all $r \in [0,\rmax]$, thus
\begin{equation}\label{liminf_supOnInterval}
\liminf_{n_0 \to \infty} f_{\mathbf{n}} \geq \adjustlimits{\sup}_{r_1 \in [0,\rmax]} {\inf}_{q_1 \in [0,\rho_1]} \adjustlimits{\sup}_{q_0 \in [0,\rho_0]} {\inf}_{r_0 \geq 0} \; f_{\rm RS}\Big(q_0,r_0,q_1,r_1;\rho_0,\rho_1\Big)	\,.
\end{equation}
It remains to extend the supremum over $r_1 \in [0,\rmax]$ to a supremum over $r_1$ non-negative. Define on $[0,+\infty[ \times [0,\rho_1]$ the function $\psi(r_1, q_1)\defeq f(r_1) + g(q_1) - \frac{\alpha_1}{2} r_1 q_1$ where $f: [0,+\infty[ \to \mathbb{R}$ and $g: [0,\rho_1] \to \mathbb{R}$ are such that
\begin{equation*}
f(r_1) \defeq  \!\!\adjustlimits{\sup}_{q_0 \in [0,\rho_0]} {\inf}_{r_0 \geq 0}  \Big\{\psi_{P_0}(r_0) + \alpha_1 \Psi_{\varphi_1}(q_0, r_1;\rho_0) - \frac{r_0 q_0}{2} \Big\}\,,\;
g(q_1) \defeq  \alpha \Psi_{P_{\rm out,2}}(q_1;\rho_1)\,.
\end{equation*}
Notice that $\psi(r_1, q_1) = \sup_{q_0 \in [0,\rho_0]} \inf_{r_0 \geq 0} f_{\rm RS}(q_0,r_0,q_1,r_1;\rho_0,\rho_1)$. For all $q_1 \in [0,\rho_1]$ we have
$\nicefrac{\partial \psi}{\partial q_1}(r_1,q_1) = \alpha \psi_{P_{\rm out, 2}}^{\prime}(q_1;\rho_1) - \nicefrac{\alpha_1 r_1}{2}$, that is non-positive if $r_1 \geq 2\alpha_2 \psi_{P_{\rm out, 2}}^{\prime}(q_1;\rho_1)$.
The function $q_1 \mapsto \psi_{P_{\rm out, 2}}(q_1;\rho_1)$ is convex (see Proposition 18, Appendix B.2 \cite{BarbierOneLayerGLM}). Therefore its $q_1$-derivative is non-decreasing and $\rmax$, which is greater than $r^* \defeq  2\alpha_2 \psi_{P_{\rm out, 2}}^{\prime}(\rho_1;\rho_1)$ by definition, satisfies $\forall q_1 \in [0,\rho_1]: \rmax \geq 2\alpha_2 \psi_{P_{\rm out, 2}}^{\prime}(q_1;\rho_1)$. Hence
\begin{equation}
\forall r_1 \geq \rmax: \inf_{q_1 \geq [0,\rho_1]} \psi(r_1, q_1) = \psi(r_1, \rho_1) \,.
\end{equation}
The latter implies that for all $r_1 \geq \rmax$ we have
\begin{align*}
\inf_{q_1 \geq [0,\rho_1]} \psi(r_1, q_1) - \inf_{q_1 \geq [0,\rho_1]} \psi(\rmax, q_1)
&= \psi(r_1, \rho_1) - \psi(\rmax, \rho_1)\\
&= f(r_1) - f(\rmax) - \frac{\alpha_1 \rho_1}{2}(r_1 - \rmax)\\
&\leq 0 \,.
\end{align*}
The last inequality follows from an application of Proposition~\ref{propertiesThirdChannel} in the Appendix~\ref{app:propertiesThirdChannel}: $f$ is non-decreasing and $\nicefrac{\alpha_1 \rho_1}{2}$-Lipschitz, thus $f(r_1) - f(\rmax) = \vert f(r_1) - f(\rmax) \vert \leq \frac{\alpha_1 \rho_1}{2}(r_1 - \rmax)$ for $r_1 \geq \rmax$. We proved that $r_1 \mapsto \inf_{q_1 \geq [0,\rho_1]} \psi(r_1, q_1)$ is non-increasing on $[\rmax,+\infty)$. Going back to \eqref{liminf_supOnInterval}:
\begin{align*}
\liminf_{n_0 \to \infty} f_{\mathbf{n}}
&\geq \adjustlimits{\sup}_{r_1 \in [0,\rmax]} {\inf}_{q_1 \in [0,\rho_1]} \underbrace{\adjustlimits{\sup}_{q_0 \in [0,\rho_0]} {\inf}_{r_0 \geq 0} \; f_{\rm RS}\Big(q_0,r_0,q_1,r_1;\rho_0,\rho_1\Big)}_{= \psi(r_1, q_1)} \\
&= \adjustlimits{\sup}_{r_1 \geq 0} {\inf}_{q_1 \in [0,\rho_1]} \adjustlimits{\sup}_{q_0 \in [0,\rho_0]} {\inf}_{r_0 \geq 0} \; f_{\rm RS}\Big(q_0,r_0,q_1,r_1;\rho_0,\rho_1\Big) \,.
\end{align*}
\end{proof}

\begin{proposition}[Upper bound]\label{prop:upper_bound} The free entropy \eqref{f} verifies
\begin{align}\label{upperbound_limsup}
	\limsup_{n_0 \to \infty} f_{\mathbf{n}}
	\leq \adjustlimits{\sup}_{r_1 \geq 0} {\inf}_{q_1 \in [0,\rho_1]} \adjustlimits{\sup}_{q_0 \in [0,\rho_0]} {\inf}_{r_0 \geq 0} \; f_{\rm RS}\Big(q_0,r_0,q_1, r_1;\rho_0,\rho_1\Big)\,.
\end{align}		
\end{proposition}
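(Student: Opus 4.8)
The plan is to re-run the interpolation of Section~\ref{interp-est-problem}, but this time with \emph{non-adaptive} (Guerra--Toninelli type) interpolation functions. Fix constants $r_1\ge 0$ and $q_1\in[0,\rho_1(n_0)]$ and take $r_{\epsilon}(t)\equiv r_1$, $q_{\epsilon}(t)\equiv q_1$, so that $R_1(t,\epsilon)=\epsilon_1+tr_1$ and $R_2(t,\epsilon)=\epsilon_2+tq_1$; the associated map $\epsilon\mapsto R(t,\epsilon)$ is a translation of unit Jacobian, hence regular in the sense of Definition~\ref{def:reg}, so Proposition~\ref{prop:der_f_t} and the overlap concentration Proposition~\ref{prop:concentrationOverlap} both apply. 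Inserting these constants into \eqref{f0_f1_int} and Proposition~\ref{prop:der_f_t}, and recognising the replica potential $f_{\rm RS}$, one gets, after averaging the $\epsilon$-dependence over $\mathcal{B}_{n_0}$ (here and below $\rho_1(n_0)$ is replaced by $\rho_1$ using Proposition~\ref{prop:convergenceRho}),
\[
f_{\mathbf{n}}=\adjustlimits{\sup}_{q_0\in[0,\rho_0]}{\inf}_{r_0\ge0}\,f_{\rm RS}(q_0,r_0,q_1,r_1;\rho_0,\rho_1)+\mathcal{R}_{\mathbf{n}}(q_1,r_1)+\smallO_{n_0}(1),
\]
where $\mathcal{R}_{\mathbf{n}}(q_1,r_1)=\tfrac{n_1}{2n_0}\tfrac1{s_{n_0}^2}\int_{\mathcal{B}_{n_0}}\!d\epsilon\int_0^1\!dt\,\E\big[\big\langle\big(\tfrac1{n_1}\sum_{\mu}u_{Y_{t,\epsilon,\mu}}'(S_{t,\epsilon,\mu})u_{Y_{t,\epsilon,\mu}}'(s_{t,\epsilon,\mu})-r_1\big)(\widehat{Q}-q_1)\big\rangle_{\mathbf{n},t,\epsilon}\big]$. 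This is the analogue of the identity in Proposition~\ref{prop:cancel_remainder}, except that the remainder Gibbs bracket is now retained rather than cancelled by an adaptive choice.

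The second step is to bound $\mathcal{R}_{\mathbf{n}}(q_1,r_1)$ from above. Using overlap concentration (Proposition~\ref{prop:concentrationOverlap}) together with the Cauchy--Schwarz and Nishimori manipulations already performed in the proof of Proposition~\ref{prop:cancel_remainder} to decouple $\widehat{Q}$ from the energetic factor, and a Gaussian integration-by-parts identity for the scalar output channel of Section~\ref{sec:scalar_inf}\,b) --- which identifies $\E[\langle\tfrac1{n_1}\sum_{\mu}u_{Y}'(S)u_{Y}'(s)\rangle_{\mathbf{n},t,\epsilon}]$ with $2\alpha_2\,\Psi_{P_{\rm out,2}}'\big(tq_1+(1-t)\bar{Q}_{t,\epsilon};\rho_1\big)+\smallO_{n_0}(1)$, where $\bar{Q}_{t,\epsilon}=\E[\langle\widehat{Q}\rangle_{\mathbf{n},t,\epsilon}]\in[0,\rho_1(n_0)]$ --- the remainder reduces to a one-dimensional integral
\[
\mathcal{R}_{\mathbf{n}}(q_1,r_1)=\alpha\int_0^1\!\Big(\Psi_{P_{\rm out,2}}'\big(tq_1+(1-t)\bar{Q}_{t,\epsilon};\rho_1\big)-\tfrac{r_1}{2\alpha_2}\Big)\big(\bar{Q}_{t,\epsilon}-q_1\big)\,dt+\smallO_{n_0}(1).
\]
Because $q\mapsto\Psi_{P_{\rm out,2}}(q;\rho_1)$ is convex (Proposition~18, Appendix~B.2 of \cite{BarbierOneLayerGLM}), $\Psi_{P_{\rm out,2}}'(\cdot;\rho_1)$ is nondecreasing, and since $tq_1+(1-t)\bar{Q}_{t,\epsilon}-q_1=(1-t)(\bar{Q}_{t,\epsilon}-q_1)$ while $\bar{Q}_{t,\epsilon}$ is automatically trapped in $[0,\rho_1(n_0)]$, one can choose the pair $(q_1,r_1)$ so that the integrand has a constant sign; this gives $\limsup_{n_0\to\infty}\mathcal{R}_{\mathbf{n}}(q_1,r_1)\le 0$.

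Finally, I would optimise. Writing, exactly as at the end of the proof of Proposition~\ref{prop:lower_bound}, $f(r_1):=\sup_{q_0}\inf_{r_0}\{\psi_{P_0}(r_0)+\alpha_1\Psi_{\varphi_1}(q_0,r_1;\rho_0)-\tfrac{r_0q_0}{2}\}$ and $g(q_1):=\alpha\Psi_{P_{\rm out,2}}(q_1;\rho_1)$, so that $\sup_{q_0}\inf_{r_0}f_{\rm RS}=f(r_1)+g(q_1)-\tfrac{\alpha_1}{2}r_1q_1$, I would use the convexity of $g$ together with the monotonicity and $\tfrac{\alpha_1\rho_1}{2}$-Lipschitz property of $f$ (Proposition~\ref{propertiesThirdChannel}, proved in Appendix~\ref{app:propertiesThirdChannel}) to reduce the optimisation over $q_1$ to the appropriate boundary value and to extend the supremum over $r_1$ from a bounded interval to all of $[0,+\infty)$, thereby obtaining \eqref{upperbound_limsup}.

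The delicate point --- and where this proof genuinely departs from that of the lower bound --- lies in the second and third steps: one must choose $(q_1,r_1)$ that makes $\mathcal{R}_{\mathbf{n}}$ nonpositive \emph{and simultaneously} keeps the leading term $\sup_{q_0}\inf_{r_0}f_{\rm RS}(q_0,r_0,q_1,r_1;\rho_0,\rho_1)$ no larger than the $\sup\inf$ value, which requires careful use of the a priori bounds $\bar{Q}_{t,\epsilon}\in[0,\rho_1(n_0)]$ and of the convexity of $\Psi_{P_{\rm out,2}}$ (and, should the non-adaptive choice fail to reconcile the two requirements, a re-run with adaptively chosen interpolation functions, mirroring Proposition~\ref{prop:lower_bound} with the roles of $q_{\epsilon}$ and $r_{\epsilon}$ exchanged). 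I expect the remaining technical ingredient --- establishing the Gaussian integration-by-parts identity for the non-Gaussian channel $P_{\rm out,2}$ \emph{uniformly in} $t\in[0,1]$, so that the reduction of $\mathcal{R}_{\mathbf{n}}$ above is legitimate --- to be the main obstacle, and it is there that hypothesis~\ref{hyp:c2} is crucially used.
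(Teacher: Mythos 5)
Your route is genuinely different from the paper's — you propose a non-adaptive Guerra--Toninelli interpolation with constant $r_\epsilon\equiv r_1$, $q_\epsilon\equiv q_1$ and hope to sign the retained remainder, whereas the paper keeps the adaptive choice and makes the remainder vanish. The paper's proof of the upper bound uses the ODE $R_1'(t,\epsilon)=2\alpha_2\Psi'_{P_{\rm out,2}}\big(F_{\mathbf{n}}(t,R(t,\epsilon));\rho_1(n_0)\big)$, $R_2'(t,\epsilon)=F_{\mathbf{n}}(t,R(t,\epsilon))$; the second component still enforces $q_\epsilon(t)=\E\langle\widehat Q\rangle_{\mathbf{n},t,\epsilon}$ (exactly as in the lower bound, so the Gibbs remainder is killed by Proposition~\ref{prop:cancel_remainder}), and the first component is precisely tuned so that, after applying Jensen to pull the $t$-integral inside the two convex functions, $q_\epsilon(t)$ is a critical point of the convex map $q_1\mapsto \alpha\Psi_{P_{\rm out,2}}(q_1;\rho_1(n_0))-\frac{\alpha_1}{2}r_\epsilon(t)q_1$, which turns the stationary value into an $\inf_{q_1}$. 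That is where the $\sup_{r_1}\inf_{q_1}$ structure comes from.

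Your proposal has a genuine gap, and it is not cosmetic. First, the closed-form identification
\[
\E\Big[\Big\langle \tfrac{1}{n_1}\sum_{\mu} u'_{Y_{t,\epsilon,\mu}}(S_{t,\epsilon,\mu})\,u'_{Y_{t,\epsilon,\mu}}(s_{t,\epsilon,\mu})\Big\rangle_{\mathbf{n},t,\epsilon}\Big]
\;=\;2\alpha_2\,\Psi_{P_{\rm out,2}}'\big(tq_1+(1-t)\bar{Q}_{t,\epsilon};\rho_1\big)+\smallO_{n_0}(1)
\]
is not established anywhere and is not a consequence of the ingredients you cite. This bilinear Gibbs average, involving a replica pair $(S,s)$ fed through the non-Gaussian channel $P_{\rm out,2}$, is not a simple derivative of the scalar potential; the entire point of the adaptive interpolation method is that this energetic term cannot be computed in closed form, so instead one chooses $q_\epsilon(t)=\E\langle\widehat Q\rangle$ to make the \emph{overlap} factor vanish and then Cauchy--Schwarz together with overlap concentration (Proposition~\ref{prop:concentrationOverlap}) kills the bracket without ever identifying the energetic factor. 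Second, even granting your heuristic identification, the sign comes out wrong: with $tq_1+(1-t)\bar Q_{t,\epsilon}-q_1=(1-t)(\bar Q_{t,\epsilon}-q_1)$ and $\Psi_{P_{\rm out,2}}'$ nondecreasing by convexity, choosing $r_1=2\alpha_2\Psi'_{P_{\rm out,2}}(q_1;\rho_1)$ makes the two factors $\big(\Psi'(\cdot)-\tfrac{r_1}{2\alpha_2}\big)$ and $(\bar Q_{t,\epsilon}-q_1)$ have the \emph{same} sign, so the integrand is nonnegative and you obtain $f_{\mathbf{n}}\ge\psi(r_1,q_1)+\smallO_{n_0}(1)$ --- the lower-bound direction, not the upper bound. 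No single constant $r_1$ can dominate $\Psi'(\cdot)$ when $\bar Q_{t,\epsilon}>q_1$ \emph{and} be dominated by it when $\bar Q_{t,\epsilon}<q_1$, because $\Psi'$ is monotone; and since you do not control $\bar Q_{t,\epsilon}$ a priori, you cannot guarantee it stays on one side of $q_1$. In a GLM setting the Guerra--Toninelli remainder does not carry a fixed sign, which is exactly why the adaptive interpolation was introduced and why the paper takes that route (with $q_\epsilon$ adapted to cancel, and $r_\epsilon$ adapted to extremize).
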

\begin{proof}
For $(\epsilon,r) \in \mathcal{B}_{n_0} \times [0,\rmax]$, consider the order-1 differential equation
\begin{equation}\label{eq:equadiff1}
\forall t \in [0,1]: \; y'(t) = \left(
\begin{array}{c}
2\alpha_2 \Psi'_{P_{\rm out, 2}}\big(F_{\mathbf{n}}(t,y(t)),\rho_1(n_0)\big) \\
F_{\mathbf{n}}(t,y(t))
\end{array}\right)
\quad \text{with} \quad y(0) = \epsilon \,.
\end{equation}
The function $\Psi'_{P_{\rm out,2}}$ is $\mathcal{C}^1$ (see Proposition 18, Appendix B.2 \cite{BarbierOneLayerGLM}) and takes values in $[0,r^*(n_0)]$. The later combined with the properties of $F_\mathbf{n}$ (see Proposition \ref{prop:F_equadiff}) allow us to apply the Cauchy-Lipschitz Theorem. The differential equation \eqref{eq:equadiff1} admits a unique solution $R(\cdot,\epsilon)=\big(R_1(\cdot,\epsilon), R_2(\cdot, \epsilon)\big)$. Define for $t \in [0,1]$
\begin{align*}
r_{\epsilon}(t) &\defeq R_1'(t,\epsilon) = 2\alpha_2 \Psi'_{P_{\rm out, 2}}\big(F_{\mathbf{n}}(t,y(t)),\rho_1(n_0)\big) \in [0, r^*(n_0)] \subseteq [0,\rmax]\,;\\
q_{\epsilon}(t) &\defeq R_2'(t,\epsilon) = F_{\mathbf{n}}(t,R(t,\epsilon)) \in [0,\rho_1(n_0)]\,.
\end{align*}
Clearly $R_1(t,\epsilon) = \epsilon_1 + \int_0^t q_\epsilon(s) ds$ and $R_2(t,\epsilon) = \epsilon_2 + \int_0^t r_\epsilon(s) ds$. We obtain that for all $t\in [0,1]$
\begin{equation*}
q_{\epsilon}(t)
= F_\mathbf{n}\big(t,(R_1(t,\epsilon),R_2(t,\epsilon))\big)
= \E\big[\langle \widehat{Q} \rangle_{\mathbf{n},t,\epsilon}\big]\,.
\end{equation*}
This is one of the assumption needed to apply Proposition~\ref{prop:cancel_remainder}.

Now we show that the functions $(q_{\epsilon})_{\epsilon \in {\cal B}_{n_0}}$ and $(r_{\epsilon})_{\epsilon \in {\cal B}_{n_0}}$ are regular. Fix $t \in [0,1]$. $R \equiv R(t,\cdot):\epsilon \mapsto (R_1(t,\epsilon),R_2(t,\epsilon))$ is the flow of \eqref{eq:equadiff1}, thus it is injective and $\mathcal{C}^1$ by $F_{\mathbf{n}}$'s regularity properties (see Proposition~\ref{prop:F_equadiff}). The Jacobian of the flow is given by the Liouville formula (see Corollary~3.1, Chapter~V \cite{ode_hartman}):
\begin{equation*}
\det\bigg(\frac{\partial R}{\partial \epsilon}(\epsilon)\bigg)
= \exp\bigg(
\int_{0}^{t} \!\!\! ds\, 
2\alpha_2 \Psi''_{P_{\rm out,2}}\big(F_\mathbf{n}(s,R(s,\epsilon))\big) \frac{\partial F_\mathbf{n}}{\partial r_1}(s,R(s,\epsilon))
+
\frac{\partial F_\mathbf{n}}{\partial r_2}(s,R(s,\epsilon)) \bigg) \,.
\end{equation*}
$\nicefrac{\partial F_\mathbf{n}}{\partial r_1}$ and $\nicefrac{\partial F_\mathbf{n}}{\partial r_2}$ are both non negative (see Proposition \ref{prop:F_equadiff}) as well as $\Psi_{P_{\rm out,2}}''$ (see Proposition 18, Appendix B.2 \cite{BarbierOneLayerGLM}). Therefore the Jacobian $\det\big(\nicefrac{\partial R}{\partial \epsilon}\big)$ is greater than or equal to 1.
We obtain that $R$ is a $\mathcal{C}^1$-diffeomorphism (by the local inversion Theorem), and since its Jacobian is greater than or equal to $1$ the functions $(q_{\epsilon})_{\epsilon \in {\cal B}_{n_0}}$ and $(r_{\epsilon})_{\epsilon \in {\cal B}_{n_0}}$ are regular.
	
Proposition~\ref{prop:cancel_remainder} applied to this special choice of regular functions $(q_{\epsilon},r_{\epsilon})_{\epsilon \in {\cal B}_{n_0}}$ gives
\begin{align}
f_{\mathbf{n}}
&= \smallO_{n_0}(1) + \frac{1}{s_{n_0}^2}\int_{{\cal B}_{n_0}} \!\!\!\! d\epsilon
\Bigg[\alpha \Psi_{P_{\rm out, 2}}\bigg(\int_0^1 q_{\epsilon}(t) dt;\rho_1(n_0)\bigg)
- \frac{\alpha_1}{2} \int_0^1 \!\! dt \,
r_{\epsilon}(t)q_{\epsilon}(t)\nn
&\qquad\qquad\qquad\qquad\quad+ \adjustlimits{\sup}_{q_0 \in [0,\rho_0]} {\inf}_{r_0 \geq 0} \; \bigg\{\psi_{P_0}(r_0) + \alpha_1 \Psi_{\varphi_1}\bigg(q_0, \int_0^1 \!\! r_{\epsilon}(t) dt;\rho_0\bigg)- \frac{r_0 q_0}{2} \bigg\} \Bigg]\nn
&\leq \smallO_{n_0}(1) + \frac{1}{s_{n_0}^2}\int_{{\cal B}_{n_0}} \!\!\!\! d\epsilon
\int_0^1 \!\! dt
\Bigg[\alpha \Psi_{P_{\rm out, 2}}\big(q_{\epsilon}(t);\rho_1(n_0)\big)
- \frac{\alpha_1}{2}r_{\epsilon}(t)q_{\epsilon}(t)\nn
&\qquad\qquad\qquad\qquad\quad+ \adjustlimits{\sup}_{q_0 \in [0,\rho_0]} {\inf}_{r_0 \geq 0} \; \bigg\{\psi_{P_0}(r_0) + \alpha_1 \Psi_{\varphi_1}\big(q_0, r_{\epsilon}(t);\rho_0\big)- \frac{r_0 q_0}{2} \bigg\} \Bigg]\,.
\label{upperbound_fn_integral}
\end{align}
To obtain this last inequality we applied Jensen's inequality to the convex functions:
\begin{align*}
q \in [0,\rho_1(n_0)] &\mapsto \alpha \Psi_{P_{\rm out, 2}}\big(q;\rho_1(n_0)\big) \;,\\
r \in [0,\rmax] &\mapsto \adjustlimits{\sup}_{q_0 \in [0,\rho_0]} {\inf}_{r_0 \geq 0} \; \bigg\{\psi_{P_0}(r_0) + \alpha_1 \Psi_{\varphi_1}\big(q_0, r;\rho_0\big)- \frac{r_0 q_0}{2} \bigg\}\,.
\end{align*}
Fix $(t,\epsilon) \in [0,1] \times \mathcal{B}_{n_0}$. We chose the interpolation functions $r_\epsilon$ and $q_\epsilon$ such that
\begin{equation*}
r_\epsilon(t) = 2 \alpha_2 \psi_{P_{\rm out, 2}}^{\prime}(q_\epsilon(t),\rho_1(n_0)) \,.
\end{equation*}
Therefore $q_\epsilon(t)$ is a critical point of the convex function
\begin{equation*}
q_1 \in [0,\rho_1(n_0)] \mapsto \alpha \psi_{P_{\rm out, 2}}(q_1,\rho_1(n_0)) - \frac{\alpha_1}{2}r_{\epsilon}(t)q_1 \,,
\end{equation*}
so that we have
\begin{equation*}
\alpha \psi_{P_{\rm out, 2}}(q_\epsilon(t),\rho_1(n_0)) - \frac{\alpha_1}{2}r_{\epsilon}(t)q_\epsilon(t) = \inf_{q_1 \in [0,\rho_1(n_0)]} \Big\{\alpha \psi_{P_{\rm out, 2}}(q_1,\rho_1(n_0)) - \frac{\alpha_1}{2}r_{\epsilon}(t)q_1 \Big\} \:.
\end{equation*}
Plugging this back in \eqref{upperbound_fn_integral}, we get
\begin{align*}
f_{\mathbf{n}}
&\leq \smallO_{n_0}(1) + \frac{1}{s_{n_0}^2}\int_{{\cal B}_{n_0}} \!\!\!\! d\epsilon
\int_0^1 \!\! dt
\Bigg[\inf_{q_1 \in [0,\rho_1(n_0)]} \Big\{\alpha \psi_{P_{\rm out, 2}}(q_1,\rho_1(n_0)) - \frac{\alpha_1}{2}r_{\epsilon}(t)q_1 \Big\}\nn
&\qquad\qquad\qquad\qquad\quad+ \adjustlimits{\sup}_{q_0 \in [0,\rho_0]} {\inf}_{r_0 \geq 0} \; \bigg\{\psi_{P_0}(r_0) + \alpha_1 \Psi_{\varphi_1}\big(q_0, r_{\epsilon}(t);\rho_0\big)- \frac{r_0 q_0}{2} \bigg\} \Bigg]\nn
&\leq \smallO_{n_0}(1) + \frac{1}{s_{n_0}^2}\int_{{\cal B}_{n_0}} \!\!\!\! d\epsilon
\int_0^1 \!\! dt
\mkern-59mu\adjustlimits{\phantom{\sup}}_{\phantom{q_0 \in [0,\rho_0]}} {\inf}_{q_1 \in [0,\rho_1(n_0)]} \adjustlimits{\sup}_{q_0 \in [0,\rho_0]} {\inf}_{r_0 \geq 0}
f_{\rm RS}\Big(q_0,r_0,q_1, r_{\epsilon}(t);\rho_0,\rho_1(n_0)\Big)\nn
&\leq \smallO_{n_0}(1) +
\adjustlimits{\sup}_{r_1 \geq 0} {\inf}_{q_1 \in [0,\rho_1(n_0)]} \adjustlimits{\sup}_{q_0 \in [0,\rho_0]} {\inf}_{r_0 \geq 0} f_{\rm RS}\Big(q_0,r_0,q_1, r_1;\rho_0,\rho_1(n_0)\Big) \,.
\end{align*}
The upperbound \eqref{upperbound_limsup} follows.
\end{proof}
\noindent It remains to prove \ref{item:step2}, i.e.\
\begin{proposition}[Switch the optimization order] Under the assumptions~\ref{hyp:bounded},~\ref{hyp:c2}, we have
		\begin{align*}
			\adjustlimits{\sup}_{r_1 \geq 0} {\inf}_{q_1 \in [0,\rho_1]}  \adjustlimits{\sup}_{q_0 \in [0,\rho_0]} {\inf}_{r_0 \geq 0} &f_{\rm RS}(q_0,r_0,q_1,r_1;\rho_0,\rho_1)\\
			&=\adjustlimits{\sup}_{q_1 \in [0,\rho_1]} {\inf}_{r_1 \geq 0} \adjustlimits{\sup}_{q_0 \in [0,\rho_0]} {\inf}_{r_0 \geq 0} f_{\rm RS}(q_0,r_0,q_1,r_1;\rho_0,\rho_1) \,.
		\end{align*}
for any positive real numbers $\rho_0$ and $\rho_1$.
\begin{proof}
Let $f: [0,+\infty[ \to \mathbb{R}$ and $g: [0,\rho_1] \to \mathbb{R}$ be the two functions
\begin{equation*}
	f(r_1) \defeq  \!\!\adjustlimits{\sup}_{q_0 \in [0,\rho_0]} {\inf}_{r_0 \geq 0}  \Big\{\psi_{P_0}(r_0) + \alpha_1 \Psi_{\varphi_1}(q_0, r_1;\rho_0) - \frac{r_0 q_0}{2} \Big\}\,,\;
	g(q_1) \defeq  \alpha \Psi_{P_{\rm out,2}}(q_1;\rho_1)\,,
\end{equation*}
such that $\psi(r_1, q_1)
\defeq \sup_{q_0 \in [0,\rho_0]} \inf_{r_0 \geq 0} f_{\rm RS}(q_0,r_0,q_1,r_1;\rho_0,\rho_1)
= f(r_1) + g(q_1) - \frac{\alpha_1}{2} r_1 q_1$.\newline
In Appendix~\ref{app:propertiesThirdChannel} it is shown that, under ~\ref{hyp:c2}, $f$ is convex, Lipschitz and non-decreasing on $[0,+\infty[$. 
Proposition 18 in Appendix B.2 of \cite{BarbierOneLayerGLM} states that, under~\ref{hyp:c2}, $g$ is convex, Lipschitz and non-decreasing on $[0,\rho_1]$. The desired result is then obtained by applying Corollary 7 in Appendix D of \cite{BarbierOneLayerGLM}:
\begin{equation*}
	\adjustlimits{\sup}_{r_1 \geq 0} {\inf}_{q_1 \in [0,\rho_1]}  \psi(r_1, q_1)
	= \adjustlimits{\sup}_{q_1 \in [0,\rho_1]} {\inf}_{r_1 \geq 0} \psi(r_1, q_1) \;.
\end{equation*}
\end{proof}
\end{proposition}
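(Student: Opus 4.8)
The plan is to collapse the two innermost optimizations into a single scalar function, so that the asserted equality becomes a two-variable $\sup$--$\inf$ swap covered by a general lemma of \cite{BarbierOneLayerGLM}. First I would set
\begin{align*}
f(r_1) &:= \sup_{q_0 \in [0,\rho_0]} \inf_{r_0 \geq 0}\Big\{\psi_{P_0}(r_0) + \alpha_1 \Psi_{\varphi_1}(q_0,r_1;\rho_0) - \frac{r_0 q_0}{2}\Big\}, \\
g(q_1) &:= \alpha\,\Psi_{P_{\rm out,2}}(q_1;\rho_1),
\end{align*}
and $\psi(r_1,q_1) := f(r_1) + g(q_1) - \frac{\alpha_1}{2}\, r_1 q_1$, so that $\psi(r_1,q_1) = \sup_{q_0 \in [0,\rho_0]}\inf_{r_0 \geq 0} f_{\rm RS}(q_0,r_0,q_1,r_1;\rho_0,\rho_1)$ and the claim reads exactly $\sup_{r_1 \geq 0}\inf_{q_1 \in [0,\rho_1]}\psi(r_1,q_1) = \sup_{q_1 \in [0,\rho_1]}\inf_{r_1 \geq 0}\psi(r_1,q_1)$.

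The substance of the proof is then to verify that $f$ is convex, non-decreasing and Lipschitz on $[0,+\infty)$, and $g$ likewise on $[0,\rho_1]$. For $g$ this is precisely Proposition 18 of Appendix B.2 of \cite{BarbierOneLayerGLM} applied to the channel $P_{\rm out,2}$, valid since $\varphi_2$ is bounded under \ref{hyp:c2}. For $f$ the key remark is that $r_1$ enters only through $\Psi_{\varphi_1}(q_0,r_1;\rho_0)$, which by \eqref{rewritingPsiOutThirdChannel} is, up to a term affine in $r_1$, the free entropy of the scalar Gaussian channel $Y_0' = \sqrt{r_1}\,\varphi_1(\sqrt{q_0}V + \sqrt{\rho_0 - q_0}\,U, \bA_1) + Z'$ with $r_1$ playing the role of a signal-to-noise ratio; hence for each fixed $q_0$ the map $r_1 \mapsto \Psi_{\varphi_1}(q_0,r_1;\rho_0)$ is convex and non-decreasing (by the I-MMSE relation its derivative is half the associated MMSE-deficit, which is non-negative and non-decreasing in $r_1$) and $\frac12 \rho_1$-Lipschitz, uniformly in $q_0$, since $\mathbb{E}[\varphi_1^2(T,\bA_1)] = \rho_1$ for $T \sim \mathcal{N}(0,\rho_0)$. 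Because the inner $\inf_{r_0}$ acts only on the $r_1$-independent summand $\psi_{P_0}(r_0) - r_0 q_0/2$, the function $f$ is an upper envelope of a family of convex, non-decreasing, uniformly $\frac{\alpha_1}{2}\rho_1$-Lipschitz functions of $r_1$, and therefore inherits all three properties; this is the content of the auxiliary proposition of Appendix~\ref{app:propertiesThirdChannel}.

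Finally, with $\psi(r_1,q_1) = f(r_1) + g(q_1) - c\, r_1 q_1$ for $c = \alpha_1/2 > 0$ and $f,g$ convex, non-decreasing and Lipschitz, I would invoke the general $\sup$--$\inf$ swap lemma of \cite{BarbierOneLayerGLM} (Corollary 7, Appendix D), which asserts exactly that such a $\psi$ satisfies $\sup_{r_1}\inf_{q_1}\psi = \sup_{q_1}\inf_{r_1}\psi$, concluding the proof. I expect the only genuinely delicate point to be the convexity of $f$: as a $\sup$ over $q_0$ of an $\inf$ over $r_0$, $f$ has no reason in general to be convex in $r_1$, and what rescues it here is the precise confinement of the $r_1$-dependence to the single term $\Psi_{\varphi_1}(q_0,\cdot\,;\rho_0)$, convex for every $q_0$. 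Compared with the interpolation and overlap-concentration estimates preceding it in the proof of Theorem~\ref{th:RS_2layer}, this last step is essentially soft analysis once the scalar-channel convexity facts are established.
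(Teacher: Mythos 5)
Your proposal is correct and follows essentially the same route as the paper: the same decomposition $\psi(r_1,q_1)=f(r_1)+g(q_1)-\tfrac{\alpha_1}{2}r_1q_1$, the same appeal to Proposition~18 of \cite{BarbierOneLayerGLM} for $g$, the same reduction of $f$'s regularity to the convexity/monotonicity/Lipschitz properties of $r_1\mapsto\Psi_{\varphi_1}(q_0,r_1;\rho_0)$ uniformly in $q_0$ (the content of Proposition~\ref{propertiesThirdChannel} in Appendix~\ref{app:propertiesThirdChannel}), and the same invocation of Corollary~7 in Appendix~D of \cite{BarbierOneLayerGLM} for the $\sup$--$\inf$ swap. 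Your remark that $f$ inherits these properties as an upper envelope over $q_0$ (with the $\inf_{r_0}$ contributing only an $r_1$-independent constant) and your I-MMSE phrasing of the scalar-channel facts are just slightly more explicit restatements of what the paper verifies by Gaussian integration by parts and the Nishimori identity.
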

\section{Numerical experiments}


\subsection{Activations comparison in terms of mutual informations}
\label{sec:micomp}
Here we assume the exact same setting as the one presented in the main text to compare activation functions on a two-layer random weights network. We compare here the mutual information estimated with the proposed replica formula instead of the entropy behaviors discussed in the main text. As it was the case for entropies, we can see that the saturation of the double-side saturated hardtanh leads to a loss of information for large weights, while the mutual informations are always increasing for linear and ReLU activations.

\begin{figure}[t!]
      \includegraphics[width=1.01\linewidth]{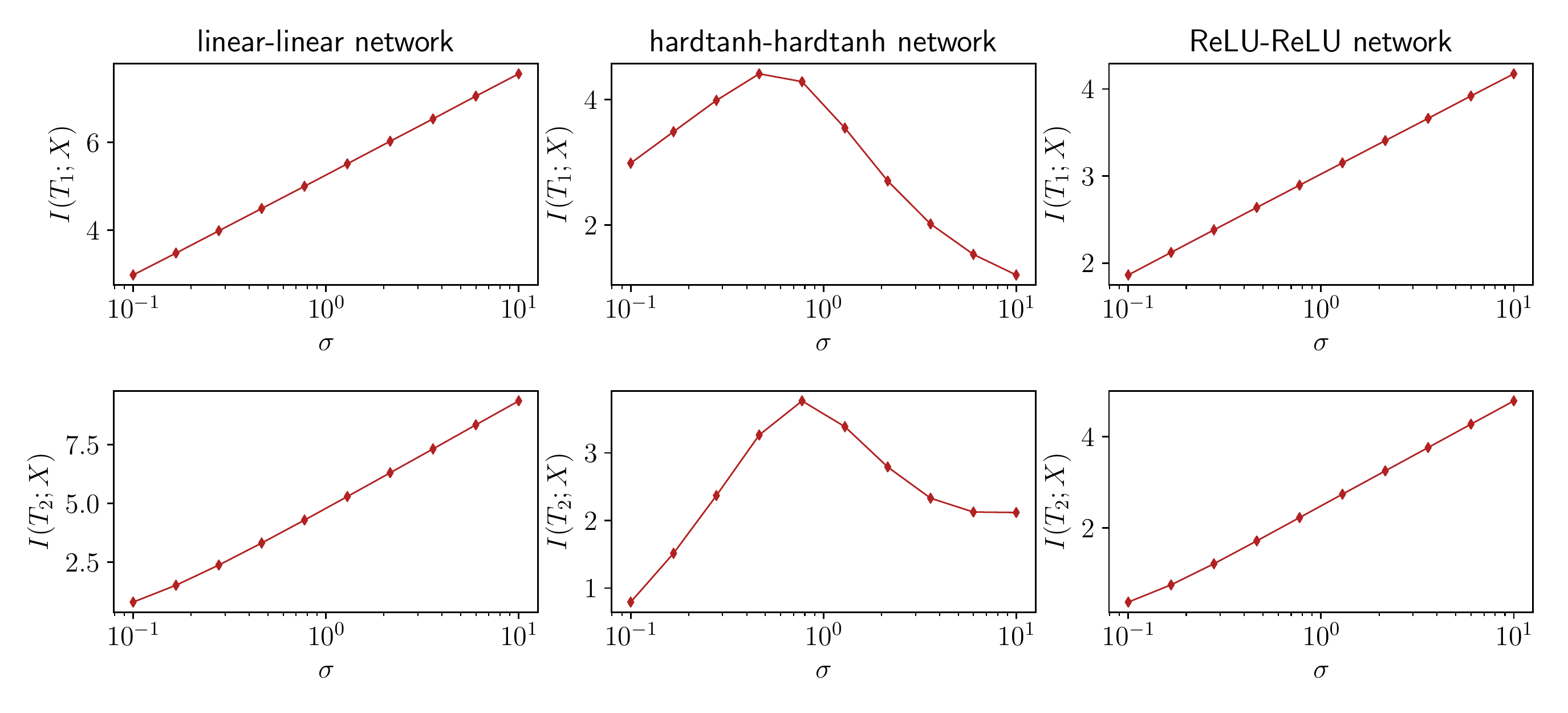}
     \caption{
        Replica mutual informations between latent and input variables in stochastic networks $\X \rightarrow
        \T_1 \rightarrow \T_2 $, with equally sized layers $n=1000$, inputs
        drawn from $\mathcal{N}(0, I_{n})$, weights from  $\mathcal{N}(0,
        \sigma^2 I_{n^2} / n)$, as a function of the weight scaling
        parameter $\sigma$. An additive white Gaussian noise $\mathcal{N}(0,
        10^{-5} I_{n})$ is added inside the non-linearity. Left column: linear
        network. Center column: hardtanh-hardtanh network. Right column:
        ReLU-ReLU network.
       \label{fig:comparisons_sm}}     
\end{figure}

\subsection{Learning ability of USV-layers}
\label{sec:mnist}

To ensure weight matrices remain close enough to being independent during learning we introduce USV-layers, corresponding to a custom type of weight constraint. We recall that in such layers, weight matrices are decomposed in the manner of a singular value decomposition, $W_{\ell} = U_{\ell}S_{\ell}V_{\ell}$, with $U_{\ell}$ and $V_{\ell}$ drawn from the corresponding Haar measures (i.e. uniformly among the orthogonal matrices of given size), and $S_{\ell}$ contrained to be diagonal, being the only matrix being learned.
In the main text, we demonstrate on a linear network that the USV-layers ensure that the assumptions necessary to our replica formula are met with learned matrices in the case of linear networks. 
Nevertheless, a USV-layer of size $n \times n$ has only $n$ trainable parameters, which implies that they are harder to train than usual fully connected layers. In practice, we notice that they tend to require more parameter updates and that interleaving linear USV-layers to increase the number of parameters between non-linearities can significantly improve the final result of training. 

To convince ourselves that the training ability of USV-layers is still relevant to study learning dynamics on real data we conduct an experiment on the MNIST dataset. We study the classification problem of the classical MNIST data set ($60\,000$ training images and $10\,000$ testing images) with a simple fully-connected network featuring one non-linear (ReLU) hidden layer of 500 neurones. On top of the ReLU-layer, we place a softmax output layer where the $500 \times 10$ parameters of the weight matrix are all being learned in all the versions of the experiments. Conversely, before the ReLU layer, we either (1) do not learn at all the $784 \times 500$ parameters which then define random projections, (2) learn all of them as a traditional fully connected network, (3) use a combination of 2 (3a), 3 (3b) or 6 (3c) consecutive USV-layers (without any intermediate non-linearity). The best train and test, losses and accuracies, for the different architectures are given in Table \ref{tab:mnist} and some learning curves are displayed on Figure \ref{fig:mnist}. As expected we observe that USV-layers are achieving better classification success than the random projections, yet worse than the unconstrained fully connected layer. Interestingly, stacking USV-layers to increase the number of trainable parameters allows to reach very good training accuracies, nevertheless, the testing accuracies do not benefit to the same extent from these additional parameters.
On Figure \ref{fig:mnist}, we can actually see that the version of the experiment with 6 USV-layers overfits the training set (green curves with testing losses growing towards the end of learning). Therefore, particularly in this case, adding regularizers might allow to improve the generalization performances of models with USV-layers.

\begin{figure}[t!] 
  \includegraphics[width=\linewidth]{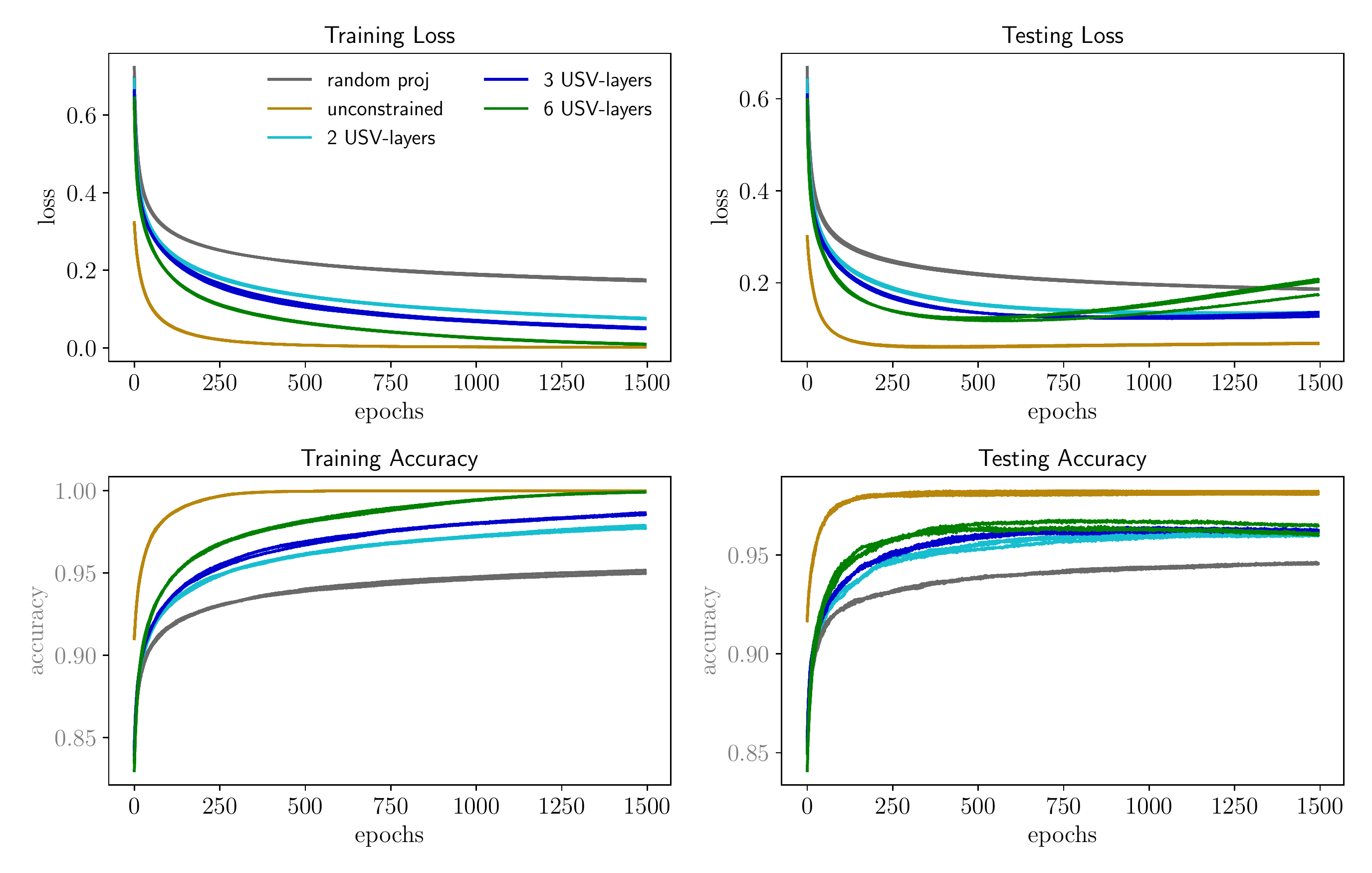}
  \caption{Training and testing curves for the training of a two-layer neural net on the classification of MNIST for different constraints on the first layer (further details are given in Section\ref{sec:mnist}). For each version of the experiment the outcomes of two independent runs are plotted with the same color, it is not always possible to distinguish the two runs as they overlap.}
  \label{fig:mnist}   
\end{figure}

\begin{table*}
\begin{center}
\begin{tabular}{p{0.18\linewidth} |
p{0.14\linewidth}
p{0.1\linewidth}
p{0.1\linewidth}
p{0.12\linewidth}
p{0.12\linewidth}}
\hline
First layer type
& $\#$train params
& Train loss
& Test loss
& Train acc
& Test acc\\
\hline\hline \\
Random (1)
& 0
& 0.1745 
& 0.1860 
& 95.05 (0.09)
& 94.61 (0.02)
\\
Unconstrained (2)
& $784 \times \; 500$
& 0.0012 
& 0.0605 
& 100. (0.00)
& 98.18 (0.06)
\\
2-USV (3a)
& $2 \times \; 500$
& 0.0758 
& 0.1326 
& 97.80 (0.07)
& 96.10 (0.03)
\\
3-USV (3b)
& $3 \times \; 500$
& 0.0501 
& 0.1238 
& 98.62 (0.05)
& 96.35 (0.04)
\\
6-USV (3c)
& $6 \times \; 500$
& 0.0092 
& 0.1211 
& 99.93 (0.01)
& 96.54 (0.17)
\\
\end{tabular}
\end{center}
\caption{Training results for MNIST classification of a fully connected 784-500-10 neural net with a ReLU non linearity. The different rows correspond to different specifications of trainable parameters in the first layer (1, 2, 3a, 3b, 3c) describe in the paragraph. We use plain SGD to minimize the cross-entropy loss. All experiments use the same learning rate 0.01 and batchsize of 100 samples. Results are averaged over 5 independent runs, and standard deviations are reported in parentheses.}
\label{tab:mnist}
\end{table*}

\subsection{Additional learning experiments on synthetic data} 
\label{sec:moreresults}

Similarly to the experiments of the main text, we consider simple training
schemes with constant learning rates, no momentum, and no explicit
regularization.

We first include a second version of Figure 4 of the main text, corresponding to the exact same experiment with a different random seed and check that results are qualitatively identical.

\begin{figure}[t!]
    \begin{center}
      \includegraphics[width=1.01\linewidth]{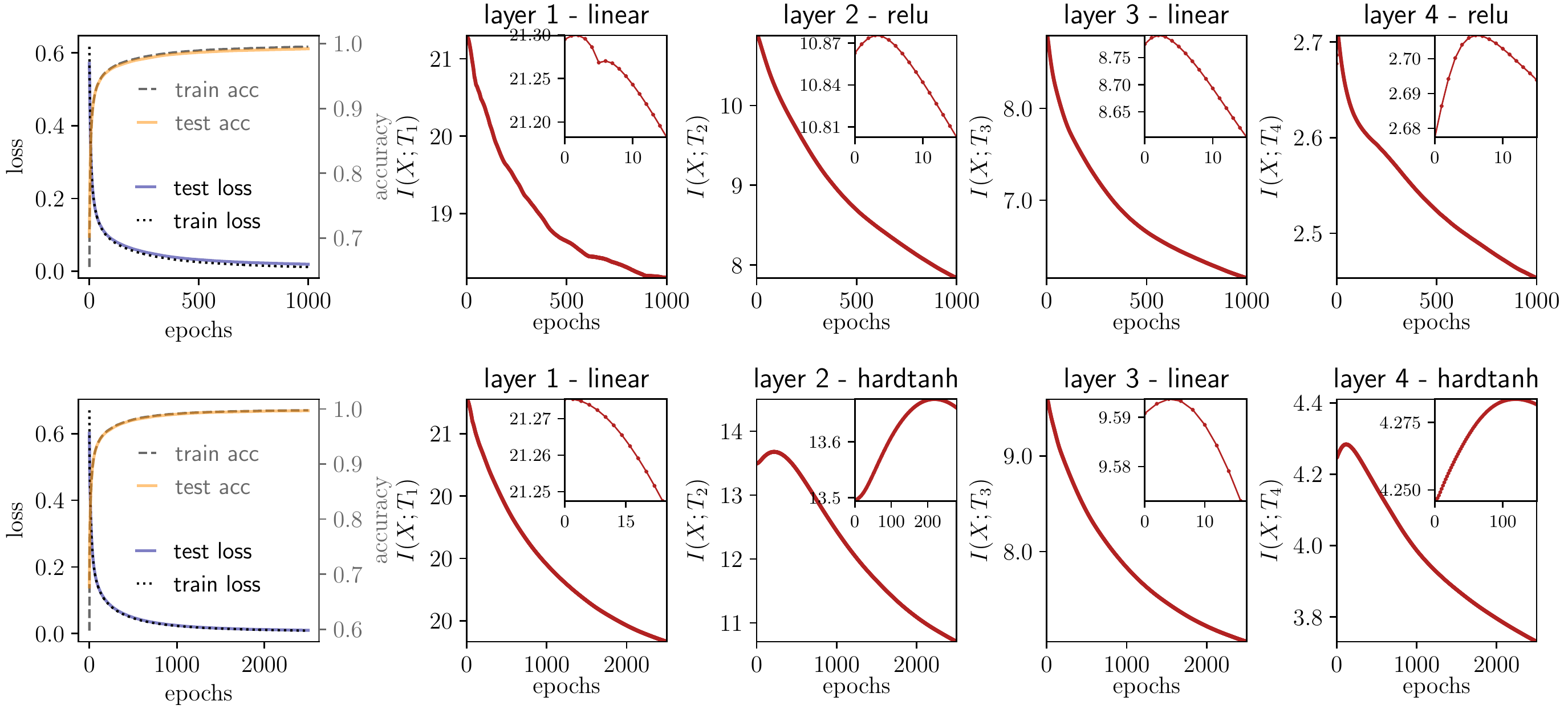}
     \caption{%
      Independent run outcome for Figure 4 of the main text. Training of two recognition models on a binary classification task with correlated input data and either ReLU (top) or hardtanh (bottom) non-linearities. Left: training and generalization cross-entropy loss (left axis) and accuracies (right axis) during learning. Best training-testing accuracies are 0.995 - 0.992 for ReLU version (top row) and 0.998 - 0.997 for hardtanh version (bottom row). Remaining colums: mutual information between the input and successive hidden layers. Insets zoom on the first epochs.
      \label{fig:classifbis}}     
    \end{center}
\end{figure}

We consider then a regression task created by a 2-layer teacher network
of sizes 500-3-3, activations ReLU-linear, uncorrelated input data
distribution $\mathcal{N}(0,I_{n_X})$ and additive white Gaussian noise at the
output of variance $0.01$. The matrices of the teacher network are i.i.d. normally distributed with a variance equal to the inverse of the layer input dimension. We train a student network with 2 ReLU layers of
sizes $2500$ and $1000$, each featuring 5 stacked USV-layers of same size
before the non linear activation, and with one final fully-connected linear
layer. We use plain SGD with a constant learning rate of 0.01 and a batchsize
of 50. In Figure \ref{fig:regression} we plot the mutual informations with the
input at the effective 10-hidden layers along the training. Except for the
very first layer where we observe a slight initial increase, all mutual
informations appear to only decrease during the learning, at least at this
resolution (i.e. after the first epoch). We thus observe a compression even in
the absence of double-saturated non-linearities. We further note that in this
case we observe an accuentuation of the amount of compression with layer depth
as observed by \cite{shwartz-ziv_opening_2017} (see second plot of first row of 
Figure \ref{fig:regression}), but which we did not observe
in the binary classification experiment presented in the main text. 
On Figure \ref{fig:regressionbis}, we reproduce the figure for a different seed.

\begin{figure}[t!]
    \begin{center}
      \includegraphics[width=\linewidth]{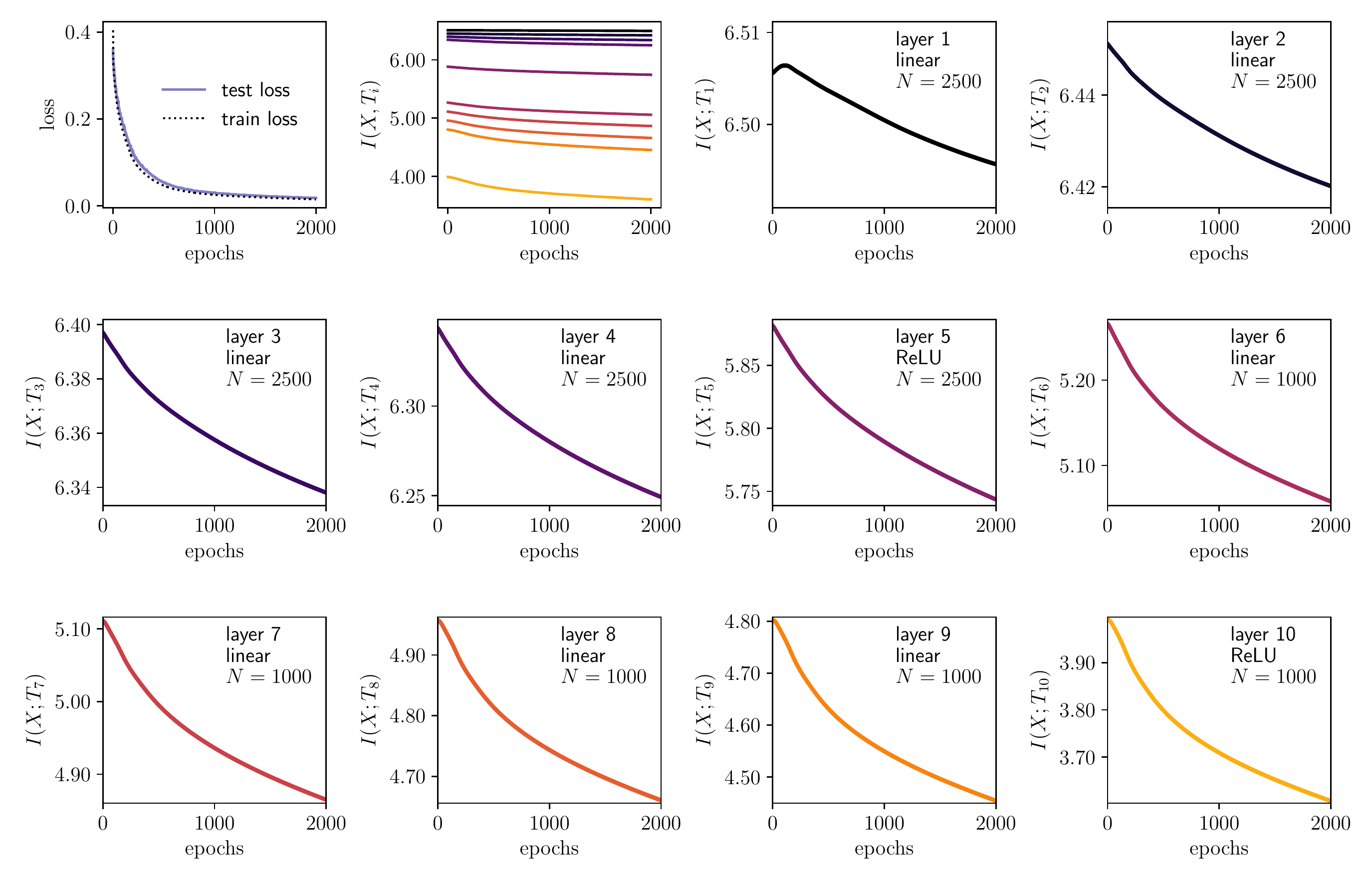}
     \caption{
        Example of regression with a 10 hidden-layer student network: 5 USV-layers - ReLU activation - 5 USV-layers - ReLu activation - 1 unconstrained final linear layer, on dataset generated by a non-linear teacher network: ReLu-linear. Top row, first plot: training and testing MSE loss along learning. Best train loss is $0.015$, best test loss is $0.018$. Top row, second plot: mutual informations curves of the 10 hidden layers showing the slight accentuation of compression in deeper layers. Remaining: mutual information from each layer displayed separately.
       \label{fig:regression}}     
    \end{center}
\end{figure}

\begin{figure}[h!]
    \begin{center}
      \includegraphics[width=\linewidth]{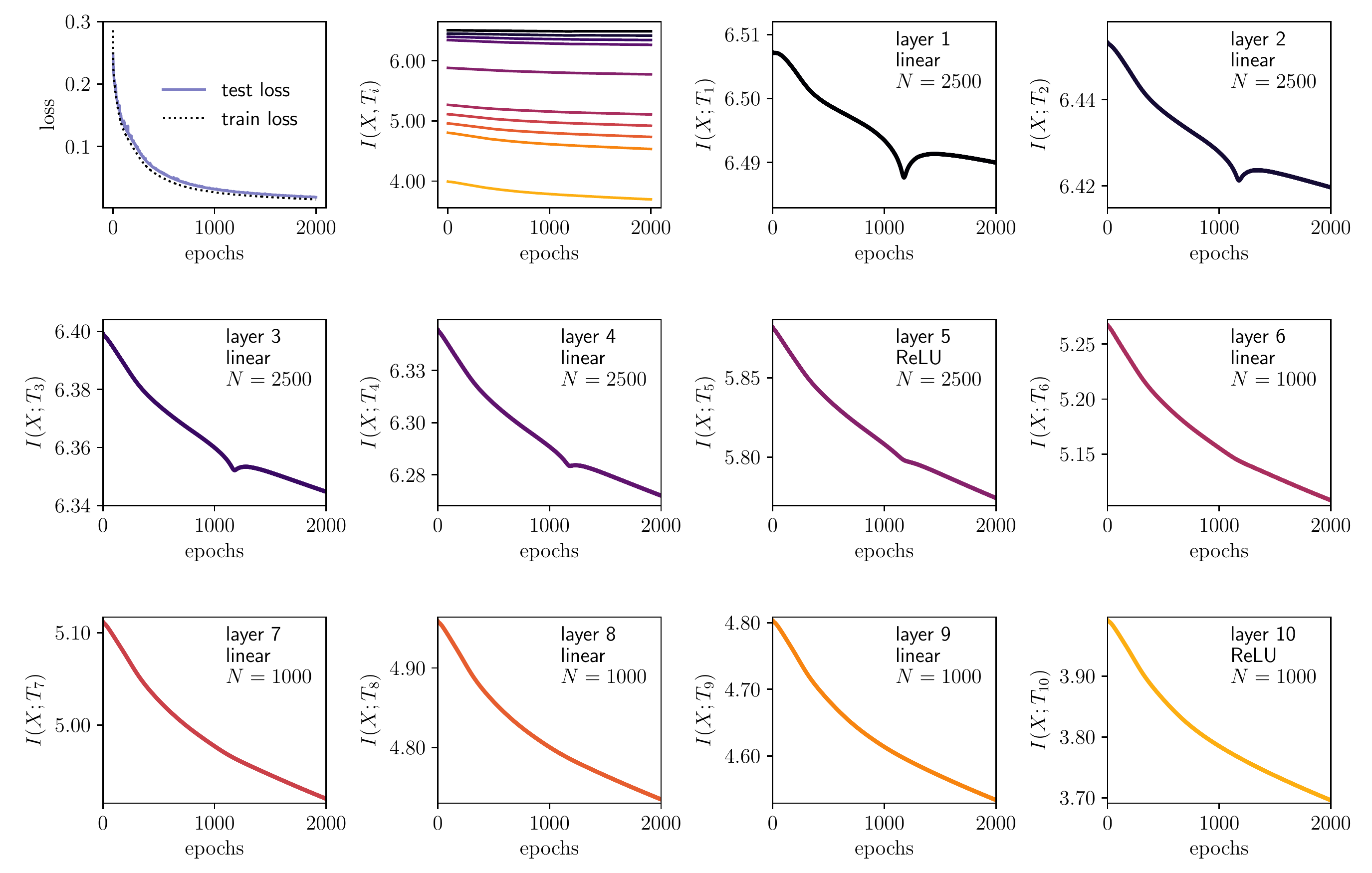}
     \caption{
        Independent run outcome for Figure \ref{fig:regression} of the Supplementary Material. Example of regression with a 10 hidden-layer student network: 5 USV-layers - ReLU activation - 5 USV-layers - ReLu activation - 1 unconstrained final linear layer, on dataset generated by a non-linear teacher network: ReLu-linear. Top row, first plot: training and testing MSE loss along learning. Best train loss is $0.015$, best test loss is $0.019$. Top row, second plot: mutual informations curves of the 10 hidden layers showing the slight accentuation of compression in deeper layers. Remaining: mutual information from each layer displayed separately.
       \label{fig:regressionbis}}     
    \end{center}
\end{figure}

In a last experiment, we even show that merely changing the weight
initialization can drastically change the behavior of mutual informations
during training while resulting in identical training and testing final
performances. We consider here a setting closely related to the classification
on correlated data presented in the main text. 
 The generative model is a
a simple single layer generative model $\X = \tilde{W}_{\rm gen} \Y +
\epsilon$ with normally distributed code $\Y\sim \mathcal{N}(0, I_{n_Y})$ of
size $n_{Y} = 100$, from which data of size $n_{X} = 500$ are generated with matrix $\tilde{W}_{\rm gen}$ i.i.d. normally distributed as $\mathcal{N}(0, 1/\sqrt{n_Y})$  and noise $\epsilon
\sim \mathcal{N}(0, 0.01 I_{n_{X}})$. The
recognition model attempts to solve the binary classification problem of
recovering the label $y = \mathrm{sign}(Y_1)$, the sign of the first neuron in
$\bm{Y}$. Again the training is done with plain SGD to minimize the cross-
entropy loss and the rest of the initial code $(Y_2, .. Y_{n_{Y}})$ acts
as noise/nuisance with respect to the learning task. On Figure
\ref{fig:coefinit} we compare 3 identical 5-layers recognition models with
sizes 500-1000-500-250-100-2, and activations hardtanh-hardtanh-hardtanh-
hartanh-softmax. For the model presented at the top row, initial weights were
sampled according to $W_{\ell,ij} \sim \mathcal{N}(0, 4/n_{\ell-1})$, for the
model of the middle row $\mathcal{N}(0, 1/n_{\ell-1})$ was used instead, and
finally $\mathcal{N}(0, \nicefrac{1}{4n_{\ell-1}})$ for the bottom row. The first column
shows that training is delayed for the weight initialized at smaller values,
but eventually catches up and reaches accuracies superior to $0.97$ both in training and testing.
Meanwhile, mutual informations have different initial values for the different
weight initializations and follow very different paths. They either decrease
during the entire learning, or on the contrary are only increasing, or actually
feature an hybrid path.  We further note that it is to some extent surprising
that the mutual information would increase at all in the first row if we
expect the hardtanh saturation to instead induce compression. Figure \ref{fig:coefinitbis} 
presents a second run of the same experiment with a different random seed. Findings are identical.

\begin{figure}[t!]
    \begin{center}
      \includegraphics[width=\linewidth]{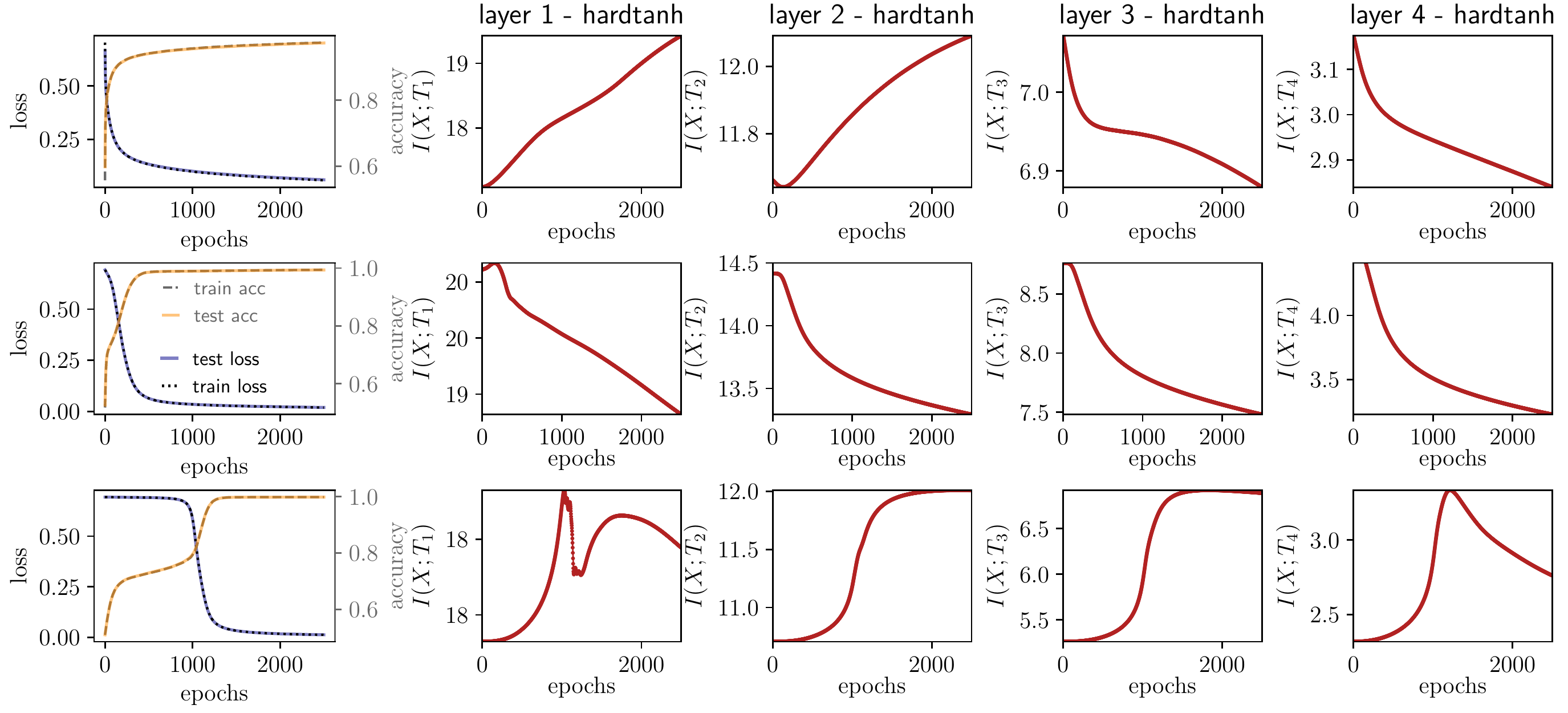}
     \caption{
        Learning and hidden-layers mutual information curves for a classification problem with correlated input data, using a 4-USV hardtanh layers and 1 unconstrained softmax layer, from 3 different initializations. Top: Initial weights at layer $\ell$ of variance $4 / n_{\ell-1}$, best training accuracy 0.999, best test accuracy 0.994. Middle:  Initial weights at layer $\ell$ of variance $1 / n_{\ell-1}$, best train accuracy 0.994, best test accuracy 0.9937. Bottom:  Initial weights at layer $\ell$ of variance $0.25 / n_{\ell-1}$, best train accuracy 0.975, best test accuracy 0.974. The overall direction of evolution of the mutual information can be flipped by a change in weight initialization without changing drastically final performance in the classification task. 
       \label{fig:coefinit}}     
    \end{center}
\end{figure}

\begin{figure}[h!]
    \begin{center}
      \includegraphics[width=\linewidth]{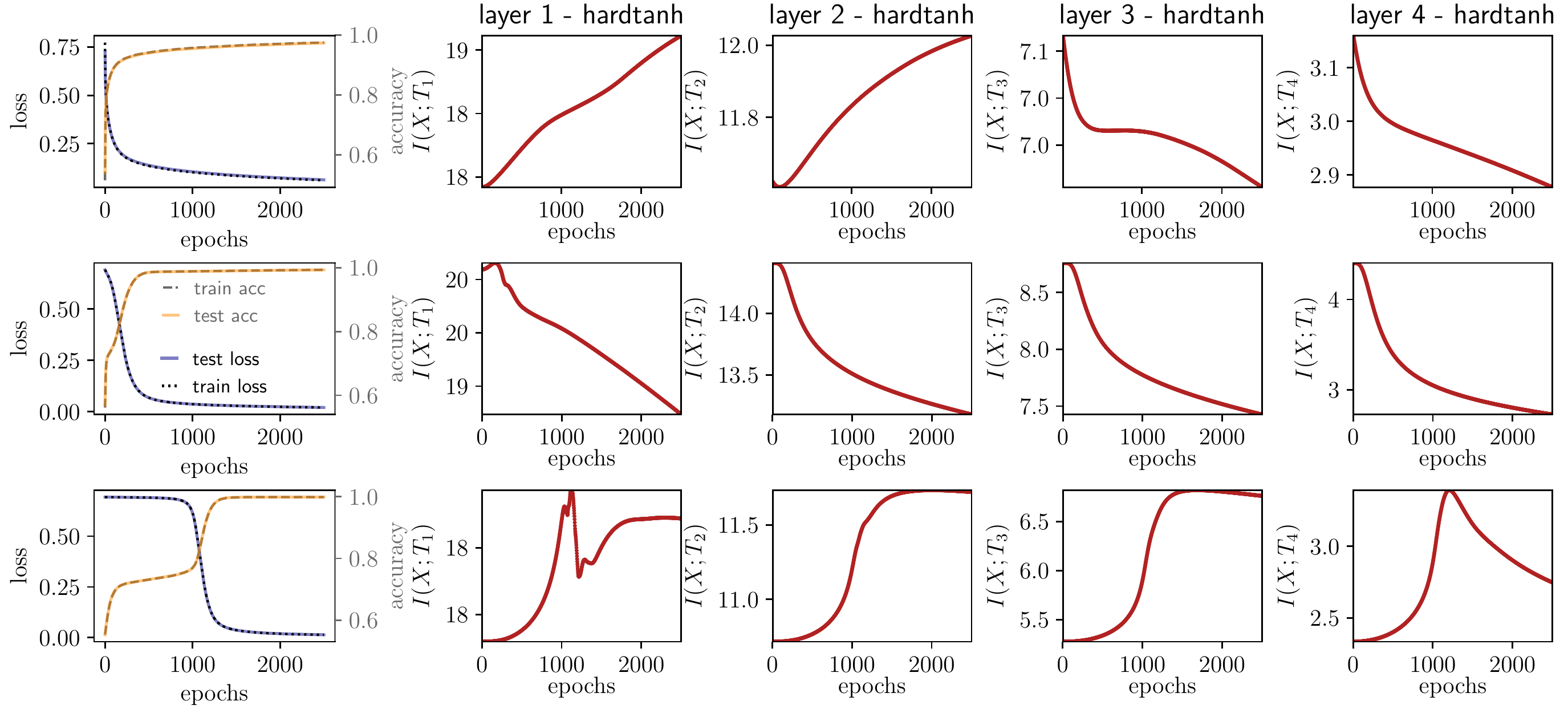}
     \caption{
        Independent run outcome for Figure \ref{fig:coefinit} of the Supplementary Material. Learning and hidden-layers mutual information curves for a classification problem with correlated input data, using a 4-USV hardtanh layers and 1 unconstrained softmax layer, from 3 different initializations. Top: Initial weights at layer $\ell$ of variance $4 / n_{\ell-1}$, best training accuracy 0.999, best test accuracy 0.998. Middle:  Initial weights at layer $\ell$ of variance $1 / n_{\ell-1}$, best train accuracy 0.9935, best test accuracy 0.9933. Bottom:  Initial weights at layer $\ell$ of variance $0.25 / n_{\ell-1}$, best train accuracy 0.974, best test accuracy 0.973. The overall direction of evolution of the mutual information can be flipped by a change in weight initialization without changing drastically final performance in the classification task. 
       \label{fig:coefinitbis}}     
    \end{center}
\end{figure}

These observed differences and non-trivial observations raise numerous
questions, and suggest that within the examined setting, a simple information theory of deep learning
remains out-of-reach.

\appendix
\section{Proofs of some technical propositions}

\subsection{The Nishimori identity}\label{app:nishimori}

\begin{proposition}[Nishimori identity] \label{prop:nishimori}
	Let $(\bX,\bY) \in \R^{n_1} \times \R^{n_2}$ be a couple of random variables. Let $k \geq 1$ and let $\bX^{(1)}, \dots, \bX^{(k)}$ be $k$ i.i.d.\ samples (given $\bY$) from the conditional distribution $P(\bX=\cdot\, | \bY)$, independently of every other random variables. Let us denote $\langle - \rangle$ the expectation operator w.r.t.\ $P(\bX= \cdot\, | \bY)$ and $\E$ the expectation w.r.t.\ $(\bX,\bY)$. Then, for all continuous bounded function $g$ we have
	\begin{align}
	\E \langle g(\bY,\bX^{(1)}, \dots, \bX^{(k)}) \rangle
	=
	\E \langle g(\bY,\bX^{(1)}, \dots, \bX^{(k-1)}, \bX) \rangle\,.	
	\end{align}
\begin{proof}
	This is a simple consequence of Bayes formula.
	It is equivalent to sample the couple $(\bX,\bY)$ according to its joint distribution or to sample first $\bY$ according to its marginal distribution and then to sample $\bX$ conditionally to $\bY$ from its conditional distribution $P(\bX=\cdot\,|\bY)$. Thus the $(k+1)$-tuple $(\bY,\bX^{(1)}, \dots,\bX^{(k)})$ is equal in law to $(\bY,\bX^{(1)},\dots,\bX^{(k-1)},\bX)$.
\end{proof}
\end{proposition}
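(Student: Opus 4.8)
The plan is to obtain the Nishimori identity as an immediate consequence of Bayes' rule, since it is nothing more than the statement that, under the Bayes-optimal (``planted'') posterior, the ground-truth signal $\bX$ is exchangeable with its conditionally i.i.d.\ replicas $\bX^{(1)},\dots,\bX^{(k)}$. First I would unpack the measure carefully: by construction the replicas $\bX^{(1)},\dots,\bX^{(k)}$ are, conditionally on $\bY$, i.i.d.\ with law $P(\bX=\cdot\,\vert\,\bY)$ and are generated from $\bY$ alone, hence conditionally independent of $\bX$ given $\bY$; moreover $\bX$ itself has, conditionally on $\bY$, exactly the law $P(\bX=\cdot\,\vert\,\bY)$. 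Therefore, \emph{conditionally on $\bY$}, both $(\bX^{(1)},\dots,\bX^{(k)})$ and $(\bX^{(1)},\dots,\bX^{(k-1)},\bX)$ are vectors of $k$ i.i.d.\ samples from $P(\bX=\cdot\,\vert\,\bY)$, so they have the same conditional law; appending $\bY$ (which is fixed under the conditioning) to each tuple preserves this equality.

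The second step is simply to integrate. For any bounded continuous $g$ one writes $\E\,\langle g(\bY,\bX^{(1)},\dots,\bX^{(k)})\rangle = \E\big[\,\E[\,g(\bY,\bX^{(1)},\dots,\bX^{(k)})\,\vert\,\bY\,]\,\big]$ by the tower property, replaces the inner conditional expectation using the equality in law established above, and concludes that it equals $\E\,\langle g(\bY,\bX^{(1)},\dots,\bX^{(k-1)},\bX)\rangle$. Equivalently, and more transparently, sampling the pair $(\bX,\bY)$ from its joint law is the same as drawing $\bY$ from its marginal and then $\bX\sim P(\cdot\,\vert\,\bY)$, so that the $(k+1)$-tuple $(\bY,\bX^{(1)},\dots,\bX^{(k)})$ is equal in distribution to $(\bY,\bX^{(1)},\dots,\bX^{(k-1)},\bX)$, which is the identity.

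I do not expect a genuine obstacle here; the only care needed is the conditional-independence bookkeeping (the replicas depend on $\bY$ but not on $\bX$, which is precisely what lets one append the true signal as one extra replica), together with the observation that boundedness of $g$ makes every expectation finite so that the tower property and Fubini manipulations are legitimate. Should one later need the identity for the unbounded overlap functionals arising in the interpolation argument (e.g.\ $\widehat Q$ in Proposition~\ref{prop:cancel_remainder}), the same proof applies verbatim as soon as the relevant second moments are finite, which is guaranteed by the bounded-support hypothesis $(H1)$ on $P_0$ and the boundedness of $\varphi_1,\varphi_2$ in $(H2)$.
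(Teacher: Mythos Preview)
Your proposal is correct and follows exactly the same approach as the paper's proof: both argue that sampling $(\bX,\bY)$ jointly is the same as sampling $\bY$ marginally and then $\bX\sim P(\cdot\,\vert\,\bY)$, so that conditionally on $\bY$ the true signal is just one more i.i.d.\ replica and the two $(k+1)$-tuples are equal in law. Your version is simply a more detailed unpacking of the paper's two-sentence proof, with the tower-property step and the conditional-independence bookkeeping made explicit.
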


\subsection{Limit of the sequence $(\rho_1(n_0))_{n_0 \geq 1}$}\label{app:convergenceRho}
Here we prove Proposition~\ref{prop:convergenceRho}, i.e.\ that the sequence $(\rho_1(n_0))_{n_0 \geq 1}$ converges to $\rho_1 \defeq \E[\varphi_1^2(T, \bA_1)]$, where $T \sim \cN(0,\rho_0)$ and $\bA_1 \sim P_{A_1}$ are independent,
under the hypotheses \ref{hyp:bounded} \ref{hyp:c2} \ref{hyp:phi_gauss2}.

If $\rho_0 = 0$ then $\bX^0 = 0$ almost surely (a.s.) and $\rho_1(n_0) = \E \varphi_1^2(0, \bA_1) = \rho_1$ for every $n_0 \geq 1$, making the result trivial.\newline
From now on, assume $\rho_0 > 0$. Given $\bX^0$, one has $\Big[\frac{\bW{1} \bX^0}{\sqrt{n_0}}\Big]_1 \sim \cN\Big(0,\frac{\Vert \bX^0 \Vert^2}{n_0}\Big)$. Therefore
\begin{equation*}
\rho_1(n_0) \defeq \E\bigg[ \varphi_1^2\bigg( \bigg[\frac{\bW{1} \bX^0}{\sqrt{n_0}}\bigg]_{1}, \bA_1\bigg) \!\bigg]
\!= \E \! \int \!\! dt \, dP_{A_1}(\ba)\varphi_1^2(t,\ba) \frac{\exp{-\frac{t^2}{2\frac{\Vert \bX^0 \Vert^2}{n_0}}}}{\sqrt{2\pi \frac{\Vert \bX^0 \Vert^2}{n_0}}}
\!=\E\bigg[h\left(\frac{\Vert \bX^0 \Vert^2}{n_0}\right)\!\bigg],
\end{equation*}
where $h: v \mapsto \int dt dP_{A_1}(\ba) \varphi_1^2(t, \ba) \frac{1}{\sqrt{2\pi v}}\exp(\nicefrac{-t^2}{2v})$ is a function on $]0,+\infty[$. It is easily shown to be continuous under~\ref{hyp:c2} thanks to the \textit{dominated convergence theorem}.\newline
By the Strong Law of Large Numbers, $\nicefrac{\Vert \bX^0 \Vert^2}{n_0}$ converges a.s.\ to $\rho_0$. Combined with the continuity of $h$, one has
\begin{equation*}
\lim_{n_0 \to +\infty} h\left(\frac{\Vert \bX^0 \Vert^2}{n_0}\right)\,\stackrel{\mathclap{\normalfont\mbox{a.s.}}}{=} \, h(\rho_0) = \rho_1 \:.
\end{equation*}
Noticing that $\left\vert h\left(\nicefrac{\Vert \bX^0 \Vert^2}{n_0}\right) \right\vert \leq \sup \varphi_1^2$, the \textit{dominated convergence theorem} gives
\begin{equation*}
\rho_1(n_0) = \E\bigg[h\left(\frac{\Vert \bX^0 \Vert^2}{n_0}\right)\bigg]
\xrightarrow[n_0 \to +\infty]{} \E\bigg[\lim_{n_0 \to +\infty} h\left(\frac{\Vert \bX^0 \Vert^2}{n_0}\right)\bigg] = \rho_1 \; .
\end{equation*}

\subsection{Properties of the third scalar channel}\label{app:propertiesThirdChannel}

\begin{proposition}\label{propertiesThirdChannel}
Assume $\varphi_1$ is bounded (as it is the case under~\ref{hyp:c2}). Let $V,U \iid \cN(0,1)$ and $\rho_0 \geq 0$, $q_0 \in [0,\rho_0]$.
For any $r \geq 0$, $Y_0^{\prime (r)} = \sqrt{r}\varphi_1(\sqrt{q}\, V + \sqrt{\rho - q} \,U, \bA_1) + Z'$ where $Z' \sim \cN(0,1)$, $\bA_1 \sim P_{A_1}$. The function
\begin{equation*}
	\Psi_{\varphi_1}(q_0, \, \cdot \, ;\rho_0): r \mapsto
	\E \ln \!\int \!\! {\cal D}u P_{\rm out,1}^{(r)}\big( Y_0^{\prime (r)} \big \vert \sqrt{q}\, V + \sqrt{\rho - q} \,u\big). 
\end{equation*}
is twice-differentiable, convex, non-decreasing and $\frac{\rho_1}{2}$-Lipschitz on $\R_+$. Then the function
\begin{equation*}
f: r \mapsto \sup_{q_0 \in [0,\rho_0]} \inf_{r_0 \geq 0}  \psi_{P_0}(r_0) + \alpha_1 \Psi_{\varphi_1}(q_0, r;\rho_0) - \frac{r_0 q_0}{2}  
\end{equation*}
is convex, non-decreasing and $\left(\alpha_1 \frac{\rho_1}{2}\right)$-Lipschitz on $\R_+$.
\begin{proof}
For fixed $\rho_0$ and $q_0$, let $\Psi_{\varphi_1} \equiv \Psi_{\varphi_1}(q_0, \, \cdot \, ;\rho_0)$. Note that
\begin{multline*}
	\Psi_{\varphi_1}(r) = \E\bigg[\int \!d y_0^{\prime} \frac{1}{\sqrt{2\pi}} e^{-\frac{1}{2}(y_0^{\prime} - \sqrt{r}\varphi_1(\sqrt{q_0}V + \sqrt{\rho_0-q_0}U,\bA_1))^2}\\
	\cdot \ln \int \! {\cal D}u \, dP_{A_1}(\ba)
	e^{\sqrt{r}y_0^{\prime} \varphi_1(\sqrt{q_0}V + \sqrt{\rho_0 - q_0} u,\ba)-\frac{r}{2}\varphi_1^2(\sqrt{q_0}V + \sqrt{\rho_0 - q_0}u,\ba)}\bigg] \,.
\end{multline*}
With the properties imposed on $\varphi_1$, all the domination hypotheses to prove the twice-differentiability of $\psi_{\varphi_1}$ are reunited. Denote $\langle - \rangle_r$ the expectation operator w.r.t.\ the joint posterior distribution
\begin{equation*}
dP(u, \ba \vert Y_0^{\prime}, V) = \frac{1}{\cZ(Y_0^{\prime}, V)}{\cal D}u \, dP_{A_1}(\ba) e^{\sqrt{r}y_0^{\prime} \varphi_1(\sqrt{q_0}V + \sqrt{\rho_0 - q_0} u,\ba)-\frac{r}{2}\varphi_1^2(\sqrt{q_0}V + \sqrt{\rho_0 - q_0}u,\ba)} \,,
\end{equation*}
where $\cZ(Y_0^{\prime}, V)$ is a normalization factor. Using Gaussian integration by parts and the Nishimori property (Proposition \ref{prop:nishimori}), one verifies that for all $r \geq 0$
\begin{align*}
\Psi_{\varphi_1}^{\prime}(r)&= \frac{1}{2}\E \big[ \langle \varphi_1(\sqrt{q_0}V + \sqrt{\rho_0 - q_0} u, \ba) \rangle_r^2\big]\, \geq 0\,,\\
\Psi_{\varphi_1}^{''}(r)&= \frac{1}{2}\E\big[ \big(\langle \varphi_1^2(\sqrt{q_0}V + \sqrt{\rho_0 - q_0} u,\ba) \rangle_r - \langle \varphi_1(\sqrt{q_0}V + \sqrt{\rho_0 - q_0} u,\ba) \rangle_r^2\big)^2 \big] \geq 0\,.
\end{align*}
Hence $\Psi_{\varphi_1}$ is non-decreasing and convex. The Lipschitzianity follows simply from
\begin{equation*}
\big\vert \Psi_{\varphi_1}^{\prime}(r) \big\vert
\leq \frac{1}{2} \big\vert \E \langle \varphi_1^2(\sqrt{q_0}V + \sqrt{\rho_0 - q_0} u, \ba) \rangle_r \big\vert
= \frac{1}{2} \big\vert \E[\varphi_1^2(\sqrt{q_0}V + \sqrt{\rho_0 - q_0} U, \bA_1)]\big\vert
= \frac{1}{2} \rho_1 \,.
\end{equation*}
The Nishimori identity was used once again to obtain the penultimate equality. Finally, $f$ properties are direct consequences of its definition as the ``$\sup \inf$'' of convex, non-decreasing, $\frac{\rho_1}{2}$-lipschitzian functions.
\end{proof}
\end{proposition}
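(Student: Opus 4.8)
The plan is to establish the properties of $\Psi_{\varphi_1}(q_0,\cdot;\rho_0)$ first, for a fixed pair $(q_0,\rho_0)$ with $q_0\in[0,\rho_0]$, and then deduce those of $f$ by a short structural argument. Abbreviate $\Psi(r)\defeq\Psi_{\varphi_1}(q_0,r;\rho_0)$ and recall from \eqref{rewritingPsiOutThirdChannel} that $\Psi(r)$ is, up to an additive term affine in $r$, the Bayes-optimal free entropy of the scalar Gaussian-noise channel $Y_0'=\sqrt{r}\,\varphi^\star+Z'$, where $\varphi^\star\defeq\varphi_1(\sqrt{q_0}\,V+\sqrt{\rho_0-q_0}\,U,\bA_1)$ is the planted signal, $V$ is observed side information, the pair $(U,\bA_1)$ is to be inferred, $Z'\sim\cN(0,1)$, and $r$ plays the role of a signal-to-noise ratio; write $\langle-\rangle_r$ for the associated posterior expectation. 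Since $\sqrt{q_0}\,V+\sqrt{\rho_0-q_0}\,U\sim\cN(0,\rho_0)$, the planted signal has second moment $\E[(\varphi^\star)^2]=\rho_1$.

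First I would use the boundedness of $\varphi_1$ (guaranteed by \ref{hyp:c2}) to dominate the integrand defining $\Psi(r)$ together with its first two derivatives, making differentiation under the expectation licit. Then I would compute the $r$-derivative of $\E\ln\mathcal{Z}_r$ (with $\mathcal{Z}_r$ the partition function of the posterior) as a \emph{total} derivative — accounting both for the explicit $r$ in the likelihood and for the $r$-dependence of $Y_0'=\sqrt{r}\,\varphi^\star+Z'$ — then integrate by parts in the Gaussian variable $Z'$ and invoke the Nishimori identity (Proposition~\ref{prop:nishimori}) to collapse the planted-versus-replica cross terms. I expect this to give
\begin{equation*}
\Psi'(r)=\tfrac12\,\E\big[\langle\varphi_1(\sqrt{q_0}\,V+\sqrt{\rho_0-q_0}\,u,\ba)\rangle_r^{\,2}\big]=\tfrac12\big(\rho_1-\mathrm{mmse}(r)\big)\ge 0,
\end{equation*}
where $\mathrm{mmse}(r)=\E[(\varphi^\star-\langle\varphi_1\rangle_r)^2]$. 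Because $\langle\varphi_1\rangle_r^2\le\langle\varphi_1^2\rangle_r$ and $\E[\langle\varphi_1^2\rangle_r]=\E[(\varphi^\star)^2]=\rho_1$ by Nishimori, one gets $0\le\Psi'(r)\le\rho_1/2$, so $\Psi$ is non-decreasing and $\tfrac{\rho_1}{2}$-Lipschitz on $\R_+$. Differentiating once more by the same mechanism — exactly as for the one-layer GLM in \cite{barbier_stoInt,BarbierOneLayerGLM} — should produce the manifestly non-negative expression $\Psi''(r)=\tfrac12\,\E[(\langle\varphi_1^2\rangle_r-\langle\varphi_1\rangle_r^2)^2]\ge 0$, establishing twice-differentiability and convexity (equivalently, $\mathrm{mmse}$ is non-increasing in the SNR).

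For $f$, the observation to exploit is that the $r$-dependence inside the braces lives entirely in the $r_0$-independent summand $\alpha_1\Psi_{\varphi_1}(q_0,r;\rho_0)$, so the inner infimum factorises:
\begin{equation*}
\inf_{r_0\ge 0}\Big\{\psi_{P_0}(r_0)+\alpha_1\Psi_{\varphi_1}(q_0,r;\rho_0)-\tfrac{r_0 q_0}{2}\Big\}=\alpha_1\Psi_{\varphi_1}(q_0,r;\rho_0)+c(q_0),\qquad c(q_0)\defeq\inf_{r_0\ge 0}\Big\{\psi_{P_0}(r_0)-\tfrac{r_0 q_0}{2}\Big\}.
\end{equation*}
For each $q_0\in[0,\rho_0]$ the map $r\mapsto\alpha_1\Psi_{\varphi_1}(q_0,r;\rho_0)+c(q_0)$ is thus convex, non-decreasing and $\alpha_1\tfrac{\rho_1}{2}$-Lipschitz (using $\alpha_1>0$ and the previous paragraph), and $f$ is the pointwise supremum of this family over $q_0$. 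A supremum of convex (resp.\ non-decreasing, resp.\ $L$-Lipschitz) functions inherits each property as long as it is finite, and finiteness I would check through the two-sided bound $0\le\alpha_1\Psi_{\varphi_1}(0,r;\rho_0)\le f(r)\le\alpha_1\tfrac{\rho_1}{2}\,r$, which follows from $c(q_0)\le 0$ (take $r_0=0$), the Lipschitz bound on $\Psi_{\varphi_1}$ together with $\Psi_{\varphi_1}(q_0,0;\rho_0)=0$, and $c(0)=\psi_{P_0}(0)=0$ (as $\psi_{P_0}$ is non-decreasing and vanishes at the origin). This gives all the asserted properties of $f$.

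The main obstacle is the derivative bookkeeping in the second paragraph: since $r$ enters both the channel likelihood and the observation $Y_0'=\sqrt{r}\,\varphi^\star+Z'$, the naive $r$-derivative produces several terms, and one must pair a Gaussian integration by parts in $Z'$ with the Nishimori identity to cancel the cross terms and reach the clean squared-posterior-mean and posterior-variance forms. This is precisely the computation already carried out for the single-layer model in \cite{barbier_stoInt,BarbierOneLayerGLM}, so no genuinely new difficulty should arise; the passage from $\Psi_{\varphi_1}$ to $f$ is then purely structural.
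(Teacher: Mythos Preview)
Your proposal is correct and follows essentially the same route as the paper: both compute $\Psi_{\varphi_1}'(r)=\tfrac12\E[\langle\varphi_1\rangle_r^2]$ and $\Psi_{\varphi_1}''(r)=\tfrac12\E[(\langle\varphi_1^2\rangle_r-\langle\varphi_1\rangle_r^2)^2]$ via Gaussian integration by parts plus the Nishimori identity, then bound $\Psi_{\varphi_1}'$ by $\rho_1/2$ using $\langle\varphi_1\rangle_r^2\le\langle\varphi_1^2\rangle_r$ and Nishimori again. For $f$, the paper simply asserts that the properties are ``direct consequences of its definition as the $\sup\inf$ of convex, non-decreasing, $\tfrac{\rho_1}{2}$-Lipschitz functions''; your explicit factorisation of the inner $\inf_{r_0}$ (pulling out the $r$-independent constant $c(q_0)$) and the finiteness check make that step transparent, but it is the same argument spelled out in more detail rather than a different approach.
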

\section{Derivative of the averaged interpolating free entropy}\label{appendix_interpolation}
This appendix is dedicated to the proof of Proposition~\ref{prop:der_f_t}, i.e.\ to the derivation of the interpolating free entropy $f_{\mathbf{n},\epsilon}(t)$ with respect to the \textit{time} $t$.
First we show that for all $t \in (0,1)$
\begin{multline}\label{eq:der_f_t_raw}	
		\frac{df_{\mathbf{n},\epsilon}(t)}{dt} = 
		-\frac{1}{2} \frac{n_1}{n_0}\E\Bigg[\Bigg\langle
		\Bigg(\frac{1}{n_1}\sum_{\mu=1}^{n_2} u_{Y_{t,\epsilon,\mu}}'( S_{t,\epsilon,\mu} )u_{Y_{t,\epsilon,\mu}}'(s_{t,\epsilon,\mu}) - r_{\epsilon}(t)\Bigg)
		\Big( \widehat{Q}- q_{\epsilon}(t)\Big)
		\Bigg\rangle_{\! \mathbf{n},t,\epsilon}\,\Bigg]\\
		+ \frac{n_1}{n_0} \frac{r_{\epsilon}(t)}{2} \big(q_{\epsilon}(t)-\rho_1(n_0)\big)
		-\frac{A_{\mathbf{n},\epsilon}(t)}{2}\,,
\end{multline}
where
\begin{align}\label{An}
A_{\mathbf{n},\epsilon}(t) \defeq
		\E\Bigg[\sum_{\mu=1}^{n_2}
\frac{P_{\rm out,2}''(Y_{t,\epsilon,\mu} | S_{t,\epsilon,\mu})}{P_{\rm out,2}(Y_{t,\epsilon,\mu} | S_{t,\epsilon,\mu})} 
\Bigg(\frac{\big\Vert\bX^1\big\Vert^2}{n_1} - \rho_1(n_0)
\Bigg)
\frac{\ln \cZ_{\mathbf{n},t,\epsilon}}{n_0} \Bigg]\,.
\end{align}
$P_{\rm out,2}'(y|x)$ and $P_{\rm out,2}''(y|x)$ denote the first and second $x$-derivatives.
Once this is done, we prove that $A_{\mathbf{n},\epsilon}(t)$ goes to $0$ uniformly in $t \in [0,1]$ as $n_0,n_1,n_2 \to +\infty$ (while $\nicefrac{n_1}{n_0} \to \alpha_1$, $\nicefrac{n_2}{n_1} \to \alpha_2$), thus proving Proposition~\ref{prop:der_f_t}.
\subsection{Computing the derivative: proof of\eqref{eq:der_f_t_raw}} \label{app:computationDerivative}
Recall definition \eqref{ft}. Once written as a function of the interpolating Hamiltonian \eqref{interpolating-ham}, it becomes
	\begin{multline}
	f_{\mathbf{n},\epsilon}(t) \!= \!\frac{1}{n_0} \E_{\bW{1},\bW{2},\bV}\bigg[\int \!\! d\bY d\bY' dP_0(\bX^0)dP_{A_1}(\bA_1) {\cal D}\bU  (2\pi)^{-\frac{n_1}{2}} e^{-\cH_{t,\epsilon}(\bX^0,\bA_1,\bU;\bY,\bY',\bW{1},\bW{2},\bV)}\\
	\cdot \ln \int dP_0(\bx) dP_{A_1}(\ba_1){\cal D}\bu\, e^{-\cH_{t,\epsilon}(\bx,\ba_1,\bu;\bY,\bY',\bW{1},\bW{2},\bV)}\bigg].
\end{multline}
Here, and from now on, one drops the dependence on $(t,\epsilon)$ when writing $\bY$ and $\bY'$ as they are dummy variables on which the integration is performed.
We will need the Hamiltonian $t$-derivative ${\cal H}'_t$ given by
\begin{multline}\label{tDerivativeHamiltonian}
\cH_{t,\epsilon}'(\bX^0, \bA_1, \bU;\bY,\bY',\bW{1},\bW{2},\bV)
	= - \sum_{\mu=1}^{n_2}\frac{dS_{t,\epsilon,\mu}}{dt} u'_{Y_\mu}( S_{t,\epsilon,\mu})\\
	- \frac{1}{2} \frac{r_{\epsilon}(t)}{\sqrt{R_1(t,\epsilon)}} \sum_{i=1}^{n_1}
			\varphi_1\bigg( \bigg[\frac{\bW{1} \bX^0}{\sqrt{n_0}}\bigg]_{i}, \bA_{1,i}\bigg)
			\bigg(Y'_i  - \sqrt{R_1(t,\epsilon)} \varphi_1\bigg(  \bigg[\frac{\bW{1} \bX^0}{\sqrt{n_0}}\bigg]_{i}, \bA_{1,i}\bigg)\bigg)\,.
\end{multline}
The derivative of the interpolating free entropy for $0 < t < 1$ thus reads
\begin{multline}
\frac{df_{\mathbf{n},\epsilon}(t)}{dt} 
= -\underbrace{\frac{1}{n_0}
	\E \big[\cH_{t,\epsilon}'(\bX^0,\bA_1,\bU;\bY,\bY',\bW{1},\bW{2},\bV)\ln \cZ_{\mathbf{n},t,\epsilon}\big]}_{T_1}\\
	- \underbrace{\frac{1}{n_0} \E \Big[\Big\langle \cH_{t,\epsilon}'(\bx,\ba_1,\bu;\bY,\bY',\bW{1},\bW{2},\bV) \Big\rangle_{\! \mathbf{n},t,\epsilon} \,\Big] }_{T_2}\,,
		\label{formulaWithT1T2}
\end{multline}
where $\cZ_{\mathbf{n},t,\epsilon} \equiv \cZ_{\mathbf{n},t,\epsilon}(\bY,\bY',\bW{1}, \bW{2},\bV)$ is defined in \eqref{Zt}.
In the remaining part of this subsection~\ref{app:computationDerivative}, to lighten notations,
the second argument of the function $\varphi_1$ will be omitted (except in a few occasions). Namely, we will write for $i=1 \dots n_1$
\begin{equation*}
\varphi_1\Big(  \Big[\frac{\bW{1} \bX^0}{\sqrt{n_0}}\Big]_{i}\Big) \equiv \varphi_1\Big(  \Big[\frac{\bW{1} \bX^0}{\sqrt{n_0}}\Big]_{i}, \bA_{1,i}\Big) \, , 
\; \varphi_1\Big(  \Big[\frac{\bW{1} \bx}{\sqrt{n_0}}\Big]_{i}\Big) \equiv \varphi_1\Big(  \Big[\frac{\bW{1} \bx}{\sqrt{n_0}}\Big]_{i}, \ba_{1,i}\Big) \,.
\end{equation*}
It does not hurt the understanding of the derivation of \eqref{eq:der_f_t_raw} as the latter relies on integration by parts w.r.t.\ the Gaussian random variables $\bW{1}$, $\bW{2}$, $\bV$, $\bU$, $\bZ'$.\\

Let first compute $T_1$. For $1 \leq \mu \leq n_2$ one has from \eqref{defS_t_mu}
\begin{multline}\label{firstTermT1}
-\E \bigg[\frac{dS_{t,\epsilon,\mu}}{dt} u'_{Y_\mu}(S_{t,\epsilon,\mu})  \ln \cZ_{\mathbf{n},t,\epsilon}  \bigg]
	=\frac{1}{2}\E \bigg[
			\frac{1}{\sqrt{n_1 (1-t)}}\bigg[\bW{2} \varphi_1\bigg(\frac{\bW{1} \bX^0}{\sqrt{n_0}}\bigg)\bigg]_{\mu} u'_{Y_\mu}(S_{t,\epsilon,\mu}) \ln \cZ_{\mathbf{n},t,\epsilon}
	\bigg]\\
	- \frac{1}{2}\E\Bigg[\Bigg(\frac{q_{\epsilon}(t)}{ \sqrt{R_2(t,\epsilon)}} V_{\mu}
	+ \frac{\rho_1(n_0) - q_{\epsilon}(t)}{\sqrt{\rho_1(n_0)t + 2s_n - R_2(t,\epsilon)}} U_{\mu}\Bigg) u'_{Y_\mu}(S_{t,\epsilon,\mu}) \ln \cZ_{\mathbf{n},t,\epsilon}  \Bigg]\,.
\end{multline}
By Gaussian integration by parts w.r.t $(\bW{2})_{\mu i}$, $1 \leq i \leq n_1$, the first expectation becomes
\begin{align}
&\frac{1}{\sqrt{n_1(1-t)}}\E\bigg[
			\bigg[\bW{2} \varphi_1\bigg(\frac{\bW{1} \bX^0}{\sqrt{n_0}}\bigg)\bigg]_{\mu}
			u_{Y_{\mu}}' ( S_{t,\epsilon,\mu} )
			\ln \cZ_{\mathbf{n},t,\epsilon}
		\bigg]&\nn
&\quad =\frac{1}{\sqrt{n_1 (1-t)}}\sum_{i=1}^{n_1}\E \bigg[\int d\bY d\bY' (2\pi)^{-\frac{n_1}{2}}
	e^{-\cH_{t,\epsilon}(\bX^0,\bA_1,\bU;\bY,\bY',\bW{1},\bW{2},\bV)}\nn
&\qquad\qquad\qquad\qquad\qquad\qquad \cdot (\bW{2})_{\mu i} \varphi_1\bigg(\bigg[\frac{\bW{1} \bX^0}{\sqrt{n_0}}\bigg]_i\bigg)
			u_{Y_{\mu}}' (S_{t,\epsilon,\mu}) \ln \cZ_{\mathbf{n},t,\epsilon} \bigg]\nn
&\quad = \frac{1}{n_1} \sum_{i=1}^{n_1}
			\E\bigg[
				\varphi_1^2\bigg(\bigg[\frac{\bW{1} \bX^0}{\sqrt{n_0}}\bigg]_i\bigg)
				\Big( u_{Y_{\mu}}''(S_{t,\epsilon,\mu}) + u_{Y_{\mu}}' (S_{t,\epsilon,\mu})^2 \Big)
				\ln \cZ_{\mathbf{n},t,\epsilon}
			\bigg]\nn
&\qquad\qquad\qquad\qquad + \frac{1}{n_1} \sum_{i=1}^{n_1} \E\Bigg[\bigg\langle
				\varphi_1\bigg(\bigg[\frac{\bW{1} \bX^0}{\sqrt{n_0}}\bigg]_i\bigg)
				\varphi_1\bigg(\bigg[\frac{\bW{1} \bx}{\sqrt{n_0}}\bigg]_i\bigg)
				u_{Y_{\mu}}' (S_{t,\epsilon,\mu}) u_{Y_{\mu}}' (s_{t,\epsilon,\mu}) \bigg\rangle_{\!\! \mathbf{n},t,\epsilon}\,\Bigg]\nonumber\\
&\quad=\E\Bigg[\frac{\big\Vert\bX^1\big\Vert^2}{n_1}
			\frac{P_{\rm out,2}''(Y_{\mu} | S_{t,\epsilon,\mu})}{P_{\rm out,2}(Y_{\mu} | S_{t,\epsilon,\mu})}
			\ln \cZ_{\mathbf{n},t,\epsilon} \Bigg]
			+ \E\Big[\big\langle \widehat{Q} \:
			u_{Y_{\mu}}' ( S_{t,\epsilon,\mu} )
			u_{Y_{\mu}}' ( s_{t,\epsilon,\mu} )
			\big\rangle_{\! \mathbf{n},t,\epsilon}\Big].\label{eq:compA1}
\end{align}
In the last equality we used the identity $u_{Y_\mu}'' ( x ) + u_{Y_\mu}' ( x )^2 = \frac{P_{\rm out,2}''(Y_{\mu} | x)}{P_{\rm out,2}(Y_{\mu} | x)}$ and the definition of the overlap $\widehat{Q}$.
Now we turn our attention to the second expectation in the right hand side of \eqref{firstTermT1}. Using again Gaussian integration by parts, but this time w.r.t $V_\mu, U_\mu \iid {\cal N}(0,1)$, one similarly obtains
\begin{align}
&\E \Bigg[
			\Bigg( \frac{q_{\epsilon}(t)}{ \sqrt{R_2(t,\epsilon)}} V_{\mu}
			+ \frac{\rho_1(n_0) - q_{\epsilon}(t)}{\sqrt{\rho_1(n_0)t + 2s_n - R_2(t,\epsilon)}} U_{\mu}\Bigg)
			u_{Y_{\mu}}' ( S_{t,\epsilon,\mu} )
			\ln \cZ_{\mathbf{n},t,\epsilon} \Bigg]\nn
&\qquad\qquad= \E \Bigg[\int d\bY d\bY' (2\pi)^{-\frac{n_1}{2}} e^{-\cH_{t,\epsilon}(\bX^0,\bU;\bY,\bY',\bW{1},\bW{2},\bV)}\nn
&\qquad\qquad\qquad\qquad\cdot \Bigg( \frac{q_{\epsilon}(t)}{ \sqrt{R_2(t,\epsilon)}} V_{\mu}
+ \frac{\rho_1(n_0) - q_{\epsilon}(t)}{\sqrt{\rho_1(n_0)t + 2s_n - R_2(t,\epsilon)}} U_{\mu}\Bigg)
			u_{Y_{\mu}}' ( S_{t,\epsilon,\mu} )
		\ln \cZ_{\mathbf{n},t,\epsilon}  \Bigg]
		 \nn
&\qquad\qquad = \E\Bigg[
			\rho_1(n_0) \frac{P_{\rm out,2}''(Y_\mu | S_{t,\epsilon,\mu})}{P_{\rm out,2}(Y_\mu | S_{t,\epsilon,\mu})} 
			\ln \cZ_{\mathbf{n},t,\epsilon} \Bigg]
		+\E \Big\langle q_{\epsilon}(t) u'_{Y_\mu}(S_{t,\epsilon,\mu}) u'_{Y_\mu}(s_{t,\epsilon,\mu})\Big\rangle_{\! \mathbf{n},t,\epsilon}\,.\label{eq:compA2}
\end{align}
Combining equations \eqref{firstTermT1}, \eqref{eq:compA1} and \eqref{eq:compA2} together gives us
\begin{multline}\label{finalFormulaFirstTermT1}
-\E \bigg[\frac{dS_{t,\epsilon,\mu}}{dt} u'_{Y_\mu}(S_{t,\epsilon,\mu}) \ln \cZ_{\mathbf{n},t,\epsilon} \bigg]
	 = \frac{1}{2}\E\Bigg[
			\frac{P_{\rm out,2}''(Y_\mu | S_{t,\epsilon,\mu})}{P_{\rm out,2}(Y_\mu | S_{t,\epsilon,\mu})} 
			\Bigg(
				\frac{\big\Vert\bX^1\big\Vert^2}{n_1} - \rho_1(n_0)
			\Bigg)
			\ln \cZ_{\mathbf{n},t,\epsilon} \Bigg]\\
+\frac{1}{2} \E\Big[\Big\langle
			\!\big( \widehat{Q}- q_{\epsilon}(t)\big)
			u_{Y_{\mu}}'( S_{t,\epsilon,\mu} )
			u_{Y_{\mu}}'( s_{t,\epsilon,\mu} )\Big\rangle_{\! \mathbf{n},t,\epsilon}\,\Big] \,.
\end{multline}
It remains to put the term $\E\big[\varphi_1\big(\big[\nicefrac{\bW{1} \bX^0}{\sqrt{n_0}}\big]_{i}\Big) \big(Y'_i  - \sqrt{R_1(t,\epsilon)} \varphi_1\big(\big[\nicefrac{\bW{1} \bX^0}{\sqrt{n_0}}\big]_{i}\big)\big)\ln \cZ_{\mathbf{n},t,\epsilon}\big]$ in a nicer form, as seen from \eqref{tDerivativeHamiltonian} and \eqref{formulaWithT1T2}.
Once again it is achieved by an integration by parts, this time w.r.t.\ the standard Gaussian random variable $Z_i' = Y_i'-\sqrt{R_1(t,\epsilon)}\varphi_1\big(\big[\nicefrac{\bW{1} \bX^0}{\sqrt{n_0}}\big]_i\big)$. It comes
\begin{flalign}
&\E\bigg[\varphi_1\bigg( \bigg[\frac{\bW{1} \bX^0}{\sqrt{n_0}}\bigg]_{i}\bigg)
	\bigg(Y'_i  - \sqrt{R_1(t,\epsilon)} \varphi_1\bigg( \bigg[\frac{\bW{1} \bX^0}{\sqrt{n_0}}\bigg]_{i}\bigg)\bigg)
	\ln \cZ_{\mathbf{n},t,\epsilon} \bigg]&\nn
&\qquad\qquad=\E\bigg[\varphi_1\bigg( \bigg[\frac{\bW{1} \bX^0}{\sqrt{n_0}}\bigg]_{i}\bigg) Z'_i \ln \cZ_{\mathbf{n},t,\epsilon} \bigg]\nn
&\qquad\qquad=\E \left[\varphi_1\bigg( \bigg[\frac{\bW{1} \bX^0}{\sqrt{n_0}}\bigg]_{i}\bigg) 
		Z_i' \ln{\int dP_0(\bx) dP_{A_1}(\ba_1){\cal D}\bu\, e^{-\cH_{t,\epsilon}(\bx,\ba_1,\bu;\bY,\bY',\bW{1},\bW{2},\bV)}}\right]\nn
&\qquad\qquad = -\E \Bigg[\varphi_1\bigg( \bigg[\frac{\bW{1} \bX^0}{\sqrt{n_0}}\bigg]_{i}\bigg)
	\bigg\langle \sqrt{R_1(t,\epsilon)}\bigg(
	\varphi_1\bigg( \bigg[\frac{\bW{1} \bX^0}{\sqrt{n_0}}\bigg]_{i}\bigg)
	- \varphi_1\bigg( \bigg[\frac{\bW{1} \bx}{\sqrt{n_0}}\bigg]_{i}\bigg)\bigg) +Z_i' \bigg\rangle_{\! \mathbf{n},t,\epsilon} \,\Bigg]\nn
&\qquad\qquad = -\sqrt{R_1(t,\epsilon)} \bigg(\rho_1(n_0) - \E \bigg\langle
	\varphi_1\bigg( \bigg[\frac{\bW{1} \bX^0}{\sqrt{n_0}}\bigg]_{i}\bigg)
	\varphi_1\bigg( \bigg[\frac{\bW{1} \bx}{\sqrt{n_0}}\bigg]_{i}\bigg) \bigg\rangle_{\! \mathbf{n},t,\epsilon} \,\bigg) \,.
\end{flalign}
After taking the sum over $i \in \{1,\dots,n_1\}$, we get
\begin{multline}
-\frac{1}{2}\frac{r_{\epsilon}(t)}{\sqrt{R_1(t,\epsilon)}}
\E\Bigg[ \frac{1}{n_0} \sum_{i=1}^{n_1}
		\varphi_1\bigg( \bigg[\frac{\bW{1} \bX^0}{\sqrt{n_0}}\bigg]_{i}\bigg)
		\bigg(Y'_i  - \sqrt{R_1(t,\epsilon)} \varphi_1\bigg( \bigg[\frac{\bW{1} \bX^0}{\sqrt{n_0}}\bigg]_{i}\bigg)\bigg)
		\ln \cZ_{\! \mathbf{n},t,\epsilon} \Bigg]\\
	= \frac{n_1}{n_0} \frac{r_{\epsilon}(t)}{2} \Big(\rho_1(n_0)-
		\E\Big[\big\langle \widehat{Q}\big\rangle_{\! \mathbf{n},t,\epsilon}\,\Big]\Big)
		\,.
\end{multline}
Therefore, for all $t \in (0,1)$,
\begin{multline}
T_1 = 
\frac{1}{2}\E\Bigg[\sum_{\mu=1}^{n_2}
\frac{P_{\rm out,2}''(Y_\mu | S_{t,\epsilon,\mu})}{P_{\rm out,2}(Y_\mu | S_{t,\epsilon,\mu})} 
\Bigg(\frac{\big\Vert\bX^1\big\Vert^2}{n_1} - \rho_1(n_0)
\Bigg)
\frac{\ln \cZ_{\mathbf{n},t,\epsilon}}{n_0} \Bigg]\\
+\frac{1}{2} \frac{n_1}{n_0}\E\Bigg[\Bigg\langle
\Bigg(\frac{1}{n_1}\sum_{\mu=1}^{n_2} u_{Y_{\mu}}'( S_{t,\epsilon,\mu} )u_{Y_{\mu}}'(s_{t,\epsilon,\mu}) - r_{\epsilon}(t)\Bigg)
\Big( \widehat{Q}- q_{\epsilon}(t)\Big)
\Bigg\rangle_{\! \mathbf{n},t,\epsilon}\,\Bigg]\\
+ \frac{n_1}{n_0} \frac{r_{\epsilon}(t)}{2} \big(\rho_1(n_0)- q_{\epsilon}(t)\big).
\end{multline}	

To obtain \eqref{eq:der_f_t_raw}, we have to show that $T_2$ is zero. The Nishimori identity (see Proposition~\ref{prop:nishimori}) says
\begin{multline} \label{NishimoriT2}
T_2 = \frac{1}{n_0} \E\Big[\big\langle \cH_{t,\epsilon}'(\bx,\ba_1,\bu;\bY,\bY',\bW{1},\bW{2},\bV) \big\rangle_{\! \mathbf{n},t,\epsilon}\Big]\\
		= \frac{1}{n_0} \E\big[\cH_{t,\epsilon}'(\bX^0,\bA_1,\bU;\bY,\bY',\bW{1},\bW{2},\bV)\big]. 
\end{multline}
From \eqref{tDerivativeHamiltonian} it directly comes
\begin{flalign}
&\E\big[\cH_{t,\epsilon}'(\bX^0, \bA_1, \bU;\bY,\bY',\bW{1},\bW{2},\bV)\big]& \nn
&\qquad\qquad\qquad=- \sum_{\mu=1}^{n_2}
	\E\bigg[\frac{dS_{t,\epsilon,\mu}}{dt} u'_{Y_\mu}(S_{t,\epsilon,\mu})\bigg]
	- \frac{1}{2} \frac{r_{\epsilon}(t)}{\sqrt{R_1(t,\epsilon)}} \sum_{i=1}^{n_1}
	\E\bigg[\varphi_1\bigg( \bigg[\frac{\bW{1} \bX^0}{\sqrt{n_0}}\bigg]_{i}, \bA_{1,i}\bigg) Z_i^{\prime}\bigg]\nn
&\qquad\qquad\qquad= - \sum_{\mu=1}^{n_2} \E\bigg[\frac{dS_{t,\epsilon,\mu}}{dt} u'_{Y_\mu}(S_{t,\epsilon,\mu})\bigg] \,.\label{rewriting_T2}
\end{flalign}
Performing the same integration by parts than the ones leading to \eqref{finalFormulaFirstTermT1}, we obtain
\begin{equation}\label{T2isZero}
- \sum_{\mu=1}^{n_2} \E\bigg[\frac{dS_{t,\epsilon,\mu}}{dt} u'_{Y_\mu}( S_{t,\epsilon,\mu})\bigg]
	=\frac{1}{2}\E\Bigg[
		\sum_{\mu=1}^{n_2} \frac{P_{\rm out,2}''(Y_\mu | S_{t,\epsilon,\mu})}{P_{\rm out,2}(Y_\mu | S_{t,\epsilon,\mu})} 
		\Bigg(
		\frac{\big\Vert \bX^1 \big\Vert^2}{n_1} - \rho_1(n_0)
		\Bigg)\Bigg]
	= 0\,.
\end{equation}
The last equality follows from a computation in the next section, see \eqref{towerPropertyGivesZero}.
The combination of ~\eqref{NishimoriT2},~\eqref{rewriting_T2} and~\eqref{T2isZero} shows that $T_2 = 0$.
	
\subsection{Proof that $A_{\mathbf{n},\epsilon}(t)$ vanishes uniformly as $n_0 \to +\infty$}\label{app:uniformVanishingA}
The last step to prove Proposition~\ref{prop:der_f_t} is to show that $A_{\mathbf{n},\epsilon}(t)$ -- see definition \eqref{An} -- vanishes uniformly in ${t \in [0,1]}$ and $\epsilon$ as $n_0 \to +\infty$, under conditions~\ref{hyp:bounded}-\ref{hyp:c2}-\ref{hyp:phi_gauss2}. First we show that
\begin{align}		
		f_{\mathbf{n},\epsilon}(t) \cdot \E\Bigg[
			\sum_{\mu=1}^{n_2} \frac{P_{\rm out,2}''(Y_{t,\epsilon,\mu} | S_{t,\epsilon,\mu})}{P_{\rm out,2}(Y_{t,\epsilon,\mu} | S_{t,\epsilon,\mu})} 
			\Bigg(\frac{\big\Vert\bX^1\big\Vert^2}{n_1} - \rho_1(n_0)\Bigg)			
		\Bigg] = 0\,. \label{115}
\end{align}		
Once this is done, we use the fact that $\nicefrac{\ln \cZ_{\mathbf{n},t,\epsilon}}{n_0}$ concentrates around $f_{\mathbf{n},\epsilon}(t)$ to prove that $A_{\mathbf{n},\epsilon}(t)$ vanishes uniformly.

Start by noticing the simple fact that $\int P_{\rm out,2}''(y|s)dy = 0$ for all $s \in \R$. Consequently, for ${\mu \in \{1 ,\dots, n_2 \}}$,
\begin{align}
	\E \bigg[
		\frac{P_{\rm out,2}''(Y_{t,\epsilon,\mu} | S_{t,\epsilon,\mu})}{P_{\rm out,2}(Y_{t,\epsilon,\mu} | S_{t,\epsilon,\mu})}		
		\, \bigg| \, \bX^1, \bS_{t,\epsilon} \bigg]
		= \int dY_\mu P_{\rm out,2}''(Y_\mu|S_{t,\epsilon,\mu}) = 0\,. \label{117}
\end{align}
The ``tower property'' of the conditional expectation then gives
\begin{multline}
\E\Bigg[
	\sum_{\mu=1}^{n_2} \frac{P_{\rm out,2}''(Y_{t,\epsilon,\mu} | S_{t,\epsilon,\mu})}{P_{\rm out,2}(Y_{t,\epsilon,\mu} | S_{t,\epsilon,\mu})} 
	\Bigg(\frac{\big\Vert\bX^1\big\Vert^2}{n_1} - \rho_1(n_0)\Bigg)\Bigg]\\
	= \E\Bigg[
	\E \Bigg[ \sum_{\mu=1}^{n_2}\frac{P_{\rm out,2}''(Y_{t,\epsilon,\mu} | S_{t,\epsilon,\mu})}{P_{\rm out,2}(Y_{t,\epsilon,\mu} | S_{t,\epsilon,\mu})}		
	\, \Bigg| \, \bX^1, \bS_{t,\epsilon} \Bigg]
	\Bigg(\frac{\big\Vert\bX^1\big\Vert^2}{n_1} - \rho_1(n_0)\Bigg)\Bigg]
 	= 0 \,. \label{towerPropertyGivesZero}
\end{multline}		
This implies \eqref{115}. Using successively \eqref{115} and the Cauchy-Schwarz inequality, we have
\begin{align}
\big\vert A_{\mathbf{n},\epsilon}(t) \big\vert 
&= \Bigg\vert \E\Bigg[
	\sum_{\mu=1}^{n_2} \frac{P_{\rm out,2}''(Y_{t,\epsilon,\mu} | S_{t,\epsilon,\mu})}{P_{\rm out,2}(Y_{t,\epsilon,\mu} | S_{t,\epsilon,\mu})} 
	\Bigg(\frac{\big\Vert\bX^1\big\Vert^2}{n_1} - \rho_1(n_0)\Bigg)
	\bigg(\frac{\ln \cZ_{\mathbf{n},t,\epsilon}}{n_0}
	-f_{\mathbf{n},\epsilon}(t)\bigg)\Bigg]\Bigg\vert\nn
&\leq \E\Bigg[
			\Bigg(\sum_{\mu=1}^{n_2} \frac{P_{\rm out,2}''(Y_{t,\epsilon,\mu} | S_{t,\epsilon,\mu})}{P_{\rm out,2}(Y_{t,\epsilon,\mu} | S_{t,\epsilon,\mu})}\Bigg)^{\!\! 2}
			\Bigg(\frac{\big\Vert\bX^1\big\Vert^2}{n_1} - \rho_1(n_0)\Bigg)^{\!\! 2}\,
		\Bigg]^{\frac{1}{2}}\!\!
		\cdot\,\E\Bigg[\bigg(\frac{\ln \cZ_{\mathbf{n},t,\epsilon}}{n_0}
-f_{\mathbf{n},\epsilon}(t)\bigg)^{\!\! 2} \,\Bigg]^{\frac{1}{2}}.\label{boundWithCS}
\end{align}
Making use of the ``tower property'' of conditional expectation once more, one obtains
\begin{multline}
\E\Bigg[
\Bigg(\sum_{\mu=1}^{n_2} \frac{P_{\rm out,2}''(Y_{t,\epsilon,\mu} | S_{t,\epsilon,\mu})}{P_{\rm out,2}(Y_{t,\epsilon,\mu} | S_{t,\epsilon,\mu})}\Bigg)^{\!\! 2}
\Bigg(\frac{\big\Vert\bX^1\big\Vert^2}{n_1} - \rho_1(n_0)\Bigg)^{\!\! 2}\,
\Bigg]\\
= \E \Bigg[
\E \Bigg[
\Bigg(\sum_{\mu=1}^{n_2} \frac{P_{\rm out,2}''(Y_{t,\epsilon,\mu} | S_{t,\epsilon,\mu})}{P_{\rm out,2}(Y_{t,\epsilon,\mu} | S_{t,\epsilon,\mu})}\Bigg)^{\!\! 2}
\, \Bigg| \, \bX^1, \bS_{t,\epsilon}
\Bigg] \cdot \Bigg(\frac{\big\Vert\bX^1\big\Vert^2}{n_1} - \rho_1(n_0)\Bigg)^{\!\! 2} \,
	\Bigg]. \label{eq:tower2}
	\end{multline}
Conditionally on $\bS_{t,\epsilon}$, the random variables $\Big(\frac{P_{\rm out,2}''(Y_{t,\epsilon,\mu} | S_{t,\epsilon,\mu})}{P_{\rm out,2}(Y_{t,\epsilon,\mu} | S_{t,\epsilon,\mu})}\Big)_{1 \leq \mu \leq n_2}$ are i.i.d.\ and centered. Therefore
	\begin{align}
		\E \Bigg[
			\Bigg(\sum_{\mu=1}^{n_2} \frac{P_{\rm out,2}''(Y_{t,\epsilon,\mu} | S_{t,\epsilon,\mu})}{P_{\rm out,2}(Y_{t,\epsilon,\mu} | S_{t,\epsilon,\mu})}\Bigg)^{\!\! 2}
			\, \Bigg| \, \bX^1, \bS_{t,\epsilon}
		\Bigg]
		&=
		\E \Bigg[
			\Bigg(\sum_{\mu=1}^{n_2} \frac{P_{\rm out,2}''(Y_{t,\epsilon,\mu} | S_{t,\epsilon,\mu})}{P_{\rm out,2}(Y_{t,\epsilon,\mu} | S_{t,\epsilon,\mu})}\Bigg)^{\!\! 2}
			\, \Bigg| \, \bS_{t,\epsilon}
		\Bigg]\nn
		&= n_2 \, \E \Bigg[
			\Bigg(\frac{P_{\rm out,2}''(Y_{t,\epsilon,1} | S_{t,\epsilon,1})}{P_{\rm out,2}(Y_{t,\epsilon,1} | S_{t,\epsilon,1})}\Bigg)^{\!\! 2}
			\, \Bigg| \, \bS_{t,\epsilon}
		\Bigg]. \label{eq:var_simp}
	\end{align}
Under condition~\ref{hyp:c2}, it is not difficult to show that there exists a constant $C > 0$ such that
	\begin{align}
		\E \bigg[ \bigg(\frac{P_{\rm out,2}''(Y_{t,\epsilon,1} | S_{t,\epsilon,1})}{P_{\rm out,2}(Y_{t,\epsilon,1} | S_{t,\epsilon,1})}\bigg)^2 \, \bigg| \, \bS_{t,\epsilon} \bigg] \leq C\,.
		\label{eq:borne_ddp}
	\end{align}
Combining now \eqref{eq:borne_ddp}, \eqref{eq:var_simp} and \eqref{eq:tower2} we obtain that
	\begin{equation}
	\E \Bigg[
		\Bigg(\sum_{\mu=1}^{n_2} \frac{P_{\rm out,2}''(Y_{t,\epsilon,\mu} | S_{t,\epsilon,\mu})}{P_{\rm out,2}(Y_{t,\epsilon,\mu} | S_{t,\epsilon,\mu})}\Bigg)^{\!\! 2}
		\Bigg(\frac{\big\Vert\bX^1\big\Vert^2}{n_1} - \rho_1(n_0)\Bigg)^{\!\! 2}\,
		\Bigg]
	\leq C \cdot \frac{n_2}{n_1} \cdot
		\frac{1}{n_1} {\mathbb{V}\mathrm{ar}}\big(\big\Vert \bX^1 \big\Vert^2\big)\,. \label{boundOnCauchySchwartz}
	\end{equation}
It remains to prove that $\nicefrac{{\mathbb{V}\mathrm{ar}}\big(\big\Vert \bX^1 \big\Vert^2\big)}{n_1}$
is bounded, where we recall that $\bX^1 \defeq \varphi_1^2\Big( \frac{\bW{1} \bX^0}{\sqrt{n_0}}, \bA_{1} \Big)$. To do so, we use the identity
\begin{equation}
\frac{1}{n_1} {\mathbb{V}\mathrm{ar}}\big(\big\Vert \bX^1 \big\Vert^2\big)
= \frac{1}{n_1} \E\Big[{\mathbb{V}\mathrm{ar}}\big(\big\Vert \bX^1 \big\Vert^2 \big\vert \bX^0\big)\Big]
+ \frac{1}{n_1} {\mathbb{V}\mathrm{ar}}\Big(\E\big[\big\Vert \bX^1 \big\Vert^2 \big\vert \bX^0 \big]\Big)
\end{equation}
and show that both terms in the right hand side are bounded.\newline
First, the term $\E\big[{\mathbb{V}\mathrm{ar}}(\Vert \bX^1 \Vert^2| \bX^0)\big]$. Conditionally on $\bX^0$, the random variables $(X_i^1)_{1 \leq i \leq n_1}$ are i.i.d.\ and
\begin{equation}
{\mathbb{V}\mathrm{ar}}\big(\big\Vert \bX^1 \big\Vert^2 \big\vert \bX^0 \big)
= \sum_{i=1}^{n_1} {\mathbb{V}\mathrm{ar}}\big(\big(X_i^1\big)^2 \big\vert \bX^0 \big)
= n_1 \, {\mathbb{V}\mathrm{ar}}\big(\big(X_1^1\big)^2 \big\vert \bX^0 \big) \, .
\end{equation}
It follows that
\begin{equation}
\frac{1}{n_1} \E\big[{\mathbb{V}\mathrm{ar}}\big(\big\Vert \bX^1 \big\Vert^2 \big\vert \bX^0\big)\big]
= \E\big[{\mathbb{V}\mathrm{ar}}\big(\big(X_1^1\big)^2 \big\vert \bX^0 \big)\big]
\leq {\mathbb{V}\mathrm{ar}}\big(\big(X_1^1\big)^2\big)
\leq \E\bigg[ \varphi_1^4\bigg( \bigg[\frac{\bW{1} \bX^0}{\sqrt{n_0}}\bigg]_{1}, \bA_{1,1} \bigg)\bigg].
\end{equation}
Under~\ref{hyp:c2}, the expectation $\E\big[\varphi_1^4\big( \big[\nicefrac{\bW{1} \bX^0}{\sqrt{n_0}}\big]_{1}, \bA_{1,1} \big)\big]$ is bounded because $\varphi_1$ is bounded.\newline
Second, the term ${\mathbb{V}\mathrm{ar}}\big(\E[\Vert \bX^1 \Vert^2 | \bX^0]\big)$. We have
\begin{equation}
\E\big[ \big\Vert \bX^1 \big\Vert^2 \big\vert \bX^0\big]
= n_1 \cdot \E\bigg[\varphi_1^2\bigg( \bigg[\frac{\bW{1} \bX^0}{\sqrt{n_0}}\bigg]_{1} , \bA_{1,1}\bigg) \bigg\vert \bX^0\bigg] = n_1 \cdot g(X_1^0, \dots, X_{n_0}^0)
\end{equation}
where $g(\bc) \defeq \E\big[\varphi_1^2\big( \big[\nicefrac{\bW{1} \bc}{\sqrt{n_0}}\big]_{1} , \bA_{1,1}\big)\big]$ for any $n_0$-dimensional real vector $\bc = (c_1,\dots,c_{n_0})$.
The partial derivatives of $g$ satisfy for $1 \leq j \leq n_0$
\begin{align}
\frac{\partial g}{\partial c_j}(\bc)
&= \E\bigg[2\varphi_1\bigg( \bigg[\frac{\bW{1} \bc}{\sqrt{n_0}}\bigg]_{1}, \bA_{1,1} \bigg)\varphi'_1\bigg( \bigg[\frac{\bW{1} \bc}{\sqrt{n_0}}\bigg]_{1}, \bA_{1,1} \bigg) \frac{(\bW{1})_{1j}}{\sqrt{n_0}}\bigg]\nn
 &= \frac{2 c_j}{n_0}\E\bigg[\varphi_1^{'2}\bigg( \bigg[\frac{\bW{1} \bc}{\sqrt{n_0}}\bigg]_{1},\bA_{1,1} \bigg)
 + \varphi_1\bigg( \bigg[\frac{\bW{1} \bc}{\sqrt{n_0}}\bigg]_{1} , \bA_{1,1}\bigg)\varphi_1^{''}\bigg( \bigg[\frac{\bW{1} \bc}{\sqrt{n_0}}\bigg]_{1},\bA_{1,1} \bigg)\bigg] \, ,\label{partialDerivative_intByParts}
\end{align}
where \eqref{partialDerivative_intByParts} was obtained by integrating by parts w.r.t.\ $(\bW{1})_{1j}$.
The prior $P_0$ has bounded support $\mathcal{X} \subseteq [-S,S]$ under the hypothesis~\ref{hyp:bounded}. Then, for every $\bc \in \mathcal{X}^{n_0}$, we have
\begin{equation}
\bigg\vert \frac{\partial g}{\partial c_j}(\bc) \bigg\vert
\leq \frac{2 S}{n_0}\cdot \big(\sup \vert \varphi_1^{\prime} \vert^2 + \sup \vert \varphi_1  \vert \cdot \sup \vert \varphi_1^{''} \vert\big) \leq \frac{C}{n_0} \,,
\end{equation}
for some constant $C>0$. Here the hypothesis~\ref{hyp:c2} was used to bound the expectation in \eqref{partialDerivative_intByParts}. Thus, the function $g$ satisfies the bounded difference property, i.e.\ $\forall j \in \{1,\dots,n_0\}$
\begin{equation}
\sup_{\bc \in \mathcal{X}^{n_0}, c'_j\in \mathcal{X}} \big\vert g(\bc) - g(c_1, \dots,c'_j,\dots, c_{n_0}) \big\vert
\leq \frac{C}{n_0} \sup_{c_j, c'_j\in \mathcal{X}} \big\vert c_j - c'_j \big\vert \leq \frac{2S \cdot C}{n_0}\, .
\end{equation}
Applying Proposition~\ref{bounded_diff} (see Appendix~\ref{appendix_concentration}) it comes
\begin{equation}
{\mathbb{V}\mathrm{ar}}\big(g(\bX^0)\big) \leq \frac{1}{4} \sum_{j=1}^{n_0} \bigg(\frac{2S \cdot C}{n_0}\bigg)^2 = \frac{C'}{n_0} \,,
\end{equation}
and
\begin{equation}
\frac{1}{n_1}{\mathbb{V}\mathrm{ar}}\Big(\E\big[\big\Vert \bX^1 \big\Vert^2 \big\vert \bX^0\big]\Big)
= n_1 \cdot {\mathbb{V}\mathrm{ar}}\big(g(\bX^0)\big) \leq \frac{n_1}{n_0} C' \, .
\end{equation}
It ends the proof of
$\nicefrac{{\mathbb{V}\mathrm{ar}}(\Vert \bX^1 \Vert^2)}{n_1}$ boundedness in the limit ${n_0 \to +\infty}$. Combining \eqref{boundWithCS}, \eqref{boundOnCauchySchwartz} and the latter, it comes for $n_0$ large enough
\begin{align}\label{boundAwithVarFreeEntropy}
		\big\vert A_{\mathbf{n},\epsilon}(t) \big\vert
		\leq\,
		K \, \E\bigg[\bigg(\frac{\ln \cZ_{\mathbf{n},t,\epsilon}}{n_0} - f_{\mathbf{n},\epsilon}(t)\bigg)^{\! 2} \,\bigg]^{1/2}
\end{align}
with $K > 0$ a constant independent of both $t$ and $\epsilon$. The uniform convergence of $A_{\mathbf{n},\epsilon}(t)$ to 0 then follows from~\eqref{boundAwithVarFreeEntropy} and Theorem~\ref{concentrationtheorem} in Appendix~\ref{appendix_concentration}, that shows $\E[( n_0^{-1}\ln\cZ_{\mathbf{n},t,\epsilon} - f_{\mathbf{n},\epsilon}(t))^2]$ vanishes uniformly in $t$, $\epsilon$ when $n_0 \to +\infty$.
\section{Concentration of free entropy and overlaps}

\subsection{Concentration of the free entropy}\label{appendix_concentration}
In this section, we prove that the free entropy of the interpolation model studied in Sec.~\ref{interp-est-problem} concentrates around its expectation (uniformly in $t$ and $\epsilon$), i.e.\ we prove Theorem~\ref{concentrationtheorem} stated below.
$C(\varphi_1, \varphi_2, \alpha_1, \alpha_2, S)$ will denote any generic positive constant depending {\it only} on $\varphi_1$, $\varphi_2$, $\alpha_1$, $\alpha_2$, $S$.
Remember that $S$ is a bound on the signal absolute values. It is also understood that the dimensions $n_0$, $n_1$, $n_2$ are large enough so that $\nicefrac{n_1}{n_0} \approx \alpha_1$,  $\nicefrac{n_2}{n_1} \approx \alpha_2$.

\begin{thm}\label{concentrationtheorem}
Under~\ref{hyp:bounded}~\ref{hyp:c2}~\ref{hyp:phi_gauss2}, there exists a positive constant ${C(\varphi_1, \varphi_2, \alpha_1, \alpha_2, S)}$ such that $\forall t \in [0,1]$
\begin{equation}\label{fluctuation}
	\mathbb{E}\left[\left( \frac{\ln \mathcal{Z}_{\mathbf{n},t,\epsilon}}{n_0} - \mathbb{E}\left[\frac{\ln \mathcal{Z}_{\mathbf{n},t,\epsilon}}{n_0}\right] \right)^2\right] 
\leq \frac{C(\varphi_1, \varphi_2, \alpha_1, \alpha_2, S)}{n_0}.
\end{equation}
\end{thm}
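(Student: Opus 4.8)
The plan is to control the fluctuations of $\ln\mathcal{Z}_{\mathbf{n},t,\epsilon}$ by separating the two qualitatively different sources of randomness: the Gaussian quantities $\mathbf{g}:=(\bW{1},\bW{2},\bV,\bU,\bZ,\bZ')$ and the ``non‑Gaussian'' data $\chi:=(\bX^0,\bA_1,\bA_2)$. Write $L:=n_0^{-1}\ln\mathcal{Z}_{\mathbf{n},t,\epsilon}$; by the total variance (ANOVA) decomposition
\[
\mathbb{E}\big[(L-\mathbb{E}L)^2\big]=\mathbb{E}\big[\mathrm{Var}(L\mid\chi)\big]+\mathrm{Var}\big(\mathbb{E}[L\mid\chi]\big),
\]
so it suffices to bound each of the two summands by $C/n_0$, with $C$ depending only on $\varphi_1,\varphi_2$ (and their derivatives), $\Delta$, $S$, $\alpha_1,\alpha_2$, and uniformly in $(t,\epsilon)\in[0,1]\times\mathcal{B}_{n_0}$.

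For the first summand I would use Gaussian concentration. Conditionally on $\chi$, $L$ is a smooth function of the standard Gaussian vector $\mathbf{g}$ (the planted observations $\bY_{t,\epsilon},\bY'_{t,\epsilon}$ of \eqref{2channels} are themselves deterministic functions of $(\mathbf{g},\chi)$), so the Gaussian Poincaré inequality gives $\mathrm{Var}(L\mid\chi)\le \mathbb{E}[\|\nabla_{\mathbf{g}}L\|^2\mid\chi]$. Each partial derivative of $\ln\mathcal{Z}_{\mathbf{n},t,\epsilon}$ is a Gibbs average receiving one contribution from differentiating the Gibbs weights (through $s_{t,\epsilon,\mu}$ and the $\varphi_1$‑term) and one from the dependence of $\bY_{t,\epsilon},\bY'_{t,\epsilon}$ on $\mathbf{g}$. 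Under \ref{hyp:c2} the factors $\varphi_1,\varphi_1',\varphi_2,\varphi_2'$ and $\partial_x\ln P_{\rm out,2}$ are bounded, the only potentially unbounded factor $\partial_y\ln P_{\rm out,2}(y\mid x)$ is $O(|y|+1)$ with $\mathbb{E}[Y_{t,\epsilon,\mu}^2]=O(1)$, and each derivative carries a prefactor $n_0^{-1/2}$ or $n_1^{-1/2}$ coming from the scalings in \eqref{measurements} and \eqref{defS_t_mu}. Summing the $O(n_0^2)$ squared derivatives then yields $\mathbb{E}[\|\nabla_{\mathbf{g}}\ln\mathcal{Z}_{\mathbf{n},t,\epsilon}\|^2]=O(n_0)$, hence $\mathbb{E}[\mathrm{Var}(L\mid\chi)]=O(1/n_0)$, uniformly in $(t,\epsilon)$.

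For the second summand, set $\bar L(\chi):=\mathbb{E}[L\mid\chi]=\mathbb{E}_{\mathbf{g}}[L]$, which depends only on $\chi$; I would bound the fluctuations of the bounded‑support coordinates of $\bX^0$ by the bounded‑difference inequality (Proposition~\ref{bounded_diff}, using \ref{hyp:bounded}) and the i.i.d.\ coordinates of $\bA_1,\bA_2$ by the Efron--Stein inequality. The main obstacle is that $\bX^0$ and $\bA_1$ enter $\ln\mathcal{Z}_{\mathbf{n},t,\epsilon}$ only through the linear images $[\bW{1}\bX^0/\sqrt{n_0}]_i$ and through $X^1_i=\varphi_1(\cdot,\bA_{1,i})$, which is fed into $\bS_{t,\epsilon}$ via $\bW{2}/\sqrt{n_1}$ and into the planted outputs, so that a naive single‑coordinate estimate applied to $\ln\mathcal{Z}_{\mathbf{n},t,\epsilon}$ itself is too crude (it only produces an $O(1)$ variance). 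The resolution is that the Gaussian average $\mathbb{E}_{\mathbf{g}}$ has already been taken in $\bar L$: when evaluating a one‑coordinate derivative or difference of $\bar L$ one integrates by parts with respect to $\bW{1}$ (resp.\ $\bW{2}$, and $\bZ$ for the effect transmitted through $\bY_{t,\epsilon}$), exactly as in \eqref{partialDerivative_intByParts} of Appendix~\ref{app:uniformVanishingA}; each such integration by parts supplies an extra $n_0^{-1/2}$ (resp.\ $n_1^{-1/2}$), so that $|\partial_{X^0_j}\bar L|=O(1/n_0)$ and each Efron--Stein term for a coordinate of $\bA_1,\bA_2$ is $O(1/n_0^2)$.

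Summing over the $O(n_0)$ coordinates of $\bX^0$, $\bA_1$, $\bA_2$ then gives $\mathrm{Var}(\bar L)=O(1/n_0)$, and together with the Gaussian bound this proves \eqref{fluctuation}. Uniformity in $t$ and $\epsilon$ is preserved throughout because all constants depend only on the suprema of $\varphi_1,\varphi_2$ and their derivatives, on $\Delta$, $S$ and on $\alpha_1,\alpha_2$, and because the interpolation data obey $r_\epsilon(t)\le r_{\rm max}$ and $q_\epsilon(t)\le \rho_1(n_0)+2s_{n_0}$, which are bounded uniformly in $\mathbf{n}$. The step I expect to be most delicate is the Efron--Stein control of the $\bA_1$‑fluctuations: one must carefully track how the integration by parts in $\bW{2}$ pairs the two $X^1_i/\sqrt{n_1}$ factors to recover the needed $n_1^{-1}$, while also bounding the contribution coming from the dependence of the planted $\bY_{t,\epsilon}$ on $\bA_1$ through $P_{\rm out,2}$.
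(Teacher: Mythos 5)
Your ANOVA decomposition $\mathrm{Var}(L)=\mathbb{E}[\mathrm{Var}(L\mid\chi)]+\mathrm{Var}(\mathbb{E}[L\mid\chi])$ is genuinely different from the paper's chain of six nested conditional-variance steps (first on $(\bZ,\bZ')$, then on $(\bV,\bU,\bW{2})$, then $\bA_2$, then $\bX^1\!=\!\varphi_1(\bW{1}\bX^0/\sqrt{n_0},\bA_1)$, then $\bW{1}$, then $\bX^0$), and this difference is fatal as stated. For the first summand you invoke Gaussian Poincar\'e on \emph{all} Gaussians $\mathbf{g}$ (including $\bW{1}$) conditionally on $\chi$ only, and claim each partial derivative of $\ln\mathcal{Z}_{\mathbf{n},t,\epsilon}$ carries a prefactor $n_0^{-1/2}$. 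This is false for the $(\bW{1})_{ij}$-derivatives. Because $(\bW{1})_{ij}$ enters $X^1_i$, which feeds into \emph{every} $\Gamma_{t,\epsilon,\nu}$ through $[\bW{2}\bX^1]_\nu/\sqrt{n_1}$, the derivative in \eqref{partialDerivativeH_W1} is a sum of $n_2$ terms of size $O((n_0n_1)^{-1/2})$; a naive triangle-inequality bound gives $O(n_2/\sqrt{n_0n_1})=O(1)$, not $O(n_0^{-1/2})$. Summing $n_0n_1$ squared derivatives of this size yields $\mathbb{E}[\|\nabla_{\bW{1}}L\|^2]=O(1)$, which destroys the $O(1/n_0)$ target. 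To salvage the $\bW{1}$ contribution you would need to prove that the $\nu\ne\nu'$ cross terms in $\mathbb{E}\big[\langle\sum_\nu(\bW{2})_{\nu i}(\cdot)\rangle^2\big]$ cancel down to $O(n_2)$, and the standard mechanism to do that is Gaussian integration by parts against $\bW{2}$ --- which is only available \emph{after} $\bW{2}$ has been averaged out. That is precisely why the paper places the $\bW{1}$-concentration step (Lemma~\ref{lem:concentration_W1}) near the end of the chain, working with $g=\mathbb{E}[\ln\widehat{\cZ}_{\mathbf{n},t,\epsilon}/n_0\mid\bW{1},\bX^0]$ where $\bW{2},\bV,\bU,\bZ,\bZ',\bA_1,\bA_2$ have already been integrated, so each term can be turned into an $O((n_1n_0)^{-1})$ quantity by integration by parts in $\bW{2}$.

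You already isolated the correct mechanism --- integration by parts after averaging over the Gaussians --- for the $\chi=(\bX^0,\bA_1,\bA_2)$ part of your decomposition. The gap is that $\bW{1}$ suffers from exactly the same ``propagation through the second layer'' pathology, so it cannot sit alongside $\bW{2},\bV,\bU,\bZ,\bZ'$ inside the conditional Poincar\'e step. Two repairs are available: either (i) move $\bW{1}$ out of $\mathbf{g}$ and into the ``outer'' part of the decomposition, conditioning on it together with $\chi$ in the Poincar\'e step and then using Gaussian Poincar\'e for $\bW{1}$ \emph{after} averaging the remaining Gaussians, which essentially reconstructs the paper's ordering; or (ii) keep your ANOVA split but actually prove the $O(n_2)$-not-$O(n_2^2)$ cross-cancellation, which again requires the integration-by-parts computations of Lemma~\ref{lem:concentration_W1} and offers no shortcut. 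A secondary, more cosmetic difference: the paper first passes to the smooth representative $\ln\widehat{\cZ}_{\mathbf{n},t,\epsilon}$ via \eqref{expressionZwithZhat} and views everything as a function of $\bX^1$ (not of $\bA_1$ directly), which is what makes the $\bA_1$-step a clean bounded-difference argument on $\bX^1$; in your $\chi$-coordinate parametrization the dependence on $\bA_1$ is mediated by $\bW{1}$, so the Efron--Stein step you flag as ``most delicate'' is indeed the same place where the pairing with $\bW{2}$ integration by parts has to be tracked carefully.
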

One recalls some setups and notations for the reader's convenience. The interpolating Hamiltonian \eqref{stmu}-\eqref{interpolating-ham} is
\begin{equation}\label{interp-ham}
	- \sum_{\mu=1}^{n_2}
	\ln P_{\rm out,2} ( Y_{\mu} |s_{t,\epsilon}(\bx, \ba_1, u_\mu)) + \frac{1}{2} \sum_{i=1}^{n_1}\bigg(Y'_{i}  - \sqrt{R_1(t,\epsilon)}\, \varphi_1\bigg(\bigg[\frac{\bW{1} \bx}{\sqrt{n_0}}\bigg]_i, \ba_{1,i}\bigg)\bigg)^2 \; ,
\end{equation}
where $s_{t, \epsilon, \mu}(\bx, \ba_1, u_\mu) = \sqrt{\frac{1-t}{n_1}}\, \Big[\bW{2} \varphi_1\Big(\frac{\bW{1} \bx}{\sqrt{n_0}},\ba_1\Big)\Big]_\mu  + k_1(t) \,V_{\mu} + k_2(t) \,u_{\mu}$ with
\begin{equation*}
k_1(t,\epsilon) \defeq \sqrt{R_2(t,\epsilon)}, \quad k_2(t) \defeq \sqrt{\rho_1(n_0)t +2s_{n_0} - R_2(t,\epsilon)} \; .
\end{equation*}
This Hamiltonian follows from the interpolating model
\begin{align*}
\begin{cases}
Y_{t,\epsilon,\mu} = \varphi_2\Big(
\sqrt{\frac{1-t}{n_1}}\, \Big[\bW{2} \varphi_1\Big(\frac{\bW{1} \bX^0}{\sqrt{n_0}}, \bA_1\Big)\Big]_\mu  + k_1(t) \,V_{\mu} + k_2(t) \,U_{\mu}, \bA_{2,\mu}\Big) + \sqrt{\Delta} Z_\mu \,, &1 \leq \mu \leq n_2, \\
Y'_{t,\epsilon,i} \: = \sqrt{R_1(t,\epsilon)}\, \varphi_1\Big(\Big[\frac{\bW{1} \bX^0}{\sqrt{n_0}}\Big]_i, \bA_{1,i}\Big) + Z'_i\,, &1 \leq i \leq n_1 \,,
\end{cases}
\end{align*}
where $(\bA_{1,i})_{i = 1}^{n_1} \iid P_{A_1}$, $(\bA_{2,\mu})_{\mu = 1}^{n_2} \iid P_{A_2}$, $(Z_\mu)_{\mu=1}^{n_2} \,,(Z_i^{'})_{i=1}^{n_1} \iid {\cal N}(0,1)$. Recall the definition $\bX^1 \defeq \varphi_1\Big(\frac{\bW{1} \bX^0}{\sqrt{n_0}}, \bA_{1}\Big)$. To lighten notations, one also defines $\bx^1 \equiv \bx^1(\bx,\ba_1) \defeq \varphi_1\Big(\frac{\bW{1} \bx}{\sqrt{n_0}}, \ba_{1}\Big)$. The channel $P_{\rm out,2}$ defined in \eqref{defPout2} can be written as
\begin{align}
P_{\rm out,2}(Y_{t,\epsilon,\mu} | s_{t, \epsilon, \mu}(\bx, \ba_1, \bu))
&= \! \int \! dP_{A_2}(\ba_{2,\mu}) \frac{1}{\sqrt{2\pi\Delta}}
e^{-\frac{1}{2\Delta}\big(Y_{t,\epsilon,\mu} - \varphi_2(s_{t,\epsilon,\mu}(\bx,\ba_1,u_\mu),\ba_{2,\mu}\big)\big)^2}\nn
&= \! \int \! dP_{A_2}(\ba_{2,\mu}) \frac{1}{\sqrt{2\pi\Delta}}e^{-\frac{1}{2\Delta}(\Gamma_{t,\epsilon,\mu}(\bx, \ba_1, \ba_{2,\mu}, u_\mu)+ \sqrt{\Delta} Z_{\mu})^2 }
\label{RFrep}
\end{align}
with
\begin{multline}\label{defGamma}
\Gamma_{t,\epsilon,\mu}(\bx, \ba_1, \ba_{2,\mu}, u_\mu) 
	\defeq  \varphi_2\bigg(
	\sqrt{\frac{1-t}{n_1}}\, \big[\bW{2} \bX^1 \big]_\mu  + k_1(t) \,V_{\mu} + k_2(t) \,U_{\mu}, \bA_{2,\mu}\bigg)\\
	- \varphi_2\bigg(
	\sqrt{\frac{1-t}{n_1}}\, \big[\bW{2} \bx^1\big]_\mu  + k_1(t) \,V_{\mu} + k_2(t) \,u_{\mu}, \ba_{2,\mu}\bigg)\,.
\end{multline}
From \eqref{interp-ham}, \eqref{RFrep}, \eqref{defGamma} the free entropy of the interpolating model reads
\begin{align}
\frac{\ln \mathcal{Z}_{\mathbf{n},t,\epsilon}}{n_0} = \frac{\ln\widehat{\mathcal{Z}}_{\mathbf{n},t,\epsilon}}{n_0} - \frac{1}{2n_0}\sum_{\mu=1}^{n_2} Z_\mu^2 
-\frac{1}{2n_0}\sum_{i=1}^{n_1} Z_i^{\prime 2} - \frac{n_2}{2n_0} \ln(2\pi\Delta)\label{expressionZwithZhat}
\end{align}
where
\begin{equation}
\frac{\ln\widehat{\mathcal{Z}}_{\mathbf{n},t,\epsilon}}{n_0} = \frac{1}{n_0} \ln \bigg(\int  dP_0(\bx) dP_{A_1}(\ba_1) dP_{A_2}(\ba_2) \mathcal{D}\bu \, e^{- \widehat{\cH}_{t,\epsilon}(\bx,\ba_1, \ba_2, \bu)}\bigg)\,, \label{defZhat}
\end{equation}
and
\begin{multline}\label{explicit-ham}
\widehat{\cH}_{t,\epsilon}(\bx, \ba_1, \ba_2, \bu) =
	\frac{1}{2\Delta}\sum_{\mu=1}^{n_2} \Big(\Gamma_{t,\epsilon,\mu}(\bx,\ba_1, \ba_{2,\mu}, u_\mu)^2 
	+ 2 \sqrt{\Delta} Z_\mu \Gamma_{t,\epsilon,\mu}(\bx, \ba_1, \ba_{2,\mu},u_\mu)\Big)\\
	+\frac{1}{2} \sum_{i=1}^{n_1}\Big(\sqrt{R_1(t,\epsilon)} X_i^1 - \sqrt{R_1(t,\epsilon)} x_i^1 \Big)^2
	+ 2 Z_i^\prime \Big(\sqrt{R_1(t,\epsilon)} X_i^1- \sqrt{R_1(t,\epsilon)} x_i^1 \Big)\,.
\end{multline}
From \eqref{defGamma}, \eqref{defZhat}, \eqref{explicit-ham}, note that $\nicefrac{\ln \widehat{\mathcal{Z}}_{\mathbf{n},t,\epsilon}}{n_0}$ has been written as a function of $\bZ$, $\bZ^\prime$, $\bV$, $\bU$, $\bW{2}$, $\bA_2$, $\bW{1}$, $\bX^1$.
Our goal is to show that the free energy \eqref{expressionZwithZhat} concentrates around its expectation.
We will prove that there exists a positive constant $C(\varphi_1, \varphi_2, \alpha_1, \alpha_2, S)$ such that the variance of $\nicefrac{\ln \hat{\cZ}_{\mathbf{n},t,\epsilon}}{n_0}$ is bounded by $\nicefrac{C(\varphi_1, \varphi_2, \alpha_1, \alpha_2, S)}{n_0}$. This concentration property together with \eqref{expressionZwithZhat} implies \eqref{fluctuation}, i.e.\ Theorem~\ref{concentrationtheorem}.

The whole proof relies on two important variance bounds that are recalled below. The reader can refer to \cite{boucheron2004concentration} (Chapter 3) for detailed proofs of these statements.
\begin{proposition}[Gaussian Poincaré inequality]\label{poincare}
Let $\bU = (U_1, \dots, U_N)$ be a vector of $N$ independent standard normal random variables. Let $g: \mathbb{R}^N \to \mathbb{R}$ be a continuously differentiable function. Then
\begin{align}
	 {\mathbb{V}\mathrm{ar}}(g(\bU)) \leq \E \left[\| \nabla g (\bU) \|^2 \right] \,.
\end{align}
\end{proposition}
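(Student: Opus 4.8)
The statement to prove is the sharp Gaussian Poincaré inequality: for $\bU=(U_1,\dots,U_N)$ a vector of independent standard normals and $g\in C^1(\mathbb{R}^N)$, ${\mathbb{V}\mathrm{ar}}(g(\bU))\le\E[\|\nabla g(\bU)\|^2]$. (We may assume $\E[\|\nabla g(\bU)\|^2]<\infty$, else the bound is trivial.)

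The plan is to use the Ornstein--Uhlenbeck semigroup. Let $\gamma_N$ be the standard Gaussian measure on $\mathbb{R}^N$ and introduce the semigroup $(P_s)_{s\ge0}$ defined by Mehler's formula
\[
P_sg(x)=\int_{\mathbb{R}^N} g\big(e^{-s}x+\sqrt{1-e^{-2s}}\,y\big)\,d\gamma_N(y)\,.
\]
Each $P_s$ is an average against a probability kernel, $(P_s)$ is a Markov semigroup with generator $\mathcal{L}g=\Delta g-x\cdot\nabla g$, it leaves $\gamma_N$ invariant, $P_0=\mathrm{Id}$, and $P_sg\to\int g\,d\gamma_N=\E[g(\bU)]$ in $L^2(\gamma_N)$ as $s\to\infty$. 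Two elementary facts carry the whole argument: (i) differentiating Mehler's formula coordinatewise gives the commutation relation $\nabla P_sg=e^{-s}P_s(\nabla g)$, so by Jensen applied to the probability kernel defining $P_s$, $\|\nabla P_sg\|^2=e^{-2s}\sum_i(P_s\partial_ig)^2\le e^{-2s}P_s\big(\|\nabla g\|^2\big)$; and (ii) the Gaussian integration-by-parts (Dirichlet-form) identity $\int f\,\mathcal{L}h\,d\gamma_N=-\int\nabla f\cdot\nabla h\,d\gamma_N$.

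Then I would write the variance as a telescoping integral along the flow:
\[
{\mathbb{V}\mathrm{ar}}(g(\bU))=\int(P_0g)^2\,d\gamma_N-\lim_{s\to\infty}\int(P_sg)^2\,d\gamma_N=-\int_0^\infty\frac{d}{ds}\!\int(P_sg)^2\,d\gamma_N\,ds\,.
\]
Since $\frac{d}{ds}P_sg=\mathcal{L}P_sg$, identity (ii) gives $\frac{d}{ds}\int(P_sg)^2\,d\gamma_N=2\int P_sg\,\mathcal{L}P_sg\,d\gamma_N=-2\int\|\nabla P_sg\|^2\,d\gamma_N$, and then (i) together with the $P_s$-invariance of $\gamma_N$ yields
\[
{\mathbb{V}\mathrm{ar}}(g(\bU))=2\int_0^\infty\!\!\int\|\nabla P_sg\|^2\,d\gamma_N\,ds\le 2\int_0^\infty e^{-2s}\,ds\int\|\nabla g\|^2\,d\gamma_N=\int\|\nabla g\|^2\,d\gamma_N\,,
\]
which is the claimed inequality. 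An alternative route of comparable length is to prove the one-dimensional case by a Hermite expansion — writing $g=\sum_{n\ge0}\hat g_n h_n/n!$ with $\hat g_n=\E[g(U)h_n(U)]$, so that ${\mathbb{V}\mathrm{ar}}(g)=\sum_{n\ge1}\hat g_n^2/n!$ while, using $h_n'=nh_{n-1}$, $\E[g'(U)^2]=\sum_{n\ge1} n\,\hat g_n^2/n!\ge{\mathbb{V}\mathrm{ar}}(g)$ — and then to tensorize via the subadditivity ${\mathbb{V}\mathrm{ar}}(g)\le\sum_k\E\big[{\mathbb{V}\mathrm{ar}}_{U_k}(g)\big]$, itself obtained from the orthogonal martingale decomposition of $g-\E g$ along the filtration $\sigma(U_1,\dots,U_k)$.

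The conceptual content is a one-liner (the commutation relation plus Jensen), so the only thing requiring genuine care — and hence the "main obstacle" — is the measure-theoretic bookkeeping for a general $C^1$ function that may be unbounded: I would first establish the inequality for bounded $g\in C^1$ with bounded gradient, where differentiating under the integral in Mehler's formula, the commutation relation, and the differentiation of $s\mapsto\int(P_sg)^2\,d\gamma_N$ are all immediate, and then extend to all $C^1$ functions with $\E[\|\nabla g\|^2]<\infty$ by truncation/mollification, using that both sides are continuous under $L^2(\gamma_N)$-convergence of $(g,\nabla g)$. One should also verify directly (e.g.\ by dominated convergence in Mehler's formula, or on Hermite polynomials) that $P_sg\to\E[g(\bU)]$ in $L^2(\gamma_N)$, to avoid invoking the spectral gap — which is precisely the statement being proved. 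Beyond these routine analytic verifications there is no real difficulty.
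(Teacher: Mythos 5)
The paper does not actually prove this proposition; it simply cites Boucheron, Lugosi and Massart's \emph{Concentration Inequalities} (Chapter 3), where the result is obtained by tensorizing the one-dimensional Gaussian Poincar\'e inequality through the Efron--Stein inequality. Your primary argument via the Ornstein--Uhlenbeck semigroup is correct and complete: the commutation relation $\nabla P_s g = e^{-s} P_s(\nabla g)$ obtained by differentiating Mehler's formula, the Jensen step inside the Mehler kernel, the Gaussian integration-by-parts (Dirichlet-form) identity, the telescoping of the variance along the flow, and the final $2\int_0^\infty e^{-2s}\,ds = 1$ all check out, and your closing remarks on truncation/mollification correctly handle the extension to unbounded $g\in C^1$ with $\E[\|\nabla g(\bU)\|^2]<\infty$ (in particular, you rightly insist on verifying $P_s g \to \E[g(\bU)]$ directly rather than invoking the spectral gap, which would be circular). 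Your alternative route --- Hermite expansion in one dimension followed by tensorization via the orthogonal martingale decomposition --- is essentially the strategy of the cited reference, modulo their preference for establishing the one-dimensional case by a covariance-interpolation argument rather than a spectral expansion. Both arguments are sound; the semigroup proof is manifestly dimension-free (no tensorization step) and exhibits the spectral gap as the decay rate $e^{-2s}$, while the Hermite/Efron--Stein route is more elementary and makes the equality case (affine $g$) transparent.
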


\begin{proposition}\label{bounded_diff}
Let $\mathcal{U} \subset \R$.
Let $g: \mathcal{U}^N \to \mathbb{R}$ a function that satisfies the bounded difference property, i.e., there exists some constants $c_1, \dots, c_N \geq 0$ such that
$$
\sup_{\substack{u_1, \dots u_N \in \mathcal{U}^N \\ u_i' \in \mathcal{U}}}
\vert g(u_1, \dots, u_i, \ldots, u_N) - g(u_1, \dots, u_i', \ldots, u_N)\vert \leq c_i
\,, \ \text{for all} \ 1 \leq i \leq N \,.
$$ 
Let $\boldsymbol{U}=(U_1, \dots, U_N)$ be a vector of $N$ independent random variables that takes values in $\mathcal{U}$. Then
\begin{align}
	 {\mathbb{V}\mathrm{ar}}(g(\bU)) \leq \frac{1}{4} \sum_{i=1}^N c_i^2 \,.
\end{align}
\end{proposition}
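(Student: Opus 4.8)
The plan is to prove this by the classical Doob martingale decomposition underlying the Efron--Stein / bounded-differences inequality (see \cite{boucheron2004concentration}, Chapter~3, for this circle of ideas). First I would introduce the filtration $\mathcal{F}_0 \subseteq \mathcal{F}_1 \subseteq \cdots \subseteq \mathcal{F}_N$ with $\mathcal{F}_i \defeq \sigma(U_1,\dots,U_i)$ and $\mathcal{F}_0$ trivial, and set $V_i \defeq \mathbb{E}[g(\boldsymbol{U}) \mid \mathcal{F}_i]$, so that $V_N = g(\boldsymbol{U})$ and $V_0 = \mathbb{E}[g(\boldsymbol{U})]$. With $\Delta_i \defeq V_i - V_{i-1}$ one gets the telescoping identity $g(\boldsymbol{U}) - \mathbb{E}[g(\boldsymbol{U})] = \sum_{i=1}^N \Delta_i$, and $(\Delta_i)_{i=1}^N$ is a martingale-difference sequence. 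Since $\mathbb{E}[\Delta_i \Delta_j] = 0$ for $i \neq j$ (condition on $\mathcal{F}_{\max(i,j)-1}$ and use the tower property together with $\mathbb{E}[\Delta_{\max(i,j)} \mid \mathcal{F}_{\max(i,j)-1}] = 0$), the cross terms vanish and ${\mathbb{V}\mathrm{ar}}(g(\boldsymbol{U})) = \sum_{i=1}^N \mathbb{E}[\Delta_i^2]$. It therefore suffices to establish the per-coordinate bound $\mathbb{E}[\Delta_i^2] \le c_i^2/4$.

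For that bound I would use independence of the $U_j$ to make the conditional expectations explicit. Define $\phi_i(u_1,\dots,u_i) \defeq \int g(u_1,\dots,u_i,u_{i+1},\dots,u_N)\,\prod_{j=i+1}^N dP_{U_j}(u_j)$; then $V_i = \phi_i(U_1,\dots,U_i)$ and, integrating once more over the law of $U_i$, $V_{i-1} = \int \phi_i(U_1,\dots,U_{i-1},u)\,dP_{U_i}(u)$. Hence, conditionally on $(U_1,\dots,U_{i-1}) = (u_1,\dots,u_{i-1})$, the random variable $\Delta_i$ equals $\phi_i(u_1,\dots,u_{i-1},U_i)$ minus its own mean over $U_i$, i.e.\ it is centered. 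Moreover the bounded-difference hypothesis gives $\lvert g(u_1,\dots,u_i,\dots,u_N) - g(u_1,\dots,u_i',\dots,u_N)\rvert \le c_i$ for all arguments in $\mathcal{U}^N$ and $u_i' \in \mathcal{U}$; integrating this pointwise estimate over $u_{i+1},\dots,u_N$ shows that the oscillation of the map $u_i \mapsto \phi_i(u_1,\dots,u_{i-1},u_i)$ on $\mathcal{U}$ is at most $c_i$. Thus, conditionally on $\mathcal{F}_{i-1}$, $\Delta_i$ is a centered random variable taking values in an interval of length at most $c_i$.

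It then remains to invoke the elementary variance bound for bounded random variables (Popoviciu's inequality): if $X$ takes values in an interval $[a,b]$, then ${\mathbb{V}\mathrm{ar}}(X) = \mathbb{E}[(X - \mathbb{E}X)^2] \le \mathbb{E}\bigl[(X - \tfrac{a+b}{2})^2\bigr] \le \bigl(\tfrac{b-a}{2}\bigr)^2 = \tfrac14 (b-a)^2$, where the first inequality is because $m \mapsto \mathbb{E}[(X-m)^2]$ is minimized at $m = \mathbb{E}X$ and the second because $\lvert X - \tfrac{a+b}{2}\rvert \le \tfrac{b-a}{2}$ almost surely. Applying this conditionally on $\mathcal{F}_{i-1}$ yields $\mathbb{E}[\Delta_i^2 \mid \mathcal{F}_{i-1}] \le c_i^2/4$; taking expectations and summing over $i$ gives ${\mathbb{V}\mathrm{ar}}(g(\boldsymbol{U})) \le \tfrac14 \sum_{i=1}^N c_i^2$, which is the claim. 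The only mildly delicate point is the measurability and integrability bookkeeping that makes the explicit formulas for $V_i$ and $V_{i-1}$ (and the interchange of integration used to bound the oscillation of $\phi_i$) rigorous; this is handled by Fubini's theorem once one notes that $g$ is bounded on $\mathcal{U}^N$ — e.g.\ by $\lvert g(u_1^0,\dots,u_N^0)\rvert + \sum_i c_i$ for any fixed reference point — so every quantity appearing is integrable, and no further effort is needed.
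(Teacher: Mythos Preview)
Your proof is correct and follows the standard Doob-martingale/Efron--Stein argument. The paper does not actually prove this proposition itself but simply refers the reader to \cite{boucheron2004concentration}, Chapter~3, so your write-up is precisely the kind of argument one finds there and there is nothing further to compare.
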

The order in which the concentrations are proved matters. First, we show the concentration w.r.t.\ the Gaussian variables $\bZ, \bZ^\prime, \bV, \bU, \bW{2}$ and w.r.t.\ $\bA_2$ thanks to the classical Gaussian Poincaré inequality and a bounded difference argument, respectively. This first part is performed working conditionally on $\bW{1}$, $\bX^1$.
Then the concentration w.r.t.\ $\bW{1}$ and $\bX^1 \defeq \varphi_1\big(\nicefrac{\bW{1}\bX^0}{\sqrt{n_0}}, \bA_1\big)$ is obtained by proving the concentation w.r.t.\ $\bX_1$, $\bW{1}$ and $\bX^0$, in this order.
 
Before starting the proof of Theorem~\ref{concentrationtheorem} we point out that, under~\ref{hyp:c2}, all the suprema $\sup \vert \varphi_k \vert$, $\sup \vert \varphi_k^{'} \vert$, $\sup \vert \varphi_k^{''} \vert$ for $k \in \{1,2\}$ are well-defined, and $\vert \bX_i^1 \vert \leq \sup \vert \varphi_1 \vert$ for all $i \in \{1,\dots,n_1\}$.

\subsubsection{Concentration conditionally on $\bW{1}$, $\bX^1$}
In all this subsection, we work conditionally on $\bW{1}$, $\bX^1$ and we prove that $\nicefrac{\ln \widehat{\cZ}_{\mathbf{n},t,\epsilon}}{n_0}$ is close to its expectation w.r.t.\ all the other random variables, namely $\bZ$, $\bZ'$, $\bV$, $\bU$, $\bW{2}$ and $\bA_2$:
\begin{lemma} \label{lem:concentration_Z_Z'_V_U_W2}
Under~\ref{hyp:bounded}~\ref{hyp:c2}~\ref{hyp:phi_gauss2}, there exists a positive constant ${C(\varphi_1, \varphi_2, \alpha_1, \alpha_2, S)}$ such that $\forall t \in [0,1]$
\begin{equation}
	\E \Bigg[\Bigg(
		\frac{\ln \widehat{\cZ}_{\mathbf{n},t,\epsilon}}{n_0}  - \E\bigg[\frac{\ln \widehat{\cZ}_{\mathbf{n},t,\epsilon}}{n_0}  \bigg\vert \bX^1, \bW{1}\bigg]\Bigg)^2\Bigg]
		\leq \frac{C(\varphi_1, \varphi_2, \alpha_1, \alpha_2, S)}{n_0} \,.
\end{equation}
\end{lemma}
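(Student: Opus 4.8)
The plan is to fix $(\bX^1,\bW{1})$ throughout and apply the law of total variance to split the remaining randomness into two independent blocks: the standard Gaussian block $(\bZ,\bZ',\bV,\bU,\bW{2})$, to be handled by the Gaussian Poincaré inequality (Proposition~\ref{poincare}), and the i.i.d.\ block $\bA_2=(\bA_{2,\mu})_{\mu=1}^{n_2}$, to be handled by the bounded-difference inequality (Proposition~\ref{bounded_diff}). Write $g\defeq n_0^{-1}\ln\widehat{\cZ}_{\mathbf{n},t,\epsilon}$, regarded as a function of $(\bZ,\bZ',\bV,\bU,\bW{2},\bA_2)$ with $(\bX^1,\bW{1})$ as parameters, and let $\mathcal{F}$ be the $\sigma$-algebra generated by $(\bX^1,\bW{1})$. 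The identity
\begin{equation*}
\mathrm{Var}(g\,|\,\mathcal{F})=\E\big[\mathrm{Var}(g\,|\,\bA_2,\mathcal{F})\,\big|\,\mathcal{F}\big]+\mathrm{Var}\big(\E[g\,|\,\bA_2,\mathcal{F}]\,\big|\,\mathcal{F}\big)
\end{equation*}
reduces the Lemma, whose left-hand side equals $\E[\mathrm{Var}(g\,|\,\mathcal{F})]$, to bounding the expectation of each summand by $C(\varphi_1,\varphi_2,\alpha_1,\alpha_2,S)/n_0$.

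\textbf{Gaussian block.} Conditionally on $(\bA_2,\mathcal{F})$ the map $g$ depends only on the standard Gaussian vector $(\bZ,\bZ',\bV,\bU,\bW{2})$, so Proposition~\ref{poincare} gives $\mathrm{Var}(g\,|\,\bA_2,\mathcal{F})\leq\E[\Vert\nabla g\Vert^2\,|\,\bA_2,\mathcal{F}]$. Each partial derivative of $g$ is, by \eqref{explicit-ham}, $-n_0^{-1}$ times a Gibbs average of the corresponding derivative of $\widehat{\cH}_{t,\epsilon}$; using the definition \eqref{defGamma} of $\Gamma_{t,\epsilon,\mu}$, the boundedness of $\varphi_1,\varphi_2,\varphi_1',\varphi_2'$ under~\ref{hyp:c2}, the bound $|X_i^1|,|x_i^1|\le\sup|\varphi_1|$, and the uniform bounds $R_1(t,\epsilon),R_2(t,\epsilon)\leq 1+\rmax+\rho_{\mathrm{max}}$ (hence $k_1(t),k_2(t),\sqrt{R_1(t,\epsilon)}$ are uniformly bounded), one checks
\begin{equation*}
\Big|\frac{\partial g}{\partial Z'_i}\Big|,\ \Big|\frac{\partial g}{\partial Z_\mu}\Big|\leq\frac{C}{n_0},\qquad
\Big|\frac{\partial g}{\partial V_\mu}\Big|,\ \Big|\frac{\partial g}{\partial U_\mu}\Big|\leq\frac{C(1+|Z_\mu|)}{n_0},\qquad
\Big|\frac{\partial g}{\partial W^{(2)}_{\mu i}}\Big|\leq\frac{C(1+|Z_\mu|)}{n_0\sqrt{n_1}}\,,
\end{equation*}
with $C=C(\varphi_1,\varphi_2,\Delta)$. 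Squaring, summing over the $n_1$, $n_2$ and $n_1 n_2$ indices and taking expectations ($\E[Z_\mu^2]=1$), the total number of terms is $O(n_0)$ while each is $O(n_0^{-2})$, so $\E[\Vert\nabla g\Vert^2]\leq C(\varphi_1,\varphi_2,\alpha_1,\alpha_2)/n_0$ since $n_1/n_0\to\alpha_1$, $n_2/n_0\to\alpha_1\alpha_2$; integrating over $\mathcal{F}$ bounds the first summand.

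\textbf{$\bA_2$ block.} Let $h(\bA_2)\defeq\E[g\,|\,\bA_2,\mathcal{F}]$, a function of the independent variables $(\bA_{2,\mu})_{\mu=1}^{n_2}$. Replacing $\bA_{2,\mu}$ by $\bA'_{2,\mu}$ changes $\Gamma_{t,\epsilon,\nu}$ only for $\nu=\mu$ and only through the summand $\varphi_2(S_{t,\epsilon,\mu},\bA_{2,\mu})$ in \eqref{defGamma}, so the $\mu$-th term of $\widehat{\cH}_{t,\epsilon}$ changes by at most $\frac{4\sup|\varphi_2|^2}{\Delta}+\frac{2\sup|\varphi_2|}{\sqrt{\Delta}}|Z_\mu|$ uniformly over configurations; the elementary estimate $|\ln\widehat{\cZ}-\ln\widehat{\cZ}'|\leq\sup|\widehat{\cH}-\widehat{\cH}'|$ then gives $|g-g'|\leq n_0^{-1}\big(\tfrac{4\sup|\varphi_2|^2}{\Delta}+\tfrac{2\sup|\varphi_2|}{\sqrt\Delta}|Z_\mu|\big)$, and taking the expectation over the Gaussian variables ($\E|Z_\mu|\leq1$) yields the bounded-difference property $|h(\bA_2)-h(\bA'_2)|\leq c_\mu$ with $c_\mu=C(\varphi_2,\Delta)/n_0$ uniformly in $\mu$ and in the configuration. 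Proposition~\ref{bounded_diff} then gives $\mathrm{Var}(h\,|\,\mathcal{F})\leq\frac14\sum_{\mu=1}^{n_2}c_\mu^2\leq C(\varphi_2,\Delta,\alpha_1,\alpha_2)/n_0$; combining this with the previous bound in the total-variance identity and taking expectations proves the Lemma.

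\textbf{Main obstacle.} The only genuine subtlety is that the derivatives with respect to $\bV,\bU,\bW{2}$ and the bounded-difference increments with respect to $\bA_2$ are \emph{not} uniformly bounded: each carries a stray factor $|Z_\mu|$ coming from the term $2\sqrt{\Delta}\,Z_\mu\Gamma_{t,\epsilon,\mu}$ in \eqref{explicit-ham}. One must therefore take the Gaussian expectations before (or jointly with) invoking the concentration inequalities, so that these factors contribute only the harmless constants $\E[Z_\mu^2]=1$ and $\E[|Z_\mu|]\leq1$; once this ordering is respected, all constants are independent of $t\in[0,1]$ and $\epsilon\in\mathcal{B}_{n_0}$ and the remaining computations are routine estimates of Gibbs averages using~\ref{hyp:c2}.
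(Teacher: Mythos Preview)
Your proof is correct and follows essentially the same approach as the paper: Gaussian Poincar\'e for the Gaussian block $(\bZ,\bZ',\bV,\bU,\bW{2})$ and a bounded-difference argument for $\bA_2$, with the stray $|Z_\mu|$ factors absorbed by the expectations. The only organizational difference is that the paper splits the Gaussian block into two successive Poincar\'e steps (first $\bZ,\bZ'$ conditioning on everything else, then $\bV,\bU,\bW{2}$ on the already $(\bZ,\bZ')$-averaged free entropy), whereas you treat all standard Gaussians in a single Poincar\'e application and let the expectation built into Poincar\'e handle the $|Z_\mu|$ factors directly; for the $\bA_2$ step the paper uses the two-sided Jensen bound $\langle\widehat{\cH}'-\widehat{\cH}\rangle_{\widehat{\cH}'}\le\ln\widehat{\cZ}-\ln\widehat{\cZ}'\le\langle\widehat{\cH}'-\widehat{\cH}\rangle_{\widehat{\cH}}$ in place of your equivalent $|\ln\widehat{\cZ}-\ln\widehat{\cZ}'|\le\sup|\widehat{\cH}-\widehat{\cH}'|$. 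These are cosmetic rearrangements of the same argument.
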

\noindent Lemma~\ref{lem:concentration_Z_Z'_V_U_W2} follows simply from Lemmas~\ref{lem:concentrationGauss_Z_Z'}, \ref{lem:concentrationGauss_V_U_W2}, \ref{lem:concentration_A2} proven below.
\begin{lemma} \label{lem:concentrationGauss_Z_Z'}
Under~\ref{hyp:bounded}~\ref{hyp:c2}~\ref{hyp:phi_gauss2}, there exists a positive constant ${C(\varphi_1, \varphi_2, \alpha_1, \alpha_2, S)}$ such that $\forall t \in [0,1]$
	\begin{equation}
	\E \Bigg[\Bigg(
	\frac{\ln \widehat{\cZ}_{\mathbf{n},t,\epsilon}}{n_0}  - \E\bigg[\frac{\ln \widehat{\cZ}_{\mathbf{n},t,\epsilon}}{n_0} \bigg\vert \bV, \bU, \bW{2}, \bA_2, \bX^1, \bW{1}\bigg]
	\Bigg)^2\Bigg]
	\leq \frac{C(\varphi_1, \varphi_2, \alpha_1, \alpha_2, S)}{n_0} \,.
	\end{equation}
\end{lemma}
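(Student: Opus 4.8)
The plan is to apply the Gaussian Poincaré inequality (Proposition~\ref{poincare}) conditionally on the block of variables $(\bV,\bU,\bW{2},\bA_2,\bX^1,\bW{1})$, treating
\[
g(\bZ,\bZ') \defeq \frac{\ln \widehat{\cZ}_{\mathbf{n},t,\epsilon}}{n_0}
\]
as a function of the remaining vector $(\bZ,\bZ') \in \mathbb{R}^{n_2}\times\mathbb{R}^{n_1}$, which --- conditionally on the frozen block --- is still a standard Gaussian vector independent of the variables it is conditioned on. By \eqref{defZhat}--\eqref{explicit-ham}, $\widehat{\cH}_{t,\epsilon}$ is affine in $(\bZ,\bZ')$ with coefficients that, under \ref{hyp:c2}, are bounded functions of the integration variables; hence $\widehat{\cZ}_{\mathbf{n},t,\epsilon}>0$ and $g$ is $\mathcal{C}^1$ in $(\bZ,\bZ')$ by differentiation under the integral sign. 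It therefore suffices to bound $\E\big[\|\nabla g(\bZ,\bZ')\|^2\big]$ by $C(\varphi_1,\varphi_2,\alpha_1,\alpha_2,S)/n_0$.

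First I would compute the gradient. Since $\widehat{\cZ}_{\mathbf{n},t,\epsilon}=\int dP_0(\bx)dP_{A_1}(\ba_1)dP_{A_2}(\ba_2)\,\mathcal{D}\bu\,e^{-\widehat{\cH}_{t,\epsilon}(\bx,\ba_1,\ba_2,\bu)}$, one has $\partial_{Z_\mu}g = -\tfrac1{n_0}\langle \partial_{Z_\mu}\widehat{\cH}_{t,\epsilon}\rangle_{\mathbf{n},t,\epsilon}$ and $\partial_{Z'_i}g = -\tfrac1{n_0}\langle \partial_{Z'_i}\widehat{\cH}_{t,\epsilon}\rangle_{\mathbf{n},t,\epsilon}$. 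From \eqref{explicit-ham}, and using crucially that $\Gamma_{t,\epsilon,\mu}$ does \emph{not} depend on $\bZ$, we get $\partial_{Z_\mu}\widehat{\cH}_{t,\epsilon} = \tfrac1{\sqrt\Delta}\,\Gamma_{t,\epsilon,\mu}(\bx,\ba_1,\ba_{2,\mu},u_\mu)$, while $\partial_{Z'_i}\widehat{\cH}_{t,\epsilon}$ is a fixed numerical multiple of $\sqrt{R_1(t,\epsilon)}\,(X_i^1-x_i^1)$. By \eqref{defGamma} and \ref{hyp:c2}, $|\Gamma_{t,\epsilon,\mu}|\le 2\sup|\varphi_2|$ and $|X_i^1|,|x_i^1|\le \sup|\varphi_1|$, whence
\[
|\partial_{Z_\mu}g| \le \frac{2\sup|\varphi_2|}{n_0\sqrt\Delta}\,,\qquad
|\partial_{Z'_i}g| \le \frac{C\,\sqrt{R_1(t,\epsilon)}\,\sup|\varphi_1|}{n_0}\,.
\]

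The remaining point is uniformity in $(t,\epsilon)$: since $R_1(t,\epsilon)=\epsilon_1+\int_0^t r_\epsilon(v)\,dv\le \tfrac12+\rmax$ for every $t\in[0,1]$ and $\epsilon\in\mathcal{B}_{n_0}$, all the above bounds are independent of $t$ and $\epsilon$. Summing over $\mu$ and $i$ gives $\|\nabla g\|^2 \le \tfrac{n_2}{n_0^2}\cdot\tfrac{4\sup|\varphi_2|^2}{\Delta} + \tfrac{n_1}{n_0^2}\cdot C(\tfrac12+\rmax)\sup|\varphi_1|^2$, and since $n_1/n_0\to\alpha_1$, $n_2/n_0\to\alpha_1\alpha_2$ this is $\le C(\varphi_1,\varphi_2,\alpha_1,\alpha_2,S)/n_0$ for $n_0$ large. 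Proposition~\ref{poincare} then yields ${\mathbb{V}\mathrm{ar}}\big(g \,\big|\, \bV,\bU,\bW{2},\bA_2,\bX^1,\bW{1}\big) \le C/n_0$, and taking the expectation over the conditioning block gives the claim, uniformly in $(t,\epsilon)$. There is no genuine obstacle here: the only care needed is in justifying differentiation under the integral sign, in noting that $\Gamma_{t,\epsilon,\mu}$ is $\bZ$-independent (so the $Z_\mu$-derivative of the Hamiltonian is exactly $\Gamma_{t,\epsilon,\mu}/\sqrt\Delta$ rather than a doubly Gibbs-averaged object), and in tracking that every constant can be chosen independently of $t$ and $\epsilon$ via the uniform bound on $R_1(t,\epsilon)$.
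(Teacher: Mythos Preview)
Your proposal is correct and follows essentially the same route as the paper: apply the Gaussian Poincar\'e inequality (Proposition~\ref{poincare}) conditionally on $(\bV,\bU,\bW{2},\bA_2,\bX^1,\bW{1})$, compute $\partial_{Z_\mu}g=-n_0^{-1}\langle \Gamma_{t,\epsilon,\mu}\rangle/\sqrt{\Delta}$ and $\partial_{Z'_i}g=-n_0^{-1}\sqrt{R_1}\langle X_i^1-x_i^1\rangle$, bound these pointwise using $|\Gamma_{t,\epsilon,\mu}|\le 2\sup|\varphi_2|$ and $|X_i^1-x_i^1|\le 2\sup|\varphi_1|$, sum, and take the final expectation. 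The only (harmless) slip is the bound $R_1(t,\epsilon)\le \tfrac12+\rmax$: since $\epsilon_1\le 2s_{n_0}\le 1$ one only has $R_1(t,\epsilon)\le 1+\rmax$, but this does not affect the argument.
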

\begin{proof}
Here $g = \nicefrac{\ln\widehat{\mathcal{Z}}_{\mathbf{n},t,\epsilon}}{n_0}$ is seen as a function of $\bZ$, $\bZ^\prime$ \textit{only} and we work conditionally to all other random variables, i.e.\ $\bV$, $\bU$, $\bW{2}$, $\bA_2$, $\bX^1$, $\bW{1}$.
The squared norm of the gradient of $g$ reads
\begin{align}\label{gra}
\Vert \nabla g\Vert^2 = 
\sum_{\mu=1}^{n_2} \left|\frac{\partial g}{\partial Z_\mu}\right|^2  
+ 
\sum_{i=1}^{n_1} \left| \frac{\partial g}{\partial Z_i^\prime}\right|^2.
\end{align} 
Each of these partial derivatives are of the form 
$\partial g = -n_0^{-1} \langle \partial \widehat{\mathcal{H}}_{t,\epsilon}\rangle_{\widehat{\mathcal{H}}_{t,\epsilon}}$
where the Gibbs bracket $\langle - \rangle_{\widehat{\mathcal{H}}_{t,\epsilon}}$ pertains to the effective Hamiltonian \eqref{explicit-ham}. One finds
\begin{align*}
\left\vert\frac{\partial g}{\partial Z_\mu}\right\vert  &= \frac{1}{n_0\sqrt{\Delta}} \big\vert\langle \Gamma_{t,\mu} \rangle_{\widehat{\mathcal{H}}_{t,\epsilon}}\big\vert
\leq \frac{2}{n_0\sqrt{\Delta}} \sup\vert \varphi_2 \vert \;,\\
\left\vert\frac{\partial g}{\partial Z_i^\prime}\right\vert &= \frac{1}{n_0} \Big\vert\Big\langle \sqrt{R_1(t,\epsilon)} X_i^1 - \sqrt{R_1(t,\epsilon)}x_i^1 \Big\rangle_{\widehat{\mathcal{H}}_{t,\epsilon}}\Big\vert
\leq \frac{2 \sqrt{\rmax}}{n_0} \sup \vert \varphi_1 \vert \;,
\end{align*}
and, replacing in \eqref{gra}, one gets $\Vert \nabla g\Vert^2 \leq 4 n_0^{-1}\big(\frac{n_2}{n_0} \Delta^{-1} \sup \vert \varphi_2\vert^2 + \rmax \frac{n_1}{n_0}\sup \vert \varphi_1 \vert^2\big) $.
Applying Proposition~\ref{poincare}, one obtains
\begin{multline}\label{eq:var_part_Z}
	\E \Bigg[
	\Bigg(\frac{\ln \widehat{\cZ}_{\mathbf{n},t,\epsilon}}{n_0}  - \E\bigg[\frac{\ln \widehat{\cZ}_{\mathbf{n},t,\epsilon}}{n_0}
	\bigg\vert \bV, \bU, \bW{2}, \bA_2, \bX^1, \bW{1} \bigg]\Bigg)^2
	\Bigg\vert \bV, \bU, \bW{2}, \bA_2, \bX^1, \bW{1}\Bigg]\\
	\leq \frac{C(\varphi_1, \varphi_2, \alpha_1, \alpha_2, S)}{n_0}
\end{multline}
almost surely. Taking the expectation in \eqref{eq:var_part_Z} gives the lemma.
\end{proof}
\begin{lemma}\label{lem:concentrationGauss_V_U_W2}
Under~\ref{hyp:bounded}~\ref{hyp:c2}~\ref{hyp:phi_gauss2}, there exists a positive constant ${C(\varphi_1, \varphi_2, \alpha_1, \alpha_2, S)}$ such that $\forall t \in [0,1]$
	\begin{equation}
	\E \Bigg[\Bigg(
	\E\bigg[\frac{\ln \widehat{\cZ}_{\mathbf{n},t,\epsilon}}{n_0} \bigg\vert \bV, \bU, \bW{2}, \bA_2, \bX^1, \bW{1}\bigg]
	- \E\bigg[\frac{\ln \widehat{\cZ}_{\mathbf{n},t,\epsilon}}{n_0} \bigg\vert \bA_2, \bX^1, \bW{1} \bigg]
	\Bigg)^2\Bigg]
	\leq \frac{C(\varphi_1, \varphi_2, \alpha_1, \alpha_2, S)}{n_0} \,.
	\end{equation}
\end{lemma}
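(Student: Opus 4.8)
The plan is to prove this exactly in the spirit of Lemma~\ref{lem:concentrationGauss_Z_Z'}, but now applying the Gaussian Poincaré inequality (Proposition~\ref{poincare}) to the \emph{partially averaged} free entropy
\[
\tilde g(\bV,\bU,\bW{2}) \defeq \E\Big[\tfrac{\ln \widehat{\cZ}_{\mathbf{n},t,\epsilon}}{n_0} \,\Big|\, \bV, \bU, \bW{2}, \bA_2, \bX^1, \bW{1}\Big],
\]
viewed as a function of the i.i.d.\ standard Gaussian family $(\bV,\bU,\bW{2})$ with $\bA_2,\bX^1,\bW{1}$ held fixed. Since $\E[\tilde g\mid\bA_2,\bX^1,\bW{1}] = \E[\tfrac{\ln\widehat{\cZ}_{\mathbf{n},t,\epsilon}}{n_0}\mid\bA_2,\bX^1,\bW{1}]$ by the tower property, the left-hand side of the lemma is exactly $\E\big[{\mathbb{V}\mathrm{ar}}(\tilde g\mid\bA_2,\bX^1,\bW{1})\big]$, and Poincaré (applied conditionally on $\bA_2,\bX^1,\bW{1}$) bounds this by $\E\big[\|\nabla_{\bV,\bU,\bW{2}}\tilde g\|^2\big]$.

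First I would bound the gradient. Writing $g \defeq \tfrac{\ln\widehat{\cZ}_{\mathbf{n},t,\epsilon}}{n_0}$, each partial derivative of $g$ with respect to $V_\mu$, $U_\mu$ or $(\bW{2})_{\mu i}$ equals $-n_0^{-1}\langle\partial\widehat{\cH}_{t,\epsilon}\rangle$ for the Gibbs bracket attached to \eqref{explicit-ham}, and only the $\mu$-th summand of $\widehat{\cH}_{t,\epsilon}$ depends on those variables. Using \eqref{defGamma}, the quantities $\partial_{V_\mu}\Gamma_{t,\epsilon,\mu}$, $\partial_{U_\mu}\Gamma_{t,\epsilon,\mu}$ and $\partial_{(\bW{2})_{\mu i}}\Gamma_{t,\epsilon,\mu}$ are controlled by $\sup|\varphi_2'|$ times, respectively, $k_1(t,\epsilon)$, $k_2(t)$ and $\sqrt{(1-t)/n_1}\,\sup|\varphi_1|$; combined with $|\Gamma_{t,\epsilon,\mu}|\le 2\sup|\varphi_2|$ and the uniform bound $k_1(t,\epsilon)^2 + k_2(t)^2 = \rho_1(n_0)t + 2s_{n_0} = O(1)$ (valid uniformly in $t,\epsilon$ because $R_2(t,\epsilon)\in[0,\rho_1(n_0)t+2s_{n_0}]$ by construction), this yields pointwise estimates of the shape $|\partial_{V_\mu}g|,|\partial_{U_\mu}g|\le \tfrac{C}{n_0}(1+|Z_\mu|)$ and $|\partial_{(\bW{2})_{\mu i}}g|\le \tfrac{C}{n_0\sqrt{n_1}}(1+|Z_\mu|)$ with $C = C(\varphi_1,\varphi_2,\Delta)$, the $|Z_\mu|$ coming from the $2\sqrt{\Delta}Z_\mu\Gamma_{t,\epsilon,\mu}$ term in \eqref{explicit-ham}. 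Summing squares, the $\bW{2}$-block telescopes the $n_1$-sum against the $1/n_1$ factor, leaving
\[
\|\nabla_{\bV,\bU,\bW{2}} g\|^2 \le \frac{C(\varphi_1,\varphi_2,\Delta)}{n_0^2}\sum_{\mu=1}^{n_2}\bigl(1 + Z_\mu^2\bigr),
\]
and by Jensen $\|\nabla\tilde g\|^2\le\E[\|\nabla g\|^2\mid\bV,\bU,\bW{2},\bA_2,\bX^1,\bW{1}]$, whence $\E[\|\nabla\tilde g\|^2]\le C\,n_2/n_0^2 = O(1/n_0)$ since $n_2/n_1\to\alpha_2$, $n_1/n_0\to\alpha_1$.

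Then I would conclude: Proposition~\ref{poincare} conditionally on $\bA_2,\bX^1,\bW{1}$ gives ${\mathbb{V}\mathrm{ar}}(\tilde g\mid\bA_2,\bX^1,\bW{1})\le \E[\|\nabla\tilde g\|^2\mid\bA_2,\bX^1,\bW{1}]\le C(\varphi_1,\varphi_2,\alpha_1,\alpha_2,S)/n_0$ almost surely, and averaging over $\bA_2,\bX^1,\bW{1}$ yields the stated bound. Differentiability of $g$ in $(\bV,\bU,\bW{2})$ and the legitimacy of differentiating under the Gibbs bracket and the conditional expectation follow from~\ref{hyp:c2} by dominated convergence, exactly as in Lemma~\ref{lem:concentrationGauss_Z_Z'}. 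The only genuinely new bookkeeping — the part I expect to be the main (if mild) obstacle — is carrying the unbounded Gaussian factors $Z_\mu$ introduced by the representation \eqref{explicit-ham} through all the estimates; this is harmless since they appear linearly with finite second moment, so they merely replace the constant by a multiple of $1+\E[Z_1^2]$.
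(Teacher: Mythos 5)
Your argument follows the paper's proof essentially verbatim: both differentiate the conditional free entropy $\tilde g$ with respect to the Gaussian block $(\bV,\bU,\bW{2})$, bound each partial derivative using $\vert\Gamma_{t,\epsilon,\mu}\vert\leq 2\sup\vert\varphi_2\vert$, the derivative formulas from \eqref{defGamma}, and $k_1^2+k_2^2 = \rho_1(n_0)t+2s_{n_0}$, and then conclude via the Gaussian Poincaré inequality (Proposition~\ref{poincare}) applied conditionally on $\bA_2,\bX^1,\bW{1}$. The only cosmetic difference is that you carry the factor $(1+\vert Z_\mu\vert)$ through to the end and invoke Jensen to pass from $\nabla g$ to $\nabla\tilde g$, whereas the paper absorbs the $Z_\mu$-average directly into the bound on $\vert\partial_{V_\mu}\tilde g\vert$; this is a harmless reordering.
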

\begin{proof}
Here $g = \nicefrac{\mathbb{E}[\ln\widehat{\mathcal{Z}}_{\mathbf{n},t,\epsilon} \vert \bV, \bU, \bW{2}, \bA_2, \bX^1, \bW{1}]}{n_0}$ is seen as a function of $\bV$, $\bU$, $\bW{2}$ \textit{only} and we work conditionally to all other random variables, i.e.\ $\bA_2$, $\bX^1$, $\bW{1}$. From now on, and until the end of the proof, we will denote by $\tilde{\E}[\,\cdot\,]$ the conditional expectation $\E[\,\cdot\, \vert \bV, \bU, \bW{2}, \bA_2, \bX^1, \bW{1}]$
\begin{align*}
\left\vert\frac{\partial g}{\partial V_\mu}\right\vert
	&= n_0^{-1}\bigg\vert \tilde{\E}\bigg[\bigg\langle \big(\Gamma_{t, \epsilon, \mu} + \sqrt{\Delta} Z_\mu) \Delta^{-1} \frac{\partial \Gamma_{t, \epsilon, \mu}}{\partial V_\mu} \bigg\rangle_{\!\! \widehat{\mathcal{H}}_{t,\epsilon}}\bigg]\bigg\vert\\
	&\leq n_0^{-1}\tilde{\E}\bigg[(2\sup\vert \varphi_2\vert  + \sqrt{\Delta} \vert Z_\mu\vert) \frac{\sqrt{\rho_1(n_0)}}{\Delta} 2 \sup\vert \varphi_2^{'}\vert \bigg]\\
	&= n_0^{-1}\bigg(2\sup\vert \varphi_2\vert  + \sqrt{\frac{2\Delta}{\pi}}\bigg) \frac{\sqrt{\rho_1(n_0)}}{\Delta} 2 \sup\vert \varphi_2^{'}\vert
\end{align*}
The same inequality holds for $\big\vert\frac{\partial g}{\partial U_\mu}\big\vert$. To compute the derivative w.r.t.\ $(\bW{2})_{\mu i}$, first remark that
\begin{multline*}
\frac{\partial \Gamma_{t, \epsilon, \mu}}{\partial (\bW{2})_{\mu i}}
	= \sqrt{\frac{1-t}{n_1}} \bigg(
	X_i^1\varphi_2^{'}\bigg(
	\sqrt{\frac{1-t}{n_1}}\, \big[\bW{2} \bX^1\big]_\mu  + k_1(t) \,V_{\mu} + k_2(t) \,U_{\mu}\bigg)\\
	- x_i^1 \varphi_2^{'}\bigg(\sqrt{\frac{1-t}{n_1}}\, \big[\bW{2}\bx^1\big]_\mu  + k_1(t) \,V_{\mu} + k_2(t) \,u_{\mu}\bigg)
\bigg)
\end{multline*}
Therefore
\begin{align*}
\left\vert \frac{\partial g}{\partial (\bW{2})_{\mu i}} \right\vert
	& = n_0^{-1}\bigg\vert \tilde{\E}\bigg[\bigg\langle (\Gamma_{t, \epsilon,\mu} + \sqrt{\Delta}Z_\mu) \Delta^{-1}\frac{\partial \Gamma_{t, \epsilon,\mu}}{\partial (\bW{2})_{\mu i}} \bigg\rangle_{\!\! \widehat{\mathcal{H}}_{t,\epsilon}}\bigg]\bigg\vert\\
	&\leq \frac{1}{n_0 \sqrt{n_1}}\tilde{\E}\Big[(2\sup\vert \varphi_2\vert + \sqrt{\Delta} \vert Z_\mu\vert) \Delta^{-1}
(2 \sup\vert \varphi_1 \vert \sup\vert \varphi_2^{'} \vert)\Big] \\
	&= \frac{1}{n_0 \sqrt{n_1}}\bigg(2\sup\vert \varphi_2\vert + \sqrt{\frac{2\Delta}{\pi}}\bigg) \Delta^{-1}
(2 \sup\vert \varphi_1 \vert \sup\vert \varphi_2^{'} \vert)
\end{align*}
Putting these inequalities together one ends up with
\begin{align*}
\Vert \nabla g\Vert^2 &= \sum_{\mu=1}^{n_2} \left\vert\frac{\partial g}{\partial V_\mu}\right\vert^2
+
\sum_{\mu=1}^{n_2} \left\vert\frac{\partial g}{\partial U_\mu}\right\vert^2
+
\sum_{\mu=1}^{n_2} \sum_{i=1}^{n_1} \left\vert \frac{\partial g}{\partial (\bW{2})_{\mu i}} \right\vert^2\\
&\leq \frac{n_2}{n_0^2} \Big(2 \underbrace{\rho_1(n_0)}_{\to \rho_1}  + \sup\vert \varphi_1 \vert^2 \Big)
\left(\frac{2\sup\vert \varphi_2^{'} \vert}{\Delta}\right)^2 \Bigg(2\sup\vert \varphi_2\vert + \sqrt{\frac{2\Delta}{\pi}}\Bigg)^2 \; .
\end{align*}
Then the lemma follows once again of Proposition~\ref{poincare}.
\end{proof}
Next the variance bound of Lemma~\ref{bounded_diff} is applied to show the concentration of $\nicefrac{\mathbb{E}[\ln\hat{\mathcal{Z}}_{\mathbf{n},t,\epsilon} \vert \bA_2, \bX^1, \bW{1}]}{n_0}$ w.r.t.\ $\bA_2$.
\begin{lemma}\label{lem:concentration_A2}
Under~\ref{hyp:bounded}~\ref{hyp:c2}~\ref{hyp:phi_gauss2}, there exists a positive constant ${C(\varphi_1, \varphi_2, \alpha_1, \alpha_2, S)}$ such that $\forall t \in [0,1]$
	\begin{equation}
		\E \Bigg[\Bigg(
		\E\bigg[\frac{\ln \widehat{\cZ}_{\mathbf{n},t,\epsilon}}{n_0} \bigg\vert \bA_2, \bX^1, \bW{1} \bigg]
		- \E\bigg[\frac{\ln \widehat{\cZ}_{\mathbf{n},t,\epsilon}}{n_0} \bigg\vert \bX^1, \bW{1} \bigg]
		\Bigg)^2\Bigg]
		\leq \frac{C(\varphi_1, \varphi_2, \alpha_1, \alpha_2, S)}{n_0} \,.
	\end{equation}
\end{lemma}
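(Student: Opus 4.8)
The plan is to apply the bounded-difference inequality of Proposition~\ref{bounded_diff} to the functional
\[
g(\bA_2) \defeq \frac{1}{n_0}\,\E\!\left[\ln\widehat{\cZ}_{\mathbf{n},t,\epsilon}\,\middle|\,\bA_2,\bX^1,\bW{1}\right],
\]
regarded as a function of the $n_2$ independent blocks $\bA_{2,1},\dots,\bA_{2,n_2}$, everything else ($\bZ$, $\bZ'$, $\bV$, $\bU$, $\bW{2}$) being averaged out while $\bX^1,\bW{1}$ are frozen. Since $\bA_2$ is independent of $(\bX^1,\bW{1})$, it is enough to establish a bounded-difference property for $g$, deduce an almost-sure bound on the conditional variance, and then take the outer expectation over $\bX^1,\bW{1}$; as in Lemma~\ref{lem:concentrationGauss_V_U_W2}, the intermediate conditional expectations are well defined under \ref{hyp:c2}.

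First I would locate where a single block $\bA_{2,\mu}$ enters. From \eqref{explicit-ham} and \eqref{defGamma}, $\bA_{2,\mu}$ appears in $\widehat{\cH}_{t,\epsilon}$ only through the $\mu$-th summand $\tfrac{1}{2\Delta}\big(\Gamma_{t,\epsilon,\mu}^2+2\sqrt{\Delta}\,Z_\mu\,\Gamma_{t,\epsilon,\mu}\big)$, and within it only through the first (planted) term of $\Gamma_{t,\epsilon,\mu}$, namely $\varphi_2(\cdot\,,\bA_{2,\mu})$ — the second term carries the integration variable $\ba_{2,\mu}$, not the realization $\bA_{2,\mu}$. Writing $\Gamma_{t,\epsilon,\mu}=a_\mu-b_\mu$ with $|a_\mu|,|b_\mu|\le\sup|\varphi_2|$ by \ref{hyp:c2}, replacing $\bA_{2,\mu}$ by $\bA_{2,\mu}'$ turns $a_\mu$ into some $a_\mu'$ with $|a_\mu-a_\mu'|\le 2\sup|\varphi_2|$, so that, uniformly over the Gibbs configuration $(\bx,\ba_1,\ba_2,\bu)$,
\[
\bigl|\Gamma_{t,\epsilon,\mu}^2-(\Gamma_{t,\epsilon,\mu}')^2\bigr|=|a_\mu-a_\mu'|\,|a_\mu+a_\mu'-2b_\mu|\le 8\sup|\varphi_2|^2,\qquad \bigl|\Gamma_{t,\epsilon,\mu}-\Gamma_{t,\epsilon,\mu}'\bigr|\le 2\sup|\varphi_2|.
\]
Denoting by $\bA_2^{(\mu)}$ the vector differing from $\bA_2$ only in block $\mu$ and using the elementary inequality $\bigl|\ln\langle e^{-X}\rangle\bigr|\le\sup|X|$ (with $\langle-\rangle$ the Gibbs average at parameter $\bA_2^{(\mu)}$ and $X$ the perturbation of the single $\mu$-th Hamiltonian summand), I obtain the pointwise bound
\[
\Bigl|\ln\widehat{\cZ}_{\mathbf{n},t,\epsilon}(\bA_2)-\ln\widehat{\cZ}_{\mathbf{n},t,\epsilon}(\bA_2^{(\mu)})\Bigr|\le \frac{4\sup|\varphi_2|^2}{\Delta}+\frac{2\sup|\varphi_2|}{\sqrt{\Delta}}\,|Z_\mu|.
\]

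Taking the expectation over the Gaussian variables (in particular $\E|Z_\mu|=\sqrt{2/\pi}$) then gives a deterministic bounded-difference bound for $g$ with constants
\[
c_\mu=\frac{c}{n_0},\qquad c\defeq \frac{4\sup|\varphi_2|^2}{\Delta}+\frac{2}{\sqrt{\Delta}}\sqrt{\frac{2}{\pi}}\,\sup|\varphi_2|,
\]
independent of $t$ and $\epsilon$. Proposition~\ref{bounded_diff} yields ${\mathbb{V}\mathrm{ar}}\bigl(g(\bA_2)\mid \bX^1,\bW{1}\bigr)\le \tfrac14\sum_{\mu=1}^{n_2}c_\mu^2=\tfrac{c^2}{4}\,\tfrac{n_2}{n_0^2}$ almost surely, which is $\le C(\varphi_1,\varphi_2,\alpha_1,\alpha_2,S)/n_0$ since $n_2/n_0\to\alpha_1\alpha_2$; averaging over $\bX^1,\bW{1}$ and using $\E\bigl[g(\bA_2)\mid\bX^1,\bW{1}\bigr]=\tfrac1{n_0}\E[\ln\widehat{\cZ}_{\mathbf{n},t,\epsilon}\mid\bX^1,\bW{1}]$ closes the proof, with a bound uniform in $t\in[0,1]$ and $\epsilon\in\mathcal{B}_{n_0}$. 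The only mild subtlety — and the point I would be careful about — is that the per-block Hamiltonian increment contains the unbounded factor $|Z_\mu|$; this is harmless precisely because $g$ is already an average over $\bZ$, so McDiarmid is applied to the $\bZ$-averaged functional rather than to $\ln\widehat{\cZ}_{\mathbf{n},t,\epsilon}$ itself. Everything else is bookkeeping enabled by the boundedness of $\varphi_2$ in \ref{hyp:c2}.
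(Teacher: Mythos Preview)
Your proof is correct and follows essentially the same route as the paper: define $g(\bA_2)=n_0^{-1}\E[\ln\widehat{\cZ}_{\mathbf{n},t,\epsilon}\mid\bA_2,\bX^1,\bW{1}]$, observe that a single block $\bA_{2,\nu}$ affects only the $\nu$-th Hamiltonian summand, bound the resulting increment using $|\Gamma|\le 2\sup|\varphi_2|$, average out $|Z_\nu|$, and apply Proposition~\ref{bounded_diff}. The paper sandwiches the free-entropy difference via Jensen's inequality with the two Gibbs brackets $\langle\widehat{\cH}^{(\nu)}-\widehat{\cH}\rangle_{\widehat{\cH}^{(\nu)}}$ and $\langle\widehat{\cH}^{(\nu)}-\widehat{\cH}\rangle_{\widehat{\cH}}$, whereas you use the equivalent pointwise bound $|\ln\langle e^{-X}\rangle|\le\sup|X|$; both yield the same constant $c/n_0$ with $c=\tfrac{4\sup|\varphi_2|^2}{\Delta}+\tfrac{2\sup|\varphi_2|}{\sqrt{\Delta}}\sqrt{2/\pi}$, and your remark about handling the unbounded $|Z_\mu|$ by applying McDiarmid to the already $\bZ$-averaged functional is exactly the point.
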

\begin{proof}
Here $g = \nicefrac{\E[\ln\hat{\mathcal{Z}}_{\mathbf{n},t,\epsilon} \vert  \bA_2, \bX^1, \bW{1}]}{n_0}$ is seen as a function of $\bA_2$ only, and we work conditionally to $\bX^1, \bW{1}$. Define $\E_{G}$ as the expectation w.r.t.\ the Gaussian random variables $\bZ$, $\bZ'$, $\bV$, $\bU$, $\bW{2}$, so that $g = \nicefrac{E_G[\ln\widehat{\mathcal{Z}}_{\mathbf{n},t,\epsilon}]}{n_0}$.

Let $\nu \in \{1, \dots, n_2 \}$. One wants to estimate the variation $g(\bA_2) - g(\bA_2^{(\nu)})$ for two configurations $\bA_2$ and $\bA_2^{(\nu)}$ with $A_{2,\mu}^{(\nu)} = A_{2,\mu}$ for $\mu \neq \nu$.
$\widehat{\mathcal{H}}_{t,\epsilon}^{(\nu)}$ and $\Gamma_{t,\epsilon,\mu}^{(\nu)}$ will denote the quantities $\widehat{\mathcal{H}}_{t.\epsilon}$ and $\Gamma_{t,\epsilon,\mu}$ where $\bA_2$ is replaced by $\bA_2^{(\nu)}$, respectively.
By an application of Jensen's inequality one finds
\begin{align}\label{jensen1}
\frac{1}{n_0} \mathbb{E}_{G}\Big[ \big\langle \widehat{\mathcal{H}}_{t,\epsilon}^{(\nu)} - \widehat{\mathcal{H}}_{t,\epsilon} \big\rangle_{\widehat{\mathcal{H}}_{t,\epsilon}^{(\nu)}} \Big]
\leq 
g(\bA_2) - g(\bA_2^{(\nu)}) 
\leq 
\frac{1}{n_0} \mathbb{E}_{G}\Big[ \big\langle \widehat{\mathcal{H}}_{t,\epsilon}^{(\nu)} - \widehat{\mathcal{H}}_{t,\epsilon} \big\rangle_{\widehat{\mathcal{H}}_{t,\epsilon}} \Big]
\end{align}
where the Gibbs brackets pertain to the effective Hamiltonians \eqref{explicit-ham}. From \eqref{explicit-ham} we obtain 
\begin{multline*}
\widehat{\mathcal{H}}_{t,\epsilon}^{(\nu)} - \widehat{\mathcal{H}}_{t,\epsilon}
= \frac{1}{2\Delta}\sum_{\mu=1}^{n_2} \Gamma_{t,\epsilon,\mu}^{{(\nu)}2} - \Gamma_{t,\epsilon,\mu}^2 + 2\sqrt{\Delta} Z_\mu(\Gamma_{t,\epsilon,\mu}^{(\nu)} - \Gamma_{t,\epsilon,\mu})\\
= \frac{1}{2\Delta} \left( \Gamma_{t,\epsilon,\nu}^{{(\nu)}2} - \Gamma_{t,\epsilon,\nu}^2 + 2 \sqrt{\Delta} Z_\nu(\Gamma_{t,\epsilon,\nu}^{(\nu)} - \Gamma_{t,\epsilon,\nu}) \right) \,.
\end{multline*}
Notice that $\big\vert \Gamma_{t,\epsilon,\nu}^{(\nu)2} -  \Gamma_{t,\epsilon,\nu}^2 + 2Z_\nu (\Gamma_{t,\epsilon,\nu}^{(\nu)} - \Gamma_{t,\epsilon,\nu})\big\vert
\leq 8 \sup\vert\varphi_2\vert^2 + 4 \vert Z_\nu\vert \sup\vert\varphi_2\vert$. From \eqref{jensen1} we conclude that $g$ satisfies the bounded difference property:
\begin{equation*}
\Big\vert g\big(\bA_2\big) - g\big(\bA_2^{(\nu)}\big) \Big\vert
	\leq \frac{2\sup\vert\varphi_2\vert}{\Delta n_0} \Bigg( 2 \sup\vert\varphi_2\vert + \sqrt{\frac{2}{\pi}\Delta}\Bigg).
\end{equation*}
An application of Proposition~\ref{bounded_diff} (remember $\bA_{2,1}, \dots, \bA_{2,n_2}$ are i.i.d.) thus gives
\begin{multline*}
\E \Bigg[\Bigg(
\E\bigg[\frac{\ln \widehat{\cZ}_{\mathbf{n},t,\epsilon}}{n_0} \bigg\vert \bA_2, \bX^1, \bW{1} \bigg]
- \E\bigg[\frac{\ln \widehat{\cZ}_{\mathbf{n},t,\epsilon}}{n_0} \bigg\vert \bX^1, \bW{1} \bigg]
\Bigg)^2
\Bigg\vert \bX^1, \bW{1}\Bigg]\\
\leq \frac{n_2 \sup\vert\varphi_2\vert^2}{\Delta^2 n_0^2} \Bigg( 2 \sup\vert\varphi_2\vert + \sqrt{\frac{2}{\pi}\Delta}\Bigg)^2
\end{multline*}
almost surely. Taking the expectation on the latter bound gives Lemma~\ref{lem:concentration_A2}.
\end{proof}

\subsubsection{Concentration with respect to $\bX^1$ and $\bW{1}$}
In this subsection, we prove that $\nicefrac{\E[\ln \widehat{\cZ}_{\mathbf{n},t,\epsilon} \vert X^1 , \bW{1}]}{n_0}$ is close to its expectation, i.e.\ 
\begin{lemma}\label{lem:concentration_X1_W1}
Under~\ref{hyp:bounded}~\ref{hyp:c2}~\ref{hyp:phi_gauss2}, there exists a positive constant ${C(\varphi_1, \varphi_2, \alpha_1, \alpha_2, S)}$ such that $\forall t \in [0,1]$
	\begin{equation}
	\E \Bigg[\Bigg(
		\E\bigg[\frac{\ln \widehat{\cZ}_{\mathbf{n},t,\epsilon}}{n_0} \bigg\vert \bX^1,\bW{1} \bigg] - \E\bigg[\frac{\ln \widehat{\cZ}_{\mathbf{n},t,\epsilon}}{n_0} \bigg]
	\Bigg)^2\Bigg]
		\leq \frac{C(\varphi_1, \varphi_2, \alpha_1, \alpha_2, S)}{n_0} \,.
	\end{equation}
\end{lemma}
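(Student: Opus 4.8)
The plan is to exploit that, once the Gaussian noises $\bZ,\bZ',\bV,\bU$, the weights $\bW{2}$ and the auxiliary variables $\bA_2$ are integrated out (as in Lemmas~\ref{lem:concentration_Z_Z'_V_U_W2}--\ref{lem:concentration_A2}), the object $h \defeq \E[n_0^{-1}\ln\widehat{\cZ}_{\mathbf{n},t,\epsilon}\,|\,\bX^1,\bW{1}]$ is a genuine function of the pair $(\bX^1,\bW{1})$, and that $\bX^1=\varphi_1(\bW{1}\bX^0/\sqrt{n_0},\bA_1)$ is a componentwise function of $(\bW{1},\bX^0,\bA_1)$ that is Lipschitz in $\bW{1}$ and $\bX^0$. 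Writing $h=\tilde h(\bW{1},\bX^0,\bA_1)$ and using the conditional-variance decomposition
\begin{equation*}
\mathrm{Var}(\tilde h) = \E\big[\mathrm{Var}(\tilde h\,|\,\bW{1},\bX^0)\big] + \E\big[\mathrm{Var}\big(\E[\tilde h\,|\,\bW{1},\bX^0]\,\big|\,\bX^0\big)\big] + \mathrm{Var}\big(\E[\tilde h\,|\,\bX^0]\big),
\end{equation*}
I would bound the three summands in turn by a bounded-difference argument w.r.t.\ $\bA_1$, the Gaussian Poincaré inequality (Proposition~\ref{poincare}) w.r.t.\ $\bW{1}$, and a bounded-difference argument (Proposition~\ref{bounded_diff}) w.r.t.\ $\bX^0$, whose support lies in $[-S,S]^{n_0}$ by~\ref{hyp:bounded}. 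This mirrors the analogous free-entropy concentration estimate in \cite{BarbierOneLayerGLM}, now with the extra layer $\varphi_1$ in the way.

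For the $\bA_1$ term, given $(\bW{1},\bX^0)$ the $\bA_{1,i}$ are i.i.d.\ and changing $\bA_{1,i}$ changes only the single coordinate $X_i^1$, by at most $2\sup|\varphi_1|$. The key is the per-coordinate bound $|\partial h/\partial X_i^1|\le C/n_0$: a Jensen/Gibbs manipulation writes this derivative as $n_0^{-1}$ times an averaged Gibbs expectation involving $\tfrac{1}{\sqrt{n_1}}\sum_\mu(\Gamma_{t,\epsilon,\mu}+\sqrt{\Delta}Z_\mu)(\bW{2})_{\mu i}\varphi_2'(\cdot)$, whose naive absolute-value bound is only $O(\sqrt{n_0})$; performing one Gaussian integration by parts in each $(\bW{2})_{\mu i}$ produces an extra $\sqrt{(1-t)/n_1}$ factor, so the sum over $\mu$ is $O(1)$ and the derivative is $O(1/n_0)$. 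Then Proposition~\ref{bounded_diff} gives $\mathrm{Var}(\tilde h\,|\,\bW{1},\bX^0)\le \tfrac14\,n_1 (C/n_0)^2=O(1/n_0)$. The same mechanism, applied in $\bW{1}$, handles the Poincaré term: each entry $(\bW{1})_{ij}$ enters $h$ through $X_i^1$ and through the replica field $\varphi_1([\bW{1}\bx/\sqrt{n_0}]_i,\ba_{1,i})$ (both with a $1/\sqrt{n_0}$ scaling, the latter controlled since $|x_j|\le S$), so after one integration by parts in $\bW{2}$ one gets $|\partial h/\partial(\bW{1})_{ij}|\le C/n_0^{3/2}$ and $\E\|\nabla_{\bW{1}}h\|^2\le n_0 n_1\,C^2/n_0^{3}=O(1/n_0)$.

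For the $\bX^0$ term, changing $X^0_j$ shifts every $X_i^1$ by $\varphi_1'(\cdot)(\bW{1})_{ij}(X^0_j-\tilde X^0_j)/\sqrt{n_0}$, so along the segment $\tilde h(\bX^0)-\tilde h(\tilde\bX^0)=\tfrac{1}{\sqrt{n_0}}\int_0^1\sum_i\partial_{X_i^1}\tilde h\cdot\varphi_1'(\cdot)(\bW{1})_{ij}\,d\theta$; bounding $\sum_i|(\bW{1})_{ij}|$ by $O(n_1)$ would only yield $O(1/\sqrt{n_0})$, which is not enough, so I would instead average over $\bW{1}$ first and integrate by parts in $(\bW{1})_{ij}$, which supplies another $1/\sqrt{n_0}$ and, combined with the $O(1/n_0)$ control of $\partial_{X_i^1}\tilde h$ and the $\mathcal{C}^2$ bounds of~\ref{hyp:c2}, gives $\big|\E_{\bW{1},\bA_1}[\tilde h(\bX^0)-\tilde h(\tilde\bX^0)]\big|\le C/n_0$; Proposition~\ref{bounded_diff} then gives $\tfrac14 n_0 (C/n_0)^2=O(1/n_0)$. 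Summing the three contributions proves the lemma.

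The main obstacle is exactly this per-coordinate sensitivity of $h$ to $\bX^1$ and its propagation back to $\bW{1}$ and $\bX^0$: because $\bX^1$ enters the Hamiltonian only through the aspect-ratio-preserving product $\bW{2}\bX^1/\sqrt{n_1}$, every worst-case bound overestimates the true sensitivity by a factor $\sqrt{n_0}$, and recovering the correct $1/n_0$ scale requires systematically trading each Gaussian weight $\bW{1},\bW{2}$ for a derivative via integration by parts and keeping track of the $\mathcal{C}^2$-bounds on $\varphi_1,\varphi_2$---the same bookkeeping that underlies the variance estimate on $\|\bX^1\|^2$ in Appendix~\ref{app:uniformVanishingA}.
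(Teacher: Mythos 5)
Your proposal is correct and follows essentially the same route as the paper: the conditional-variance decomposition over $\bA_1$, $\bW{1}$, $\bX^0$ is exactly the split into Lemmas~\ref{lem:concentration_X1}, \ref{lem:concentration_W1}, \ref{lem:concentration_X0}, with bounded-difference for the $\bA_1$- and $\bX^0$-fluctuations, Gaussian Poincar\'e for $\bW{1}$, and the same key mechanism of trading factors of $\bW{1}$, $\bW{2}$ for $1/\sqrt{n}$ via Gaussian integration by parts to recover the $O(1/n_0)$ scaling. You even identify the correct per-coordinate sensitivity bounds $|\partial h/\partial X_i^1|\le C/n_0$ and $|\partial h/\partial (\bW{1})_{ij}|\le C/n_0^{3/2}$ that the paper obtains.
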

\begin{proof}
	Remember that $\bX^1 \defeq \varphi_1\big(\nicefrac{\bW{1} \bX^0}{\sqrt{n_0}}, \bA_{1}\big)$. Also $\nicefrac{\ln \widehat{\cZ}_{\mathbf{n},t,\epsilon}}{n_0}$ depends on $\bX^0$ \textit{only through} $\bX^1$. Therefore $\E\big[\nicefrac{\ln \widehat{\cZ}_{\mathbf{n},t,\epsilon}}{n_0} \big\vert \bX^1,\bW{1} \big] = \E\big[\nicefrac{\ln \hat{\cZ}_{\mathbf{n},t,\epsilon}}{n_0} \big\vert \bX^1,\bW{1},\bX^0 \big]$ and the lemma follows from Lemmas~\ref{lem:concentration_X1},~\ref{lem:concentration_W1},~\ref{lem:concentration_X0} stated and proven below.
\end{proof}
\begin{lemma}\label{lem:concentration_X1}
Under~\ref{hyp:bounded}~\ref{hyp:c2}~\ref{hyp:phi_gauss2}, there exists a positive constant ${C(\varphi_1, \varphi_2, \alpha_1, \alpha_2, S)}$ such that $\forall t \in [0,1]$
	\begin{equation}
	\E \Bigg[\Bigg(
	\E\bigg[\frac{\ln \widehat{\cZ}_{\mathbf{n},t,\epsilon}}{n_0} \bigg\vert \bX^1,\bW{1}, \bX^0 \bigg] - \E\bigg[\frac{\ln \widehat{\cZ}_{\mathbf{n},t,\epsilon}}{n_0}  \bigg\vert \bW{1}, \bX^0\bigg]
	\Bigg)^2\Bigg]
	\leq \frac{C(\varphi_1, \varphi_2, \alpha_1, \alpha_2, S)}{n_0} \,.
	\end{equation}
\end{lemma}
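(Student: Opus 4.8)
The plan is to prove this by the bounded-difference inequality (Proposition~\ref{bounded_diff}), applied conditionally on $\bW{1}$ and $\bX^0$ to the independent coordinates of $\bX^1$. Since $\ln\widehat{\cZ}_{\mathbf{n},t,\epsilon}$ depends on the true signal only through $\bX^1 \defeq \varphi_1(\bW{1}\bX^0/\sqrt{n_0},\bA_1)$, and since $\bZ,\bZ',\bV,\bU,\bW{2},\bA_2$ are independent of $(\bW{1},\bX^0,\bA_1)$, the conditional expectation
\[
g(\bX^1)\defeq \E\Big[n_0^{-1}\ln\widehat{\cZ}_{\mathbf{n},t,\epsilon}\,\Big|\,\bX^1,\bW{1},\bX^0\Big]
\]
is, for fixed $\bW{1},\bX^0$, a deterministic function of the vector $\bX^1$ alone (the average being over $\bZ,\bZ',\bV,\bU,\bW{2},\bA_2$). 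Conditionally on $\bW{1},\bX^0$ the components $(X_i^1)_{i=1}^{n_1}$ are \emph{independent} --- the first argument $[\bW{1}\bX^0/\sqrt{n_0}]_i$ being then a constant and the randomness coming solely from $\bA_{1,i}$ --- and each is bounded by $\sup|\varphi_1|<\infty$ under~\ref{hyp:c2}. Hence, exactly as in the proof of Lemma~\ref{lem:concentration_A2}, it suffices to establish the pointwise sensitivity bound $|\partial_{X_i^1} g|\le C(\varphi_1,\varphi_2,\alpha_1,\alpha_2,S)/n_0$, uniformly in $t\in[0,1]$, $\epsilon\in\mathcal{B}_{n_0}$ and in the admissible value of $X_i^1$: then the bounded-difference constants satisfy $c_i\le 2\sup|\varphi_1|\,C/n_0$, so Proposition~\ref{bounded_diff} gives $\mathbb{V}\mathrm{ar}(g\mid\bW{1},\bX^0)\le\frac14\sum_{i=1}^{n_1}c_i^2=O(n_1/n_0^2)=O(1/n_0)$, and taking the outer expectation yields the claim.

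For the sensitivity bound I would differentiate $\ln\widehat{\cZ}_{\mathbf{n},t,\epsilon}$ directly in $X_i^1$, so that $\partial_{X_i^1}g = -n_0^{-1}\E[\langle\partial_{X_i^1}\widehat{\cH}_{t,\epsilon}\rangle\mid\bX^1,\bW{1},\bX^0]$. From~\eqref{explicit-ham}, \eqref{defGamma} and \eqref{defS_t_mu}, the variable $X_i^1$ enters $\widehat{\cH}_{t,\epsilon}$ through the $n_2$ quantities $\Gamma_{t,\epsilon,\mu}$ (only via $\sqrt{(1-t)/n_1}\,[\bW{2}\bX^1]_\mu$, whence $\partial_{X_i^1}\Gamma_{t,\epsilon,\mu}=\varphi_2'(S_{t,\epsilon,\mu},\bA_{2,\mu})\sqrt{(1-t)/n_1}\,(\bW{2})_{\mu i}$) and through the $i$-th summand of the quadratic $\bY'$-term, which contributes to $\partial_{X_i^1}\widehat{\cH}_{t,\epsilon}$ a term bounded uniformly in the integration variables by $O(1+|Z_i'|)$, hence $O(1)$ after averaging. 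The remaining, a priori large, piece is
\[
\frac{\sqrt{1-t}}{\Delta\sqrt{n_1}}\sum_{\mu=1}^{n_2}\big\langle(\Gamma_{t,\epsilon,\mu}+\sqrt{\Delta}\,Z_\mu)\,\varphi_2'(S_{t,\epsilon,\mu},\bA_{2,\mu})\big\rangle\,(\bW{2})_{\mu i},
\]
and the crucial step is to integrate by parts in the Gaussian variable $(\bW{2})_{\mu i}$ (as in Appendix~\ref{app:computationDerivative} and the proof of Lemma~\ref{lem:perturbation_f}): the derivative $\partial_{(\bW{2})_{\mu i}}$ hits $\Gamma_{t,\epsilon,\mu}$, $\varphi_2'(S_{t,\epsilon,\mu},\cdot)$ and the Gibbs weight, and each time brings down an extra factor $\sqrt{(1-t)/n_1}\,X_i^1$ or $\sqrt{(1-t)/n_1}\,x_i^1$, i.e.\ an extra $O(n_1^{-1/2})$. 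Under~\ref{hyp:c2} (boundedness of $\varphi_2,\varphi_2',\varphi_2''$) all other factors are bounded, up to powers of $|Z_\mu|$ with finite moments, so after taking expectations each of the $n_2$ summands is $O(n_1^{-1})$ and, with the prefactor, the whole piece is $O(n_2/n_1)=O(1)$. Combining, $|\partial_{X_i^1}g|\le C/n_0$.

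The main obstacle is precisely this last estimate. Without exploiting the Gaussianity of $\bW{2}$, a crude bound $|g(\bX^1)-g(\bX^{1,(i)})|\le n_0^{-1}\E\sup_{\text{int.\ vars}}|\widehat{\cH}_{t,\epsilon}(\bX^1)-\widehat{\cH}_{t,\epsilon}(\bX^{1,(i)})|$ only gives $c_i=O(n_0^{-1/2})$ (the sum over $\mu$ being of order $\sqrt{n_1}$), whence $\sum_i c_i^2=O(1)$, which is \emph{not} enough; the gain of a factor $n_0^{-1/2}$ per coordinate is produced entirely by the integration by parts, which turns that order-$\sqrt{n_1}$ sum into an order-$1$ quantity, and this is where hypothesis~\ref{hyp:phi_gauss2} enters. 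One must also be careful that, since in the bounded-difference argument we vary a single coordinate of $\bX^1$ while keeping $\bW{2}$ fixed inside the expectation, the integration by parts is carried out inside that expectation and the resulting bounds are uniform in the interpolated value of $X_i^1$; the rest is routine bookkeeping of the uniform bounds on $\varphi_1,\varphi_2$ and their first two derivatives and of the Gaussian moments of $\bZ$, $\bZ'$.
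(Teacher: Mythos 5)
Your proposal reproduces the paper's own proof strategy essentially step by step: the bounded-difference inequality applied to the coordinates of $\bX^1$ (which are conditionally i.i.d.\ given $\bW{1},\bX^0$ and bounded by $\sup|\varphi_1|$), with the sensitivity bound $|\partial_{X_i^1}g|\le C/n_0$ obtained precisely by the Gaussian integration by parts in $(\bW{2})_{\mu i}$ that converts the order-$\sqrt{n_1}$ sum over $\mu$ into an order-one quantity. You also correctly identify why the naive estimate fails and where hypothesis~\ref{hyp:phi_gauss2} is used, which matches the paper; the argument is correct.
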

\begin{proof}
By definition $\bX^1 \defeq \varphi_1\big(\nicefrac{\bW{1} \bX^0}{\sqrt{n_0}}, \bA_{1}\big)$.
As $\bA_{1,1},\dots,\bA_{1,n_1}$ are i.i.d., the random variables $X_1^1,\dots,X_{n_1}^1$ are i.i.d.\ conditionally on $\bW{1}$ and $\bX^0$.
Define ${g(\bc) = \E[\nicefrac{\ln\widehat{\mathcal{Z}}_{\mathbf{n},t,\epsilon}}{n_0} \vert \bX^1 = \bc, \bW{1},\bX^0]}$. We will show that $g$ satisfies the bounded difference property, then an application of Proposition~\ref{bounded_diff} will end the proof.
	
Let $i \in \{1, \dots, n_1\}$. Consider two vectors ${\bc,\bc^{(i)} \in [-\sup \vert \varphi_1 \vert, \sup \vert \varphi_1 \vert]^{n_1}}$ such that $c_j^{(i)}= c_j$ for $j \neq i$.
For $s \in [0,1]$ we define $\psi(s) = g(s \bc + (1-s) \bc^{(i)})$. Hence $\psi(1) = g(\bc)$ and $\psi(0)=g(\bc^{(i)})$. If we can prove that
\begin{equation}\label{psiPrimeInequality}
	\big\vert \psi'(s) \big\vert \leq \frac{C(\varphi_1, \varphi_2, \alpha_1, \alpha_2, S)}{n_0} \quad \forall s \in [0,1] \, ,
\end{equation}
then the bounded difference property follows, namely
\begin{equation*}
	\sup\limits_{\bc, \bc^{(i)}} \big\vert g(\bc) - g(\bc^{(i)}) \big\vert \leq \frac{C(\varphi_1, \varphi_2, \alpha_1, \alpha_2, S)}{n_0} \:.
\end{equation*}
Let $\tilde{\E}[\,\cdot\,] \defeq \E[ \, \cdot \, \vert \bX^1 = s \bc + (1-s)\bc^{(i)}, \bW{1}, \bX^0]$. The derivative of $\psi$ satisfies
\begin{align}
	\big\vert \psi'(s) \big\vert
	&= \frac{\big\vert c_i^1 - c_i^{(i)} \big\vert}{n_0}
	\tilde{\E}\bigg[\bigg\langle \frac{\partial \widehat{\cH}_{t,\epsilon}}{\partial X_i^1} \bigg\rangle_{\!\! \hat{\cH}_{t,\epsilon}}\,\bigg]\nn
	&\leq \,\frac{2 \sup \vert \varphi_1 \vert}{n_0 \Delta}
	\sum_{\mu=1}^{n_2} \bigg\vert \tilde{\E}\bigg[\bigg\langle (\Gamma_{t,\epsilon,\mu} + \sqrt{\Delta}Z_\mu) \frac{\partial \Gamma_{t,\epsilon,\mu}}{\partial X_i^1} \bigg\rangle_{\!\widehat{\mathcal{H}}_{t,\epsilon}} \bigg]\bigg\vert\nn
	&\qquad\quad+ \frac{2 \sup \vert \varphi_1 \vert}{n_0} \Big\vert\tilde{\E}\Big[
	\sqrt{R_1(t,\epsilon)}\Big\langle \sqrt{R_1(t,\epsilon)} X_i^1 - \sqrt{R_1(t,\epsilon)} x_i^1 + Z_i^{'}\Big\rangle_{\!\widehat{\mathcal{H}}_{t,\epsilon}}\,\Big]\Big\vert\,.
\end{align}
This last expectation satisfies
\begin{multline*}
	\Big\vert \tilde{\E}\Big[\sqrt{R_1(t,\epsilon)}
	\Big\langle \sqrt{R_1(t,\epsilon)} X_i^1 - \sqrt{R_1(t,\epsilon)} x_i^1 + Z_i^{'}\Big\rangle_{\!\! \widehat{\mathcal{H}}_{t,\epsilon}} \,
	\Big]\bigg\vert\\
	\leq \tilde{\E}\Big[\sqrt{R_1(t,\epsilon)}
	\big(\sqrt{R_1(t,\epsilon)} \, 2 \sup \vert \varphi_1 \vert + \vert Z_i^{'} \vert \big)
	\Big]
	\leq \rmax \, 2 \sup \vert \varphi_1 \vert + \sqrt{\frac{2\rmax}{\pi}} \:,
\end{multline*}
For $\mu \in \{1,\dots,n_2\}$, set $K \defeq k_1(t) \,V_{\mu} + k_2(t) \,U_{\mu}$ to lighten the notations and perform an integration by parts w.r.t.\ $(\bW{2})_{\mu i}$ in the following expectation:
\begin{align*}
	&\bigg\vert \tilde{\E}\bigg[\bigg\langle \!(\Gamma_{t,\epsilon,\mu} + \sqrt{\Delta}Z_\mu) \frac{\partial \Gamma_{t,\epsilon,\mu}}{\partial X_i^1} \bigg\rangle_{\!\widehat{\mathcal{H}}_{t,\epsilon}}\,\bigg]\bigg\vert\\
	&\;= \bigg\vert \tilde{\E}\bigg[\bigg\langle \!(\Gamma_{t,\epsilon,\mu} + \sqrt{\Delta}Z_\mu)\sqrt{\frac{1-t}{n_1}}(\bW{2})_{\mu i}
	\varphi_2^{\prime}\bigg(\sqrt{\frac{1-t}{n_1}}\, \Big[\bW{2} \bX^1\Big]_\mu + K, \bA_{2,\mu}\bigg) \!\bigg\rangle_{\!\!\widehat{\mathcal{H}}_{t,\epsilon}} \,\bigg]\bigg\vert\\
	&\;\leq \frac{1}{n_1}\bigg\vert \tilde{\E}\bigg[\bigg\langle \!(\Gamma_{t,\epsilon,\mu} + \sqrt{\Delta}Z_\mu)X_i^1
	\varphi_2^{''}\bigg(\sqrt{\frac{1-t}{n_1}}\, \Big[\bW{2} \bX^1\Big]_\mu + K, \bA_{2,\mu}\bigg) \!\bigg\rangle_{\!\!\widehat{\mathcal{H}}_{t,\epsilon}} \,\bigg]\bigg\vert\\
	&\;\qquad+ \frac{1}{\sqrt{n_1}}\bigg\vert \tilde{\E}\bigg[\bigg\langle
	\frac{\partial \Gamma_{t,\epsilon, \mu}}{\partial (\bW{2})_{\mu,i}}
	\varphi_2^{\prime}\bigg(\sqrt{\frac{1-t}{n_1}}\, \Big[\bW{2} \bX^1\Big]_\mu + K, \bA_{2,\mu}\bigg) \!\bigg\rangle_{\!\!\widehat{\mathcal{H}}_{t,\epsilon}} \,\bigg]\bigg\vert\\
	&\;\qquad+ \frac{1}{\sqrt{n_1}} \bigg\vert \tilde{\E}\bigg[\bigg\langle \!(\Gamma_{t,\epsilon,\mu} + \sqrt{\Delta}Z_\mu)
	\varphi_2^{\prime}\bigg(\sqrt{\frac{1-t}{n_1}}\, \Big[\bW{2} \bX^1\Big]_\mu + K, \bA_{2,\mu}\bigg)
	\frac{\partial \widehat{\mathcal{H}}_{t,\epsilon}}{\partial (\bW{2})_{\mu i}}
	 \!\bigg\rangle_{\!\!\widehat{\mathcal{H}}_{t,\epsilon}} \,\bigg]\bigg\vert\\
	&\;\qquad+ \frac{1}{\sqrt{n_1}} \bigg\vert \tilde{\E}\bigg[\bigg\langle \!(\Gamma_{t,\epsilon,\mu} + \sqrt{\Delta}Z_\mu)
	\varphi_2^{\prime}\bigg(\sqrt{\frac{1-t}{n_1}}\, \Big[\bW{2} \bX^1\Big]_\mu + K, \bA_{2,\mu}\bigg) \!\bigg\rangle_{\!\!\widehat{\mathcal{H}}_{t,\epsilon}}
	\bigg\langle \frac{\partial \widehat{\mathcal{H}}_{t,\epsilon}}{\partial (\bW{2})_{\mu i}} \bigg\rangle_{\!\!\widehat{\mathcal{H}}_{t,\epsilon}}
	\bigg]\bigg\vert.
\end{align*}
It is easily shown that the first expectation is bounded by a constant $C(\varphi_1, \varphi_2, \alpha_1, \alpha_2, S)$, while the last three expectations are bounded by a term $\nicefrac{C(\varphi_1, \varphi_2, \alpha_1, \alpha_2, S)}{\sqrt{n_1}}$. The condition \eqref{psiPrimeInequality} is thus satisfied and Proposition~\ref{bounded_diff} implies that
	 \begin{equation*}
	 \E \Bigg[\Bigg(
	 \E\bigg[\frac{\ln \hat{\cZ}_{\mathbf{n},t,\epsilon}}{n_0} \bigg\vert \bX^1,\bW{1}, \bX^0 \bigg] - \E\bigg[\frac{\ln \hat{\cZ}_{\mathbf{n},t,\epsilon}}{n_0}  \bigg\vert \bW{1}, \bX^0\bigg]
	 \Bigg)^2  \Bigg\vert \bW{1}, \bX^0 \Bigg]
	 \leq \frac{C(\varphi_1, \varphi_2, \alpha_1, \alpha_2, S)}{n_0}
	 \end{equation*}
	 almost surely. Taking the expectation in this inequality ends the proof.
\end{proof}
\begin{lemma}\label{lem:concentration_W1}
Under~\ref{hyp:bounded}~\ref{hyp:c2}~\ref{hyp:phi_gauss2}, there exists a positive constant ${C(\varphi_1, \varphi_2, \alpha_1, \alpha_2, S)}$ such that $\forall t \in [0,1]$
	\begin{equation}
	\E \Bigg[\Bigg(
	\E\bigg[\frac{\ln \widehat{\cZ}_{\mathbf{n},t,\epsilon}}{n_0} \bigg\vert \bW{1}, \bX^0 \bigg] - \E\bigg[\frac{\ln \widehat{\cZ}_{\mathbf{n},t,\epsilon}}{n_0}  \bigg\vert \bX^0\bigg]
	\Bigg)^2\Bigg]
	\leq \frac{C(\varphi_1, \varphi_2, \alpha_1, \alpha_2, S)}{n_0} \,.
	\end{equation}
\end{lemma}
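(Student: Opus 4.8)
The plan is to establish the concentration of $g(\bW{1}) \defeq \E\big[\, n_0^{-1}\ln \widehat{\cZ}_{\mathbf{n},t,\epsilon} \,\big|\, \bW{1}, \bX^0\big]$ with respect to the Gaussian matrix $\bW{1}$, working conditionally on $\bX^0$, by means of the Gaussian Poincaré inequality (Proposition~\ref{poincare}). Since $\bW{1}$ is independent of $\bX^0$, conditionally on $\bX^0$ its entries are still i.i.d.\ $\cN(0,1)$, and one has $\E\big[\big(g(\bW{1}) - \E[g(\bW{1})\mid\bX^0]\big)^2\big] = \E_{\bX^0}\big[{\mathbb{V}\mathrm{ar}}(g(\bW{1})\mid \bX^0)\big] \leq \E\big[\Vert \nabla_{\bW{1}} g\Vert^2\big]$, so it is enough to prove $\E\big[\Vert \nabla_{\bW{1}} g\Vert^2\big] \leq C(\varphi_1,\varphi_2,\alpha_1,\alpha_2,S)/n_0$.

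First I would note that, by \eqref{defGamma}--\eqref{explicit-ham}, $\bW{1}$ enters $\widehat{\cZ}_{\mathbf{n},t,\epsilon}$ only through the planted hidden layer $X_i^1 = \varphi_1\big([\bW{1}\bX^0/\sqrt{n_0}]_i,\bA_{1,i}\big)$ and through the Gibbs variable $x_i^1 = \varphi_1\big([\bW{1}\bx/\sqrt{n_0}]_i,\ba_{1,i}\big)$, with $\partial X_i^1/\partial(\bW{1})_{ij} = (X^0_j/\sqrt{n_0})\,\varphi_1'\big([\bW{1}\bX^0/\sqrt{n_0}]_i,\bA_{1,i}\big)$ and $\partial x_i^1/\partial(\bW{1})_{ij} = (x_j/\sqrt{n_0})\,\varphi_1'\big([\bW{1}\bx/\sqrt{n_0}]_i,\ba_{1,i}\big)$. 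Under~\ref{hyp:bounded} one has $|X^0_j|\leq S$ and $|x_j|\leq S$ on the support of $P_0$, and under~\ref{hyp:c2} $\varphi_1'$ is bounded, so both Jacobian factors are bounded in absolute value by $C\,S/\sqrt{n_0}$. The chain rule then gives $\partial g/\partial(\bW{1})_{ij} = -n_0^{-1}\E\big[\langle\,(\partial X_i^1/\partial(\bW{1})_{ij})\,\partial\widehat{\cH}_{t,\epsilon}/\partial X_i^1 + (\partial x_i^1/\partial(\bW{1})_{ij})\,\partial\widehat{\cH}_{t,\epsilon}/\partial x_i^1\,\rangle \mid \bW{1},\bX^0\big]$, where $\langle\cdot\rangle$ is the Gibbs bracket associated to $\widehat{\cH}_{t,\epsilon}$. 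It then remains to bound $\E\big[\langle|\partial\widehat{\cH}_{t,\epsilon}/\partial X_i^1|\rangle\big]$ and $\E\big[\langle|\partial\widehat{\cH}_{t,\epsilon}/\partial x_i^1|\rangle\big]$ by a constant. Each of these derivatives is, up to a single $O(1)$ term coming from the one-layer part of $\widehat{\cH}_{t,\epsilon}$ (controlled using $\sup|\varphi_1|<\infty$, $R_1(t,\epsilon)\leq\rmax$ and $\E|Z'_i|<\infty$), a sum over $\mu\in\{1,\dots,n_2\}$ of terms of the form $\Delta^{-1}(\Gamma_{t,\epsilon,\mu}+\sqrt{\Delta}Z_\mu)\sqrt{(1-t)/n_1}\,(\bW{2})_{\mu i}\,\varphi_2'(\cdots)$. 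Following the integration-by-parts scheme of the proof of Lemma~\ref{lem:concentration_X1}, I would integrate by parts with respect to each Gaussian $(\bW{2})_{\mu i}$ (the Jacobian prefactor does not depend on $\bW{2}$ and rides along with its pointwise bound $C\,S/\sqrt{n_0}$): each such operation replaces $(\bW{2})_{\mu i}$ by a derivative producing an extra factor $\partial s_{t,\epsilon,\mu}/\partial(\bW{2})_{\mu i} = \sqrt{(1-t)/n_1}\,X_i^1 = O(1/\sqrt{n_1})$ acting on $\Gamma_{t,\epsilon,\mu}$, on $\varphi_2'$, or on $e^{-\widehat{\cH}_{t,\epsilon}}$; using~\ref{hyp:c2} to bound $\varphi_2,\varphi_2',\varphi_2''$ together with the Gaussian moments of $Z_\mu$, every $\mu$-term is bounded by $C(\varphi_1,\varphi_2,\alpha_1,\alpha_2,S)/n_1$, so the $\mu$-sum is $O(n_2/n_1)=O(\alpha_2)$ and both quantities are $\leq C(\varphi_1,\varphi_2,\alpha_1,\alpha_2,S)$.

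Combining the two estimates yields $|\partial g/\partial(\bW{1})_{ij}|\leq C(\varphi_1,\varphi_2,\alpha_1,\alpha_2,S)/n_0^{3/2}$, hence $\Vert\nabla_{\bW{1}}g\Vert^2 = \sum_{i=1}^{n_1}\sum_{j=1}^{n_0}|\partial g/\partial(\bW{1})_{ij}|^2 \leq n_1 n_0 \cdot C^2/n_0^3 = C^2\,(n_1/n_0)/n_0$, which is $O(1/n_0)$ since $n_1/n_0\to\alpha_1$; Proposition~\ref{poincare} applied conditionally on $\bX^0$ and then averaged over $\bX^0$ gives the claim. The genuine difficulty is this last estimate: a crude bound on $\partial\widehat{\cH}_{t,\epsilon}/\partial X_i^1$ and $\partial\widehat{\cH}_{t,\epsilon}/\partial x_i^1$ keeps the unbounded Gaussian weights $(\bW{2})_{\mu i}$ and only gives the insufficient $|\partial g/\partial(\bW{1})_{ij}|=O(\sqrt{n_1}/n_0^{3/2})$, so that $\Vert\nabla_{\bW{1}}g\Vert^2 = O(1)$; it is precisely the integration-by-parts contraction of these weights — with the bookkeeping of the several terms it generates, including the Gibbs-bracket products $\langle\cdot\rangle\langle\cdot\rangle$ stemming from differentiating the partition function — that supplies the missing power of $n_1$, and here the argument reuses essentially verbatim the computation carried out for Lemma~\ref{lem:concentration_X1}.
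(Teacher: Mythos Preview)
Your proposal is correct and follows essentially the same route as the paper: Gaussian Poincar\'e on $\bW{1}$ conditionally on $\bX^0$, with the key step being the integration by parts with respect to $(\bW{2})_{\mu i}$ to recover the missing $1/\sqrt{n_1}$ that a crude bound would lose. The only cosmetic difference is that the paper differentiates $\Gamma_{t,\epsilon,\mu}$ with respect to $(\bW{1})_{ij}$ directly rather than factoring through $X_i^1,x_i^1$ via the chain rule, and your phrasing ``bound $\E[\langle|\partial\widehat{\cH}/\partial X_i^1|\rangle]$'' should really read ``bound $|\tilde\E[\langle\partial\widehat{\cH}/\partial X_i^1\rangle]|$'' (the absolute value must be outside so that the Gaussian integration by parts is applicable term by term), but your execution makes clear you understand this.
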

\begin{proof}
Here $g = \nicefrac{\mathbb{E}[\ln\widehat{\mathcal{Z}}_{\mathbf{n},t,\epsilon} \vert \bW{1}, \bX^0]}{n_0}$ is seen as a function of $\bW{1}$ \textit{only} and we work conditionally to $\bX^0$. To lighten the equations, we write $\tilde{\E}[\,\cdot\,]$ in place of $\E[\,\cdot\,\vert \bW{1}, \bX^0]$ and the second argument of $\varphi_1$ and $\varphi_2$ is not written explicitly. The partial derivatives of $g$ w.r.t.\ $(\bW{1})_{i j}$ reads
\begin{multline*}
\frac{\partial g}{\partial (\bW{1})_{i j}}
	= -\frac{1}{n_0 \Delta} \sum_{\mu=1}^{n_2} 
	\tilde{\E}\bigg[\bigg\langle (\Gamma_{t, \epsilon,\mu} + \sqrt{\Delta}Z_\mu) \frac{\partial \Gamma_{t, \epsilon,\mu}}{\partial (\bW{1})_{i j}} \bigg\rangle_{\!\! \widehat{\mathcal{H}}_{t,\epsilon}} \, \bigg]\\
-\frac{\sqrt{R_1(t,\epsilon)}}{n_0^{3/2}} \mathbb{E}\bigg[\bigg\langle 
\bigg(\sqrt{R_1(t,\epsilon)} X_i^1 - \sqrt{R_1(t,\epsilon)} \varphi_1\bigg(\bigg[\frac{\bW{1} \bx}{\sqrt{n_0}}\bigg]_i\bigg) + Z_i^{'}\bigg)\\
\cdot\bigg(X_j^0\varphi_1^{'}\bigg(\bigg[\frac{\bW{1} \bX^0}{\sqrt{n_0}}\bigg]_i\bigg)-
x_j\varphi_1^{'}\bigg(\bigg[\frac{\bW{1} \bx}{\sqrt{n_0}}\bigg]_i\bigg)\bigg)
\bigg\rangle_{\!\! \widehat{\mathcal{H}}_{t,\epsilon}} \,\bigg]
\end{multline*}
In a similar fashion to what is done in previous proofs, the absolute value of the second term in this partial derivative can be upperbounded by
\begin{equation*}
\frac{\sqrt{\rmax}}{n_0^{3/2}} \left(2\sqrt{\rmax} \sup \vert \varphi_1 \vert + \sqrt{\frac{2}{\pi}}\right) \cdot 2 S \, \sup \vert \varphi_1^{'} \vert \:.
\end{equation*}
The first term requires more work. First notice that
\begin{multline*}
\frac{\partial \Gamma_{t, \epsilon,\mu}}{\partial (\bW{1})_{ij}}\\= \sqrt{\frac{1-t}{n_0 n_1}}(\bW{2})_{\mu i}
\bigg(
X_j^0 \varphi_1^{'}\bigg(\bigg[\frac{\bW{1} \bX^0}{\sqrt{n_0}}\bigg]_i\bigg)
	\varphi_2^{'}\bigg(\sqrt{\frac{1-t}{n_1}}\, \bigg[\bW{2}
		\varphi_1\bigg(\frac{\bW{1} \bX^0}{\sqrt{n_0}}\bigg)
	\bigg]_\mu  + k_1(t) \,V_{\mu} + k_2(t) \,U_{\mu}\bigg)\\
- x_j \varphi_1^{'}\bigg(\bigg[\frac{\bW{1} \bx}{\sqrt{n_0}}\bigg]_i\bigg)
	\varphi_2^{'}\bigg(\sqrt{\frac{1-t}{n_1}}\, \bigg[\bW{2}
		\varphi_1\bigg(\frac{\bW{1} \bx}{\sqrt{n_0}}\bigg)
	\bigg]_\mu  + k_1(t) \,V_{\mu} + k_2(t) \,u_{\mu}\bigg)
\bigg).
\end{multline*}
It follows that
\begin{equation*}
\tilde{\E}\bigg[\bigg\langle (\Gamma_{t, \epsilon,\mu} + \sqrt{\Delta}Z_\mu) \frac{\partial \Gamma_{t, \epsilon,\mu}}{\partial (\bW{1})_{i j}} \bigg\rangle_{\!\! \widehat{\mathcal{H}}_{t,\epsilon}} \bigg]
= \sqrt{\frac{1-t}{n_0 n_1}} \tilde{\E}\bigg[ (\bW{2})_{\mu i} \Big\langle \! (\Gamma_{t, \epsilon,\mu} + \sqrt{\Delta}Z_\mu) \tilde{\Gamma}_{t, \epsilon,\mu}^{(ij)} \Big\rangle_{\! \widehat{\mathcal{H}}_{t,\epsilon}} \bigg],
\end{equation*}
where
\begin{multline*}
\tilde{\Gamma}_{t, \epsilon,\mu}^{(ij)} \defeq
X_j^0 \varphi_1^{'}\bigg(\bigg[\frac{\bW{1} \bX^0}{\sqrt{n_0}}\bigg]_i\bigg)
	\varphi_2^{'}\bigg(\sqrt{\frac{1-t}{n_1}}\, \bigg[\bW{2}
		\varphi_1\bigg(\frac{\bW{1} \bx}{\sqrt{n_0}}\bigg)
		\bigg]_\mu  + k_1(t) \,V_{\mu} + k_2(t) \,U_{\mu}\bigg)\\
- x_j\varphi_1^{'}\bigg(\bigg[\frac{\bW{1} \bx}{\sqrt{n_0}}\bigg]_i\bigg)
	\varphi_2^{'}\bigg(\sqrt{\frac{1-t}{n_1}}\, \bigg[\bW{2}
		\varphi_1\bigg(\frac{\bW{1} \bx}{\sqrt{n_0}}\bigg)\bigg]_\mu  + k_1(t) \,V_{\mu} + k_2(t) \,u_{\mu}\bigg).
\end{multline*}
An integration by parts w.r.t.\ $(\bW{2})_{\mu i}$ gives
\begin{align*}
&\tilde{\E}\bigg[ (\bW{2})_{\mu i} \bigg\langle \!\!(\Gamma_{t,\epsilon,\mu} + \sqrt{\Delta}Z_\mu) \tilde{\Gamma}_{t,\epsilon,\mu}^{(ij)} \bigg\rangle_{\!\!\widehat{\mathcal{H}}_{t,\epsilon}}\bigg]\\
&\qquad\qquad\qquad= \tilde{\E}\bigg[ \bigg\langle \frac{\partial \Gamma_{t, \epsilon,\mu}}{\partial (\bW{2})_{\mu i}} \tilde{\Gamma}_{t, \epsilon, \mu}^{(ij)} \bigg\rangle_{\!\widehat{\mathcal{H}}_{t,\epsilon}} \bigg]
+ \tilde{\E}\bigg[\bigg\langle \!(\Gamma_{t, \epsilon,\mu} + \sqrt{\Delta}Z_\mu) \frac{\partial \tilde{\Gamma}_{t, \epsilon, \mu}^{(ij)}}{\partial (\bW{2})_{\mu i}} \bigg\rangle_{\!\widehat{\mathcal{H}}_{t,\epsilon}}\bigg]\\
&\qquad\qquad\qquad\qquad- \tilde{\E}\bigg[\bigg\langle (\Gamma_{t, \epsilon,\mu} + \sqrt{\Delta}Z_\mu)^2 \, \tilde{\Gamma}_{t, \epsilon, \mu}^{(ij)} \Delta^{-1}\frac{\partial \Gamma_{t, \epsilon,\mu}}{\partial (\bW{2})_{i j}} \bigg\rangle_{\!\widehat{\mathcal{H}}_{t,\epsilon}}\bigg]\\
&\qquad\qquad\qquad\qquad+ \tilde{\E}\bigg[\Big\langle (\Gamma_{t, \epsilon,\mu} + \sqrt{\Delta}Z_\mu) \tilde{\Gamma}_{t, \epsilon, \mu}^{(\mu i)} \Big\rangle_{\!\widehat{\mathcal{H}}_{t,\epsilon}} \bigg\langle (\Gamma_{t, \epsilon,\mu} + \sqrt{\Delta}Z_\mu) \Delta^{-1}\frac{\partial \Gamma_{t, \epsilon,\mu}}{\partial (\bW{2})_{\mu i}} \bigg\rangle_{\!\! \widehat{\mathcal{H}}_{t,\epsilon}} \,\bigg].
\end{align*}
The first two conditional expectations satisfy
\begin{align*}
\Bigg\vert \tilde{\E}\bigg[ \bigg\langle \frac{\partial \Gamma_{t, \epsilon,\mu}}{\partial (\bW{2})_{\mu i}} \tilde{\Gamma}_{t, \epsilon, \mu}^{(ij)} \bigg\rangle_{\!\! \widehat{\mathcal{H}}_{t,\epsilon}} \,\bigg] \Bigg\vert
&\leq \frac{2 \sup \vert \varphi_1 \vert  \sup \vert \varphi_2^{'} \vert}{\sqrt{n_1}} \cdot 2S \sup \vert \varphi_1^{'} \vert \sup \vert \varphi_2^{'} \vert \,,\\
\Bigg\vert \tilde{\E}\bigg[\bigg\langle \big(\Gamma_{t, \epsilon,\mu} + \sqrt{\Delta}Z_\mu \big) \frac{\partial \tilde{\Gamma}_{t, \epsilon, \mu}^{(ij)}}{\partial (\bW{2})_{\mu i}} \bigg\rangle_{\!\! \widehat{\mathcal{H}}_{t,\epsilon}} \,\bigg] \Bigg\vert
&\leq \bigg( 2\sup\vert \varphi_2\vert + \sqrt{\frac{2\Delta}{\pi}} \bigg) \frac{2S \sup \vert \varphi_1^{'} \vert\sup \vert \varphi_1 \vert \sup \vert \varphi_2^{''} \vert}{\sqrt{n_1}} \,,
\end{align*}
while for the last two we have
\begin{align*}
&\bigg\vert \tilde{\E}\bigg[\bigg\langle \big(\Gamma_{t, \epsilon,\mu} + \sqrt{\Delta}Z_\mu \big)^2 \, \tilde{\Gamma}_{t, \epsilon, \mu}^{(ij)} \frac{\partial \Gamma_{t, \epsilon,\mu}}{\partial (\bW{2})_{i j}} \bigg\rangle_{\widehat{\mathcal{H}}_{t,\epsilon}}  \bigg] \bigg\vert\\
&\qquad \leq  \E \Big[ \big(2 \sup\vert \varphi_2\vert + \sqrt{\Delta} \vert Z_\mu \vert\big)^2 \Big] \cdot 2 S \sup \vert \varphi_1^{'} \vert \sup \vert \varphi_2^{'} \vert \cdot \frac{2 \sup \vert \varphi_1 \vert  \sup \vert \varphi_2^{'} \vert}{\sqrt{n_1}}\\
&\qquad =  \Bigg( 4 \sup\vert \varphi_2\vert^2 + \Delta + 2\sup\vert \varphi_2\vert \sqrt{\frac{2\Delta}{\pi}} \Bigg) \cdot 2 S \sup \vert \varphi_1^{'} \vert \sup \vert \varphi_2^{'} \vert \cdot \frac{2 \sup \vert \varphi_1 \vert  \sup \vert \varphi_2^{'} \vert}{\sqrt{n_1}} \:,\\
&\bigg\vert \tilde{\E}\bigg[\Big\langle \big(\Gamma_{t, \epsilon,\mu} + \sqrt{\Delta}Z_\mu\big) \tilde{\Gamma}_{t, \epsilon, \mu}^{(ij)} \Big\rangle_{\! \widehat{\mathcal{H}}_{t,\epsilon}} \bigg\langle \big(\Gamma_{t, \epsilon,\mu} + \sqrt{\Delta}Z_\mu\big) \frac{\partial \Gamma_{t, \epsilon,\mu}}{\partial (\bW{2})_{i j}} \bigg\rangle_{\!\! \widehat{\mathcal{H}}_{t,\epsilon}}
\,\bigg] \bigg\vert\\
&\qquad\leq \Bigg( 4 \sup\vert \varphi_2\vert^2 + \Delta + 2\sup\vert \varphi_2\vert \sqrt{\frac{2\Delta}{\pi}} \Bigg) \cdot 2 S \sup \vert \varphi_1^{'} \vert \sup \vert \varphi_2^{'} \vert \cdot \frac{2 \sup \vert \varphi_1 \vert  \sup \vert \varphi_2^{'} \vert}{\sqrt{n_1}} \:.
\end{align*}
Putting all these inequalities together, there exists a positive constant $C_{1}(\varphi_1, \varphi_2, \alpha_1, \alpha_2, S)$ such that
\begin{align*}
\Bigg\vert \frac{1}{n_0 \Delta} \sum_{\mu=1}^{n_2} \tilde{\E}\bigg[\Big\langle (\Gamma_{t, \epsilon,\mu} + \sqrt{\Delta}Z_\mu) \frac{\partial \Gamma_{t, \epsilon,\mu}}{\partial (\bW{1})_{i j}} \Big\rangle_{\widehat{\mathcal{H}}_{t,\epsilon}}\bigg] \Bigg\vert
\leq \frac{1}{n_0^{3/2}} \cdot \frac{n_2}{n_1} \cdot C_{1}(\varphi_1, \varphi_2, \alpha_1, \alpha_2, S)
\end{align*}
almost surely. Hence we have shown the existence of a constant $C(\varphi_1, \varphi_2, \alpha_1, \alpha_2, S)$ such that almost surely
$\forall (i,j) \in \{1,\dots,n_1\} \times \{1,\dots,n_0\}: \left\vert \nicefrac{\partial g}{\partial (\bW{1})_{i j}} \right\vert \leq \nicefrac{C_2(\varphi_1, \varphi_2, \alpha_1, \alpha_2, S)}{n_0^{3/2}}$. Finally
\begin{equation*}
\big\Vert \nabla g \big\Vert^2
= \sum_{i=1}^{n_1} \sum_{j=1}^{n_0} \bigg\vert \frac{\partial g}{\partial (\bW{1})_{i j}} \bigg\vert^2
\leq \frac{1}{n_0} \cdot \frac{n_1}{n_0} \Big( C_2(\varphi_1, \varphi_2, \alpha_1, \alpha_2, S \Big)^2
\end{equation*}
a.s.\ and an application of Proposition~\ref{poincare} ends the proof.
\end{proof}
\begin{lemma}\label{lem:concentration_X0}
Under~\ref{hyp:bounded}~\ref{hyp:c2}~\ref{hyp:phi_gauss2}, there exists a positive constant ${C(\varphi_1, \varphi_2, \alpha_1, \alpha_2, S)}$ such that $\forall t \in [0,1]$
	\begin{equation}
	\E \Bigg[\Bigg(
	\E\bigg[\frac{\ln \widehat{\cZ}_{\mathbf{n},t,\epsilon}}{n_0} \bigg\vert \bX^0 \bigg] - \E\bigg[\frac{\ln \widehat{\cZ}_{\mathbf{n},t,\epsilon}}{n_0}\bigg]
	\Bigg)^{\!\! 2} \,\Bigg]
	\leq \frac{C(\varphi_1, \varphi_2, \alpha_1, \alpha_2, S)}{n_0} \,.
	\end{equation}
\end{lemma}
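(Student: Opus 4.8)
The plan is to prove Lemma~\ref{lem:concentration_X0} with the bounded-difference inequality (Proposition~\ref{bounded_diff}), following the same template as Lemmas~\ref{lem:concentration_X1} and~\ref{lem:concentration_W1}. Under~\ref{hyp:bounded} the coordinates $X_1^0,\dots,X_{n_0}^0$ are independent and lie in a compact set $\mathcal{X}\subseteq[-S,S]$, and $\nicefrac{\ln\widehat{\cZ}_{\mathbf{n},t,\epsilon}}{n_0}$ depends on $\bX^0$ only through the hidden layer $\bX^1=\varphi_1(\nicefrac{\bW{1}\bX^0}{\sqrt{n_0}},\bA_1)$ (as already observed in the proof of Lemma~\ref{lem:concentration_X1_W1}). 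Hence $g(\bc)\defeq\E\big[\nicefrac{\ln\widehat{\cZ}_{\mathbf{n},t,\epsilon}}{n_0}\,\big|\,\bX^0=\bc\big]$ is a smooth function of $\bc\in\mathcal{X}^{n_0}$, and the left-hand side of the lemma equals ${\mathbb{V}\mathrm{ar}}\big(g(\bX^0)\big)$. It therefore suffices to show that $g$ satisfies the bounded-difference property with constants $c_j\le C(\varphi_1,\varphi_2,\alpha_1,\alpha_2,S)/n_0$, uniformly in $t\in[0,1]$ and $\epsilon$; Proposition~\ref{bounded_diff} then gives ${\mathbb{V}\mathrm{ar}}\big(g(\bX^0)\big)\le\tfrac14\,n_0\,(C/n_0)^2=C^2/(4n_0)$, which is the claim.

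To obtain these constants I would fix $j$ and a second configuration $\bc^{(j)}$ agreeing with $\bc$ off coordinate $j$, set $\psi(s)\defeq g\big(s\,\bc+(1-s)\,\bc^{(j)}\big)$ on $[0,1]$, and bound $|\psi'(s)|$; since $|c_j-c_j^{(j)}|\le 2S$, this reduces to the pointwise estimate $\big|\partial g/\partial X_j^0\big|\le C/n_0$ on $\mathcal{X}^{n_0}$. The one new feature compared with Lemma~\ref{lem:concentration_W1} is that $\bX^0$ enters the Hamiltonian only through $\bX^1$, so the chain rule gives
\begin{equation*}
\frac{\partial g}{\partial X_j^0}=\frac{1}{n_0}\,\E\bigg[\sum_{i=1}^{n_1}\frac{(\bW{1})_{ij}}{\sqrt{n_0}}\,\varphi_1'\Big(\Big[\tfrac{\bW{1}\bX^0}{\sqrt{n_0}}\Big]_i,\bA_{1,i}\Big)\,\big\langle-\partial_{X_i^1}\widehat{\cH}_{t,\epsilon}\big\rangle_{\widehat{\cH}_{t,\epsilon}}\bigg],
\end{equation*}
with $\partial_{X_i^1}\widehat{\cH}_{t,\epsilon}=\tfrac1\Delta\sqrt{\tfrac{1-t}{n_1}}\sum_{\mu}(\bW{2})_{\mu i}\,\varphi_2'(\cdot)\,(\Gamma_{t,\epsilon,\mu}+\sqrt{\Delta}Z_\mu)+R_1(t,\epsilon)(X_i^1-x_i^1)+\sqrt{R_1(t,\epsilon)}\,Z_i'$. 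I would then integrate by parts, first w.r.t.\ each $(\bW{2})_{\mu i}$ to tame the unbounded factors $(\bW{2})_{\mu i}$ (as in Lemma~\ref{lem:concentration_W1}, each such step produces either the derivative of a bounded $\varphi_2$-function or a difference of Gibbs averages, always with a prefactor $\sqrt{\tfrac{1-t}{n_1}}$, so that $\tfrac1\Delta\sqrt{\tfrac{1-t}{n_1}}\sum_\mu(\bW{2})_{\mu i}(\cdots)$ becomes $\tfrac{1-t}{\Delta n_1}\sum_\mu(\text{bounded})=O(n_2/n_1)=O(1)$), and then w.r.t.\ $(\bW{1})_{ij}$ to tame $(\bW{1})_{ij}$ (each such step produces $\tfrac{X_j^0}{\sqrt{n_0}}$ or $\tfrac{x_j}{\sqrt{n_0}}$ times a bounded $\varphi_1$-factor, or a difference of Gibbs averages carrying a prefactor $\tfrac1{\sqrt{n_0}}$). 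Using~\ref{hyp:c2} to bound $\sup|\varphi_k|,\sup|\varphi_k'|,\sup|\varphi_k''|$, \ref{hyp:bounded} to bound $|X_j^0|,|x_j|\le S$ and $|X_i^1|,|\langle x_i^1\rangle|\le\sup|\varphi_1|$, $R_1(t,\epsilon)\le\rmax$, and $\E|Z_\mu|,\E|Z_i'|<\infty$, every $i$-term in the sum above ends up of order $1/n_0$ (one factor $\tfrac1{\sqrt{n_0}}$ explicit, one more from the integration by parts w.r.t.\ $(\bW{1})_{ij}$), so that $\big|\partial g/\partial X_j^0\big|\le\tfrac1{n_0}\cdot O(n_1)\cdot O(1/n_0)=O(n_1/n_0^2)=O(1/n_0)$, uniformly in $t,\epsilon$ and $\bX^0\in\mathcal{X}^{n_0}$. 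Consequently $|\psi'(s)|\le 2S\,C/n_0$ and $g$ has bounded differences with $c_j\le 2S\,C/n_0$.

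The substantive difficulty is purely the bookkeeping of this nested Gaussian integration by parts: one must list every occurrence of $(\bW{1})_{ij}$ and $(\bW{2})_{\mu i}$ — in $\Gamma_{t,\epsilon,\mu}$, in $x_i^1$, in the Gibbs weight $e^{-\widehat{\cH}_{t,\epsilon}}$ through its $\bW{1},\bW{2}$-dependence, and in the explicit prefactor — differentiate each, and check that after the double sum over $(i,\mu)$ the derivative remains $O(1/n_0)$. This is of exactly the same nature as, and somewhat lighter than, the computations already carried out for Lemmas~\ref{lem:concentration_W1} and~\ref{lem:concentration_X1}, so it requires no genuinely new idea; the finiteness of $\E|Z_\mu|,\E|Z_i'|$ causes no trouble. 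Once the bounded differences are established, applying Proposition~\ref{bounded_diff} to the independent, compactly supported variables $X_1^0,\dots,X_{n_0}^0$ yields the variance bound of Lemma~\ref{lem:concentration_X0}, thereby also completing the proof of Lemma~\ref{lem:concentration_X1_W1} and hence of Theorem~\ref{concentrationtheorem}.
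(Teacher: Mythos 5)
Your proposal is correct and follows essentially the same route as the paper's own proof: viewing $g(\bc)=\E\big[\nicefrac{\ln\widehat{\cZ}_{\mathbf{n},t,\epsilon}}{n_0}\,\big|\,\bX^0=\bc\big]$ as a smooth function on $\mathcal{X}^{n_0}$, bounding its partial derivatives by $C/n_0$ via nested Gaussian integrations by parts in $(\bW{1})_{ij}$ and $(\bW{2})_{\mu i}$, and concluding with Proposition~\ref{bounded_diff} applied to the i.i.d.\ compactly supported $X_j^0$. The only cosmetic difference is the order in which you propose the two integrations by parts (you do $\bW{2}$ then $\bW{1}$, the paper does $\bW{1}$ then $\bW{2}$); either ordering closes the bookkeeping.
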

\begin{proof}
Here $g = \nicefrac{\mathbb{E}[\ln\widehat{\mathcal{Z}}_{\mathbf{n},t,\epsilon} \vert \bX^0]}{n_0}$ is a function of $\bX^0$ \textit{only}. To lighten the equations, the second argument of $\varphi_1$ and $\varphi_2$ is never written explicitly. Also we omit the additional terms $k_1(t) \,\bV + k_2(t) \,\bU / \bu$ that appear in the first argument of $\varphi_2$, To be clear, we will abusively write:
\begin{gather*}
\varphi_1\bigg( \frac{\bW{1} \bX^0}{\sqrt{n_0}}\bigg) \equiv \varphi_1\bigg( \frac{\bW{1} \bX^0}{\sqrt{n_0}}, \bA_1\bigg)\,,\;
\varphi_1\bigg( \frac{\bW{1} \bx}{\sqrt{n_0}}\bigg) \equiv \varphi_1\bigg( \frac{\bW{1} \bx}{\sqrt{n_0}}, \ba_1\bigg)\,,\\
\;\,\varphi_2\bigg( \frac{\bW{2} \bX^1}{\sqrt{n_1}}\bigg) \equiv \varphi_2\bigg( \frac{\bW{2} \bX^1}{\sqrt{n_1}} + k_1(t)\bV + k_2(t)\bU , \bA_2\bigg)\,,\\
\varphi_2\bigg( \frac{\bW{2} \bx^1}{\sqrt{n_1}}\bigg) \equiv \varphi_2\bigg( \frac{\bW{2} \bx^1}{\sqrt{n_1}} + k_1(t)\bV + k_2(t)\bu , 
\ba_2\bigg)\,.
\end{gather*}
This notation also applies to the derivatives of $\varphi_1$ and $\varphi_2$ w.r.t.\ their first argument. We will show that the partial derivatives of $g$ are almost surely bounded by $\nicefrac{C(\varphi_1, \varphi_2, \alpha_1, \alpha_2, S)}{n_0}$. Then a bounded difference argument similar to the one used in Lemma~\ref{lem:concentration_X1} will end the proof. For $j \in \{1,\dots,n_0\}$:
\begin{multline*}
\frac{\partial g}{\partial X_j^0}
=\\ \frac{-\Delta^{-1}}{n_0^{\nicefrac{3}{2}} \sqrt{n_1}} \sum_{\mu=1}^{n_2} \sum_{i=1}^{n_1}
\tilde{\E}\bigg[(\bW{1})_{ij}(\bW{2})_{\mu i}\varphi_1^{'}\bigg(\bigg[\frac{\bW{1} \bX^0}{\sqrt{n_0}}\bigg]_i\bigg)
\varphi_2^{'}\bigg(\sqrt{\frac{1-t}{n_1}}\big[\bW{2} \bX^1\big]_{\mu}\bigg)
\big\langle \Gamma_{t, \epsilon,\mu} + \sqrt{\Delta}Z_\mu\big\rangle_{\widehat{\mathcal{H}}_{t,\epsilon}} \bigg]\\
-\frac{\sqrt{R_1(t,\epsilon)}}{n_0^{\nicefrac{3}{2}}} \sum_{i=1}^{n_1}
\tilde{\E}\bigg[(\bW{1})_{ij}\varphi_1^{'}\bigg(\bigg[\frac{\bW{1} \bX^0}{\sqrt{n_0}}\bigg]_i\bigg)\bigg\langle 
\sqrt{R_1(t,\epsilon)} X_i^1 - \sqrt{R_1(t,\epsilon)} \varphi_1\bigg(\bigg[\frac{\bW{1} \bx}{\sqrt{n_0}}\bigg]_i\bigg) + Z_i^{'}\bigg\rangle_{\!\! \widehat{\mathcal{H}}_{t,\epsilon}} \, \bigg]\,.
\end{multline*}
This expression can be further simplified using that $\bZ,\bZ'$ are centred and independent of everything:
\begin{multline*}
\frac{\partial g}{\partial X_j^0}=\\
\frac{-\Delta^{-1}}{n_0 \sqrt{n_0 n_1}} \sum_{\mu=1}^{n_2} \sum_{i=1}^{n_1}
\tilde{\E}\bigg[(\bW{1})_{ij}(\bW{2})_{\mu i}\varphi_1^{'}\bigg(\bigg[\frac{\bW{1} \bX^0}{\sqrt{n_0}}\bigg]_i\bigg)
\varphi_2^{'}\bigg(\sqrt{\frac{1-t}{n_1}}\big[\bW{2} \bX^1\big]_{\mu}\bigg)
\big\langle \Gamma_{t, \epsilon,\mu}\big\rangle_{\widehat{\mathcal{H}}_{t,\epsilon}} \bigg]\\
-\frac{R_1(t,\epsilon)}{n_0^{\nicefrac{3}{2}}} \sum_{i=1}^{n_1}
\tilde{\E}\bigg[(\bW{1})_{ij}\varphi_1^{'}\bigg(\bigg[\frac{\bW{1} \bX^0}{\sqrt{n_0}}\bigg]_i\bigg)
\bigg\langle X_i^1 -\varphi_1\bigg(\bigg[\frac{\bW{1} \bx}{\sqrt{n_0}}\bigg]_i\bigg)\bigg\rangle_{\!\! \widehat{\mathcal{H}}_{t,\epsilon}} \bigg]\,.
\end{multline*}
Integrating by parts w.r.t.\ $(\bW{1})_{ij}$, we get
\begin{equation*}
\tilde{\E}\bigg[(\bW{1})_{ij}
\varphi_1^{'}\bigg(\bigg[\frac{\bW{1} \bX^0}{\sqrt{n_0}}\bigg]_i\bigg) X_i^1  \bigg]
	= \frac{X_j^0}{\sqrt{n_0}} \tilde{\E}\bigg[\varphi_1^{''}\bigg(\bigg[\frac{\bW{1} \bX^0}{\sqrt{n_0}}\bigg]_i\bigg) X_i^1
	+ \varphi_1^{'}\bigg(\bigg[\frac{\bW{1} \bX^0}{\sqrt{n_0}}\bigg]_i\bigg)^{\!\! 2} \,  \bigg] \,,
\end{equation*}
whose absolute value is upperbounded by $\nicefrac{C(\varphi_1, \varphi_2, \alpha_1, \alpha_2, S)}{\sqrt{n_0}}$, and
\begin{align*}
&\tilde{\E}\bigg[(\bW{1})_{ij}
\varphi_1^{'}\bigg(\bigg[\frac{\bW{1} \bX^0}{\sqrt{n_0}}\bigg]_i\bigg)
\bigg\langle \varphi_1\bigg(\bigg[\frac{\bW{1} \bx}{\sqrt{n_0}}\bigg]_i\bigg)\bigg\rangle_{\!\! \widehat{\mathcal{H}}_{t,\epsilon}}  \bigg]\\
	&\qquad= \frac{X_j^0}{\sqrt{n_0}} \tilde{\E}\bigg[
	\varphi_1^{''}\bigg(\bigg[\frac{\bW{1} \bX^0}{\sqrt{n_0}}\bigg]_i\bigg)
	\bigg\langle \varphi_1\bigg(\bigg[\frac{\bW{1} \bx}{\sqrt{n_0}}\bigg]_i\bigg)\bigg\rangle_{\!\! \widehat{\mathcal{H}}_{t,\epsilon}} \bigg]\\
	&\qquad\qquad\qquad\qquad\qquad\qquad\qquad+ \tilde{\E}\bigg[
	\varphi_1^{'}\bigg(\bigg[\frac{\bW{1} \bX^0}{\sqrt{n_0}}\bigg]_i\bigg)
	\bigg\langle \varphi_1^{'}\bigg(\bigg[\frac{\bW{1} \bx}{\sqrt{n_0}}\bigg]_i\bigg)  \frac{x_j}{\sqrt{n_0}}\bigg\rangle_{\!\! \widehat{\mathcal{H}}_{t,\epsilon}} \bigg]\\
	&\qquad\qquad\qquad\qquad\qquad\qquad\qquad + \tilde{\E}\bigg[
	\varphi_1^{'}\bigg(\bigg[\frac{\bW{1} \bX^0}{\sqrt{n_0}}\bigg]_i\bigg)
	\bigg\langle \varphi_1\bigg(\bigg[\frac{\bW{1} \bx}{\sqrt{n_0}}\bigg]_i\bigg)
	\frac{\partial \widehat{\mathcal{H}}_{t,\epsilon}}{\partial (\bW{1})_{ij}}\bigg\rangle_{\!\! \widehat{\mathcal{H}}_{t,\epsilon}} \bigg]\\
	&\qquad\qquad\qquad\qquad\qquad\qquad\qquad + \tilde{\E}\bigg[
	\varphi_1^{'}\bigg(\bigg[\frac{\bW{1} \bX^0}{\sqrt{n_0}}\bigg]_i\bigg)
	\bigg\langle \varphi_1\bigg(\bigg[\frac{\bW{1} \bx}{\sqrt{n_0}}\bigg]_i\bigg) \bigg\rangle_{\!\! \widehat{\mathcal{H}}_{t,\epsilon}}
	\bigg\langle \frac{\partial \widehat{\mathcal{H}}_{t,\epsilon}}{\partial (\bW{1})_{ij}}\bigg\rangle_{\!\! \widehat{\mathcal{H}}_{t,\epsilon}} \bigg] \,.
\end{align*}
The first two conditional expectations on the right hand side of this last equality are upperbounded by $\nicefrac{C(\varphi_1, \varphi_2, \alpha_1, \alpha_2, S)}{\sqrt{n_0}}$. It is also the case for the last two conditional expectations: it suffices to proceed as in the proof of Lemma~\ref{lem:concentration_W1} (note that $\nicefrac{\partial \widehat{\mathcal{H}}_{t,\epsilon}}{\partial (\bW{1})_{ij}}$ was already computed to prove the aforementioned lemma). Hence there exists a positive constant $C(\varphi_1, \varphi_2, \alpha_1, \alpha_2, S)$ such that almost surely for every $i \in \{1,\dots,n_1\}$
\begin{multline}\label{upperboundExp_i}
\bigg\vert \tilde{\E}\bigg[(\bW{1})_{ij}
	\varphi_1^{'}\bigg(\bigg[\frac{\bW{1} \bX^0}{\sqrt{n_0}}\bigg]_i\bigg)\bigg
	\langle X_i^1 -\varphi_1\bigg(\bigg[\frac{\bW{1} \bx}{\sqrt{n_0}}\bigg]_i\bigg)\bigg\rangle_{\!\! \widehat{\mathcal{H}}_{t,\epsilon}} \bigg]		\bigg\vert
\leq \frac{C(\varphi_1, \varphi_2, \alpha_1, \alpha_2, S)}{\sqrt{n_0}} \,.
\end{multline}
It remains to upperbound, for every pair $(\mu,i) \in \{1,\dots,n_2\} \times \{1,\dots,n_1\}$, the absolute value of the conditional expectation
\begin{flalign*}
&\tilde{\E}\bigg[(\bW{1})_{ij}(\bW{2})_{\mu i}\varphi_1^{'}\bigg(\bigg[\frac{\bW{1} \bX^0}{\sqrt{n_0}}\bigg]_i\bigg)
\varphi_2^{'}\bigg(\sqrt{\frac{1-t}{n_1}}\big[\bW{2} \bX^1\big]_{\mu}\bigg)
\big\langle \Gamma_{t, \epsilon,\mu}\big\rangle_{\widehat{\mathcal{H}}_{t,\epsilon}} \bigg]&\\
&= \tilde{\E}\bigg[(\bW{1})_{ij}(\bW{2})_{\mu i}\varphi_1^{'}\bigg(\bigg[\frac{\bW{1} \bX^0}{\sqrt{n_0}}\bigg]_i\bigg)
\varphi_2^{'}\bigg(\sqrt{\frac{1-t}{n_1}}\big[\bW{2} \bX^1\big]_{\mu}\bigg)
\varphi_2\bigg(\sqrt{\frac{1-t}{n_1}}\big[\bW{2} \bX^1\big]_{\mu}\bigg)\bigg]\\
&\quad- \tilde{\E}\bigg[(\bW{1})_{ij}(\bW{2})_{\mu i}\varphi_1^{'}\bigg(\bigg[\frac{\bW{1} \bX^0}{\sqrt{n_0}}\bigg]_i\bigg)
\varphi_2^{'}\bigg(\sqrt{\frac{1-t}{n_1}}\big[\bW{2} \bX^1\big]_{\mu}\bigg)
\bigg\langle\varphi_2\bigg(\sqrt{\frac{1-t}{n_1}}\big[\bW{2} \bx^1\big]_{\mu}\bigg) \bigg\rangle_{\! \widehat{\mathcal{H}}_{t,\epsilon}} \,\bigg].
\end{flalign*}
The first conditional expectation on the right hand side is easily upperbounded by $\nicefrac{C(\varphi_1, \varphi_2, \alpha_1, \alpha_2, S)}{n_0}$: integrate by parts, first w.r.t.\ $(\bW{1})_{ij}$ and then w.r.t.\ $(\bW{2})_{\mu i}$. For the second conditional expectation, an integration by parts w.r.t.\ $(\bW{1})_{ij}$ returns:
\begin{flalign*}
&\tilde{\E}\bigg[(\bW{1})_{ij}(\bW{2})_{\mu i}\varphi_1^{'}\bigg(\bigg[\frac{\bW{1} \bX^0}{\sqrt{n_0}}\bigg]_i\bigg)
\varphi_2^{'}\bigg(\sqrt{\frac{1-t}{n_1}}\big[\bW{2} \bX^1\big]_{\mu}\bigg)
\bigg\langle\varphi_2\bigg(\sqrt{\frac{1-t}{n_1}}\big[\bW{2} \bx^1\big]_{\mu}\bigg) \bigg\rangle_{\!\! \widehat{\mathcal{H}}_{t,\epsilon}} \,\bigg]&\\
&=\tilde{\E}\bigg[\frac{X_j^0}{\sqrt{n_0}} (\bW{2})_{\mu i}\varphi_1^{''}\bigg(\bigg[\frac{\bW{1} \bX^0}{\sqrt{n_0}}\bigg]_i\bigg)
\varphi_2^{'}\bigg(\sqrt{\frac{1-t}{n_1}}\big[\bW{2} \bX^1\big]_{\mu}\bigg)
\bigg\langle\varphi_2\bigg(\sqrt{\frac{1-t}{n_1}}\big[\bW{2} \bx^1\big]_{\mu}\bigg)
\bigg\rangle_{\!\! \widehat{\mathcal{H}}_{t,\epsilon}}\,\bigg]\\
&\;+\tilde{\E}\bigg[\sqrt{\frac{1-t}{n_0 n_1}}X_j^0 (\bW{2})_{\mu i}^2\varphi_1^{\prime}\bigg(\bigg[\frac{\bW{1} \bX^0}{\sqrt{n_0}}\bigg]_i\bigg)^{\!\! 2}
\varphi_2^{''}\bigg(\sqrt{\frac{1-t}{n_1}}\big[\bW{2} \bX^1\big]_{\mu}\bigg)\\
&\;\qquad\qquad\qquad\qquad\qquad\qquad\qquad\qquad\qquad\cdot \bigg\langle \! \varphi_2\bigg(\sqrt{\frac{1-t}{n_1}}\big[\bW{2} \bx^1\big]_{\mu}\bigg)\!
\bigg\rangle_{\!\! \widehat{\mathcal{H}}_{t,\epsilon}}\,\bigg]\\
&\;+\tilde{\E}\bigg[\sqrt{\frac{1-t}{n_0 n_1}}(\bW{2})_{\mu i}^2
\varphi_1^{\prime}\bigg(\bigg[\frac{\bW{1} \bX^0}{\sqrt{n_0}}\bigg]_i\bigg)
\varphi_2^{'}\bigg(\sqrt{\frac{1-t}{n_1}}\big[\bW{2} \bX^1\big]_{\mu}\bigg)\\
&\;\qquad\qquad\qquad\qquad\qquad\qquad\qquad\qquad\qquad\cdot \bigg\langle x_j \varphi_1^{\prime}\bigg(\bigg[\frac{\bW{1} \bx}{\sqrt{n_0}}\bigg]_i\bigg)
\varphi_2^{\prime}\bigg(\sqrt{\frac{1-t}{n_1}}\big[\bW{2} \bx^1\big]_{\mu}\bigg)
\bigg\rangle_{\!\! \widehat{\mathcal{H}}_{t,\epsilon}} \,\bigg]\\
&\;-\tilde{\E}\bigg[(\bW{2})_{\mu i}\varphi_1^{'}\bigg(\bigg[\frac{\bW{1} \bX^0}{\sqrt{n_0}}\bigg]_i\bigg)
\varphi_2^{'}\bigg(\sqrt{\frac{1-t}{n_1}}\big[\bW{2} \bX^1\big]_{\mu}\bigg)
\bigg\langle\varphi_2\bigg(\sqrt{\frac{1-t}{n_1}}\big[\bW{2} \bx^1\big]_{\mu}\bigg)
\frac{\partial \widehat{\mathcal{H}}_{t,\epsilon}}{\partial (\bW{1})_{ij}} \bigg\rangle_{\!\! \widehat{\mathcal{H}}_{t,\epsilon}} \,\bigg]\\
&\;+\tilde{\E}\bigg[(\bW{2})_{\mu i}\varphi_1^{'}\bigg(\bigg[\frac{\bW{1} \bX^0}{\sqrt{n_0}}\bigg]_i\bigg)
\varphi_2^{'}\bigg(\sqrt{\frac{1-t}{n_1}}\big[\bW{2} \bX^1\big]_{\mu}\bigg)\\
&\;\qquad\qquad\qquad\qquad\qquad\qquad\qquad\qquad\qquad\cdot 
\bigg\langle\varphi_2\bigg(\sqrt{\frac{1-t}{n_1}}\big[\bW{2} \bx^1\big]_{\mu}\bigg)\bigg\rangle_{\!\! \widehat{\mathcal{H}}_{t,\epsilon}}
\bigg\langle\frac{\partial \widehat{\mathcal{H}}_{t,\epsilon}}{\partial (\bW{1})_{ij}} \bigg\rangle_{\!\! \widehat{\mathcal{H}}_{t,\epsilon}} \,\bigg].
\end{flalign*}
The first conditional expectation on the right hand side can then be upperbounded, after integrating by parts w.r.t.\ $(\bW{2})_{\mu i}$, by $\nicefrac{C(\varphi_1, \varphi_2, \alpha_1, \alpha_2, S)}{n_0}$. The second and third conditional expectations are easily upperbounded by
\begin{equation*}
\frac{1}{\sqrt{n_0 n_1}} \sup \vert\varphi_1^{'}\vert^2 \cdot \sup \vert\varphi_2\vert \cdot \sup \vert\varphi_2^{''}\vert
\cdot \underbrace{\E\big[(\bW{2})_{\mu i}^2\big]}_{= 1} \leq \frac{C(\varphi_1, \varphi_2, \alpha_1, \alpha_2, S)}{n_0} \,.
\end{equation*}
For the fourth and fifth conditional expectations, we recall the following expression for the partial derivative $\nicefrac{\partial \widehat{\mathcal{H}}_{t,\epsilon}}{\partial (\bW{1})_{ij}}$:
\begin{multline}\label{partialDerivativeH_W1}
\frac{\partial \widehat{\mathcal{H}}_{t,\epsilon}}{\partial (\bW{1})_{ij}}=
-\frac{1}{\Delta} \sqrt{\frac{1-t}{n_0 n_1}} \sum_{\nu=1}^{n_2} (\bW{2})_{\nu i}
\big(\Gamma_{t, \epsilon, \nu} + \sqrt{\Delta}Z_\nu \big) \tilde{\Gamma}_{t, \epsilon, \nu}^{(ij)}\\
-\sqrt{\frac{R_1(t,\epsilon)}{n_0}}
	\bigg(\sqrt{R_1(t,\epsilon)} X_i^1 - \sqrt{R_1(t,\epsilon)} \varphi_1\bigg(\bigg[\frac{\bW{1} \bx}{\sqrt{n_0}}\bigg]_i\bigg) + Z_i^{'}\bigg)\\
	\cdot\bigg(X_j^0\varphi_1^{'}\bigg(\bigg[\frac{\bW{1} \bX^0}{\sqrt{n_0}}\bigg]_i\bigg)-
x_j\varphi_1^{'}\bigg(\bigg[\frac{\bW{1} \bx}{\sqrt{n_0}}\bigg]_i\bigg)\bigg)\,.
\end{multline}
We can use this expression to split both conditional expectations into $n_2 + 1$ terms such that:
\begin{itemize}
\item the term due to $\nu = \mu$ is directly upperbounded by $\nicefrac{C(\varphi_1, \varphi_2, \alpha_1, \alpha_2, S) \E[ (\bW{2})_{\mu i}^2]}{\sqrt{n_1 n_0}}\:$;
\item each term due to a $\nu \neq \mu$ is upperbounded by $\nicefrac{C(\varphi_1, \varphi_2, \alpha_1, \alpha_2, S)}{n_0^2}$ after integrating by parts w.r.t.\ $(\bW{2})_{\mu i}$ and $(\bW{2})_{\nu i}\,$;
\item the term due to the second line of \eqref{partialDerivativeH_W1} is upperbounded by $\nicefrac{C(\varphi_1, \varphi_2, \alpha_1, \alpha_2, S)}{n_0}$ after integrating by parts w.r.t.\ $(\bW{2})_{\mu i}$.
\end{itemize}
All in all, there exists a positive constant $C(\varphi_1, \varphi_2, \alpha_1, \alpha_2, S)$ such that almost surely
\begin{multline}\label{upperboundExp_mu_i}
\forall (\mu,i) \in \{1,\dots,n_2\} \times \{1,\dots,n_1\}\,:\\
\bigg\vert \tilde{\E}\bigg[(\bW{1})_{ij}(\bW{2})_{\mu i}\varphi_1^{'}\bigg(\bigg[\frac{\bW{1} \bX^0}{\sqrt{n_0}}\bigg]_i\bigg)
\varphi_2^{'}\bigg(\sqrt{\frac{1-t}{n_1}}\big[\bW{2} \bX^1\big]_{\mu}\bigg)
\big\langle \Gamma_{t, \epsilon,\mu}\big\rangle_{\widehat{\mathcal{H}}_{t,\epsilon}} \bigg]	\bigg\vert\\
\leq \frac{C(\varphi_1, \varphi_2, \alpha_1, \alpha_2, S)}{n_0} \,.
\end{multline}
Putting \eqref{upperboundExp_i} and \eqref{upperboundExp_mu_i} together gives the existence of a positive constant $C(\varphi_1, \varphi_2, \alpha_1, \alpha_2, S)$ such that for all $j \in \{1,\dots,n_0\}$ we have $\vert \nicefrac{\partial g}{\partial X_j^0} \vert \leq \nicefrac{C(\varphi_1, \varphi_2, \alpha_1, \alpha_2, S)}{n_0}$ almost surely, thus ending the proof.
\end{proof}
\subsubsection{Proof of Theorem~\ref{concentrationtheorem}}
From Lemmas~\ref{lem:concentration_Z_Z'_V_U_W2} and~\ref{lem:concentration_X1_W1}, we directly obtain the bound
\begin{multline*}
{\mathbb{V}\mathrm{ar}}\bigg(\frac{\ln \widehat{\cZ}_{\mathbf{n},t,\epsilon}}{n_0} \bigg)
	= \E\Bigg[\Bigg(\frac{\ln \widehat{\cZ}_{\mathbf{n},t,\epsilon}}{n_0} - \E\bigg[\frac{\ln \widehat{\cZ}_{\mathbf{n},t,\epsilon}}{n_0} \bigg\vert \bX^1, \bW{1} \bigg]\Bigg)^{\!\! 2} \,\Bigg]\\
	+ \E\Bigg[\Bigg(\E\bigg[\frac{\ln \widehat{\cZ}_{\mathbf{n},t,\epsilon}}{n_0} \bigg\vert \bX^1, \bW{1} \bigg] - \E\bigg[\frac{\ln \widehat{\cZ}_{\mathbf{n},t,\epsilon}}{n_0}\bigg]\Bigg)^{\!\! 2} \,\Bigg]
	\leq \frac{C(\varphi_1, \varphi_2, \alpha_1, \alpha_2, S)}{n_0} \,.
\end{multline*}
As mentioned before, this implies Theorem~\ref{concentrationtheorem} thanks to \eqref{expressionZwithZhat}.
\subsection{Concentration of the overlap}\label{appendix-overlap}
This section presents the proof of Proposition~\ref{prop:concentrationOverlap}. This proof is essentially the same as the one provided for the one-layer GLM \cite{BarbierOneLayerGLM}. All along this section $t \in [0,1]$ is fixed, and the averaged free entropy is treated as a mapping $(R_1,R_2) \mapsto f_{\mathbf{n},\epsilon}(t)$ of $R_1 = R_1(t,\epsilon)$ and $R_2 = R_2(t,\epsilon)$, given by \eqref{R1R2}. The same is true for the free entropy corresponding to a realization of the quenched variables:
\begin{equation*}
(R_1,R_2) \mapsto F_{\mathbf{n},\epsilon}(t) \defeq \frac{1}{n_0} \ln \cZ_{\mathbf{n},t,\epsilon}(\bY_{t,\epsilon},\bY_{t,\epsilon}',\bW{1},\bW{2},\bV) \,.
\end{equation*}
Define $\bx^1  \defeq \varphi_{1}\big(\nicefrac{\bW{1} \bx}{\sqrt{n_0}}, \ba_{1}\big)$ where the triplet $(\bx,\bu,\ba_1)$ is sampled from the posterior distribution associated to the Gibbs bracket $\langle - \rangle_{\mathbf{n},t, \epsilon}$. Hence $\bx^1$ is nothing but a sample obtained from the conditionnal distribution of $\bX^1$ given $(\bY_{t,\epsilon}, \bY_{t,\epsilon}^{\prime}, \bW{1},\bW{2},\bV)$.
Let 
\begin{equation*}
\mathcal{L} \defeq \frac{1}{n_1} \sum_{i=1}^{n_1}\Bigg(
\frac{\big(x_i^1\big)^2}{2} - x_i^1 X_i^1 - \frac{x_i^1  Z_i^{\prime}}{2\sqrt{R_1}} \Bigg)\,.
\end{equation*}
First we prove a formula that we uses extensively, in particular in Lemma~\ref{lem:perturbation_f}.
\begin{lemma}[Formula for $\E \langle \mathcal{L} \rangle_{\mathbf{n},t,\epsilon}$]\label{lem:expectationL}
For any $\epsilon \in \mathcal{B}_{n_0}$,
\begin{equation}
	\E \langle \mathcal{L} \rangle_{\mathbf{n},t,\epsilon} = -\frac{1}{2} \E \langle \widehat{Q} \rangle_{\mathbf{n},t,\epsilon} \; . \label{formula ExpectationL}
\end{equation}
\begin{proof}
For $i \in \{1,\dots,n_1\}$
\begin{align*}
\E\Big[X_i^1 \big\langle x_i^1\big\rangle_{\mathbf{n},t,\epsilon} \Big]
&= \E\Big[\big\langle x_i^1\big\rangle_{\mathbf{n},t,\epsilon}^2 \Big]\,;\\
\E\Big[\big\langle x_i^1\big\rangle_{\mathbf{n},t,\epsilon} Z_i^{\prime}\Big]
&= \E\left[\frac{\partial \langle x_i^1 \rangle_{t, \epsilon}}{\partial \widehat{Z}_i}\right]
= \E\Big[ \sqrt{R_1(t,\epsilon)} \Big(\big\langle (x_i^1)^2 \big\rangle_{\mathbf{n},t,\epsilon}-\big\langle x_i^1 \big\rangle_{\mathbf{n},t,\epsilon}^2\Big)\Big]\,.
\end{align*}
The first equality follows from the Nishimory identity and the second one from an integration by parts w.r.t. $Z_i^{\prime}$. From $\mathcal{L}$ definition we directly get
\begin{align*}
	\E \langle \mathcal{L}\rangle_{\mathbf{n},t,\epsilon}
	&= \frac{1}{n_1} \sum_{i=1}^{n_1} \frac{1}{2}\E\big[\big\langle (x_i^1)^2 \big\rangle_{t, \epsilon}\big]
	- \E\big[X_i^1 \big\langle x_i^1 \big\rangle_{t, \epsilon}\big] 
	-\frac{1}{2\sqrt{R_1(t,\epsilon)}}\E\big[\big\langle x_i^1 \big\rangle_{t, \epsilon} Z_i^{\prime} \big]\nn
	&= \frac{1}{n_1} \sum_{i=1}^{n_1} -\frac{1}{2}\E\big[\langle x_i^1 \rangle_{\mathbf{n},t,\epsilon}^2\big]
	= -\frac{1}{2} \frac{1}{n_1} \sum_{i=1}^{n_1} \E\big[\langle x_i^1 \rangle_{\mathbf{n},t,\epsilon} X_i^1\big] \
	= -\frac{1}{2} \E \langle \widehat{Q} \rangle_{\mathbf{n},t,\epsilon} \; .
\end{align*}
\end{proof}
\end{lemma}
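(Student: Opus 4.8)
The plan is to expand $\E\langle\mathcal{L}\rangle_{\mathbf{n},t,\epsilon}$ by linearity of expectation, rewrite each of the three resulting terms in terms of the single quantity $\E[\langle x_i^1\rangle_{\mathbf{n},t,\epsilon}^2]$, and then conclude with the Nishimori identity. The whole computation takes place within the Bayes-optimal interpolating model \eqref{2channels}, whose planted signal is $(\bX^0,\bA_1,\bU)$ and hence whose planted hidden-layer vector is $\bX^1$; this is precisely what makes the Nishimori identity (Proposition~\ref{prop:nishimori}) available.

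First I would write, using $R_1(t,\epsilon)\geq\epsilon_1\geq s_{n_0}>0$ on $\mathcal{B}_{n_0}$ so the last division is legitimate,
\begin{equation*}
\E\langle\mathcal{L}\rangle_{\mathbf{n},t,\epsilon}
= \frac{1}{n_1}\sum_{i=1}^{n_1}\Big(\tfrac{1}{2}\,\E\big[\langle (x_i^1)^2\rangle_{\mathbf{n},t,\epsilon}\big]
- \E\big[X_i^1\langle x_i^1\rangle_{\mathbf{n},t,\epsilon}\big]
- \tfrac{1}{2\sqrt{R_1(t,\epsilon)}}\,\E\big[\langle x_i^1\rangle_{\mathbf{n},t,\epsilon}\,Z_i'\big]\Big).
\end{equation*}
For the middle term I would apply the Nishimori identity: since $\langle x_i^1\rangle_{\mathbf{n},t,\epsilon}^2=\langle x_i^{1,(1)}x_i^{1,(2)}\rangle_{\mathbf{n},t,\epsilon}$ for two conditionally independent posterior replicas, and swapping one replica for the planted $\bX^1$ preserves the joint law, one gets $\E[X_i^1\langle x_i^1\rangle_{\mathbf{n},t,\epsilon}]=\E[\langle x_i^1\rangle_{\mathbf{n},t,\epsilon}^2]$.

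For the last term I would use Gaussian integration by parts in $Z_i'$. Observing that the Gibbs bracket depends on $Z_i'$ only through the observation $Y'_{t,\epsilon,i}=\sqrt{R_1(t,\epsilon)}\,X_i^1+Z_i'$, which enters the interpolating Hamiltonian \eqref{interpolating-ham} through the term $\tfrac12\big(Y'_{t,\epsilon,i}-\sqrt{R_1(t,\epsilon)}\,x_i^1\big)^2$, a short computation gives $\partial_{Z_i'}\langle x_i^1\rangle_{\mathbf{n},t,\epsilon}=\sqrt{R_1(t,\epsilon)}\big(\langle (x_i^1)^2\rangle_{\mathbf{n},t,\epsilon}-\langle x_i^1\rangle_{\mathbf{n},t,\epsilon}^2\big)$, hence by Stein's lemma $\E[\langle x_i^1\rangle_{\mathbf{n},t,\epsilon}\,Z_i']=\sqrt{R_1(t,\epsilon)}\,\E[\langle (x_i^1)^2\rangle_{\mathbf{n},t,\epsilon}-\langle x_i^1\rangle_{\mathbf{n},t,\epsilon}^2]$.

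Substituting these two identities back, the $\tfrac12\E\langle (x_i^1)^2\rangle$ contributions cancel and one is left with $\E\langle\mathcal{L}\rangle_{\mathbf{n},t,\epsilon}=-\tfrac{1}{2n_1}\sum_{i=1}^{n_1}\E[\langle x_i^1\rangle_{\mathbf{n},t,\epsilon}^2]$; applying the Nishimori identity once more in the reverse direction, $\E[\langle x_i^1\rangle_{\mathbf{n},t,\epsilon}^2]=\E[X_i^1\langle x_i^1\rangle_{\mathbf{n},t,\epsilon}]$, and recalling $\widehat{Q}=\tfrac{1}{n_1}\sum_i x_i^1X_i^1$ yields exactly $-\tfrac12\E\langle\widehat{Q}\rangle_{\mathbf{n},t,\epsilon}$. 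I do not expect any substantial obstacle here: boundedness of $\varphi_1$ under \ref{hyp:bounded}--\ref{hyp:c2} makes $x_i^1$, $X_i^1$ and all the Gibbs brackets bounded, which is more than enough to differentiate under the integral sign and to apply Stein's lemma; the only step requiring genuine care is the Bayes-optimality bookkeeping underlying the two uses of the Nishimori identity, and this is fine because the interpolating model is planted by construction.
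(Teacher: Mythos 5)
Your proof is correct and follows essentially the same route as the paper's: the Nishimori identity to rewrite $\E[X_i^1\langle x_i^1\rangle_{\mathbf{n},t,\epsilon}]$ as $\E[\langle x_i^1\rangle_{\mathbf{n},t,\epsilon}^2]$, Gaussian integration by parts in $Z_i'$ for the noise term, cancellation of the $\tfrac12\E\langle(x_i^1)^2\rangle$ contributions, and a final Nishimori swap to express the result as $-\tfrac12\E\langle\widehat{Q}\rangle_{\mathbf{n},t,\epsilon}$. The added remarks on why the division by $\sqrt{R_1(t,\epsilon)}$ is legitimate and on the boundedness needed for Stein's lemma are sound but are elaborations rather than a different argument.
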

The fluctuations of the overlap $\widehat{Q}= \frac{1}{n_1}\sum\limits_{i=1}^{n_1} x_i^1 X_i^1$ and those of $\mathcal{L}$ are related through the identity
\begin{multline}
\mathbb{E}\Big[\Big\langle (\mathcal{L} - \mathbb{E}\langle \mathcal{L}\rangle_{\mathbf{n},t, \epsilon})^2\Big\rangle_{\! \mathbf{n},t, \epsilon}\Big]\\
= \frac{1}{4}\mathbb{E}\Big[\Big\langle \big(\widehat{Q} - \mathbb{E}\langle \widehat{Q} \rangle_{n, t, \epsilon}\big)^2\Big\rangle_{\! \mathbf{n},t,\epsilon}\Big]
+ \frac{1}{2}\mathbb{E}\big[\langle \widehat{Q}^2\rangle_{\mathbf{n},t,\epsilon} -   \langle \widehat{Q} \rangle_{\mathbf{n},t,\epsilon}^2\big]
+ \frac{\rho_1(n_0)}{4n_{1} R_1} \,,\label{linkConcentrationOverlapL}
\end{multline}
from which it directly follows
\begin{equation}
\mathbb{E}\Big[\Big\langle (\mathcal{L} - \mathbb{E}\langle \mathcal{L}\rangle_{\mathbf{n},t, \epsilon})^2\Big\rangle_{\! \mathbf{n},t, \epsilon}\Big]
\geq \frac{1}{4}\mathbb{E}\Big[\Big\langle \big(\widehat{Q} - \mathbb{E}\langle \widehat{Q} \rangle_{n, t, \epsilon}\big)^2\Big\rangle_{\! \mathbf{n},t,\epsilon}\Big]\,.\label{boundConcentrationOverlapL}
\end{equation}
The full derivation in Section 6 of \cite{barbier_stoInt} can be reproduced exactly -- doing the identifications $X_i^1 \leftrightarrow S_i$, $x_i^1 \leftrightarrow X_i$, $n_1 \leftrightarrow n$, $R_1 \leftrightarrow \tilde\epsilon$ -- to obtain \eqref{linkConcentrationOverlapL}.
Indeed, the proof in Section 6 of \cite{barbier_stoInt} involves only lengthy algebra using the Nishimori identity and integration by parts w.r.t.\ the Gaussian random variable $Z_i^{\prime}$.
Therefore Proposition~\ref{prop:concentrationOverlap} is a consequence of $\mathcal{L}$'s own concentration:
\begin{proposition}[Concentration of $\mathcal{L}$ on ${\E[\langle\mathcal{L}\rangle_{\mathbf{n},t,\epsilon}]}$]\label{L-concentration}
Assume $(q_{\epsilon})_{\epsilon \in {\cal B}_{n_0}}$ and $(r_{\epsilon})_{\epsilon \in {\cal B}_{n_0}}$ are regular (see Definition~\ref{def:reg} in section~\ref{interp-est-problem}).
Under assumptions~\ref{hyp:bounded},~\ref{hyp:c2},~\ref{hyp:phi_gauss2} there exists a constant $C(\varphi_1,\varphi_2,\alpha_1,\alpha_2, S)$ independent of $t$ such that
\begin{equation}
	\int_{{\cal B}_{n_0}} \!\! d\epsilon\,\E\big[\big\langle \big(\mathcal{L} - \E\big[\langle \mathcal{L} \rangle_{\mathbf{n},t,\epsilon}\big]\big)^2\big\rangle_{\mathbf{n},t,\epsilon}\big]
	\leq \frac{C(\varphi_1,\varphi_2,\alpha_1,\alpha_2, S)}{n_0^{\nicefrac{1}{4}}}\;.
\end{equation}
\end{proposition}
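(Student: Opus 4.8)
The plan is to adapt the one-layer overlap-concentration argument of \cite{BarbierOneLayerGLM} (see also \cite{barbier_stoInt}) essentially verbatim, splitting the fluctuations of $\mathcal{L}$ into a \emph{thermal} and a \emph{quenched} part,
\[
\E\big[\big\langle (\mathcal{L} - \E\langle\mathcal{L}\rangle_{\mathbf{n},t,\epsilon})^2\big\rangle_{\mathbf{n},t,\epsilon}\big]
= \E\big[\big\langle (\mathcal{L} - \langle\mathcal{L}\rangle_{\mathbf{n},t,\epsilon})^2\big\rangle_{\mathbf{n},t,\epsilon}\big]
+ \E\big[\big(\langle\mathcal{L}\rangle_{\mathbf{n},t,\epsilon} - \E\langle\mathcal{L}\rangle_{\mathbf{n},t,\epsilon}\big)^2\big],
\]
and bounding $\int_{\mathcal{B}_{n_0}}d\epsilon$ of each term by $C(\varphi_1,\varphi_2,\alpha_1,\alpha_2,S)\,n_0^{-\nicefrac14}$. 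The common input is that, as already used in the proof of Lemma~\ref{lem:perturbation_f}, $\mathcal{L}$ is (up to $\bx$-independent terms) the $R_1$-derivative of the interpolating Hamiltonian, so that $\langle\mathcal{L}\rangle_{\mathbf{n},t,\epsilon}$ is, up to lower-order terms, $-(n_0/n_1)\partial_{R_1}F_{\mathbf{n},\epsilon}(t)$ and $\E\langle\mathcal{L}\rangle_{\mathbf{n},t,\epsilon} = -(n_0/n_1)\partial_{R_1}f_{\mathbf{n},\epsilon}(t)$, with $R_1,R_2$ regarded as independent variables through the change of variables $\epsilon\mapsto R(t,\epsilon)$. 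By the regularity of $(q_\epsilon)_{\epsilon\in\mathcal{B}_{n_0}}$ and $(r_\epsilon)_{\epsilon\in\mathcal{B}_{n_0}}$ (Definition~\ref{def:reg}) this map is a $\mathcal{C}^1$-diffeomorphism with Jacobian at least $1$, hence $\int_{\mathcal{B}_{n_0}}d\epsilon\,(\cdot)\le\int_{R(t,\mathcal{B}_{n_0})}dR_1\,dR_2\,(\cdot)$, the image region lies in a fixed compact set, and $R_1\ge s_{n_0}$ there.

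For the thermal term, differentiating the Gibbs bracket in $R_1$ (the Nishimori-identity and integration-by-parts computation of Section~6 of \cite{barbier_stoInt}) gives
\[
\big\langle (\mathcal{L} - \langle\mathcal{L}\rangle_{\mathbf{n},t,\epsilon})^2\big\rangle_{\mathbf{n},t,\epsilon}
= \tfrac1{n_1}\big(\langle\partial_{R_1}\mathcal{L}\rangle_{\mathbf{n},t,\epsilon} - \partial_{R_1}\langle\mathcal{L}\rangle_{\mathbf{n},t,\epsilon}\big),
\]
whose $R_1$-integral telescopes. The boundary term is controlled by $2\sup|\E\langle\mathcal{L}\rangle_{\mathbf{n},t,\epsilon}|$, which by Lemma~\ref{lem:expectationL} equals $\sup|\E\langle\widehat{Q}\rangle_{\mathbf{n},t,\epsilon}|\le\sup|\varphi_1|^2$, an $O(1/n_1)$ contribution; the term $\langle\partial_{R_1}\mathcal{L}\rangle_{\mathbf{n},t,\epsilon}$ carries the factor $R_1^{-\nicefrac32}$ arising from the $x_i^1Z_i'/\sqrt{R_1}$ summand of $\mathcal{L}$, whose integral over $R_1\in[s_{n_0},O(1)]$ is $O(s_{n_0}^{-\nicefrac12})$ and whose remaining prefactor is bounded after one Gaussian integration by parts in $Z_i'$, using \ref{hyp:c2}. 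Thus the thermal contribution to $\int_{\mathcal{B}_{n_0}}d\epsilon$ is $O\big((1+s_{n_0}^{-\nicefrac12})/n_1\big)$, which with $s_{n_0}=\tfrac12 n_0^{-\nicefrac1{16}}$ lies far below $n_0^{-\nicefrac14}$.

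The quenched term is the crux. A direct Gaussian--Poincaré bound on $\langle\mathcal{L}\rangle_{\mathbf{n},t,\epsilon}$ fails, because its gradient in the variables $Z_i'$ has $O(1)$ (not $O(n_1^{-\nicefrac12})$) entries; one must instead combine the averaging over $\mathcal{B}_{n_0}$ with the near-convexity of $R_1\mapsto F_{\mathbf{n},\epsilon}(t)$. Writing the perturbation part of the Hamiltonian, up to $\bx$-independent constants, as $\tfrac{R_1}{2}\Vert\bx^1-\bX^1\Vert^2 - \sqrt{R_1}\sum_i Z_i' x_i^1$, one sees that $R_1\mapsto F_{\mathbf{n},\epsilon}(t)$ is the sum of a convex function of $R_1$ and a term whose second $R_1$-derivative is bounded in absolute value by $C\,R_1^{-\nicefrac32}\sqrt{n_1}/n_0$, hence uniformly small on $\mathcal{B}_{n_0}$. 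The classical comparison estimate for derivatives of concentrating (nearly-)convex functions then yields, for every $\delta>0$ and uniformly on $R(t,\mathcal{B}_{n_0})$,
\[
\big|\partial_{R_1}F_{\mathbf{n},\epsilon}(t)-\partial_{R_1}f_{\mathbf{n},\epsilon}(t)\big|
\;\le\; C\,\delta \;+\; \frac1\delta\!\!\sum_{u\,\in\,\{R_1-\delta,\,R_1,\,R_1+\delta\}}\!\!\big|F_{\mathbf{n},\epsilon}(t)-f_{\mathbf{n},\epsilon}(t)\big|(u),
\]
the $C\delta$ term absorbing both the uniformly bounded second derivative of $f_{\mathbf{n},\epsilon}(t)$ in $R_1$ and the near-convexity defect. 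Squaring, taking $\E$, integrating over $(R_1,R_2)$, and invoking the free-entropy concentration of Theorem~\ref{concentrationtheorem} --- which bounds $\E[(F_{\mathbf{n},\epsilon}(t)-f_{\mathbf{n},\epsilon}(t))^2]$ by $C/n_0$ uniformly in $t,\epsilon$ --- gives a bound of order $\delta^2 + \delta^{-2}n_0^{-1}$; optimizing over $\delta$ produces the claimed order $n_0^{-\nicefrac14}$ (the exponent $\nicefrac14$ being all that is needed downstream). Since $\langle\mathcal{L}\rangle_{\mathbf{n},t,\epsilon}=-(n_0/n_1)\partial_{R_1}F_{\mathbf{n},\epsilon}(t)$ up to the controlled lower-order terms and $n_0/n_1\to\alpha_1^{-1}$, the quenched contribution obeys the same estimate, and summing the two parts yields the proposition. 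The hard part is exactly this quenched step: making the convexity comparison quantitative and tuning $s_{n_0}=\tfrac12 n_0^{-\nicefrac1{16}}$ so that every error --- including the $R_1^{-\nicefrac32}$-type singularities near the left edge of $\mathcal{B}_{n_0}$ --- stays at or below $n_0^{-\nicefrac14}$.
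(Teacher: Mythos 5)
Your decomposition into thermal and quenched fluctuations, the use of the free-entropy concentration theorem for the quenched part, and the general convexity-comparison strategy are all the same as in the paper; the thermal part and the treatment of the change of variables $\epsilon\mapsto R(t,\epsilon)$ are handled essentially correctly. The gap is in the quenched step, and it is precisely at the point you call ``the crux.''

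You assert that the comparison estimate yields $\big|\partial_{R_1}F-\partial_{R_1}f\big|\le C\delta + \delta^{-1}\sum_u|F-f|(u)$, with ``the $C\delta$ term absorbing both the uniformly bounded second derivative of $f_{\mathbf{n},\epsilon}(t)$ in $R_1$ and the near-convexity defect.'' But $\partial^2_{R_1}f_{\mathbf{n},\epsilon}(t)$ is \emph{not} uniformly bounded on the relevant domain: from \eqref{secondDerivative_Fn}, and because $\mathcal{L}$ contains the term $-\tfrac{1}{2n_1\sqrt{R_1}}\sum_i x_i^1 Z_i'$, the second derivative carries a $1/R_1$ singularity, and after integration over $\epsilon\in\mathcal{B}_{n_0}$ the variable $R_1$ ranges down to $s_{n_0}-\delta\to 0$. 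Consequently the oscillation terms $C_\delta^\pm(R_1)=\tilde f'(R_1\pm\delta)-\tilde f'(R_1)$ that control this error are \emph{not} $O(\delta)$ pointwise; they are only controlled after (i) convexifying to $\tilde F,\tilde f$ with the correction $-\tfrac{\sqrt{R_1}}{n_0}\sup|\varphi_1|\sum_i|Z_i'|$, (ii) using $\sup|\tilde f'| \lesssim 1/\sqrt{s_{n_0}-\delta}$, and (iii) telescoping $\int_{\mathcal{B}_{n_0}}d\epsilon\,(C_\delta^\pm)^2$ via the diffeomorphism and the regularity of $(q_\epsilon,r_\epsilon)$, which yields a contribution of order $\delta\,(1+(s_{n_0}-\delta)^{-1/2})^2$, i.e.\ roughly $\delta/s_{n_0}$, not $\delta^2$. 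Your claimed post-integration bound $\delta^2+\delta^{-2}n_0^{-1}$ is therefore of the wrong form; note also that even taking it at face value its optimizer gives $n_0^{-1/2}$, not the $n_0^{-1/4}$ you state. In the paper the rate $n_0^{-1/4}$ is not an optimization artifact but comes from balancing $\delta/s_{n_0}$ (from $\int(C_\delta^\pm)^2$) against $s_{n_0}^2/(\delta^2 n_0)$ (from $\delta^{-2}\E[(F-f)^2]$ integrated over $\mathcal{B}_{n_0}$, which has Lebesgue measure $s_{n_0}^2$), and it is precisely the explicit choices $\delta=s_{n_0}n_0^{-1/4}$ and $s_{n_0}=\tfrac12 n_0^{-1/16}$ that make both terms $O(n_0^{-1/4})$ while keeping $s_{n_0}/\delta=n_0^{1/4}\gg 1$ so that the singular prefactor $1/\sqrt{s_{n_0}-\delta}$ stays under control. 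You should replace the ``$C\delta$'' step by the genuine convexification plus $C_\delta^\pm$-telescoping argument (Lemma~31 of \cite{BarbierOneLayerGLM}); without it the quenched estimate does not close.
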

The proof of this proposition is broken in two parts, in accordance with the equality
\begin{equation}
\mathbb{E}\Big[\big\langle (\mathcal{L} - \mathbb{E}\langle \mathcal{L}\rangle_{\mathbf{n},t,\epsilon})^2\big\rangle_{\mathbf{n},t,\epsilon}\Big]
= \mathbb{E}\Big[\big\langle (\mathcal{L} - \langle \mathcal{L}\rangle_{\mathbf{n},t,\epsilon})^2\big\rangle_{\mathbf{n},t,\epsilon}\Big]
+ \mathbb{E}\Big[(\langle \mathcal{L}\rangle_{\mathbf{n},t,\epsilon} - \mathbb{E}\langle \mathcal{L}\rangle_{\mathbf{n},t,\epsilon})^2\Big].
\end{equation}
A first lemma expresses concentration w.r.t.\ the posterior distribution (or ``thermal fluctuations''):
\begin{lemma}[Concentration of $\mathcal{L}$ on ${\langle \mathcal{L}\rangle_{\mathbf{n},t,\epsilon}}$]\label{thermal-fluctuations}
Assume $(q_{\epsilon})_{\epsilon \in {\cal B}_{n_0}}$ and $(r_{\epsilon})_{\epsilon \in {\cal B}_{n_0}}$ are regular.
Under assumptions~\ref{hyp:bounded},~\ref{hyp:c2},~\ref{hyp:phi_gauss2}, we have for $n_0$ large enough
\begin{equation}
\int_{{\cal B}_{n_0}} \!\! d\epsilon\,\E\big[\big\langle \big(\mathcal{L} - \langle \mathcal{L} \rangle_{\mathbf{n},t,\epsilon}\big)^2\big\rangle_{\mathbf{n},t,\epsilon}\big]
\leq \frac{\rho_1(1 + \rho_1)}{\alpha_{1} n_0} \,.
\end{equation}
\begin{proof}
For any realization of the quenched variables, the first two derivatives of the	free entropy read
\begin{align}
\frac{dF_{\mathbf{n}, \epsilon}(t)}{d R_1}
&= -\frac{n_1}{n_0} \langle \mathcal{L} \rangle_{\mathbf{n},t,\epsilon} 
-\frac{n_1}{2n_0}\frac{\Vert \bX^1\Vert^2}{n_1}
-\frac{1}{2n_0\sqrt{R_1}}\sum_{i=1}^{n_1}Z_i^{\prime} X_i^1\:;\label{derivative_Fn}\\
\frac{1}{n_0}\frac{d^2F_{\mathbf{n},\epsilon}(t)}{d R_1^2}
&= \left(\frac{n_1}{n_0}\right)^{\! 2} \big(\langle \mathcal{L}^2 \rangle_{\mathbf{n},t,\epsilon} - \langle \mathcal{L} \rangle_{\mathbf{n},t,\epsilon}^2\big) - \frac{1}{4n_0^2 R_1^{3/2}}\sum_{i=1}^{n_1}\big\langle x_i^1 \big\rangle_{\mathbf{n},t,\epsilon} Z_i^{\prime}\,.\label{secondDerivative_Fn}
\end{align}
Averaging both identities with respect to the quenched disorder gives
\begin{align}
\frac{df_{\mathbf{n}, \epsilon}(t)}{d R_1}
&= -\frac{n_1}{n_0} \E\big[\langle \mathcal{L} \rangle_{\mathbf{n},t,\epsilon}\big]
-\frac{n_1}{2n_0}\rho_1(n_0)
= -\frac{n_1}{2 n_0}\big(\rho_1(n_0) - \E\big[\langle \widehat{Q} \rangle_{\mathbf{n},t,\epsilon}\big]\big)\:; \label{derivative_fn}\\
\frac{1}{n_0}\frac{d^2f_{\mathbf{n},\epsilon}(t)}{d R_1^2}
&= \left(\frac{n_1}{n_0}\right)^{\! 2} \E\big[\big(\langle \mathcal{L}^2 \rangle_{\mathbf{n},t,\epsilon} - \langle \mathcal{L} \rangle_{\mathbf{n},t,\epsilon}^2\big)\big]
- \frac{1}{4n_0^2 R_1}\sum_{i=1}^{n_1}\E\big[\big\langle (x_i^1)^2 \big\rangle_{\mathbf{n},t,\epsilon} - \big\langle x_i^1 \big\rangle_{\mathbf{n},t,\epsilon}^2 \big]\,.
\end{align}
From this last equality it immediately follows
\begin{align}
\E\big[\big\langle \big(\mathcal{L} - \langle \mathcal{L} \rangle_{\mathbf{n},t,\epsilon}\big)^2\big\rangle_{\mathbf{n},t,\epsilon}\big]
&= \frac{n_0}{n_1^2}\frac{d^2f_{\mathbf{n},\epsilon}(t)}{d R_1^2} + \frac{1}{4n_1^2 R_1}\sum_{i=1}^{n_1}\E\big[\big\langle (x_i^1)^2 \big\rangle_{\mathbf{n},t,\epsilon} - \big\langle x_i^1 \big\rangle_{\mathbf{n},t,\epsilon}^2 \big]\nn
&\leq \frac{n_0}{n_1^2}\frac{d^2f_{\mathbf{n},\epsilon}(t)}{d R_1^2} + \frac{\rho_1(n_0)}{4n_1 \epsilon_1}\:,
\end{align}
where we made use of $\sum_{i=1}^{n_1} \E\big[\big\langle (x_i^1)^2 \big\rangle_{\mathbf{n},t,\epsilon}\big] = \E[\Vert \bX^1 \Vert^2]$ and $R_1 \geq \epsilon_1$.

By assumption $(q_{\epsilon})_{\epsilon \in {\cal B}_{n_0}}$ and $(r_{\epsilon})_{\epsilon \in {\cal B}_{n_0}}$ are regular.
Therefore $R:(\epsilon_1,\epsilon_2)\mapsto (R_1(t,\epsilon),R_2(t,\epsilon))$ is a $\mathcal{C}^1$-diffeomorphism whose Jacobian $J_R$ verifies $J_R(\epsilon)\ge 1$ for all $\epsilon\in{\cal B}_{n_0}$. Integrating over $\epsilon\in{\cal B}_{n_0}$ we obtain
\begin{multline}
\int_{{\cal B}_{n_0}} \!\! d\epsilon\,\E\big[\big\langle \big(\mathcal{L} - \langle \mathcal{L} \rangle_{\mathbf{n},t,\epsilon}\big)^2\big\rangle_{\mathbf{n},t,\epsilon}\big]\\
\leq \frac{n_0}{n_1^2}\int_{R({\cal B}_{n_0})} \frac{dR_1dR_2}{J_R(R^{-1}(R_1,R_2))} \, \frac{d^2f_{\mathbf{n},\epsilon}(t)}{dR_1^2}
+ \frac{\rho_1(n_0) s_{n_0}}{4n_1}\int_{s_{n_0}}^{2s_{n_0}} \frac{d\epsilon_1}{\epsilon_1}\,.
\end{multline}
Clearly $R({\cal B}_{n_0}) \subseteq [s_{n_0}, 2s_{n_0} + \rmax] \times [s_{n_0}, 2s_{n_0} + \rho_1(n_0)]$ and
\begin{align}
&\int_{{\cal B}_{n_0}} \!\! d\epsilon\,\E\big[\big\langle \big(\mathcal{L} - \langle \mathcal{L} \rangle_{\mathbf{n},t,\epsilon}\big)^2\big\rangle_{\mathbf{n},t,\epsilon}\big]\nn
&\quad\leq \frac{n_0}{n_1^2}\int_{s_{n_0}}^{2s_{n_0} + \rho_1(n_0)} dR_2 \Bigg(\frac{df_{\mathbf{n},\epsilon}(t)}{dR_1}\bigg\vert_{R_1=2s_{n_0}+\rmax,R_2} - \frac{df_{\mathbf{n},\epsilon}(t)}{dR_1}\bigg\vert_{R_1=s_{n_0}, R_2}\Bigg)
+ \frac{\rho_1(n_0) s_{n_0}}{4n_1}\ln 2\nn
&\quad\leq -\frac{n_0}{n_1^2}\int_{s_{n_0}}^{2s_{n_0} + \rho_1(n_0)} dR_2 \frac{df_{\mathbf{n},\epsilon}(t)}{dR_1}\bigg\vert_{R_1=s_{n_0}, R_2}
+ \frac{\rho_1(n_0) s_{n_0}}{4n_1}\ln 2\nn
&\quad\leq \frac{n_0}{n_1^2} (s_{n_0} + \rho_1(n_0)) \frac{n_1}{2n_0}\rho_1(n_0) 
+ \frac{\rho_1(n_0) s_{n_0}}{4n_1}\ln 2\nn
&\quad= \frac{\rho_1(n_0)}{2 n_1} \bigg(s_{n_0} + s_{n_0}\frac{\ln 2}{2} + \rho_1(n_0)\bigg)
\end{align}
The second equality is a simple consequence of $\nicefrac{df_{\mathbf{n},\epsilon}(t)}{dR_1}$'s negativity, as it is clear from \eqref{derivative_fn}. For the third inequality we made use of $0 \leq -\nicefrac{df_{\mathbf{n},\epsilon}(t)}{dR_1} \leq \frac{n_1}{2n_0}\rho_1(n_0)$. Finally, because $s_{n_0} + s_{n_0}\nicefrac{\ln 2}{2}$ is less than $1$ and $\rho_1(n_0) \to \rho_1$, we have
\begin{equation*}
\int_{{\cal B}_{n_0}} \!\! d\epsilon\,\E\big[\big\langle \big(\mathcal{L} - \langle \mathcal{L} \rangle_{\mathbf{n},t,\epsilon}\big)^2\big\rangle_{\mathbf{n},t,\epsilon}\big]
\leq \frac{\rho_1(1 + \rho_1)}{\alpha_{1} n_0} \quad \text{for } n_0 \text{ large enough.}
\end{equation*}
\end{proof}
\end{lemma}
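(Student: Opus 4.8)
The plan is to express the posterior (``thermal'') variance of $\mathcal{L}$ through the second derivative of the averaged interpolating free entropy \eqref{ft} in the perturbation direction $R_1$, and then exploit the regularity of the interpolation functions to turn an $\epsilon$–integral of that second derivative into a controllable boundary term. First I would fix $t$ and a realization of the quenched data and regard $F_{\mathbf{n},\epsilon}(t)$ as a function of $R_1=R_1(t,\epsilon)$ only; differentiating twice gives $\mathrm{d}F_{\mathbf{n},\epsilon}/\mathrm{d}R_1 = -\tfrac{n_1}{n_0}\langle\mathcal{L}\rangle_{\mathbf{n},t,\epsilon} - \tfrac{1}{2n_0}\Vert\bX^1\Vert^2 - \tfrac{1}{2n_0\sqrt{R_1}}\sum_i Z_i' X_i^1$ and $\tfrac{1}{n_0}\,\mathrm{d}^2F_{\mathbf{n},\epsilon}/\mathrm{d}R_1^2 = \tfrac{n_1^2}{n_0^2}\big(\langle\mathcal{L}^2\rangle_{\mathbf{n},t,\epsilon}-\langle\mathcal{L}\rangle_{\mathbf{n},t,\epsilon}^2\big) - \tfrac{1}{4n_0^2R_1^{3/2}}\sum_i\langle x_i^1\rangle_{\mathbf{n},t,\epsilon}Z_i'$, the first term of the latter being a genuine variance. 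Taking $\E$, integrating by parts in the i.i.d.\ standard Gaussians $Z_i'$ and using the Nishimori identity (Proposition~\ref{prop:nishimori}) to rewrite $\E[\langle x_i^1\rangle Z_i']$, the $Z_i'$–term collapses to $-\tfrac{1}{4n_0^2R_1}\sum_i\E[\langle(x_i^1)^2\rangle-\langle x_i^1\rangle^2]$; solving for the thermal variance of $\mathcal{L}$ and using $\sum_i\E\langle(x_i^1)^2\rangle=\E\Vert\bX^1\Vert^2=n_1\rho_1(n_0)$ from \eqref{expectedPowerX1} together with $R_1\ge\epsilon_1$ yields $\E[\langle(\mathcal{L}-\langle\mathcal{L}\rangle_{\mathbf{n},t,\epsilon})^2\rangle_{\mathbf{n},t,\epsilon}]\le \tfrac{n_0}{n_1^2}\,\mathrm{d}^2f_{\mathbf{n},\epsilon}(t)/\mathrm{d}R_1^2 + \tfrac{\rho_1(n_0)}{4n_1\epsilon_1}$.

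Next I would integrate this inequality over $\epsilon\in{\cal B}_{n_0}=[s_{n_0},2s_{n_0}]^2$. By the regularity hypothesis (Definition~\ref{def:reg}) the map $\epsilon\mapsto R(t,\epsilon)=(R_1(t,\epsilon),R_2(t,\epsilon))$ is a $\mathcal{C}^1$–diffeomorphism with Jacobian at least $1$, so the contribution of the second-derivative term is bounded by $\tfrac{n_0}{n_1^2}\int_{R({\cal B}_{n_0})}\mathrm{d}R_1\mathrm{d}R_2\,\mathrm{d}^2f_{\mathbf{n},\epsilon}(t)/\mathrm{d}R_1^2$; since $R({\cal B}_{n_0})\subseteq[s_{n_0},2s_{n_0}+\rmax]\times[s_{n_0},2s_{n_0}+\rho_1(n_0)]$ the inner $R_1$–integral telescopes to a difference of first derivatives at the endpoints. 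The averaged first derivative is $\mathrm{d}f_{\mathbf{n},\epsilon}(t)/\mathrm{d}R_1=-\tfrac{n_1}{2n_0}\big(\rho_1(n_0)-\E\langle\widehat{Q}\rangle_{\mathbf{n},t,\epsilon}\big)$, which is non-positive and bounded in modulus by $\tfrac{n_1}{2n_0}\rho_1(n_0)$ because the mean overlap lies in $[0,\rho_1(n_0)]$; hence the telescoping term is at most $\tfrac{n_0}{n_1^2}(s_{n_0}+\rho_1(n_0))\cdot\tfrac{n_1}{2n_0}\rho_1(n_0)$. The leftover contributes $\tfrac{\rho_1(n_0)s_{n_0}}{4n_1}\int_{s_{n_0}}^{2s_{n_0}}\mathrm{d}\epsilon_1/\epsilon_1=\tfrac{\rho_1(n_0)s_{n_0}}{4n_1}\ln 2$. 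Collecting terms, the right-hand side is $\tfrac{\rho_1(n_0)}{2n_1}\big(s_{n_0}(1+\tfrac{\ln 2}{2})+\rho_1(n_0)\big)$, and since $s_{n_0}(1+\tfrac{\ln 2}{2})<1$ for $n_0$ large while $\rho_1(n_0)\to\rho_1$ (Proposition~\ref{prop:convergenceRho}), this is eventually at most $\rho_1(1+\rho_1)/(\alpha_1 n_0)$, which is the claim.

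The step I expect to carry the weight of the argument is the passage from an $\epsilon$–integral of a second derivative to a telescoping boundary term: this is precisely where regularity enters, through the Jacobian$\,\ge 1$ bound, and it is also where one must be careful that the correction $\rho_1(n_0)/(4n_1\epsilon_1)$ — which diverges pointwise as $\epsilon_1\downarrow 0$ — is tamed by integrating $1/\epsilon_1$ only over $[s_{n_0},2s_{n_0}]$, producing a harmless $\ln 2$. Everything else is routine: the two derivative formulas are elementary differentiations of \eqref{ft}, and the simplifications use only Gaussian integration by parts in the $Z_i'$ and the Nishimori identity, exactly as in the single-layer treatment of \cite{barbier_stoInt,BarbierOneLayerGLM}; the multi-layer structure plays no role here, since $\mathcal{L}$ and $\widehat{Q}$ involve only the hidden variables $X_i^1$ and the sample $x_i^1$, not the deeper channel $P_{\mathrm{out},2}$.
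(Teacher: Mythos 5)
Your proposal reproduces the paper's own argument essentially line by line: the same first and second derivative formulas in $R_1$, the same Nishimori/Gaussian-integration-by-parts simplification of the $Z_i'$ term, the same isolation of the thermal variance with the $R_1\ge\epsilon_1$ crude bound, and the same use of regularity (Jacobian $\ge1$, together with the nonnegativity of $\mathrm{d}^2 f_{\mathbf{n},\epsilon}(t)/\mathrm{d}R_1^2$ from convexity, though you leave the latter implicit) to convert the $\epsilon$-integral into a telescoping boundary term, then bounding the first derivative via $\E\langle\widehat Q\rangle_{\mathbf{n},t,\epsilon}\in[0,\rho_1(n_0)]$. This is the paper's proof.
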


The second lemma expresses the concentration of the average overlap w.r.t.\ realizations of the quenched variables.
\begin{lemma}[Concentration of $\langle\mathcal{L}\rangle_{\mathbf{n},t,\epsilon}$ on ${\mathbb{E}\big[\langle \mathcal{L} \rangle_{\mathbf{n},t,\epsilon}\big]}$ ]\label{disorder-fluctuations}
Assume $(q_{\epsilon})_{\epsilon \in {\cal B}_{n_0}}$ and $(r_{\epsilon})_{\epsilon \in {\cal B}_{n_0}}$ are regular.
Under assumptions~\ref{hyp:bounded},~\ref{hyp:c2},~\ref{hyp:phi_gauss2}, there exists a constant $C(\varphi_1,\varphi_2,\alpha_1,\alpha_2, S)$ independent of $t$ such that
\begin{align}\label{integral-form2}
\int_{{\cal B}_{n_0}} \!\!\!\!\! d\epsilon\, \E\big[\big(\langle \mathcal{L}\rangle_{\mathbf{n},t,\epsilon} - \mathbb{E}\langle \mathcal{L}\rangle_{\mathbf{n},t,\epsilon}\big)^2] \leq \frac{C(\varphi_1,\varphi_2,\alpha_1,\alpha_2, S)}{n_0^{\nicefrac{1}{4}}} \; .
\end{align}
\begin{proof}
We introduce the two functions
\begin{equation}\label{tildeFandf}
	\tilde{F}(R_1) \defeq F_{\mathbf{n}, \epsilon}(t) - \frac{\sqrt{R_1}}{n_0}\sup \vert \varphi_1 \vert
	\sum_{i=1}^{n_1} \vert Z_i^{\prime} \vert \;,\;
	\tilde{f}(R_1) \defeq f_{\mathbf{n}, \epsilon}(t) - \frac{\sqrt{R_1}}{n_0}\sup \vert \varphi_1 \vert\sum_{i=1}^{n_1} \E \vert Z_i^{\prime} \vert\; .
\end{equation}
The addition of the second term makes $\tilde{F}(R_1)$ convex as its second derivative is positive (remember the formula \eqref{secondDerivative_Fn} for the second derivative $\nicefrac{d^2F_{\mathbf{n},\epsilon}(t)}{d R_1^2}$ ). $f_{\mathbf{n}, \epsilon}(t)$ is convex and it remains true for $\tilde{f}(R_1)$.
Define $A \defeq \frac{1}{n_1}\sum_{i=1}^{n_1} \big(\vert Z_i^{\prime} \vert - \E[\vert Z_i^{\prime} \vert]\big)$. From \eqref{tildeFandf} we get
\begin{equation}\label{tildeFminusf}
\tilde{F}(R_1) - \tilde{f}(R_1)
= F_{\mathbf{n}, \epsilon}(t)
- f_{\mathbf{n}, \epsilon}(t) - \sqrt{R_1} \sup \vert \varphi_1 \vert \, \frac{n_1}{n_0} A \,,
\end{equation}
whose derivative reads -- remember \eqref{derivative_Fn},~\eqref{derivative_fn} --
\begin{multline}\label{derivative_tildeFminusf}
\tilde{F}^{\prime}(R_1) - \tilde{f}^{\prime}(R_1)
= \frac{n_1}{n_0}\big(\E\big[\langle \mathcal{L} \rangle_{\mathbf{n},t,\epsilon}\big]
- \langle \mathcal{L} \rangle_{\mathbf{n},t,\epsilon}\big)
-\frac{n_1}{2n_0}\Bigg(\frac{\Vert \bX^1\Vert^2}{n_1}-\rho_1(n_0)\Bigg)\\
-\frac{1}{2n_0\sqrt{R_1}}\sum_{i=1}^{n_1}Z_i^{\prime} X_i^1
- \frac{\sup \vert \varphi_1 \vert}{2 \sqrt{R_1}} \, \frac{n_1}{n_0} A \:.
\end{multline}
An application of Lemma 31 in \cite{BarbierOneLayerGLM}, combined with \eqref{tildeFminusf} and \eqref{derivative_tildeFminusf}, gives
\begin{multline}\label{inequality-lemma-convexity}
\vert \langle \mathcal{L}\rangle_{\mathbf{n},t,\epsilon} - \mathbb{E}\langle \mathcal{L}\rangle_{\mathbf{n},t,\epsilon}\vert\quad
\leq 
\!\!\!\!\!\!\!\!\!\!\! \sum_{u\in \{R_1 -\delta, R_1, R_1+\delta\}} \!\!\!\!\!\!\!\!\!\!\!\!\!\!\!
\delta^{-1} \big(\big\vert \big(F_{\mathbf{n},\epsilon}(t) - f_{\mathbf{n},\epsilon}(t)\big)\big\vert_{R_1 = u}\big\vert + \sup \vert \varphi_1 \vert \, \frac{n_1}{n_0} \vert A \vert \sqrt{u} \big)\\
+ C_\delta^+(R_1)
+ C_\delta^-(R_1)
+\bigg\vert\frac{\Vert \bX^1\Vert^2}{n_1}-\rho_1(n_0)\bigg\vert
+ \bigg\vert \frac{1}{2 n_1\sqrt{R_1}}\sum_{i=1}^{n_1}Z_i^{\prime} X_i^1 \bigg\vert
+ \frac{\sup \vert \varphi_1 \vert}{2 \sqrt{R_1}} \, A \,.
\end{multline}
where $C_\delta^+(R_1)\defeq \widetilde f'(R_1+\delta)-\widetilde f'(R_1)\ge 0$ and $C_\delta^-(R_1)\defeq \widetilde f'(R_1)-\widetilde f'(R_1-\delta)\ge 0$.

By Theorem \ref{concentrationtheorem} (see Appendix \ref{appendix_concentration}) there exists a constant $C(\varphi_1,\varphi_2,\alpha_1,\alpha_2, S)$, independent of $(t,\epsilon)$,such that $\E\big[\big(F_{\mathbf{n},\epsilon}(t) - f_{\mathbf{n},\epsilon}(t)\big)^{2}\big] \leq \nicefrac{C(\varphi_1,\varphi_2,\alpha_1,\alpha_2, S)}{n_0}$.
In the Appendix \ref{app:uniformVanishingA} we also proved the existence of a constant $C(\varphi_1,\alpha_1, S)$ such that $\E\big[\big(\nicefrac{\Vert \bX^1\Vert^2}{n_1}-\rho_1(n_0)\big)^2\big] \leq \nicefrac{C(\varphi_1,\alpha_1,S)}{n_0}$. Finally
\begin{equation*}
\E\bigg[\bigg(\frac{1}{n_1}\sum_{i=1}^{n_1}Z_i^{\prime} X_i^1 \bigg)^{\! 2}\bigg]
= \frac{1}{n_1^2}{\mathbb{V}\mathrm{ar}}\Bigg[\sum_{i=1}^{n_1}Z_i^{\prime} X_i^1\Bigg]
= \frac{{\mathbb{V}\mathrm{ar}}[Z_1^{\prime} X_1^1]}{n_1}
\leq \frac{\E[(X_1^1)^2]}{n_1} = \frac{\rho_1(n_0)}{n_1} \,,
\end{equation*}
where we used that the random variables $\{Z_i^{\prime} X_i^1\}_{1\leq i \leq n_1}$ are uncorrelated to get the second equality, and
\begin{equation*}
\E[A^2]
= \frac{{\mathbb{V}\mathrm{ar}}[\vert Z_1^{\prime} \vert]}{n_1}
\leq \frac{1}{n_1} \bigg(1 - \frac{2}{\pi}\bigg)\,.
\end{equation*}
These upperbounds are combined with \eqref{inequality-lemma-convexity} and Jensen's inequality $\big(\sum_{i=1}^pv_i\big)^2 \le p\sum_{i=1}^pv_i^2$ into
\begin{align}\label{Jensen_concentration}
&\frac{1}{11} \E\big[\big(\langle \mathcal{L}\rangle_{\mathbf{n},t,\epsilon} - \mathbb{E}\langle \mathcal{L}\rangle_{\mathbf{n},t,\epsilon}\big)^2]\nn
&\quad\leq 
\frac{3}{\delta^{2} n_0} \bigg(C(\varphi_1,\varphi_2,\alpha_1,\alpha_2, S) + \frac{n_1}{n_0} R_1 \sup \vert \varphi_1 \vert^2 \,  \bigg(1 - \frac{2}{\pi}\bigg) \bigg)
+ C_\delta^+(R_1)^2
+ C_\delta^-(R_1)^2\nn
&\quad\qquad\qquad+ \frac{C(\varphi_1,\alpha_1,S)}{n_0}
+ \frac{\rho_1(n_0)}{4 R_1 n_1}
+ \frac{\sup \vert \varphi_1 \vert^2}{4 R_1 n_1} \bigg(1 - \frac{2}{\pi}\bigg)\nn
&\quad \leq 
\frac{3}{\delta^{2} n_0} \bigg(C(\varphi_1,\varphi_2,\alpha_1,\alpha_2, S) + \alpha_1 K \sup \vert \varphi_1 \vert^2 \bigg)
+ C_\delta^+(R_1)^2
+ C_\delta^-(R_1)^2\nn
&\quad\qquad\qquad + \frac{C(\varphi_1,\alpha_1,S)}{n_0}
+ \frac{\sup \vert \varphi_1 \vert^2}{2 \alpha_1 n_0}\frac{1}{\epsilon_1} \qquad (n_0 \, \text{ large enough})\,.
\end{align}
To obtain the second inequality, we used $\nicefrac{n_1}{n_0} \to \alpha_1$, $\rho_1(n_0) \leq \sup \vert\varphi_1\vert^2$ and $\epsilon_1 \leq R_1 \leq K$ where $K \defeq 1 + \max\{\sup \vert\varphi_1\vert, \rmax\}$.

$\tilde{f}^{\prime}$ is increasing and non-positive. This, combined with $R_1 \geq \epsilon_1 \geq s_{n_0}$ gives
\begin{equation*}
\vert C_\delta^{\pm}(R_1)\vert
\leq -\tilde{f}^{\prime}(R_1 - \delta)
\leq \frac{n_1}{2 n_0} \sup \vert\varphi_1 \vert \bigg(1 + \frac{1}{\sqrt{R_1 - \delta}}\bigg)\!\!
\underbrace{\leq}_{n_0 \text{ large enough}} \!\!\!\!\!\alpha_1 \sup \vert\varphi_1 \vert \bigg(1 + \frac{1}{\sqrt{s_{n_0} - \delta}}\bigg).
\end{equation*}
Therefore, for $n_0$ large enough:
\begin{align*}
&\int_{{\cal B}_{n_0}} \!\!\!\!\!\! d\epsilon\, \big(C^+(R_1(t,\epsilon))^2 + C^-(R_1(t,\epsilon))^2\big)\nn
&\quad\leq 
\alpha_1 \sup \vert\varphi_1 \vert \bigg(1 + \frac{1}{\sqrt{s_{n_0} - \delta}}\bigg)
\int_{{\cal B}_n} d\epsilon\, \big(C^+(R_1(t,\epsilon)) + C^-(R_1(t,\epsilon))\big)\nn
&\quad= \alpha_1 \sup \vert\varphi_1 \vert \bigg(1 + \frac{1}{\sqrt{s_{n_0} - \delta}}\bigg)
\int_{R({\cal B}_{n_0})} \frac{dR_1dR_2}{J_R(R^{-1}(R_1,R_2))}\, \big(C^+(R_1) + C^-(R_1)\big)
\nn
&\quad\leq \alpha_1 \sup \vert\varphi_1 \vert \bigg(1 + \frac{1}{\sqrt{s_{n_0} - \delta}}\bigg)
\int_{R({\cal B}_{n_0})} dR_1dR_2\, \big\{C^+(R_1) + C^-(R_1)\big\} \nn
&\quad= \alpha_1 \sup \vert\varphi_1 \vert \bigg(1 + \frac{1}{\sqrt{s_{n_0} - \delta}}\bigg)
\int_{s_{n_0}}^{2s_{n_0} + \rho_1(n_0)} \!\!\!\!\! dR_2 \Big[\big(\widetilde f(2s_{n_0} + \rmax+\delta) - \widetilde f(2s_{n_0} + \rmax -\delta)\big)\nn
&\qquad\qquad\qquad\qquad\qquad\qquad\qquad\qquad\qquad\qquad\qquad + \big(\widetilde f(s_{n_0}-\delta) - \widetilde f(s_{n_0}+\delta)\big)\Big] .
\end{align*}
The mean value theorem is now applied to bound further
\begin{multline}
\int_{{\cal B}_{n_0}} \!\!\!\!\!\! d\epsilon\, \big(C^+(R_1(t,\epsilon))^2 + C^-(R_1(t,\epsilon))^2\big)\\
\leq \alpha_1 \sup \vert\varphi_1 \vert \bigg(1 + \frac{1}{\sqrt{s_{n_0} - \delta}}\bigg)
\int_{s_{n_0}}^{2s_{n_0} + \rho_1(n_0)} \!\!\!\!\!\!\!\!\! dR_2 \: 4\delta \sup_{R_1} \vert \tilde{f}^{\prime}(R_1) \vert\\
\leq 4 \delta \alpha_1^2 \sup \vert\varphi_1 \vert^2 \bigg(1 + \frac{1}{\sqrt{s_{n_0} - \delta}}\bigg)^{\! 2}(s_{n_0} + \sup \vert\varphi_1 \vert)\,.
\end{multline}
In the later, the supremum $\sup_{R_1} \vert \tilde{f}^{\prime}(R_1) \vert$ is taken over $[s_{n_0}-\delta,2s_{n_0}+\rho_1(n_0)+\delta]$ and its upper bound $\alpha_1 \sup \vert\varphi_1 \vert \Big(1 + \frac{1}{\sqrt{s_{n_0} - \delta}}\Big)$ is uniform in $R_2$.
Integrating \eqref{Jensen_concentration} over $\epsilon \in \mathcal{B}_{n_0}$ thus yields
\begin{multline*}
\int_{{\cal B}_{n_0}} \!\!\!\!\!\! d\epsilon\, \E\big[\big(\langle \mathcal{L}\rangle_{\mathbf{n},t,\epsilon} - \mathbb{E}\langle \mathcal{L}\rangle_{\mathbf{n},t,\epsilon}\big)^2] \\
\leq 
\frac{33}{n_0} \bigg(\frac{s_{n_0}}{\delta} \bigg)^{2} \bigg(C(\varphi_1,\varphi_2,\alpha_1,\alpha_2, S) + \alpha_1 K \sup \vert \varphi_1 \vert^2 \bigg)
+ 11 C(\varphi_1,\alpha_1,S) \frac{s_{n_0}^2}{n_0}\\
+ \frac{\sup \vert \varphi_1 \vert^2}{\alpha_1 n_0}\frac{11 \ln 2}{2}
+ 4 \alpha_1^2 \sup \vert\varphi_1 \vert^2 \bigg(\sqrt{\delta} + \frac{1}{\sqrt{\nicefrac{s_{n_0}}{\delta} - 1}}\bigg)^{\! 2}(s_{n_0} + \sup \vert\varphi_1 \vert)\,.
\end{multline*}
Remembering that $s_{n_0} \leq \nicefrac{1}{2}$ and choosing $\delta = s_{n_0} n_0^{-\nicefrac{1}{4}}$ give the desired result.
\end{proof}
\end{lemma}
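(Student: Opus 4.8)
The plan is to recognise $\langle\mathcal{L}\rangle_{\mathbf{n},t,\epsilon}$ as, up to lower-order concentrating corrections, the $R_1$-derivative of the quenched interpolating free entropy $F_{\mathbf{n},\epsilon}(t)$, and then transfer the free-entropy concentration of Theorem~\ref{concentrationtheorem} to this derivative by a convexity argument. Indeed, from \eqref{derivative_Fn} one reads $\langle\mathcal{L}\rangle_{\mathbf{n},t,\epsilon} = -\tfrac{n_0}{n_1}\tfrac{dF_{\mathbf{n},\epsilon}(t)}{dR_1} - \tfrac12\tfrac{\|\bX^1\|^2}{n_1} - \tfrac{1}{2n_1\sqrt{R_1}}\sum_i Z_i' X_i^1$, and \eqref{secondDerivative_Fn} shows that $F_{\mathbf{n},\epsilon}(t)$ fails to be convex in $R_1$ only through the term linear in the $Z_i'$. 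I would therefore introduce the corrected functions $\tilde F(R_1) = F_{\mathbf{n},\epsilon}(t) - \tfrac{\sqrt{R_1}}{n_0}\sup|\varphi_1|\sum_i|Z_i'|$ and $\tilde f(R_1) = f_{\mathbf{n},\epsilon}(t) - \tfrac{\sqrt{R_1}}{n_0}\sup|\varphi_1|\sum_i\E|Z_i'|$, both of which are genuinely convex in $R_1$ (the added term cancels the negative part of the second derivative, and $f_{\mathbf{n},\epsilon}$ is already convex), and control $\tilde F' - \tilde f'$.

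Next I would apply the convexity lemma of \cite{BarbierOneLayerGLM} (their Lemma 31): for a convex $\tilde F$ close in $L^1$-sense to the convex $\tilde f$, the derivative difference at $R_1$ is bounded, for any $\delta>0$, by $\delta^{-1}$ times $|\tilde F - \tilde f|$ evaluated at $R_1-\delta, R_1, R_1+\delta$, plus the one-sided second differences $C_\delta^\pm(R_1) = \pm(\tilde f'(R_1\pm\delta)-\tilde f'(R_1)) \geq 0$. Unpacking $\tilde F' - \tilde f'$ via \eqref{derivative_Fn}--\eqref{derivative_fn}, this gives a pointwise bound on $|\langle\mathcal{L}\rangle_{\mathbf{n},t,\epsilon} - \E\langle\mathcal{L}\rangle_{\mathbf{n},t,\epsilon}|$ by a sum of: the three $\delta^{-1}|F_{\mathbf{n},\epsilon}-f_{\mathbf{n},\epsilon}|$ terms together with $\delta^{-1}\sup|\varphi_1|\tfrac{n_1}{n_0}|A|\sqrt{R_1}$, where $A:=\tfrac1{n_1}\sum_i(|Z_i'|-\E|Z_i'|)$; the bracketing terms $C_\delta^+(R_1)+C_\delta^-(R_1)$; and the ``boundary'' fluctuations $\big|\tfrac{\|\bX^1\|^2}{n_1}-\rho_1(n_0)\big|$, $\big|\tfrac1{n_1}\sum_i Z_i'X_i^1\big|$ and $|A|$. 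Squaring, applying Jensen and taking expectation, every contribution is $O(1/n_0)$: the free-entropy terms by Theorem~\ref{concentrationtheorem}, the $\|\bX^1\|^2/n_1$ term by the variance bound already obtained in Appendix~\ref{app:uniformVanishingA}, and the last two by the elementary estimates $\E[(\tfrac1{n_1}\sum_i Z_i'X_i^1)^2] = \tfrac{\mathrm{Var}(Z_1'X_1^1)}{n_1}\le\tfrac{\rho_1(n_0)}{n_1}$ and $\E[A^2]=\tfrac{\mathrm{Var}(|Z_1'|)}{n_1}$, using independence.

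Finally I would integrate over $\epsilon\in\mathcal{B}_{n_0}$, which is where regularity (Definition~\ref{def:reg}) is essential: since $\epsilon\mapsto(R_1(t,\epsilon),R_2(t,\epsilon))$ is a $\mathcal{C}^1$-diffeomorphism with Jacobian $\ge 1$, one may replace $\int_{\mathcal{B}_{n_0}}d\epsilon$ by $\int_{R(\mathcal{B}_{n_0})}dR_1dR_2$ over $[s_{n_0},2s_{n_0}+\rmax]\times[s_{n_0},2s_{n_0}+\rho_1(n_0)]$. The squared bracketing term is handled by bounding one factor of $C_\delta^\pm(R_1)^2$ by $\sup_{R_1}|\tilde f'(R_1)|\le\alpha_1\sup|\varphi_1|(1+(s_{n_0}-\delta)^{-1/2})$ (using $R_1\ge s_{n_0}$ and the explicit monotonicity and negativity of $\tilde f'$) and integrating the remaining linear factor $C_\delta^+(R_1)+C_\delta^-(R_1)$, which telescopes in $R_1$ and is $O(\delta\sup_{R_1}|\tilde f'|)$ by the mean value theorem; the $\delta^{-1}$ terms contribute $O((s_{n_0}/\delta)^2/n_0)$, and the stray $R_1^{-1}$ factors only add $\int_{s_{n_0}}^{2s_{n_0}}d\epsilon_1/\epsilon_1=O(1)$. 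Balancing the $O((s_{n_0}/\delta)^2/n_0)$ term against the $O(\delta/s_{n_0})$ term by choosing $\delta=s_{n_0}n_0^{-1/4}$ (and recalling $s_{n_0}\le\tfrac12$) yields the claimed rate $C(\varphi_1,\varphi_2,\alpha_1,\alpha_2,S)\,n_0^{-1/4}$, uniformly in $t$. The main obstacle is the bookkeeping surrounding the convexity lemma: keeping every error term uniform in $(t,\epsilon)$ and taming the $R_1^{-1/2}$ singularities near $R_1=s_{n_0}$, which is precisely what fixes the admissible scalings of $s_{n_0}$ and $\delta$ and, ultimately, the $n_0^{-1/4}$ exponent.
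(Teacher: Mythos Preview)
Your proposal is correct and follows essentially the same approach as the paper: the same convexifying correction $\tilde F,\tilde f$, the same invocation of Lemma~31 of \cite{BarbierOneLayerGLM}, the same term-by-term bounds (Theorem~\ref{concentrationtheorem}, the variance estimate for $\|\bX^1\|^2/n_1$, the i.i.d.\ bounds for $\tfrac1{n_1}\sum Z_i'X_i^1$ and $A$), the same change of variables using regularity, the same telescoping treatment of $C_\delta^\pm$, and the same balancing choice $\delta=s_{n_0}n_0^{-1/4}$.
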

\putbib[sm]
\end{bibunit}

\end{document}